\titlespacing*{\section}{0pt}{6pt}{0pt}
\titlespacing*{\subsection}{0pt}{6pt}{0pt}
\let\P\relax\DeclareMathOperator{\P}{\mathbb{P}}
\DeclareMathOperator{\E}{\mathbb{E}}
\DeclareMathOperator{\htheta}{\hat\theta}
\newcommand{\R}{\mathbb{R}}
\newcommand{\1}{\mathbf{1}}
\DeclareMathOperator{\diag}{diag}
\DeclareMathOperator{\unif}{Unif}
\newtheorem{lemma}{Lemma}
\newtheorem{corollary}{Corollary}
\newtheorem{theorem}{Theorem}
\newtheorem{assumption}{Assumption}
\newtheorem{definition}{Definition}
\newtheorem{task}{Task}
\newtheorem{construction}{Construction}
\theoremstyle{remark}
\DeclareMathOperator{\attn}{attn}
\renewcommand{\S}{\mathcal{S}}
\newcommand{\TT}{T_{\text{eff}}}
\newcommand{\calE}{\mathcal{E}}
\newcommand{\calR}{\mathcal{R}}
\let\P\relax\DeclareMathOperator{\P}{\mathbb{P}}
\newcommand\smalldots{\hbox to 1em{.\hss.\hss.}}
\title{How Transformers Learn Causal Structure with Gradient Descent}
\author{Eshaan Nichani}
\author{Alex Damian}
\author{Jason D. Lee}
\affil{Princeton University}
\begin{document}
\maketitle

\begin{abstract}
The incredible success of transformers on sequence modeling tasks can be largely attributed to the \emph{self-attention} mechanism, which allows information to be transferred between different parts of a sequence. Self-attention allows transformers to encode causal structure which makes them particularly suitable for sequence modeling. However, the process by which transformers learn such causal structure via gradient-based training algorithms remains poorly understood. To better understand this process, we introduce an in-context learning task that requires learning latent causal structure. We prove that gradient descent on a simplified two-layer transformer learns to solve this task by encoding the latent causal graph in the first attention layer. The key insight of our proof is that the gradient of the attention matrix encodes the mutual information between tokens. As a consequence of the data processing inequality, the largest entries of this gradient correspond to edges in the latent causal graph. As a special case, when the sequences are generated from in-context Markov chains, we prove that transformers learn an \emph{induction head}~\cite{olsson2022context}. We confirm our theoretical findings by showing that transformers trained on our in-context learning task are able to recover a wide variety of causal structures.
\end{abstract}

\section{Introduction}

The transformer architecture~\citep{vaswani2017attention} has revolutionized the field of deep learning, and has led to state-of-the art performance on tasks spanning language modeling~\citep{brown2020language}, computer vision~\citep{dosovitskiy2020image}, reinforcement learning~\citep{chen2021decision}, and the sciences~\citep{jumper2021highly}. The basic primitive of a transformer is a \emph{self-attention} head, a sequence-to-sequence mapping in which each token in the output is a weighted linear combination of, i.e ``attends to," the other tokens in the sequence. Prior work \citep{elhage2021mathematical} has sought to understand which specific computational operations, or ``circuits," are implemented by self-attention layers in trained transformers. However, the process by which such circuits arise when transformers are trained from scratch via gradient-based algorithms is still unknown.

One hallmark capability of transformers is \emph{in-context learning}~\citep{brown2020language}, which is the ability to learn from information present in the input context without needing to update the model parameters. For example, given a prompt of input-output pairs, in-context learning is the ability to predict the output corresponding to a new input. Prior work has shown that this in-context learning ability relies on the existence of specific circuits called \emph{induction heads}~\citep{olsson2022context}. Given a prompt of the form $[\cdots, A, B, \cdots, A]$, an induction head copies the token which follows the previous occurrence of $A$, in this case being $B$. This can be implemented using two attention layers: the first performs the operation of ``copying" the previous token, while the second compares this previous token to the last token of the context. By copying the previous token, the first attention layer thus implicitly encodes the causal structure of a Markov chain.

As another example, consider the setting of learning a function class in-context, introduced by \citet{garg2022can}. Each prompt sequence is formed by sampling a new function $f$ from some function class $\mathcal{F}$, and generating the prompt $[x_1, f(x_1), x_2, f(x_2), \dots, x_n, f(x_n), x_{test}]$ for i.i.d inputs $x_1, \dots, x_n, x_{test}$. The model must learn to estimate $f(x_{test})$ from the $(x_i,f(x_i))$ pairs given in-context. This setting has proved a useful testbed to understand which in-context learning algorithms can be implemented by transformers; see \Cref{sec:related_work} for additional discussion.
Such prompts also possess special causal structure. Conditioned on $f$, the $2k$-th token in the prompt only depends on the $(2k - 1)$-th token, and is independent of the rest of the sequence. The model must thus learn to associate each $f(x_k)$, at position $2k$, with its corresponding $x_k$, at position $2k-1$.
We view this instance as a problem with both a global causal structure, which comes from pairing $x_k$ with $f(x_k)$, and an in-context transition which comes from the specific $f \in \mathcal{F}$ sampled for the sequence.

Transformers are clearly able to model causal structure, yet despite the necessity of doing so for performing in-context learning tasks, we still do not understand how such structures are learned by gradient descent when training from scratch. We thus ask the following question:
\begin{center}
    \textbf{How do transformers learn causal structure with gradient descent?}
\end{center}

\subsection{Our Contributions}
In this work, we analyze the gradient descent dynamics of an autoregressive two-layer attention-only transformer, and prove that it recovers latent causal structure. Our specific contributions are as follows:
\begin{itemize}
    \item We introduce a novel family of in-context learning problems, which we call \emph{random sequences with causal structure} (\Cref{task:single_parent}). The task fixes a latent causal structure, unknown to the transformer, and samples each sequence from a different distribution which respects the causal structure.    
    \item When the latent causal graph is a tree, we prove that gradient descent on a simplified two-layer transformer solves this task by encoding the causal graph in the first attention layer in order to perform in-context estimation of the transition distribution (\Cref{thm:main_thm}). As a special case of \Cref{thm:main_thm}, we show that when the sequences are generated from in-context Markov chains, the transformer learns an induction head.
    \item The proof of \Cref{thm:main_thm} relies on showing that the gradient of the first attention layer automatically computes the $\chi^2$-mutual information between pairs of tokens. As a result of the data processing inequality, the largest entries of this gradient correspond to edges in the latent causal graph, and hence the first attention layer converges to the adjacency matrix of this graph.
    \item When the causal graph is not a tree, we explicitly construct a multi-head transformer which solves this task by distributing the latent causal graph across many heads. We show empirically that transformers trained by gradient descent on this task learn our construction.
\end{itemize}

\subsection{Related Work}\label{sec:related_work}

\paragraph{In-Context Learning.} \citet{brown2020language} demonstrated that GPT-3 can perform in-context learning, which has led to much subsequent work on understanding how such in-context learning ability arises. \citet{xie2021explanation} presents a Bayesian perspective on in-context learning by looking at the log-likelihood of out-of-distribution prompt sequences. \citet{olsson2022context} posits that in-context learning relies on the emergence of induction heads.

\citet{garg2022can} formalizes the setting of learning a function class in-context, and shows that transformers can be trained to in-context learn various simple function classes such as (sparse) linear models or shallow neural networks. This bears similarity to transformer neural processes ~\citep{nguyen2022transformer}, which recasts uncertainty-aware meta learning as an in-context learning task. Many recent works have sought to understand which in-context learning algorithms can be efficiently expressed by a transformer. \citet{akyurek2023what, bai2023transformers, von2023transformers} show that transformers can implement gradient descent to solve in-context linear regression, while \citet{fu2023transformers} constructs transformers which implement higher-order learning algorithms such as Newton's method. \citet{giannou2023looped} constructs a transformer that can express general-purpose computational operations. However, these works are solely concerned with the representational capabilities of transformers, and do not answer the question of whether gradient descent indeed learns such constructions.

\paragraph{Training dynamics of transformers.} Prior works have primarily studied the optimization dynamics of a single attention layer. \citet{lu2021on, li2023transformers} show that a single attention layer trained via gradient descent learns to encode topic structure. \citet{snell2021approximating} studies the dynamics of a simplified attention layer on sequence-to-sequence translation tasks. \citet{tarzanagh2023max, tarzanagh2023transformers} show an equivalence between the optimization dynamics of a single attention layer and a certain SVM problem. \citet{ahn2023transformers} shows that the global optimum of a single linear attention layer trained on in-context linear regression implements a single step of preconditioned gradient descent. \citet{mahankali2023one, zhang2023trained} study the optimization dynamics of a single linear attention layer for performing in-context linear regression. \citet{huang2023context} shows that gradient descent on a single softmax attention layer learns to solve linear regression in-context when the input data are orthogonal. \citet{jelassi2022vision} analyzes the gradient descent dynamics of the position-position block of a single layer vision transformer, and shows that it converges to a solution encoding spacial structure. \citet{tian2023scan} studies the optimization dynamics of a single attention layer for a specific toy dataset, while \citet{tian2023joma} shows that jointly training a self-attention and MLP layer corresponds to the optimization dynamics of a certain modified MLP module. \citet{boix2023transformers} shows that transformers with diagonal attention matrices display incremental learning behavior.

\citet{bietti2023birth} studies a synthetic ICL task, and shows that an induction head is formed during training. They demonstrate heuristically that a few gradient steps on a modifed transformer architecture approximately learns the induction head. Our synthetic task handles more general causal structure, and requires attending to all prior instances of the final token rather than the most recent one. Furthermore, \citet{bietti2023birth} requires the alphabet size $S$ to be significantly larger than the context length $T$; we, however, assume that $T \gg S$, and our main theorem (\Cref{thm:main_thm}) is an end-to-end guarantee on learning the causal structure and obtaining vanishing population loss.

\paragraph{Concurrent Work.} A number of concurrent works also study the ability of transformers to solve synthetic in-context learning tasks. \citet{reddy2023mechanistic} shows empirically that an induction head suddenly emerges during the training of a two-attention layer transformer on a specific in-context learning task. \citet{akyurek2024context} demonstrates that transformers can learn regular languages in-context, where each prompt consists of strings generated from a prompt-dependent formal language. This is due to the ability of transformers to compute in-context $n$-gram counts, via a generalization of the induction head mechanism using multiple attention layers. In \Cref{sec:multiple_parents}, we show that a two-layer transformer with $n$ heads can also compute such $n$-gram counts. 


\citet{edelman2024evolution} study the formation of induction heads on the task of learning Markov chains in context, which is equivalent to our \Cref{task:single_parent} when the latent causal graph is the chain graph. Their theoretical analysis focuses on two steps of gradient descent on a simplified linear transformer model, for Markov chains over two states. An interesting empirical observation of theirs is that transformers trained to learn $n$-grams in-context undergo a sequential learning procedure, by first predicting using the unigram counts, then the bigram counts, and so on. 

\section{Setup}

\subsection{Transformer Architecture}

Let $[S]$ be a finite alphabet. Transformers are models mapping sequences $s_{1:T} := (s_1, \dots, s_T) \in [S]^T $ of length $T$ to a length $T$ sequence of vectors $z_1, \dots, z_T \in \R^{d_{out}}$. A transformer first embeds the sequence $s_{1:T}$ as a matrix $X {=} \begin{bmatrix} x_1, x_2, \dots, x_T \end{bmatrix}^\top \in \mathbb{R}^{T \times d}$, where $d$ is the embedding dimension. This is parameterized by the token embeddings $E \in \R^{d \times S}$ and positional embeddings $P \in \R^{d \times T}$:
\begin{align}
    \mathrm{embed}(s_{1:T};(E,P))_i := E e_{s_i} + Pe_i ~\text{for}~i=1,\ldots,T.
\end{align}
Transformers consist of two types of layers: attention layers and MLP layers. Throughout, we focus on decoder-based, attention-only transformers. These are models in which every layer is a \emph{causal self-attention layer}, defined below:

\begin{definition}[Causal self-attention head]\label{def:attention}
    For a vector $v \in \mathbb{R}^k$, define the \emph{softmax} function $\S: \mathbb{R}^k \rightarrow \R^k$ by $\S(v)_i := \frac{\exp(v_i)}{\sum_{j=1}^k \exp(v_j)}$. For a matrix $A \in \R^{d \times d}$, define the operator $\attn(\cdot; A) : \R^{T \times d} \rightarrow \R^{T \times d}$ by
    \begin{align}
        \attn(h; A) := \S\qty(\mathrm{MASK}(hAh^\top))h \in \mathbb{R}^{T \times d},
    \end{align}
    where $\mathrm{MASK}(M)_{i,j}$ is $M_{i,j}$ when $i \ge j$ and $-\infty$ otherwise, and the softmax function $\S$ is applied row-wise.
\end{definition}
In \Cref{def:attention}, the masking operator ensures tokens only attend to previous tokens in the sequence, and the softmax normalizes the output so that each row sums to $1$. The amount that token $i$ attends to token $j$, for $j \le i$, is thus $\S\qty(\mathrm{MASK}(XAX^\top))_{i,j} = \S\qty(X_{\le i}A^\top  x_i)_j$, where $X_{\le i} \in \R^{i \times d}$ is the submatrix formed by the first $i$ rows of $X$.

A single attention head is parameterized by the tuple of $d \times d$ matrices $(Q, K, V)$, referred to as the query, key, and value matrices, and maps $X$ to the sequence $\attn(X; QK^\top )V^\top $.

A decoder-based transformer aggregates multiple causal self-attention heads over many layers:
\begin{definition}[Decoder-based transformer]\label{def:transformer}
    Let $L$ be the depth, $\{m_\ell\}_{\ell \in [L]}$ be the number of heads per layer, and $d$ be the embedding dimension. For $\ell \in [L]$, $i \in [m_\ell]$, let $(Q^{(\ell)}_{i}, K^{(\ell)}_i, V^{(\ell)}_i)$ be the query, key, and value matrices for the $i$th head in the $\ell$th layer. Let $W_{O} \in \R^{d_{out} \times d}$ be the output layer and let $E \in \R^{d \times S}$ and $P \in \R^{d \times T}$ be the token and positional embeddings respectively. Define the parameter vector $\theta := \{(Q^{(\ell)}_{i}, K^{(\ell)}_i, V^{(\ell)}_i)\}_{\ell \in [L], i \in [m_\ell]} \cup \{E,P,W_{O}\}$. A decoder-based transformer $\mathrm{TF}_\theta :[S]^T  \rightarrow \mathbb{R}^{T \times d_{out}}$ operates on $s_{1:T}$ by
    \begin{align}\label{eq:decoder-based-TF}
        &h^{(0)} = \mathrm{embed}(s_{1:T};(E,P)) \nonumber\\
        &h^{(\ell)} = h^{(\ell - 1)} + \sum_{i=1}^{m_\ell} \attn\qty(h^{(\ell - 1)}; Q^{(\ell)}_{i}{K^{(\ell)}_i}^\top ){V^{(\ell)}_i}^\top  \\
        &\mathrm{TF}_\theta(s_{1:T})  = h^{(L)}W_O^\top .\nonumber
    \end{align}
    We remark that $h^{(\ell)} \in \R^{T \times d}$ for $\ell = 0,\ldots,L$.
\end{definition}

\paragraph{Disentangled Transformer.} Prior works on mechanistic interpretability have introduced the \emph{residual stream} viewpoint to understand the behavior of trained transformers~\cite{elhage2021mathematical}. The residual stream exists as a memory and communication channel that various attention heads read and write to. Information in the residual stream is stored in low-dimensional subspaces of intermediate layers $h^{(\ell)}$. For a single attention layer $\attn(\cdot; QK^\top )V^\top $, the query and key matrices ``read" information from the relevant subspace, and the value matrix ``writes" the output to a new subspace of the residual stream. The weight matrices thus act as associative memories~\citep{bietti2023birth}, storing various embeddings within the residual stream.

While this residual stream viewpoint provides intuition for the flow of information through the forward pass of a transformer, from an interpretability perspective it is difficult to know which subspaces contain which information. The outputs of each attention layer are added together and thus their informations may overlap with each other, leading to a memory bottleneck~\citep{elhage2021mathematical, bietti2023birth}. \citet{friedman2023learning} thus consider a transformer model in which the residual stream is disentangled, and the outputs of each attention layer are \emph{appended} to the residual stream. The dimension of the residual stream thus grows with the depth. We formalize this as a \emph{disentangled transformer}, defined below:

\begin{definition}[Disentangled Transformer]\label{def:disentangled_TF}
    Let $L$ be the depth, and $\{m_\ell\}_{\ell \in [L]}$ be the number of heads per layer. Define the set of dimensions $d_0,\ldots,d_L$ by $d_0 = S+T$ and $d_{\ell} = (1 + m_\ell)d_{\ell - 1}$. Let $\{\widetilde{A}_i^{\ell}\}$ be the attention matrices with $\widetilde{A}_i^{(\ell)} \in \R^{d_{\ell-1} \times d_{\ell-1}}$, let $\widetilde{W}_O \in \R^{d_{out} \times d_L}$ be the output matrix, and let $\widetilde{\theta} = \{\widetilde{A}^{(\ell)}_i\}_{\ell \in [L], i \in [m_\ell]} \cup \{\widetilde{W}_O\}$. A disentangled transformer $\widetilde{\mathrm{TF}}_{\widetilde{\theta}}$ acts on a sequence $s_{1:T}$ by:
    \begin{align}
        &h^{(0)} = \tilde X = [\tilde x_1,\ldots,\tilde x_T]^\top  ~\text{where}~ \tilde x_t = [e_{s_t},e_t] \in \R^{d_0} \nonumber\\
        &h^{(\ell)} {=} \qty[ h^{(\ell - 1)}, \attn(h^{(\ell - 1)}; \widetilde{A}^{(\ell)}_1), \smalldots, \attn(h^{(\ell - 1)}; \widetilde{A}^{(\ell)}_{m_\ell})] \\
        &\widetilde{\mathrm{TF}}_{\widetilde{\theta}}(s_{1:T}) = h^{(L)} \widetilde{W}_O^\top .\nonumber
    \end{align}
    We remark that $h^{(\ell)} \in \R^{T \times d_\ell}$ for $\ell=0,\ldots,L$.
\end{definition}

In addition to disentangling the residual stream, \Cref{def:disentangled_TF} replaces the query and key matrices with a single attention matrix $\widetilde{A}: = QK^\top $ and absorbs the value matrices into $\widetilde{W}_O$. By allowing $d_\ell$ to grow with the depth, this disentangled transformer is actually \emph{equivalent} to a decoder-based attention-only transformer (see \Cref{thm:disentangled_TF_equivalence} for the formal statement). Given this equivalence, throughout the rest of the paper we study the disentangled transformer.

Finally, we remark that when the target is a vector in $\R^{d_{out}}$ rather than a sequence in $\R^{T \times d_{out}}$, it is customary to use the embedding of the last token, i.e. $\mathrm{TF}_\theta(s_{1:T}) = W_O h^{(L)}_T$ and similarly for $\widetilde{\mathrm{TF}}_{\widetilde{\theta}}$. 

\begin{figure*}[t!]
    \centering
    \begin{tikzpicture}[
            node distance=1.5cm,
            auto,
            every node/.style={circle, draw, minimum size=1cm},
        ]
        \def\parents{-1,1,1,2,3,-1}
        \def\T{0}
        \foreach \x in \parents {
            \pgfmathparse{\T+1}
            \xdef\T{\pgfmathresult}
        }
        \foreach \x in {1,...,\T}{
                \ifnum\x=1
                    \node (node\x) {$s_\x$};
                \else
                    \pgfmathtruncatemacro{\y}{\x-1}
                    \node[right=of node\y] (node\x) {$s_\x$};
                \fi
            }
        \foreach \x [count=\xi] in \parents{
            \ifnum\x>-1
                \draw[->, thick, -latex, bend left] (node\x) to (node\xi);
            \fi
        }
    \end{tikzpicture}
    \caption{\textbf{Random Sequence with Causal Structure:} The causal structure is defined by the graph $\mathcal{G}$, denoted by the arrows. In this figure, $p(1) = \emptyset$, $p(2) = \{1\}$, $p(3) = \{1\}$, $p(4) = \{2\}$ and $p(5) = \{3\}$. Sequences are generated by sampling $\pi \sim P_\pi$, $s_1 \sim \mu_\pi$, $s_2 \sim \pi(\cdot|s_1)$, $s_3 \sim \pi(\cdot|s_1)$, $s_4 \sim \pi(\cdot|s_2)$, $s_5 \sim \pi(\cdot|s_3)$, and finally $s_6 \sim \unif([S])$. The target $y$ for this sequence is drawn from $\pi(\cdot|s_6)$.}
    \label{fig:graph_example}
\end{figure*}
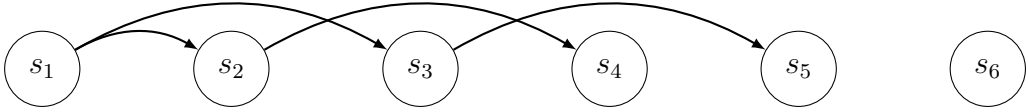

\subsection{Problem Setup: Random Sequences with Causal Structure}\label{sec:dag_single_parent}

Let $\mathcal{G} = ([T],\mathcal{E})$ be a directed acyclic graph on $[T] = \{1,\ldots,T\}$ with edge set $\calE$, which represents the latent causal structure. We assume that $(j \to i) \in \mathcal{E}$ only if $j < i$, i.e. each token can only point to future tokens. For a position $i \in [T]$, we let $p(i)$ denote the set of parents of $i$, i.e. $p(i) := \{j : (j \to i) \in \mathcal{E}\}.$ We let $\calR$ denote the set of root nodes, i.e $\calR = \{i : p(i) = \emptyset\}$. For most of the paper, we assume that each position has at most one parent, i.e. $|p(i)| \le 1$ for all $i \in [T]$. See \Cref{sec:multiple_parents} for the generalization to multiple parents. When $|p(i)| = 1$, we overload notation and use $p(i) \in [T]$ to denote the unique parent of $i$.

We will also assume there exists a prior $P_\pi$ over irreducible and aperiodic Markov chains $\pi$ on $[S] = \{1,\ldots,S\}$. For each $\pi$, we will use $\mu_\pi$ to denote the unique stationary measure of $\pi$. Then each sequence $[s_1,\ldots,s_T]$ and its corresponding target $y$ are generated by the following procedure:
\begin{task}[Random Sequence with Causal Structure]\label{task:single_parent}~
\begin{enumerate}
    \item First, draw $\pi \sim P_\pi$.
    \item For $i=1,\ldots,T-1$, sample $s_i \sim \mu_\pi$ if $p(i) = \emptyset$. Otherwise sample $s_i \sim \pi(\cdot|s_{p(i)})$.
    \item Draw $s_{T} \sim \text{Unif}([S])$ and $s_{T+1} \sim \pi(\cdot | s_{T})$
    \item Return the input $x = s_{1:T}$ and the target $y=s_{T+1}$.
\end{enumerate}
Because $s_T \sim \text{Unif}([S])$, $T$ is a root node of $\mathcal{G}$, i.e. $T \in \calR$.
\end{task}
See \Cref{fig:graph_example} for an example instance of this task. 

\subsection{Examples}

\paragraph{Markov Chains and Induction Heads.} First, consider the case where $p(i) = i-1$. The sequence $s_1, \dots, s_{T-1}$ is a Markov chain conditioned on $\pi$, with transition matrix $\pi$. \Cref{task:single_parent} reduces to the problem of \emph{estimating the Markov chain $\pi$ in-context}. This can be solved via an induction head~\citep{olsson2022context} which, when presented with a prompt $\mathcal{P} = [{\cdots}, A, B, {\cdots}, A, C, {\cdots}, A]$, averages over the tokens following the previous occurrences of $A$, (in this case $B$ and $C$). Explicitly, the output of an induction head on the sequence $s_{1:T}$ will be the empirical estimate for $\pi(\cdot \mid s_T)$:
\begin{align*}
    \mathrm{TF}_\theta(s_{1:T})_{s'} = \frac{|\{i ~:~ (s_{i-1},s_i) = (s_T,s')\}|}{|\{i ~:~ s_{i-1} = s_T\}|}.
\end{align*}
In the limit as $T \to \infty$, this converges to the true transition $\pi(\cdot \mid s_T)$. 

\paragraph{In-Context Learning.} Consider the in-context learning setup from \citet{garg2022can}. This corresponds to the causal graph where $p(2k - 1) = \emptyset$ and $p(2k) = 2k-1$. Sequences are generated by sampling $f: [S] \to [S]$ from $\mathcal{F}$ and using the transition matrix $\pi(s'|s) = \1(s'=f(s))$. To learn this function class in-context, the transformer must learn to associate the $(x, y)$ pairs in positions $2k-1$ and $2k$.

\section{What does the Transformer Learn?}

\subsection{Experiments}

\begin{figure*}[t!]
\centering
    \includegraphics[width=0.8\textwidth]{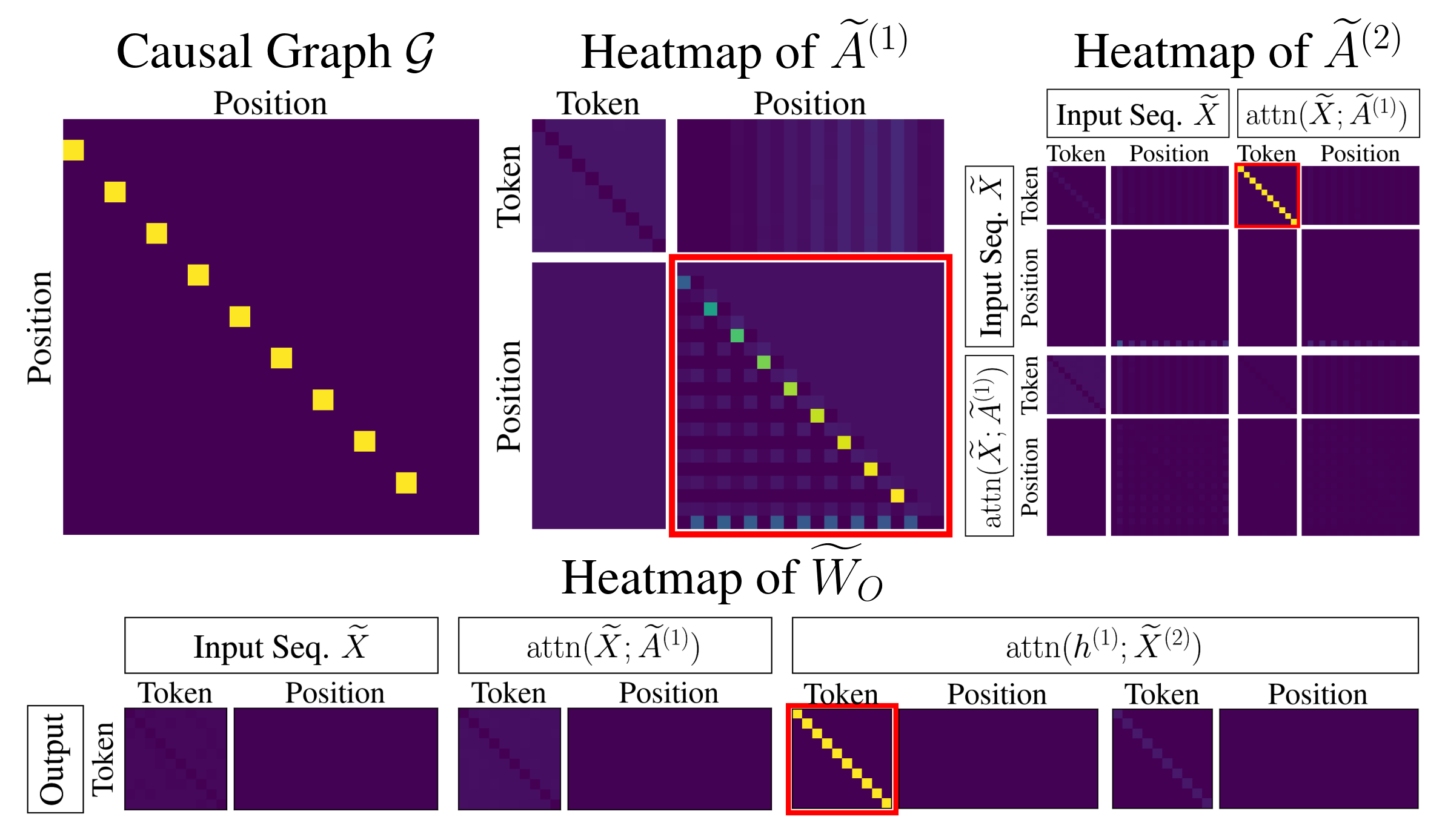}
    \caption{\textbf{The Weights of a Trained Transformer:} We plot the weights of a two layer disentangled transformer trained on \Cref{task:single_parent} with $S=10$ and $T=20$ when the causal graph is the in-context learning graph where $p(2i) = 2i-1$ for all $i > 0$. All entries of $A^{(1)}, A^{(2)}, W_O$ remain small except the three blocks highlighted in red. The highlighted block in $A^{(1)}$ converges to the adjacency matrix of the causal graph $\mathcal{G}$, and the highlighted blocks in $A^{(2)},W_O$ converge to the identity matrix $I_S$.}
    \label{fig:induction_head_combined}
\end{figure*}

We train a series of two-layer disentangled transformers with one head per layer on \Cref{task:single_parent}, for varying latent graphs $\mathcal{G}$. The prior $P_\pi$ is chosen so that each row of $\pi$ is sampled i.i.d from the Dirichlet distribution with parameter $\alpha$, i.e $\pi(\cdot \mid s) \sim \mathrm{Dir}(\alpha \cdot 1_S)$, for varying values of $\alpha$. We train using gradient descent on the cross entropy loss with initial learning rate $1$ and cosine decay over $2^{17}$ steps. 

We observe that the weights of the trained disentangled transformers exhibit consistent structure. First, all of the entries of $\widetilde{A}^{(1)}$ remain small except the position-position block (red box under $\widetilde{A}^{(1)}$ in \Cref{fig:induction_head_combined}), which converges to the adjacency matrix of the graph $\mathcal{G}$. Next, all of the entries of $\widetilde{A}^{(2)}$ are small, except the token/token block comparing the $h^{(0)}$ component of the residual stream of token $i$ to the $\attn(h^{(0)},\widetilde{A}^{(1)})$ component of the residual stream of token $j$ for $j \le i$
(red box under $\widetilde{A}^{(2)}$ in \Cref{fig:induction_head_combined}). Finally, all of the entries of the output matrix $W_O$ are small except the token/token block which returns the value of the first component of the output of the second attention $\attn(h^{(1)},A^{(2)})$ (red box under $W_O$ in \Cref{fig:induction_head_combined}). In \Cref{fig:appendix_single_parent}, we observe that this weight pattern persists for different latent graphs $\mathcal{G}$.

In the following section, we explicitly define this construction and describe the corresponding dynamics of the forward pass in \Cref{fig:trace_forward_pass}.

\subsection{Construction}\label{sec:construction}
In \Cref{fig:induction_head_combined} we observe that the attention weights $\widetilde{A}^{(1)}, \widetilde{A}^{(2)}$ and output weight $\widetilde{W}_O$ are of the form 
\begin{align}\label{eq:sparsity_pattern}
    \widetilde{A}^{(1)} & = \begin{bmatrix} 0_{S \times S} & 0_{S \times T} \\ 0_{T \times S} & A^{(1)} \end{bmatrix}\nonumber\\
    \widetilde{A}^{(2)} &= \left[\begin{array}{c c | c c} 0_{S \times S} & 0_{S \times T} & A^{(2)} & 0_{S \times T} \\ 
    0_{T \times S} & 0_{T \times T} & 0_{T \times S} & 0_{T \times T}\\
    \hline
    0_{S \times S} & 0_{S \times T} & 0_{S \times S} & 0_{S \times T}\\
    0_{T \times S} & 0_{T \times T} & 0_{T \times S} & 0_{T \times T}\\
    \end{array}\right]\\
    \widetilde{W}_O       & = \left[\begin{array}{c | c | c c | c} 0_{S \times d} & 0_{S \times d} & I_S & 0_{S \times T} & 0_{S \times d}\end{array}\right]\nonumber
\end{align}
for matrices $A^{(1)} \in \R^{T \times T}$ and $A^{(2)} \in \R^{S \times S}$. We now explicitly construct the $A^{(1)}$ and $A^{(2)}$ from \Cref{fig:induction_head_combined} that solve \Cref{task:single_parent}. Indeed, we show that this construction solves the task by estimating the \emph{empirical transition} matrix $\hat \pi_{s_{1:T}}$:
\begin{align}
    \hat\pi_{s_{1:T}}(s' \mid s) := \frac{|\{(j \to i) \in \mathcal{E} ~:~ (s_j,s_i)=(s,s')\}|}{|\{(j \to i) \in \mathcal{E} ~:~ s_j = s\}|}.
\end{align}

\begin{construction}\label{thm:single_parent_construct}
    There exists a two-layer disentangled transformer $f_{\widetilde{\theta}} = \widetilde{\mathrm{TF}}_{\widetilde {\theta}}$ such that 
    \begin{align}
        f_{\widetilde{\theta}}(s_{1:T})_{s'} \approx \hat\pi_{s_{1:T}}(s' \mid s_T).
    \end{align}
\end{construction}

\begin{proof}
Set $A^{(1)}$ to be the (scaled) adjacency matrix of $\mathcal{G}$, i.e $A^{(1)}_{i,j} = \beta_1 \mathbf{1}(j = p(i))$, and $A^{(2)} = \beta_2 I_S$, where $\beta_1, \beta_2 \rightarrow \infty$. We will now show that the output of the disentangled transformer approximates $\hat\pi_{s_{1:T}}(\cdot \mid s_T)$.

\paragraph{First Attention.} Note that by the construction of $\widetilde{A}^{(1)}$, $\tilde X \widetilde{A}^{(1)} \tilde X^\top = A^{(1)}$, which is the scaled adjacency matrix of $\mathcal{G}$. If $i$ is not a root node (i.e. $i \in \overline{\calR}$, $p(i) \ne \emptyset$), then 
\begin{align}
\S(\tilde X \widetilde{A}^{(1)} \tilde X^\top )_{i,j} = \1(j=p(i))
\end{align}
so $i$ attends to its parent $p(i)$. Therefore, the output of the first attention is the token at position $p(i)$, i.e. $\attn(\tilde X; \widetilde{A}^{(1)})_i = \tilde x_{p(i)}$. The transformer then appends $\tilde x_{p(i)}$ to the residual stream of token $i$.


When $i$ is a root node (i.e. $i \in \calR$, $p(i) = \emptyset$), then for all $j$, $(\tilde X \widetilde{A}^{(1)} \tilde X^\top)_{ij} = 0$. Therefore after the softmax, $i$ will attend equally to all previous tokens:
\begin{align}
    \S(\tilde X \widetilde{A}^{(1)} \tilde X^\top )_{i, j} = \frac{1}{i} \qq{for all} j \le i.
\end{align}
Thus the first attention layer averages all of the tokens in the sequence: $\attn(\tilde X; \widetilde{A}^{(1)})_i = \frac{1}{i}\sum_{j \le i} \tilde x_j$. It then copies this average into the residual stream.


\paragraph{Second Attention.} We next show that the $T$th token attends to all prior tokens whose parents are equal to $s_T$. It then averages them and copies them into the residual stream.

After the first attention layer, the residual stream is $h_j^{(1)} = [ \tilde x_j, \attn(\tilde X; \widetilde{A}^{(1)})_j]^\top$. The second attention layer compares the $T$th token of the original sequence $\tilde{x}_T$ to the output of the first attention at all other positions. Explicitly, the attention pattern is equal to:
\begin{align}
    {h^{(1)}_T}^\top \widetilde{A}^{(2)} h_j^{(1)} &= \beta_2 \cdot \tilde x_T^\top \begin{bmatrix} A^{(2)} & 0_{S \times T} \\ 0_{T \times S} & 0_{T \times T}\end{bmatrix}\attn(\tilde X; \widetilde{A}^{(1)})_j = \beta_2 \cdot \begin{cases}
        \mathbf{1}(s_{p(i)} = s_T) & i \in \overline{\calR}\\
        \frac{1}{i}\sum_{j \le i}\mathbf{1}(s_{j} = s_T) & i \in \calR.\\
    \end{cases}
\end{align}
As $\beta_2 \to \infty$, the softmax converges to a hard max, and so the $T$th token attends equally to all tokens $i$ such that $s_{p(i)} = s_T$. The attention then averages all of these tokens, so the $T$th token in the residual stream is equal to $h^{(2)}_T = 
        \qty[\tilde{x}_T, \frac{1}{T}\sum_{j \le T} \tilde x_j,Z,\tilde{x}_T]$
where \begin{align}\label{eq:Z} 
    Z := \frac{\sum_{s_{p(i)}=s_{T}} \tilde x_i}{|\{i ~:~ s_{p(i)} = s_T\}|}
\end{align}
is the average of the tokens whose parent is equal to $s_T$.

\paragraph{Output Layer.} $W_O$ reads from the third block in this stream, which we denoted by $Z$ in \eqref{eq:Z} above. It then returns the token embedding of $Z$ which is equal to:
\begin{align}
    f_{\widetilde{\theta}}(s_{1:T}) = \frac{\sum_{s_{p(i)} = s_T} e_{s_{i}}}{|\{i ~:~ s_{p(i)} = s_T\}|} = \hat \pi_{s_{1:T}}(\cdot | s_T),
\end{align}
as desired.
\end{proof}

See \Cref{fig:trace_forward_pass} for a breakdown of this forward pass through the transformer for a specific sequence.

\begin{figure*}[t!]
\centering
    \includegraphics[width=0.4\textwidth]{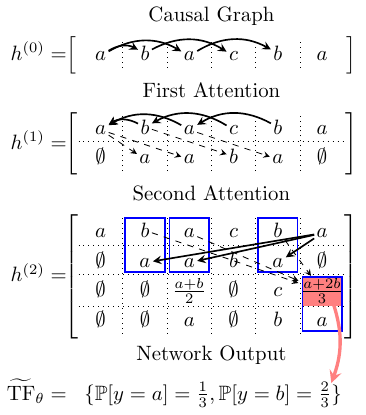}
    \caption{\textbf{Understanding the Forward Pass:} The solid arrows represent the causal graph $\mathcal{G}$ defined in \Cref{fig:graph_example} and $h^{(0)}$ denotes the unmodified input sequence. The first attention \emph{reverses} this causal pattern, as every token attends to its parent (solid arrows). It then \emph{appends} this parent token to the residual stream (dashed arrows). In the second attention layer, each token $i$ attends to all previous tokens $j$ whose parent token $p(j)$ has the same value, i.e. $s_i = s_{p(j)}$ (solid arrows), and appends the \emph{average of these tokens} into the residual stream (dashed arrows). Finally, the transformer returns the third entry in the last column (red box), which is the average of all of the tokens whose parent token has the same value as the last token.}
    \label{fig:trace_forward_pass}
\end{figure*}

\subsection{The Reduced Model}

Motivated by the sparsity pattern in \Cref{fig:induction_head_combined} and \Cref{eq:sparsity_pattern}, we consider training a simplified two-layer transformer architecture where the sparsity in \Cref{eq:sparsity_pattern} is fixed, and only $A^{(1)}$ and $A^{(2)}$ are trained. The transformer $\widetilde{\mathrm{TF}}_{\widetilde{\theta}}$ can be rewritten as the following reduced model:

\begin{lemma}\label{lem:rewrite_TF}
    Let $\theta = (A^{(1)}, A^{(2)})$, and let $\widetilde{\theta} = (\widetilde{A}^{(1)},\widetilde{A}^{(2)}, \widetilde{W}_O)$ be defined in \Cref{eq:sparsity_pattern}. Let $f_\theta = \widetilde{\mathrm{TF}}_{\widetilde{\theta}}$ be a two-layer disentangled transformer parameterized by $\theta$. Then if $\overline{X} = [\overline{x}_1,\ldots,\overline{x}_T]^T$ where $\overline{x_i} = e_{s_i}$,
    \begin{align}\label{eq:reduced_model}
        f_\theta(s_{1:T})  = \overline{X}^\top \S\qty(\S(\mathrm{MASK}(A^{(1)}))\overline{X}{A^{(2)}}^\top\overline{x}_T).
    \end{align}
\end{lemma}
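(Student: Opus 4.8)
The plan is to unwind \Cref{def:disentangled_TF} for the specific two-layer architecture with the sparsity pattern in \Cref{eq:sparsity_pattern}, tracking exactly which blocks of the residual stream carry which information, and then verify that all the ``padding'' blocks ($0$-blocks, positional-embedding blocks that get masked away, etc.) contribute nothing to the final output. Since this is an equivalence of functions rather than an asymptotic claim, the $\beta_1,\beta_2\to\infty$ limits from \Cref{thm:single_parent_construct} are not taken; instead we keep the attention matrices $A^{(1)}, A^{(2)}$ as arbitrary trainable parameters and just do the matrix bookkeeping.

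First I would compute the first attention layer. With $h^{(0)} = \tilde X$ and $\tilde x_t = [e_{s_t}, e_t]$, the sparsity pattern $\widetilde{A}^{(1)} = \mathrm{diag}(0_{S\times S}, A^{(1)})$ gives $\tilde X \widetilde{A}^{(1)} \tilde X^\top = P^\top A^{(1)} P$ where $P = [e_1,\ldots,e_T]^\top$ is just the identity on the positional block, so $\tilde X \widetilde{A}^{(1)} \tilde X^\top = A^{(1)}$ — independent of the tokens $s_{1:T}$. Hence $\attn(\tilde X; \widetilde{A}^{(1)}) = \S(\mathrm{MASK}(A^{(1)}))\tilde X$, and $h^{(1)} = [\tilde X, \S(\mathrm{MASK}(A^{(1)}))\tilde X]$. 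I would introduce the shorthand $\Pi^{(1)} := \S(\mathrm{MASK}(A^{(1)})) \in \R^{T\times T}$ (a row-stochastic, lower-triangular matrix), so $h^{(1)} = [\tilde X, \Pi^{(1)}\tilde X]$.

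Next I would compute the second attention layer. The block structure of $\widetilde{A}^{(2)}$ in \Cref{eq:sparsity_pattern} picks out only the interaction between the token block of the $h^{(0)}$-copy (columns $1,\ldots,S$ of the first $d_0$ coordinates) and the token block of the $\attn(h^{(0)};\widetilde{A}^{(1)})$-copy (columns $1,\ldots,S$ of the last $d_0$ coordinates), via the matrix $A^{(2)} \in \R^{S\times S}$. Writing $h^{(1)}$ row $j$ as $h^{(1)}_j = [e_{s_j}, e_j, (\Pi^{(1)}\tilde X)_j]$ and row $i$ similarly, the quadratic form $h^{(1)}_i{}^\top \widetilde{A}^{(2)} h^{(1)}_j$ collapses to $e_{s_i}^\top A^{(2)} \,(\Pi^{(1)}\overline X)_j$ where $\overline X = [e_{s_1},\ldots,e_{s_T}]^\top$ is the token-only embedding matrix (the positional part of $\Pi^{(1)}\tilde X$ is annihilated by the $0$-blocks of $\widetilde A^{(2)}$). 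Thus $\mathrm{MASK}(h^{(1)}\widetilde{A}^{(2)} h^{(1)}{}^\top) = \mathrm{MASK}(\overline X A^{(2)} \overline X^\top \Pi^{(1)}{}^\top)$, and in particular the $T$-th row is $\mathrm{MASK}(\cdots)_{T,\cdot} = (\Pi^{(1)}\overline X A^{(2)}{}^\top e_{s_T})^\top$ since row $T$ is unmasked. After the row-wise softmax, the $T$-th row of $\attn(h^{(1)};\widetilde{A}^{(2)})$ is $\S(\Pi^{(1)}\overline X A^{(2)}{}^\top \overline x_T)^\top h^{(1)}$; I then read off the block that $\widetilde W_O$ selects. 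By \Cref{eq:sparsity_pattern}, $\widetilde W_O$ keeps only the token block ($I_S$ in positions $S{+}1,\ldots$ relative to that sub-block, i.e.\ the token block of $\attn(h^{(1)};\widetilde A^{(2)}_1)$), which extracts $\overline X^\top$ times that softmax vector, giving exactly \Cref{eq:reduced_model}. (Here I use the convention from the end of the section that for a vector-valued target one takes the last-token output.)

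The main obstacle — really the only subtlety — is the careful block accounting: making sure every $0$-block in $\widetilde A^{(1)}, \widetilde A^{(2)}, \widetilde W_O$ is multiplied against the right sub-vectors so that positional embeddings and the first-layer residual copies drop out exactly, and that the single surviving term in each quadratic form is identified correctly (including getting the transpose $A^{(2)}{}^\top$ versus $A^{(2)}$ right, which comes from whether $e_{s_T}$ sits on the left or the right of $A^{(2)}$ in the bilinear form $h^{(1)}_T{}^\top \widetilde A^{(2)} h^{(1)}_j$). Once the notation $\Pi^{(1)} = \S(\mathrm{MASK}(A^{(1)}))$ and $\overline X$ are in place, the rest is a direct substitution of the definitions with no estimates needed. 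I would present it as a short computation in three display blocks (first attention, second attention, output layer), mirroring the structure of the proof of \Cref{thm:single_parent_construct}.
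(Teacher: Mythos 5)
Your proposal is correct and follows essentially the same route as the paper's proof: unwind the two attention layers under the block sparsity of \Cref{eq:sparsity_pattern}, observe that $\tilde X\widetilde{A}^{(1)}\tilde X^\top = A^{(1)}$ so the first layer appends $\S(\mathrm{MASK}(A^{(1)}))\tilde X$, collapse the second layer's bilinear form to $\overline{x}_T^\top A^{(2)}(\S(\mathrm{MASK}(A^{(1)}))\overline{X})_j$, and let $\widetilde{W}_O$ read off the token block. The only differences are cosmetic (your shorthand $\Pi^{(1)}$ and slightly more explicit block bookkeeping).
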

Due to the masking operation, we can restrict $A^{(1)}$ to be lower diagonal. 

Our goal is to analyze the gradient descent dynamics of $f_\theta$ under the cross entropy loss. However, if the token $s'$ does not appear in $s_{1:T}$, then $f_\theta(s_{1:T})_{s'}$ is $0$ and the cross entropy loss is infinite. As such, we perturb the predictions by some small $\epsilon > 0$. The perturbed population loss is thus:
\begin{align}\label{eq:CE_loss}
    L(\theta) = -\underset{\pi,s_{1:T}}{\mathbb{E}}\qty[\sum_{s' \in [S]}\pi(s' | s_T) \log \qty(f_\theta(s_{1:T})_{s'} + \epsilon)]
\end{align}

In the following sections, we will study the gradient descent dynamics of the reduced model \eqref{eq:reduced_model} on the loss \eqref{eq:CE_loss}.

\section{Main Results}\label{sec:main_results}

\subsection{Training Algorithm}

Our training algorithm is stage-wise gradient descent on the population loss \eqref{eq:CE_loss} using the reduced model \eqref{eq:reduced_model}. The model is initialized at $A^{(1)} = 0_{T \times T}, A^{(2)} = \beta_0 I_{S \times S}$ for small initialization scale $\beta_0$. The first stage is gradient descent on $A^{(1)}$ with learning rate $\eta_1$ for $\tau_1$ timesteps. The second stage is gradient descent on $A^{(2)}$ with learning rate $\eta_2$ for $\tau_2$ timesteps. Pseudocode is given in \Cref{alg:training_alg}.


\renewcommand{\algorithmicrequire}{\textbf{Input:}}
\begin{algorithm}
    \caption{Training Algorithm}\label{alg:training_alg}
    \begin{algorithmic}
        \Require{init size $\beta_0$; learning rates $\eta_1, \eta_2$; times $\tau_1, \tau_2$}
        \State{Initialize $A^{(1)}(0) = 0_{T \times T}, A^{(2)}(0) = \beta_0 \cdot I_{S \times S}$}
        \For{$t=1,\dots,\tau_1$}
        \State{$A^{(1)}(t) \leftarrow A^{(1)}(t-1) - \eta_1\nabla_{A^{(1)}}L(\theta^{(t-1)})$} \Comment{Stage 1}
        \State{$\theta^{(t)} = (A^{(1)}(t), A^{(2)}(0))$}
        \EndFor
        \For{$t=\tau_1,\dots,\tau_1 + \tau_2$}
        \State{$A^{(2)}(t) \leftarrow A^{(2)}(t-1) - \eta_2\nabla_{A^{(2)}}L(\theta^{(t-1)})$} \Comment{Stage 2}
        \State{$\theta^{(t)} = (A^{(1)}(\tau_1), A^{(2)}(t))$}
        \EndFor
        \State{$\hat{\theta}\leftarrow \theta^{(\tau_1 + \tau_2)}$}
    \end{algorithmic}
    \textbf{Output:} $\hat{\theta}$.
\end{algorithm}

We require the following assumptions on the prior $P_\pi$:
\begin{assumption}[Assumptions on prior $P_\pi$.]\label{assume:pi_prior}
    There exists $\gamma > 0$ such that almost surely over the draw of $\pi$:
    \begin{itemize}
        \item (Transition lower bounded): $\min_{s, s'}\pi(s' \mid s) > \gamma / S$.
        \item (Non-degeneracy of chain): The chain does not immediately mix to the stationary measure $\mu_\pi$ in one step:
        \begin{align*}
            \textstyle \sum_s \norm{\pi(\cdot \mid s) - \mu_\pi(\cdot)}_2^2 \ge \gamma^2/S
        \end{align*}
        \item (Symmetry): For any permutation $\sigma$ on $[S]$, $\sigma^{{-}1}\pi\sigma {\stackrel{d}{=}} \pi$.
        \item (Constant mean): $\mathbb{E}_\pi[\pi] = \frac{1}{S}1_S1_S^\top $.
    \end{itemize}
\end{assumption}

The final two assumptions imply that the marginal distributions of $\pi(s' \mid s)$ are equal for any $s' \neq s$, and likewise for $\pi(s \mid s)$, and that these distributions have mean $1/S$. We remark that \Cref{assume:pi_prior} is satisfied with probability $0.99$ for some $\gamma = \Theta(1)$ when each row of $\pi$ is sampled i.i.d from a Dirichlet distribution with parameter $\alpha = \Theta(1)$.

Additionally, we assume that a non-vanishing fraction of nodes have a parent.
\begin{assumption}\label{assume:r}
    Let $r := \abs{\calR}/{T}$. Then $r \le 1 - \gamma$.
\end{assumption}

Throughout the proof, we let $C_{\gamma, S}$ denote an absolute constant that depends \emph{polynomially} on $\gamma^{-1}$ and $S$. If $A \le C_{\gamma, S}B$, then we write $A = O_{\gamma, S}(B)$ or $A \lesssim_{\gamma, S} B$. For convenience, we also drop the dependence on $\gamma, S$, and write $O(\cdot)$ or $\lesssim$.

\subsection{Main Theorem}

The minimum possible value for the (unperturbed) loss is the mean entropy of $\pi$, averaged over the prior $P_\pi$:
\begin{align}
    L^* := -\mathbb{E}_{\pi}\qty[\frac{1}{S}\sum_{s, s'} \pi(s' \mid s)\log \pi(s' \mid s)].
\end{align}
We also define the effective sequence length as follows:
\begin{definition}[Effective Sequence Length]\label{def:T_eff}
    Decompose $\mathcal{G} = \bigcup_{i=1}^k \mathcal{T}_i$ where $\mathcal{T}_i$ are disjoint trees. Let $L_i$ denote the number of leaves of tree $\mathcal{T}_i$. Then, $\TT := \frac{T}{\max_{i=1}^k L_i}$.
\end{definition}

The effective sequence length roughly captures the number of independent samples present in the sequence $s_{1:T}$, and is related to the mixing time of the process on $\mathcal{G}$. For both the Markov chain and in-context learning examples, we see that $L_i = 1$ and thus $\TT = T$. 

Our main theorem is the following:
\begin{theorem}[Guarantee for \Cref{alg:training_alg}]\label{thm:main_thm}
    Assume that the effective sequence length satisfies $\TT \ge \text{poly}(\gamma^{-1}, S)$. There exist $\epsilon, \eta_1, \eta_2, \tau_1, \tau_2$ such that the output of \Cref{alg:training_alg}, $\hat\theta = (\hat A^{(1)}, \hat A^{(2)})$, satisfies
     \begin{align}
        L(\hat\theta) - L^* \lesssim \frac{\log T}{\TT^{c\gamma}}\qand \S(\hat A^{(1)})_{i, p(i)} \ge 1 - O\qty(\frac{1}{T})~\text{for}~i \in \overline{\mathcal{R}},
    \end{align}
    for some constant $c > 0$ (independent of $\gamma, S$).
\end{theorem}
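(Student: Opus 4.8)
The plan is to analyze the two stages of \Cref{alg:training_alg} in turn, the key point being that the gradient of $A^{(1)}$ automatically computes a $\chi^2$-type correlation between tokens, which the data processing inequality pins to the edges of $\mathcal{G}$. First I would differentiate the reduced model \eqref{eq:reduced_model} through both softmaxes. Writing $W := \S(\mathrm{MASK}(A^{(1)}))$ and $\hat p$ for the empirical token frequencies of $s_{1:T}$, a direct computation gives, for $i \in \overline{\calR}$ and $j \le i$, a factorization of the form $\partial_{A^{(1)}_{i,j}} L \approx -\tfrac{1}{T}\big(\tfrac{\pi(s_i\mid s_T)}{\hat p(s_i)}-1\big) W_{i,j}\big(\1(s_j = s_T) - \textstyle\sum_k W_{i,k}\1(s_k = s_T)\big)$ up to errors that are second order in the attention scale and that vanish as $\hat p \to \mu_\pi$. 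Evaluating this near the initialization (where $A^{(2)} = \beta_0 I$, so the second-layer logits carry a factor $\beta_0$, and $W$ is close to uniform), taking the population expectation, and using that $s_T \sim \unif([S])$ is independent of $s_{1:T-1}$ given $\pi$, the correlation collapses to
\[
\mathbb{E}\big[\partial_{A^{(1)}_{i,j}}L\big] = -\frac{\beta_0}{i\,S}\,\mathbb{E}_\pi\Big[\mathbb{E}_{b\sim\mu_\pi}\big\langle \tfrac{\pi(\cdot\mid b)}{\mu_\pi}-1,\ \tfrac{\P(s_i\mid s_j=b)}{\mu_\pi}-1\big\rangle_{\mu_\pi}\Big] + \mathrm{err},
\]
with $\mathrm{err} = O(\beta_0^2) + O(\beta_0\,\mathrm{poly}(\gamma^{-1},S)\,\TT^{-\Omega(\gamma)})$, the latter being the concentration of the empirical token counts (controlled at rate $\TT^{-\Omega(\gamma)}$ because the tree structure provides $\asymp \TT$ weakly dependent samples, each mixing geometrically by the Doeblin bound $\pi \ge \gamma/S$).

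\textbf{Data processing and the first-layer dynamics.} The inner product above is the signal. For $j = p(i)$ it equals $\mathbb{E}_\pi\,\mathbb{E}_{b\sim\mu_\pi}\chi^2(\pi(\cdot\mid b)\,\|\,\mu_\pi)$, which the non-degeneracy assumption keeps $\gtrsim \gamma^2/S$. For any $j \le i$ with $j \ne p(i)$: if $j$ lies in a different tree of the forest than $i$, then $s_i \perp s_j \mid \pi$ and the inner product is $0$; if $j$ lies in the same tree, then since $j < i$ forbids $j$ from being a descendant of $i$, the undirected path from $j$ to $i$ necessarily passes through $p(i)$, so $\P(s_i \mid s_j = b)$ factors as $\pi$ composed with an extra $\mu_\pi$-stationary kernel, and Doeblin contraction together with Cauchy--Schwarz in $\langle\cdot,\cdot\rangle_{\mu_\pi}$ give that this entry is smaller than the $j = p(i)$ entry by a multiplicative $1 - \Omega(\gamma)$, hence by an additive $\gtrsim \beta_0\gamma^3/(iS^2)$. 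For a root $i \in \calR$, every $j < i$ lies in a different tree (the root has the smallest index in its tree), so every entry of row $i$ receives only the $\mathrm{err}$ drift and the row stays uniform — all that \Cref{thm:single_parent_construct} needs of roots. Running gradient descent with learning rate $\eta_1$, each non-root row of $A^{(1)}$ then evolves like a softmax winner-take-all in which coordinate $p(i)$ has the strictly largest drift; I would track the margin $A^{(1)}_{i,p(i)} - \max_{j\ne p(i)} A^{(1)}_{i,j}$, show it grows at rate $\gtrsim \eta_1\beta_0\gamma^3/(TS^2)$ as long as the row is not yet concentrated, and verify that the structural facts above are stable along the trajectory (they only used that $W$ is spread out, not exactly uniform). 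Choosing $\beta_0$ small enough that $\mathrm{err}$ never overtakes the drift, and then $\eta_1, \tau_1$ so the margin reaches a large multiple of $\log T$, yields $\S(\hat A^{(1)})_{i,p(i)} \ge 1 - (i-1)e^{-\Theta(\log T)} \ge 1 - O(1/T)$ for all $i \in \overline{\calR}$, with every entry of $A^{(1)}$ staying $O(\log T)$.

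\textbf{Second stage and the loss bound.} With $\hat A^{(1)}$ frozen, the first-attention output at position $i$ equals $e_{s_{p(i)}}$ up to $O(1/T)$ for $i \in \overline{\calR}$ (and the empirical token average for roots), so the reduced model collapses to $f_\theta(s_{1:T}) \approx \overline{X}^\top \S(u)$ with $u_i \approx A^{(2)}_{s_T, s_{p(i)}}$. The symmetry assumptions $\sigma^{-1}\pi\sigma \stackrel{d}{=} \pi$ and $\mathbb{E}_\pi[\pi] = \tfrac1S\1\1^\top$ make the population gradient of $A^{(2)}$ equivariant under relabelings of $[S]$, so $A^{(2)}(t)$ stays of the form $a(t)I + b(t)\1\1^\top$; a short computation shows $a(t)$ increases while $b(t)$ stays negligible relative to it, so after $\tau_2$ steps the inner softmax is, up to $O(1/T)$, uniform over $\{i : s_{p(i)} = s_T\}$ and hence $f_{\hat\theta}(s_{1:T}) \approx \hat\pi_{s_{1:T}}(\cdot\mid s_T)$, the root positions contributing only a lower-order vanishing term. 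Finally $\hat\pi_{s_{1:T}}(\cdot\mid s_T)$ is an average over $\asymp \TT$ weakly dependent copies of a $\pi$-transition, so the same tree/Doeblin concentration gives $|\hat\pi_{s_{1:T}}(\cdot\mid s_T) - \pi(\cdot\mid s_T)| \lesssim \TT^{-\Omega(\gamma)}$ with high probability over $s_{1:T}$; plugging this into the $\epsilon$-perturbed cross-entropy \eqref{eq:CE_loss}, with $\epsilon \asymp 1/\mathrm{poly}(T)$ chosen to pay for tokens under-represented after $s_T$, converts it into $L(\hat\theta) - L^* \lesssim \frac{\log T}{\TT^{c\gamma}}$.

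\textbf{Main obstacle.} The conceptual core — gradient $=\chi^2$-mutual information, so the data processing inequality selects the causal edges — is clean; the real work is controlling the first stage's nonlinear dynamics: showing the $\chi^2$/DPI picture survives the error terms and, crucially, stays valid along the entire trajectory as the attention rows deconcentrate, so that every non-root row is driven to an $\Omega(\log T)$ margin simultaneously, with a single choice of $(\beta_0, \eta_1, \tau_1)$ that also keeps the second-stage approximations intact. Establishing this uniform-in-$i$ stability of the margin under the coupled softmax flow is where I expect most of the technical effort to go; the concentration estimates producing the $\TT^{-\Omega(\gamma)}$ rates are the other substantial, but more routine, ingredient.
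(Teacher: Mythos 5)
Your stage‑1 plan is essentially the paper's: after the softmax Jacobian preconditioner, the population gradient of row $i$ of $A^{(1)}$ is the vector with entries $g_{i,j}=\E_\pi\qty[\sum_{s,s'}\frac{\pi(s'\mid s)}{\mu_\pi(s')}\P[s_j=s,\,s_i=s']]-1$, which is exactly your $\mu_\pi$-weighted inner product; $g_{i,p(i)}=\E_\pi\norm{B_\pi}_F^2\ge\gamma^2/S$, Cauchy--Schwarz plus the contraction coefficient $\alpha(\pi)\le 1-\gamma$ give $g_{i,j}\le\frac{1+\alpha(\pi)}{2}\norm{B_\pi}_F^2$ for $j\ne p(i)$ in the same tree and $g_{i,j}=0$ across trees, and your margin/root-row tracking is \Cref{lem:A1_dynamics} and \Cref{lem:no_parents_dont_move}. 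That part is correct and follows the paper's route.

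Two points in your stage 2 are genuine gaps. First, ``a short computation shows $a(t)$ increases'' hides the only hard step of stage 2: after the symmetry reduction the update of the scalar $\beta$ is $-\Delta_\beta(\theta)$, and its positivity is not apparent from the formula. The paper's argument (\Cref{thm:stage2}, \Cref{lem:bound_g_beta}) rewrites the idealized gradient as $\frac{1}{S(S-1)}\sum_s\E_\pi\qty[\mu_\pi(s)\sum_{s'}\mu_\pi(s')\,h_s\qty(\pi(s'\mid s)/\mu_\pi(s'))]$ for a $\beta$-dependent rational function $h_s$, verifies $h_s$ is convex with $h_s''\gtrsim\gamma^5S^{-3}e^{-2\beta}$ on the relevant range, and so lower-bounds the gradient by $\gamma^8S^{-6}e^{-2\beta}$ times a $\chi^2$-divergence; the $e^{-2\beta}$ decay of this signal is what caps the number of useful stage-2 steps. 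Second, the root positions are not a ``lower-order vanishing term'': a root row of $\S(\hat A^{(1)})$ attends uniformly, so roots contribute $r\,e^{\beta\mu_\pi(s)}$ terms to the second-layer softmax, a relative error of order $e^{-\beta(1-\mu_\pi(s))}=e^{-\Omega(\beta\gamma)}$. This term --- not the concentration of $\hat\pi_{s_{1:T}}$ around $\pi$, which is $\TT^{-1/2}$ up to logs --- is why $\beta$ can only be pushed to $\beta^*=\Theta(\log\TT)$ and why the final rate is $\TT^{-c\gamma}$ rather than $\TT^{-c}$; see \Cref{lem:f_to_pi}, where the error splits as $(1+{\beta^*}^2)\log T/\TT^{1/2}+e^{-\beta^*\gamma}$. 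Your sketch would go through once these two steps are supplied, but as written the stage-2 positivity and the true source of the $\gamma$ in the exponent are both unaccounted for.
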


\Cref{alg:training_alg} thus approximately minimizes the loss by encoding the adjacency matrix of $\mathcal{G}$ in the first attention layer $\hat A^{(1)}$. Furthermore, we show that the trained model $\hat \theta$ achieves good prediction on transitions $\pi$ which are out of distribution:

\begin{theorem}[OOD Generalization]\label{thm:OOD}
Let $\tilde \pi$ have transition lower bounded as $\min_{s, s'}\tilde \pi(s' \mid s) \ge \gamma/S$ and let $\htheta$ be the trained model from \Cref{thm:main_thm}. Let $s_{1:T}$ be generated by steps 2-4 of \Cref{task:single_parent}. Then with probability at least $0.99$ over the draw of $s_{1:T}$,
\begin{align}
    \sup_{s'}\abs{f_{\hat \theta}(s_{1:T})_{s'} - \tilde \pi(s' \mid s_T)} \lesssim \frac{\log T}{\TT^{c\gamma}}.
\end{align}
\end{theorem}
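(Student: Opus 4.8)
The plan is to leverage the structural conclusion of Theorem \ref{thm:main_thm} — namely that $\hat A^{(1)}$ has converged to (a large scalar multiple of) the adjacency matrix of $\mathcal{G}$, and that $\hat A^{(2)}$ is close to $\beta I_S$ for a large $\beta$ — so that the trained reduced model $f_{\hat\theta}$ from \eqref{eq:reduced_model} behaves, up to small error, exactly like the hard-attention construction from \Cref{thm:single_parent_construct}. Concretely, using $\S(\hat A^{(1)})_{i,p(i)} \ge 1 - O(1/T)$ for $i \in \overline{\calR}$, the first attention layer puts $1 - O(1/T)$ mass on the parent token, so the output of the first attention is $\overline{x}_{p(i)}$ up to an $\ell_1$ perturbation of size $O(1/T)$; for root nodes it is the uniform average $\frac1i\sum_{j\le i}\overline{x}_j$. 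Then I would argue that the second softmax, with $\hat A^{(2)}\approx \beta I_S$ for large $\beta$, concentrates (exponentially in $\beta$) on exactly those positions $i$ with $s_{p(i)} = s_T$, so that $f_{\hat\theta}(s_{1:T})$ is within $o(1)$ (in sup norm) of the empirical transition estimate $\hat\pi_{s_{1:T}}(\cdot\mid s_T)$ defined just before \Cref{thm:single_parent_construct}. This first part is purely deterministic given the weights and the sequence, and reduces the theorem to: (i) a perturbation bound showing $\sup_{s'}|f_{\hat\theta}(s_{1:T})_{s'} - \hat\pi_{s_{1:T}}(s'\mid s_T)| \lesssim \log T/\TT^{c\gamma}$, and (ii) a concentration bound showing $\sup_{s'}|\hat\pi_{s_{1:T}}(s'\mid s_T) - \tilde\pi(s'\mid s_T)| \lesssim \log T/\TT^{c\gamma}$ with probability $0.99$.

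For step (ii), I would decompose $\mathcal{G}$ into its disjoint trees $\mathcal{T}_1,\dots,\mathcal{T}_k$ as in \Cref{def:T_eff}. The numerator of $\hat\pi_{s_{1:T}}(s'\mid s_T)$ counts edges $(j\to i)$ with $(s_j,s_i) = (s_T,s')$, and the denominator counts edges with $s_j = s_T$. Conditioned on the event $\{s_T = a\}$ for each fixed $a\in[S]$ (recall $s_T\sim\unif([S])$ independently, since $T$ is a root), each edge $(j\to i)\in\calE$ with $s_j = a$ contributes an indicator $\1(s_i = s')$ whose conditional mean is $\tilde\pi(s'\mid a)$. The key point is that edges lying in distinct trees are independent, and within a single tree, edges sharing no vertex are conditionally independent given the shared-ancestor values; a martingale/Azuma argument along each tree (or a direct second-moment bound exploiting that the number of edges incident to any vertex path is controlled) gives concentration at rate $1/\sqrt{N_a}$ where $N_a = |\{(j\to i): s_j = a\}|$ is the number of "usable" edges. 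Since there are $T - |\calR| \ge \gamma T$ edges total spread across trees with at most $\max_i L_i = T/\TT$ leaves each, a balls-into-bins / stationary-measure argument (using the transition lower bound $\pi(s'\mid s) > \gamma/S$ and $\tilde\pi(s'\mid s)\ge\gamma/S$ to ensure every token value is hit a constant fraction of the time) shows $N_a \gtrsim \TT$ for all $a$ simultaneously with probability $0.99$. Plugging in yields the $\log T/\TT^{c\gamma}$ rate after a union bound over $s', a$ — the $\log T$ absorbs the union bound and the $c\gamma$ exponent comes from the fact that the effective independent sample count per value is $\TT^{\Theta(\gamma)}$ rather than $\TT$ once mixing-time factors (governed by the spectral gap, which \Cref{assume:pi_prior}'s lower bound controls) are accounted for.

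The main obstacle I expect is step (ii)'s concentration argument along a general tree $\mathcal{T}_i$: unlike the pure Markov-chain case (a path), a tree has branching, so the tokens $\{s_i : (p(i)\to i)\in\calE\}$ sharing a common parent value are positively correlated through that parent, and the "number of independent samples" is really the number of leaves, not the number of edges — this is exactly why $\TT$ is defined via $\max_i L_i$. I would handle this by conditioning on the values at all internal (non-leaf) nodes, under which the leaf contributions become conditionally independent, and then separately showing the internal-node values themselves don't skew the empirical count too much (using the symmetry and constant-mean assumptions in \Cref{assume:pi_prior} to control $\mathbb{E}_\pi$ of the relevant counts, and the mixing/spectral-gap bound to control fluctuations). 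Reconciling the deterministic perturbation analysis of part (i) — which needs $\beta$ large enough that the second softmax error is $\ll \log T/\TT^{c\gamma}$ but not so large that it conflicts with the $\tau_2, \eta_2$ chosen in \Cref{thm:main_thm} — with the parameter choices already fixed by the main theorem is a secondary bookkeeping hurdle, but one that should follow since \Cref{thm:main_thm} already certifies $L(\hat\theta) - L^*$ is small, which itself forces $\hat A^{(2)}$ into the desired regime via a loss-landscape argument on the second stage.
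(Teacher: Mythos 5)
Your outline is sound and would yield the theorem, but it takes a genuinely different route from the paper. The paper's proof is essentially a three-line corollary: Lemma~\ref{lem:f_to_pi} already establishes $\E_X\bigl[(f_{\htheta}(X;s)_{s'}-\pi(s'\mid s))^2\bigr]\lesssim \log T/\TT^{\Theta_\gamma(1)}$, and crucially every concentration lemma feeding into it (the bounds on $\calE_1,\dots,\calE_7$, Lemmas~\ref{lem:muhat_variance}--\ref{lem:muhat_parent_variance}) assumes \emph{only} the transition lower bound $\min\pi(s'\mid s)\ge \gamma/S$, never that $\pi$ is a typical draw from $P_\pi$; so the lemma applies verbatim to the OOD $\tilde\pi$, and the theorem follows from Markov's inequality plus a union bound over the $S^2$ pairs $(s,s')$. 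You instead route through the empirical transition $\hat\pi_{s_{1:T}}(\cdot\mid s_T)$, splitting into a deterministic perturbation step and a fresh tree-wise concentration step. This works --- your fallback of a direct second-moment bound is exactly the paper's mechanism ($\sum_{i,j}\lambda^{d(i,j)}\le T^2/\TT(\lambda)$, \Cref{lem:bound_T_eff_leaves}, \Cref{lem:chat_variance}), and the ratio-of-concentrating-counts argument closes step (ii) --- but it duplicates work the main theorem's proof already did through a different intermediate quantity (the population-level softmax output rather than the $\beta=\infty$ empirical transition). One point of confusion worth flagging: the $\TT^{-c\gamma}$ exponent does \emph{not} come from an effective sample count of $\TT^{\Theta(\gamma)}$; the concentration terms decay at rate $\TT^{-1/4}$ or $\TT^{-1/2}$ up to logs. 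The $\gamma$ in the exponent is driven entirely by the finite softmax temperature: root nodes receive relative weight $e^{-\beta^*(1-\mu_\pi(s_T))}\le e^{-\Theta(\beta^*\gamma)}$ in the second attention, and with $e^{\beta^*}=\Theta(\TT^{1/12})$ this contributes the dominant $\TT^{-\Theta(\gamma)}$ error (the $e^{-\beta^*\gamma}$ term in \Cref{lem:f_to_pi}). Your step (i) perturbation bound must carry this term explicitly; it is not a lower-order correction to the hard-attention limit.
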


We remark that the only assumption needed on $\tilde \pi$ is the lower bound on the transition; it does not need to be close to typical draw from the prior $P_\pi$.

\section{Proof Sketch}

\subsection{Stage 1: Learning the Causal Graph}

The first step of the proof is to show that during the first stage of training, the first attention layer $A^{(1)}$ learns the latent causal graph $\mathcal{G}$. 

\subsubsection{The Oracle Algorithm}
We begin by describing an efficient algorithm for learning the graph $\mathcal{G}$. The goal is to recover the parent node $p(i)$ for each $i$. The key idea is that as a result of the data generating process, $s_{p(i)}$ is the node which maximizes mutual information with $s_i$. 

We briefly recall the definition of an $f$-divergence.
\begin{definition}
    Let $f$ be a convex function with $f(1) = 0$. The $f$-divergence between two probability distributions $P, Q$ on state space $\mathcal{X}$ is defined as
    \begin{align}
        D_f(P || Q) := \sum_{x \in \mathcal{X}}Q(x)f\qty(\frac{P(x)}{Q(x)}).
    \end{align}
    The $f$ mutual information between two random variables $Y, Z$, denoted by $I_{f}(Y; Z)$, is
    \begin{align}
        I_{f}(Y; Z) := D_f(P_{Y,Z} || P_Y \otimes P_Z).
    \end{align}
\end{definition}
Given a latent variable $C$, the conditional mutual information $I_f(Y; Z \mid C)$ is defined as
\begin{align}
    I_f(Y; Z \mid C) := \mathbb{E}_{C}\qty[D_f(P_{(Y,Z) \mid C} || P_{Y \mid C} \otimes P_{Z \mid C})].
\end{align}

Information measures admit a \emph{data processing inequality}:

\begin{lemma}[Data Processing Inequality] Let $I_f$ be an information measure, and let $W \rightarrow Y \rightarrow Z$ be a Markov chain. Then $I_f(W; Z) \le I_f(Y; Z).$
\end{lemma}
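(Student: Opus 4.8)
The plan is to derive the statement from the more basic \emph{channel monotonicity} of $f$-divergences: for any Markov kernel $K$ and any pair of distributions $P,Q$ on its input space, $D_f(KP \,\|\, KQ) \le D_f(P \,\|\, Q)$, where $KP, KQ$ denote the output (pushforward) distributions. Granting this, the lemma follows in one line. Since $W \to Y \to Z$ is a Markov chain we have $P_{W \mid Y, Z} = P_{W \mid Y}$, so consider the kernel $K$ acting on pairs that fixes the second coordinate and resamples the first, sending $(y,z) \mapsto (w,z)$ with $w \sim P_{W \mid Y = y}$. Applying $K$ to $P_{Y,Z}$ produces exactly $P_{W,Z}$ (this is where the Markov property is used, to replace $P_{W\mid Y}$ by $P_{W\mid Y,Z}$ and collapse the sum over $y$), and applying $K$ to the product $P_Y \otimes P_Z$ produces $P_W \otimes P_Z$ (the $Z$-marginal is untouched and the $Y$-marginal is pushed through the same conditional). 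Hence
\[
I_f(W;Z) = D_f(P_{W,Z} \,\|\, P_W \otimes P_Z) = D_f\big(K P_{Y,Z} \,\|\, K(P_Y \otimes P_Z)\big) \le D_f(P_{Y,Z} \,\|\, P_Y \otimes P_Z) = I_f(Y;Z).
\]

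It remains to establish channel monotonicity, which is exactly where the convexity of $f$ enters. Writing $(KP)(y) = \sum_x K(y\mid x) P(x)$, the output likelihood ratio is a convex combination of input likelihood ratios:
\[
\frac{(KP)(y)}{(KQ)(y)} = \sum_x \lambda_y(x)\, \frac{P(x)}{Q(x)}, \qquad \lambda_y(x) := \frac{K(y\mid x) Q(x)}{(KQ)(y)}, \qquad \sum_x \lambda_y(x) = 1.
\]
Jensen's inequality applied to the convex function $f$ gives $f\big((KP)(y)/(KQ)(y)\big) \le \sum_x \lambda_y(x)\, f(P(x)/Q(x))$; multiplying through by $(KQ)(y)$ and summing over $y$ telescopes the kernel, leaving $D_f(KP\,\|\,KQ) \le \sum_x Q(x) f(P(x)/Q(x)) = D_f(P\,\|\,Q)$. (An equivalent route is to invoke joint convexity of the perspective map $(p,q)\mapsto q f(p/q)$; I would pick whichever is shorter to write.)

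There is essentially no serious obstacle for the use made of this lemma in the paper: all relevant random variables are discrete (tokens in $[S]$, positions in $[T]$), and under \Cref{assume:pi_prior} every transition probability is strictly positive, so all distributions above have full support and the finite sums carry no $0/0$ pathologies. The only mild care needed, if one insisted on full generality, is the standard convention for vanishing denominators in the definition of $D_f$ and a limiting/variational argument for non-discrete alphabets; since that generality is not required here, I would simply state and prove the discrete version and, if desired, cite a reference for the general case.
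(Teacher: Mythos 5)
Your proof is correct, and it is the standard textbook derivation: reduce the DPI for $f$-mutual information to monotonicity of $f$-divergences under a Markov kernel (the kernel $(y,z)\mapsto(w,z)$ with $w\sim P_{W\mid Y=y}$, which maps $P_{Y,Z}\mapsto P_{W,Z}$ by the Markov property and $P_Y\otimes P_Z\mapsto P_W\otimes P_Z$), and prove channel monotonicity by Jensen's inequality applied to the convex $f$ with the tilted weights $\lambda_y(x)=K(y\mid x)Q(x)/(KQ)(y)$. The paper itself states this lemma without proof, treating it as a standard fact, so there is no ``paper's approach'' to compare against; note only that where the paper actually needs the inequality quantitatively (in the proof of \Cref{thm:DPI}), it invokes the contraction-coefficient form $D_f(\pi\circ x\,\|\,\pi\circ y)\le \alpha(\pi)D_f(x\,\|\,y)$ from \Cref{lem:contraction} (citing Cohen et al.), which is a strict strengthening of the qualitative bound you prove; your Jensen argument yields $\alpha=1$ but not the strong version. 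As a statement and proof of the lemma as written, your argument is complete, and your remark that all alphabets here are finite with full-support distributions correctly disposes of the only technical caveats.
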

The data processing inequality suggests an efficient algorithm for recovering $\mathcal{G}$. For non-root nodes $i \in \overline{\calR}$, $s_j \rightarrow s_{p(i)} \rightarrow s_i$ form a Markov chain conditioned on $\pi$. Therefore by the data processing inequality, $p(i) \in \arg\max_{j < i} I_f(s_i; s_j \mid \pi)$. Otherwise, if $i \in \mathcal{R}$, $s_j$ and $s_i$ are independent given $\pi$, and thus $I_f(s_i; s_j \mid \pi) = 0$. To recover the graph $\mathcal{G}$, one can compute the conditional mutual informations $I_f(s_i; s_j \mid \pi)$. If $I_f(s_i; s_j \mid \pi) = 0$ for all $j < i$, then $i$ is a root node. Otherwise, $p(i) = \arg\max_{j < i} I_f(s_i; s_j \mid \pi)$. Pseudocode for this oracle algorithm is given in \Cref{alg:oracle_alg}.

\begin{algorithm}
    \caption{Oracle Algorithm}\label{alg:oracle_alg}
    \begin{algorithmic}
        \State{$\mathcal{E} \leftarrow \emptyset$}
        \For{$i = 1, \dots, T-1$:}
        \If{$\max_{j < i}I_f(s_i; s_j \mid \pi) > 0$}
        \State{$p(i) \leftarrow \arg\max_{j < i} I_f(s_i, s_j \mid \pi)$.}
        \State{$\mathcal{E} \leftarrow \mathcal{E} \cup \{(p(i) \to i)\}$.}
        \EndIf
        \EndFor
    \end{algorithmic}
\end{algorithm}


We remark that \Cref{alg:oracle_alg} is a special case of the celebrated Chow-Liu algorithm~\citep{chow1968approximating}, when a topological ordering of the tree is known a priori: the tree consisting of edges $(p(i) \to i)$ where $p(i) = \arg\max_{j < i} I_{f}(s_i; s_j \mid \pi)$ is indeed the max-weight spanning forest when the edge weights are the conditional mutual informations.

\subsubsection{The Gradient Descent Dynamics}


We next compute the gradient with respect to $A^{(1)}$. Let $A_i^{(1)} \in \R^i$ denote the $i$th row of $A^{(1)}$ (restricted to the first $i$ entries, since $A^{(1)}$ is lower triangular). Define $J:\R^k \rightarrow \R^{k \times k}$ by $J(v) = \diag(v) - vv^\top $; $J$ is the Jacobian of the softmax function $\S$, in that $\nabla_u \S(u) = J(\S(u)).$

The following lemma computes the gradient with respect to $A^{(1)}$; a heuristic derivation is deferred to \Cref{sec:lemma1_sketch}.
\begin{lemma}\label{lem:gradient_informal}
\begin{align}
    \nabla_{A^{(1)}_i} L(\theta) = - \frac{\beta_0}{ST}J\qty(\S(A_i^{(1)}))\qty(g_i + O(\TT^{-1/2})),
\end{align}
where the $j$th entry of $g_i$, $g_{i,j}$, is
\begin{align}
    g_{i, j} := \E_{\pi}\qty[\sum_{s, s'}\frac{\pi(s' \mid s)}{\mu_\pi(s')}\mathbb{P}_X[s_j = s, x_i = s']] - 1.
\end{align}
\end{lemma}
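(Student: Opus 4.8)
The plan is to compute $\nabla_{A^{(1)}_i} L(\theta)$ directly by differentiating through the reduced model \eqref{eq:reduced_model}, evaluating the derivative at the initialization $A^{(2)}(0) = \beta_0 I_S$ and $A^{(1)}$ arbitrary lower-triangular, and then isolating the leading-order term in $\beta_0$ and in $\TT^{-1/2}$. Write $f_\theta(s_{1:T}) = \overline X^\top \S(u)$ where $u := \S(\mathrm{MASK}(A^{(1)}))\overline X {A^{(2)}}^\top \overline x_T \in \R^T$; at initialization ${A^{(2)}}^\top \overline x_T = \beta_0 e_{s_T}$, so $u = \beta_0\, \S(\mathrm{MASK}(A^{(1)}))\overline X e_{s_T}$, whose $j$th entry is $\beta_0$ times the indicator-weighted attention mass that position $j$ puts (through the softmax over $A^{(1)}$'s $j$th row) on positions with token $s_T$. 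Since $\beta_0$ is small, I would Taylor-expand $\S(u) = \frac1T \1 + \frac1T(u - \bar u \1) + O(\beta_0^2)$ about the uniform distribution, so that $f_\theta(s_{1:T})_{s'} \approx \frac1T\sum_j \1(s_j = s') + \frac1T\sum_j \1(s_j=s')(u_j - \bar u)$. The term $+\epsilon$ in the log is there to keep the loss finite, and one checks $\epsilon$ can be taken small enough that it only perturbs lower-order terms; then $\log(f_\theta + \epsilon) \approx \log(\text{empirical frequency of }s') + (\text{linear correction})$, and only the linear correction depends on $A^{(1)}$.

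Next I would differentiate: $\nabla_{A^{(1)}_i} u_j$ is nonzero only for $j = i$ (position $j$'s attention row is $A^{(1)}_j$, and these are independent across $j$), and equals $\beta_0 J(\S(A^{(1)}_i))\,(\overline X e_{s_T})_{\le i}$, i.e. the softmax-Jacobian applied to the vector of indicators $\1(s_{j'} = s_T)$ for $j' \le i$. Plugging into the chain rule for $L$, the $j$-indexed sum collapses to the $j = i$ term, and after pulling out $J(\S(A^{(1)}_i))$ (which is symmetric) one is left with $\nabla_{A^{(1)}_i}L = -\frac{\beta_0}{T} J(\S(A^{(1)}_i))\, \tilde g_i$ where $\tilde g_i \in \R^i$ has $j$th coordinate $\E_\pi \E_X\big[\1(s_j = s_T)\sum_{s'}\1(s_i = s')\,(\partial_{u_j}\log(\cdots))\big]$. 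Carrying out the $\partial_{u_j}\log$ and using that at leading order $\log(f+\epsilon)$ differentiates to $\frac1T \1(s_j = s')/(\text{empirical freq of }s' + \epsilon)$, and taking the expectation over $s_T \sim \unif([S])$ (which contributes the $\frac1S$), the coordinate becomes $\frac1S\E_\pi[\sum_{s,s'}\frac{\pi(s'\mid s)}{\mu_\pi(s')}\P_X[s_j = s, x_i = s']]$ up to (i) replacing the empirical frequency of $s'$ by its population value $\mu_\pi(s')$, and (ii) the expectation of $\pi(s'\mid s_T)$ given $s_i = s'$, $s_j = s$. Finally, since $J(\S(A^{(1)}_i))\1 = 0$, I can freely subtract the constant $1$ inside the bracket, which produces exactly the stated $g_{i,j}$.

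The main obstacle — and the only place real estimates are needed — is controlling the two approximations flagged above, which together constitute the $O(\TT^{-1/2})$ error: replacing the random empirical token frequencies $\frac1T\sum_{j'}\1(s_{j'}=s')$ by their means $\mu_\pi(s')$, and showing that the higher-order Taylor terms in $\beta_0$ (and the $\epsilon$-perturbation) contribute only lower order. The empirical-frequency concentration is where $\TT$, rather than $T$, enters: because the sequence is generated along the forest $\mathcal{G}$, the number of effectively independent samples is governed by $\TT = T/\max_i L_i$ (Definition \ref{def:T_eff}), so a variance computation using the tree structure and the mixing/non-degeneracy hypotheses in Assumption \ref{assume:pi_prior} gives fluctuations of order $\TT^{-1/2}$; one also uses the uniform lower bound $\mu_\pi(s') \gtrsim \gamma/S$ (a consequence of the transition lower bound) to control $1/\mu_\pi(s')$ and to linearize $1/(x+\epsilon)$ around $x = \mu_\pi(s')$. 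I would package all of this into the single error vector added to $g_i$, noting that it is harmless because it is hit by $J(\S(A^{(1)}_i))$, which is a contraction, so the bound survives multiplication. The clean factorization $\P_X[s_j = s, x_i = s']$ in $g_{i,j}$ then makes the data-processing-inequality argument of the next step transparent: $\sum_{s,s'}\frac{\pi(s'\mid s)}{\mu_\pi(s')}\P[s_j=s,s_i=s'] - 1$ is, up to the $1/S$ normalization, the $\chi^2$-information $I_{\chi^2}(s_i; s_j\mid \pi)$ plus a benign correction, so its size is maximized at $j = p(i)$.
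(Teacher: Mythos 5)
Your proposal follows essentially the same route as the paper's derivation: linearize the outer softmax around $\beta_0 = 0$ so that $f_\theta(X;s)_{s'} \approx \hat\mu_X(s')$, compute the chain-rule gradient (which collapses to the $j=i$ term and produces the $J(\S(A_i^{(1)}))$ preconditioner), and then control the two error sources — the $\beta_0/\epsilon$ higher-order terms and the replacement of the empirical frequency $\hat\mu_X(s')$ by $\mu_\pi(s')$ via mixing on the forest at rate $\TT^{-1/2}$ — exactly as the paper does in its Lemma~\ref{thm:stage1} via Lemmas~\ref{lem:nonzero_beta} and~\ref{lem:concentration_A1}. The only cosmetic difference is that you obtain the $-1$ by invoking $J(\S(A_i^{(1)}))\1 = 0$, whereas the paper gets it directly from the $-\hat\mu_X(s')$ term in the Jacobian; both are valid.
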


\begin{figure*}[t!]
\centering
    \includegraphics[width=0.9\textwidth]{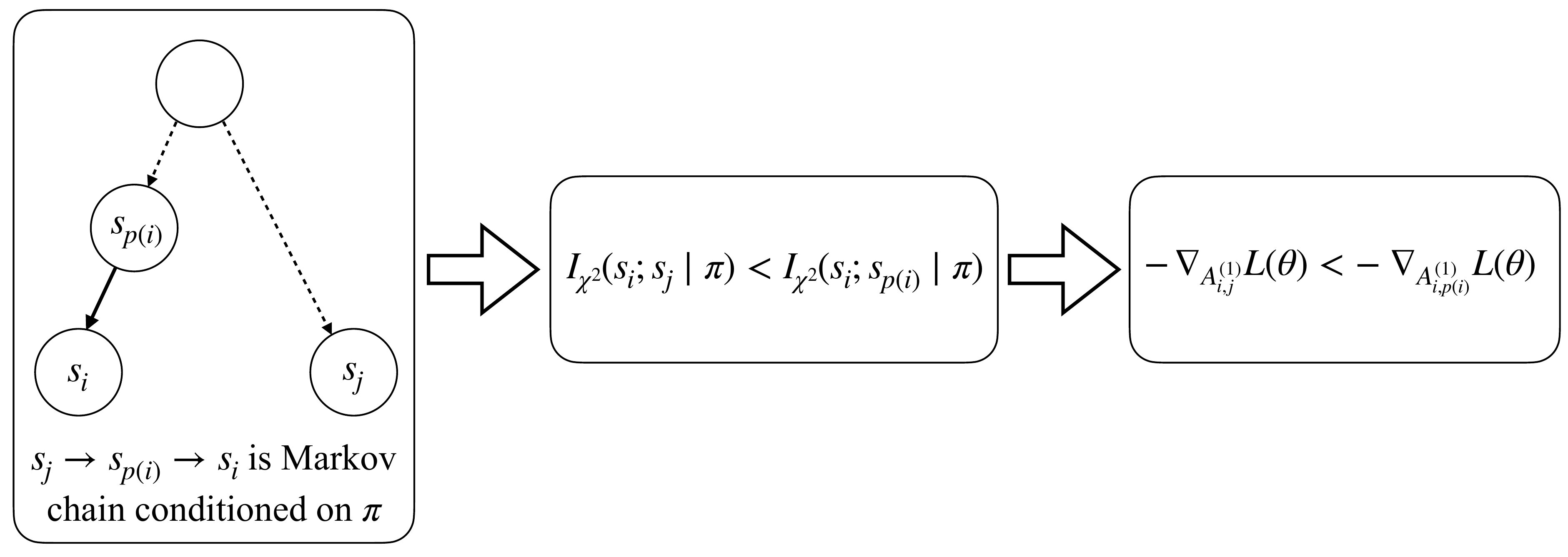}
    \caption{By the data processing inequality, $A^{(1)}_{i, p(i)}$ grows faster than $A^{(1)}_{i, j}$.}
    \label{fig:pf_sketch}
\end{figure*}

For non-root nodes $i \in \overline{\mathcal{R}}$, $(s_i, s_{p(i)})$ has joint distribution $\mathbb{P}[s_i = s', s_{p(i)} = s] = \mu_\pi(s)\pi(s' | s)$, and thus $g_{i, p(i)}$ is
\begin{align}
g_{i,p(i)} = \E_\pi\qty[\sum_{s, s'}\frac{\pi(s' \mid s)^2\mu_\pi(s)^2}{\mu_\pi(s')\mu_\pi(s)} - 1].
\end{align}
It turns out that this expression is exactly equal to the $\chi^2$-mutual information, $I_{\chi^2}$, between $s_i$ and $s_{p(i)}$ conditioned on $\pi$. The $\chi^2$-divergence is the $f$-divergence obtained by setting $f(z) = (z-1)^2$. Therefore
\begin{align}
    g_{i,p(i)} = I_{\chi^2}(s_i; s_{p(i)} \mid \pi).
\end{align}
By Cauchy-Schwarz, we can also upper bound $g_{i,j}$ by the sum of two $\chi^2$-mutual informations:
\begin{align}
    g_{i, j} \le \frac12 I_{\chi^2}(s_i; s_{p(i)} \mid \pi) + \frac12 I_{\chi^2}(s_i; s_j \mid \pi).
\end{align}
Applying the data processing inequality\footnote{By the assumptions on the prior $P_\pi$ (\Cref{assume:pi_prior}), the data processing inequality is indeed strict.}, we obtain that for $j \neq p(i)$
\begin{align}
    g_{i, j} < I_{\chi^2}(s_i; s_{p(i)} \mid \pi) = g_{i, p(i)}.
\end{align}
Therefore $g_{i,j}$ is maximized at $j=p(i)$, and the gradient is aligned with the adjacency matrix of the causal graph $\mathcal{G}$. In fact, the gradient descent dynamics mimic \Cref{alg:oracle_alg}!

Maintaining the inductive hypothesis that $\arg\max_j A_{i,j}^{(1)} = p(i)$, we see by the gradient formula in \Cref{lem:gradient_informal} that $\arg\max\qty[-\nabla_{A_i^{(1)}}L(\theta)_j] = p(i)$. Thus $A_{i, p(i)}^{(1)}$ continues to grow faster than the other entries throughout stage 1. This growth continues until $\S(A^{(1)})_{i, p(i)} \approx 1.$

For root nodes $i \in \calR$, $i$ is independent of $j$. Since both $s_i$ and $s_j$ have the marginal $\mu_\pi$, one has
\begin{align}
    g_{i,j} = \mathbb{E}_\pi\qty[\sum_{s, s'}\frac{\pi(s' \mid s)}{\mu_\pi(s')}\mu_\pi(s')\mu_\pi(s)] - 1 = 0
\end{align}
and thus $\nabla_{A_{i}^{(1)}}L(\theta) \approx 0$. Therefore at the end of stage 1, $\S(A^{(1)})_{i, j} \approx \frac{1}{i}$ for all $j < i$.

Altogether, at the end of stage $1$, $A^{(1)}$ satisfies
\begin{align}
    \S(A^{(1)})_{i, j} \approx \begin{cases} \mathbf{1}(j = p(i)) & i \in \overline{\calR}\\
        \frac{1}{i} & i \in \calR
    \end{cases}.
\end{align}
The  precise quantitative bound is given in \Cref{cor:end_of_stage_1}, and requires controlling the various error terms throughout multiple steps of gradient descent.

\subsection{Stage 2: Decreasing the Loss}
We next show that during the second stage, $A^{(2)}$ grows large in the direction $I_S - \frac{1}{S}1_S1_S^\top $. By a symmetry argument, one can show that $\nabla_{A^{(2)}}L(\theta)$ is proportional to $I_S - \frac{1}{S}1_S1_S^\top $. Writing $A^{(2)} = \beta I_S + \frac{\beta - \beta_0}{S}1_S1_S^\top $, it suffices to show that $\nabla_\beta L(\theta) < 0$.

In \Cref{thm:stage2}, we show that $-\nabla_\beta L(\theta)$ can be approximated by a quantity which is an $f$-mutual information for some convex $f$ defined in terms of $\beta$. We show that this quantity is strictly positive (\Cref{lem:bound_g_beta}) until $\beta = \Theta(\log \TT)$. Thus at the end of stage 2, $\beta = \Theta(\log \TT)$.

To conclude the proof of \Cref{thm:main_thm}, we must show that $f_{\hat \theta}(X; s)_{s'} \approx \pi(s' \mid s)$. Indeed, \Cref{lem:f_to_pi} shows that
\begin{align}
    \abs{f_{\hat \theta}(X; s)_{s'} - \pi(s' \mid s)} \le \exp(-\Theta(\beta)) = \TT^{-\Theta(1)},
\end{align}
which implies the desired bound on the loss.

\section{Causal Graphs with Multiple Parents}\label{sec:multiple_parents}

We next consider a generalization of \Cref{task:single_parent}. Let $\mathcal{G}$ be a directed acyclic graph over the vertex set $[T+1]$. For each node $i \in [T+1]$, we assume that the set of parent nodes $p(i) \subset [i-1]$ satisfy the property that either $p(i) = \emptyset$ or $\abs{p(i)} = k$. If $p(i) \neq \emptyset$, we write $p(i) = \{p(i)_1, \dots, p(i)_k\}$, where $p(i)_1 < \cdots < p(i)_k$. As before, let $\calR = \{i \in [T+1] : p(i) = \emptyset\}$ be the root nodes. We additionally assume that $T+1 \not\in \calR$.

We now consider $k$-parent transition tensors $\pi$: For any $a_1, \dots, a_k \in [S]$, $\pi(\cdot | a_1, \dots, a_k)$ is a probability distribution over $[S]$. Let $P^k_{\pi}$ be a prior over such $\pi$. Each sequence is now generated as follows:

\begin{task}[Graphs with Multiple Parents]\label{task:multi_parent}~
\begin{enumerate}
    \item Draw $\pi \sim P^k_\pi$.
    \item For $i = 1, \dots, T+1$, if $p(i) = \emptyset$, sample $s_i \sim \mathrm{Unif}([S])$. Otherwise, sample $s_i \sim \pi(\cdot | s_{p(i)_1}, \dots, s_{p(i)_k})$
    \item Return the input $x = s_{1:T}$ and the target $y = s_{T+1}$.
\end{enumerate}
\end{task}

\paragraph{Example.} One example of \Cref{task:multi_parent} is learning in-context $n$-grams. In an $n$-gram model, each token only depends on the prior $n-1$ tokens in the sequence. This $n$-gram model can be obtained by setting $k = n-1$, letting the root nodes be $\mathcal{R} = [n-1]$, and choosing the parent sets $p(i) = \{i - n + 1, i - n + 2, \dots, i-1\}$ for $i \ge n$. The conditional density $\mathbb{P}(s_{k + n} \mid s_{k+1:k+n-1})$ is then just the transition $\pi(s_{k+n} \mid s_{k+1}, \dots, s_{k + n - 1})$; the goal is to estimate this transition in-context, by first learning that all sequences share the same $n$-gram causal structure.

Given a sequence $s_{1:T}$, a good estimate for the transition $\pi$ is the empirical transition $\hat \pi_{s_{1:T}}(s' \mid a_1, \dots, a_k)$, defined as
\begin{align}
    \frac{\abs{\{i : s_i = s', s_{p(i)_1} = a_1, \dots, s_{p(i)_k} = a_k\}}}{\abs{\{i : s_{p(i)_1} = a_1, \dots, s_{p(i)_k} = a_k\}}}
\end{align}

We explicitly construct a two-layer transformer with $k$ heads in the first layer that approximately expresses this empirical transition.

\begin{construction}\label{thm:multi_parent_construction}
    There exists a two attention layer transformer $f_{\tilde \theta}$ with $k$ heads such that
    \begin{align}
        f_{\tilde \theta}(s_{1:T})_{s'} \approx \hat \pi_{s_{1:T}} (s' \mid s_{p(T+1)_1}, \dots, s_{p(T+1)_k})
    \end{align}
\end{construction}

\Cref{thm:multi_parent_construction} is deferred to \Cref{sec:multiparent}. At a high level, the $\ell$th head in the first layer copies $p(i)_\ell$ to the residual stream of $i$, and copies $p(T+1)_\ell$ to the residual stream of $T$; the second attention head compares these tuples of parents, and thus attends to tokens $i$ where $s_{p(i)_\ell} = s_{p(T+1)_\ell}$ for all $\ell \in [k]$. 

\begin{figure}[t!]
    \centering
    \subfigure[3-gram where each position $i$ attends to $i-1,i-2$.]{\includegraphics[height=0.19\textwidth]{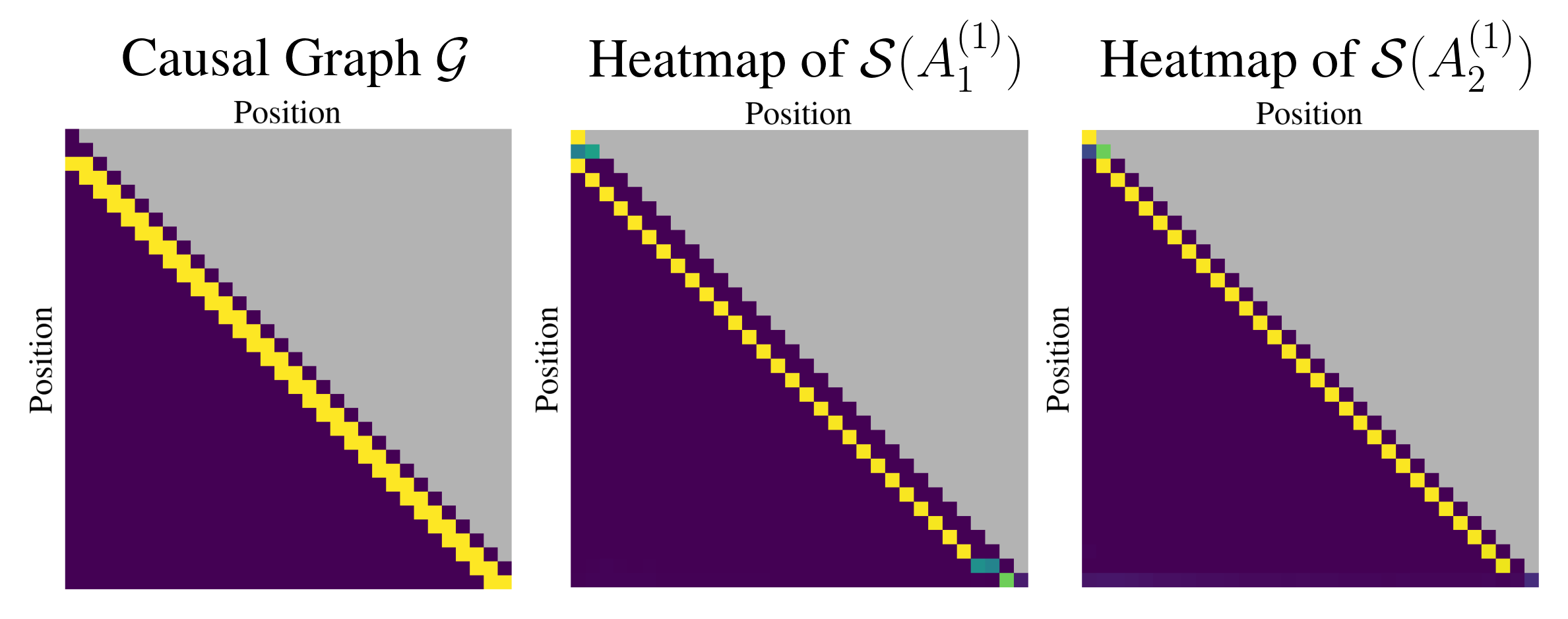}}
    \subfigure[Each position $i$ attends to $i-1$ and $\lfloor \frac{i-1}{2} \rfloor$]{\includegraphics[height=0.19\textwidth]{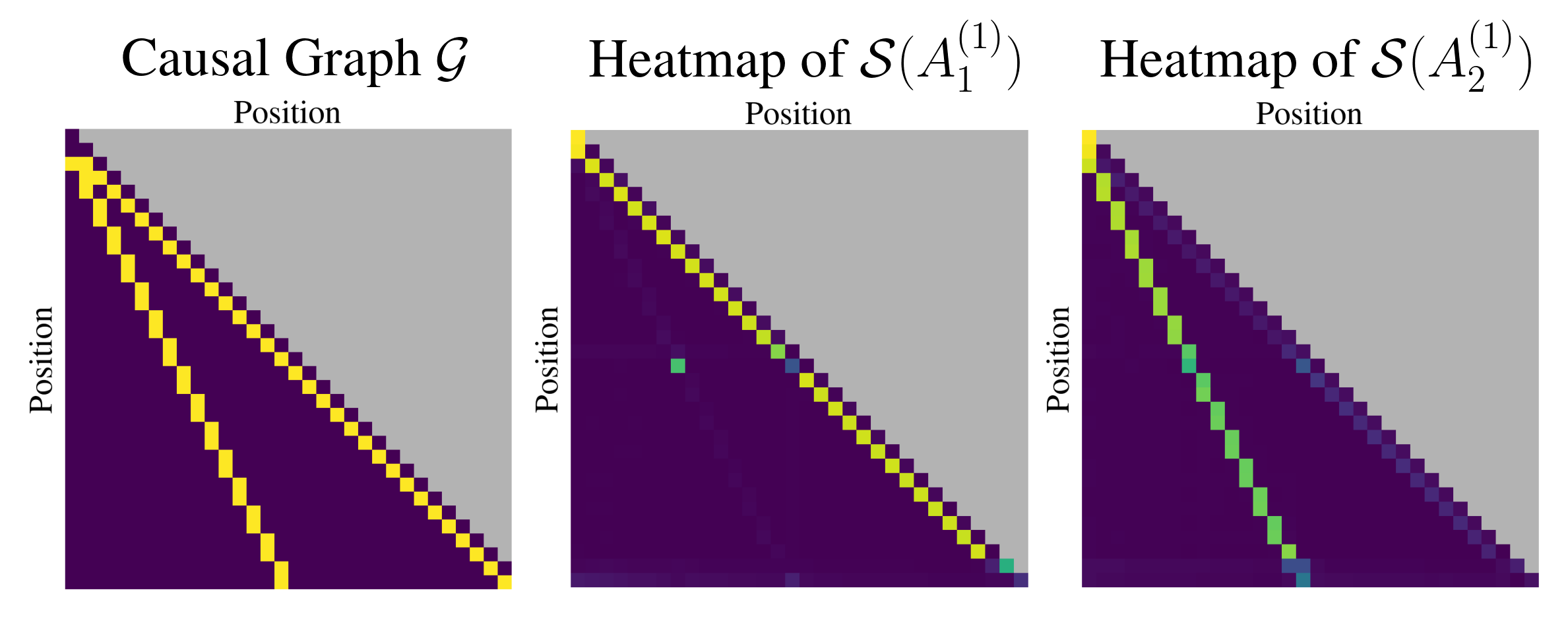}}
    \subfigure[4-gram where each position $i$ attends to $i-1,i-2, i-3$.]{\includegraphics[height=0.19\textwidth]{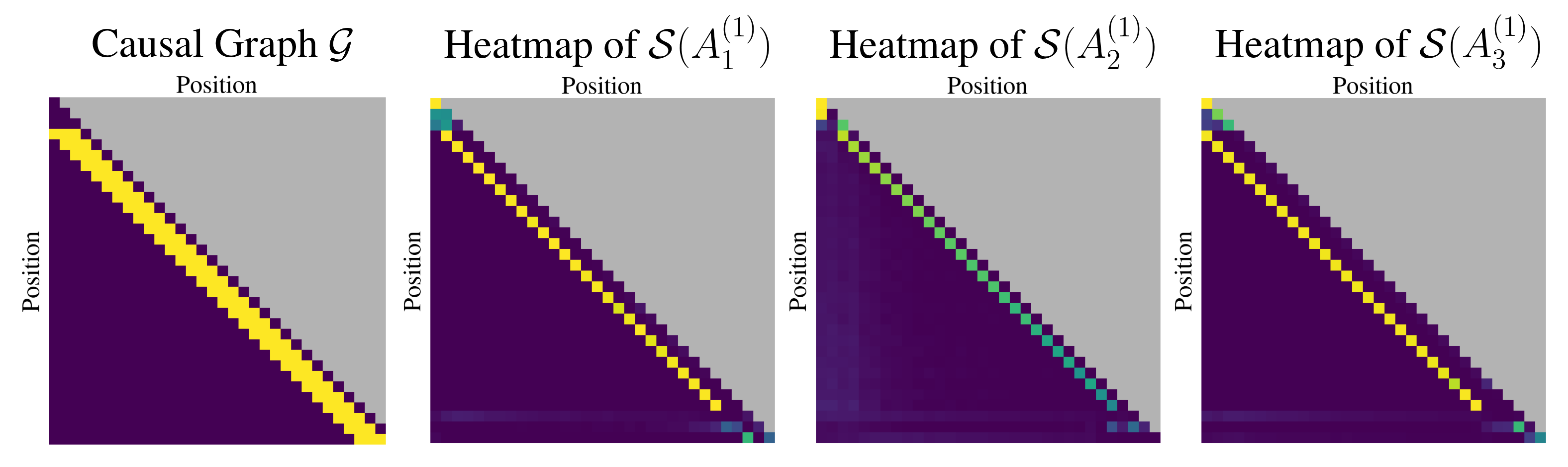}}
    \caption{\textbf{Multiple Parents:} We show three examples of trained transformers on \Cref{task:multi_parent} with $k=2,2,3$ respectively. The left column shows the adjacency matrix of the causal graphs $\mathcal{G}$. To their right, we plot the attention patterns $\mathcal{S}(A^{(1)}_i)$ for each head $i$ where $A^{(1)}_i$ is the position-position component of $\widetilde{A}^{(1)}_i$. We see that each attention head learns a single set of parents in the causal graph $\mathcal{G}$, which agrees with \Cref{thm:multi_parent_construction}. See \Cref{fig:appendix_multi_parent} for plots of the full matrices $\widetilde{A}^{(1)}_i$.}
    \label{fig:multi_parent_smax}
\end{figure}

In \Cref{fig:multi_parent_smax,fig:appendix_multi_parent}, we show empirically that transformers trained on \Cref{task:multi_parent} for varying latent graphs $\mathcal{G}$ indeed converge to such a construction. The challenge, however, in analyzing the gradient descent dynamics is that there are multiple attention heads each of which attends to a different parent. The dynamics must thus break the symmetry between the multiple heads. Analyzing the optimization dynamics of a multi-head transformer for solving \Cref{task:multi_parent} is thus a very interesting direction for future work.

\section*{Acknowledgements}
EN acknowledges support from a National Defense Science \& Engineering Graduate Fellowship.
AD acknowledges support from a NSF Graduate Research Fellowship. EN, AD, and JDL acknowledge support of the ARO under MURI Award W911NF-11-1-0304, the Sloan Research Fellowship,
NSF CCF 2002272, NSF IIS 2107304, NSF CIF 2212262, ONR Young Investigator Award, and
NSF CAREER Award 2144994


\bibliography{references}

\begin{thebibliography}{37}
\providecommand{\natexlab}[1]{#1}
\providecommand{\url}[1]{\texttt{#1}}
\expandafter\ifx\csname urlstyle\endcsname\relax
  \providecommand{\doi}[1]{doi: #1}\else
  \providecommand{\doi}{doi: \begingroup \urlstyle{rm}\Url}\fi

\bibitem[Ahn et~al.(2023)Ahn, Cheng, Daneshmand, and Sra]{ahn2023transformers}
Kwangjun Ahn, Xiang Cheng, Hadi Daneshmand, and Suvrit Sra.
\newblock Transformers learn to implement preconditioned gradient descent for
  in-context learning.
\newblock \emph{arXiv preprint arXiv:2306.00297}, 2023.

\bibitem[Aky{\"u}rek et~al.(2023)Aky{\"u}rek, Schuurmans, Andreas, Ma, and
  Zhou]{akyurek2023what}
Ekin Aky{\"u}rek, Dale Schuurmans, Jacob Andreas, Tengyu Ma, and Denny Zhou.
\newblock What learning algorithm is in-context learning? investigations with
  linear models.
\newblock In \emph{The Eleventh International Conference on Learning
  Representations}, 2023.
\newblock URL \url{https://openreview.net/forum?id=0g0X4H8yN4I}.

\bibitem[Aky{\"u}rek et~al.(2024)Aky{\"u}rek, Wang, Kim, and
  Andreas]{akyurek2024context}
Ekin Aky{\"u}rek, Bailin Wang, Yoon Kim, and Jacob Andreas.
\newblock In-context language learning: Arhitectures and algorithms.
\newblock \emph{arXiv preprint arXiv:2401.12973}, 2024.

\bibitem[Bai et~al.(2023)Bai, Chen, Wang, Xiong, and Mei]{bai2023transformers}
Yu~Bai, Fan Chen, Huan Wang, Caiming Xiong, and Song Mei.
\newblock Transformers as statisticians: Provable in-context learning with
  in-context algorithm selection.
\newblock \emph{arXiv preprint arXiv:2306.04637}, 2023.

\bibitem[Bietti et~al.(2023)Bietti, Cabannes, Bouchacourt, Jegou, and
  Bottou]{bietti2023birth}
Alberto Bietti, Vivien Cabannes, Diane Bouchacourt, Herve Jegou, and Leon
  Bottou.
\newblock Birth of a transformer: A memory viewpoint.
\newblock \emph{arXiv preprint arXiv:2306.00802}, 2023.

\bibitem[Boix-Adsera et~al.(2023)Boix-Adsera, Littwin, Abbe, Bengio, and
  Susskind]{boix2023transformers}
Enric Boix-Adsera, Etai Littwin, Emmanuel Abbe, Samy Bengio, and Joshua
  Susskind.
\newblock Transformers learn through gradual rank increase.
\newblock \emph{arXiv preprint arXiv:2306.07042}, 2023.

\bibitem[Bradbury et~al.(2018)Bradbury, Frostig, Hawkins, Johnson, Leary,
  Maclaurin, Necula, Paszke, Vander{P}las, Wanderman-{M}ilne, and
  Zhang]{jax2018github}
James Bradbury, Roy Frostig, Peter Hawkins, Matthew~James Johnson, Chris Leary,
  Dougal Maclaurin, George Necula, Adam Paszke, Jake Vander{P}las, Skye
  Wanderman-{M}ilne, and Qiao Zhang.
\newblock {JAX}: composable transformations of {P}ython+{N}um{P}y programs,
  2018.
\newblock URL \url{http://github.com/google/jax}.

\bibitem[Brown et~al.(2020)Brown, Mann, Ryder, Subbiah, Kaplan, Dhariwal,
  Neelakantan, Shyam, Sastry, Askell, et~al.]{brown2020language}
Tom Brown, Benjamin Mann, Nick Ryder, Melanie Subbiah, Jared~D Kaplan, Prafulla
  Dhariwal, Arvind Neelakantan, Pranav Shyam, Girish Sastry, Amanda Askell,
  et~al.
\newblock Language models are few-shot learners.
\newblock \emph{Advances in neural information processing systems},
  33:\penalty0 1877--1901, 2020.

\bibitem[Chen et~al.(2021)Chen, Lu, Rajeswaran, Lee, Grover, Laskin, Abbeel,
  Srinivas, and Mordatch]{chen2021decision}
Lili Chen, Kevin Lu, Aravind Rajeswaran, Kimin Lee, Aditya Grover, Misha
  Laskin, Pieter Abbeel, Aravind Srinivas, and Igor Mordatch.
\newblock Decision transformer: Reinforcement learning via sequence modeling.
\newblock \emph{Advances in neural information processing systems},
  34:\penalty0 15084--15097, 2021.

\bibitem[Chow and Liu(1968)]{chow1968approximating}
CKCN Chow and Cong Liu.
\newblock Approximating discrete probability distributions with dependence
  trees.
\newblock \emph{IEEE transactions on Information Theory}, 14\penalty0
  (3):\penalty0 462--467, 1968.

\bibitem[Cohen et~al.(1993)Cohen, Iwasa, Rautu, {Beth Ruskai}, Seneta, and
  Zbaganu]{cohen1993}
Joel~E. Cohen, Yoh Iwasa, Gh. Rautu, Mary {Beth Ruskai}, Eugene Seneta, and Gh.
  Zbaganu.
\newblock Relative entropy under mappings by stochastic matrices.
\newblock \emph{Linear Algebra and its Applications}, 179:\penalty0 211--235,
  1993.
\newblock ISSN 0024-3795.
\newblock \doi{https://doi.org/10.1016/0024-3795(93)90331-H}.
\newblock URL
  \url{https://www.sciencedirect.com/science/article/pii/002437959390331H}.

\bibitem[Dosovitskiy et~al.(2020)Dosovitskiy, Beyer, Kolesnikov, Weissenborn,
  Zhai, Unterthiner, Dehghani, Minderer, Heigold, Gelly,
  et~al.]{dosovitskiy2020image}
Alexey Dosovitskiy, Lucas Beyer, Alexander Kolesnikov, Dirk Weissenborn,
  Xiaohua Zhai, Thomas Unterthiner, Mostafa Dehghani, Matthias Minderer, Georg
  Heigold, Sylvain Gelly, et~al.
\newblock An image is worth 16x16 words: Transformers for image recognition at
  scale.
\newblock \emph{arXiv preprint arXiv:2010.11929}, 2020.

\bibitem[Edelman et~al.(2024)Edelman, Edelman, Goel, Malach, and
  Tsilivis]{edelman2024evolution}
Benjamin~L Edelman, Ezra Edelman, Surbhi Goel, Eran Malach, and Nikolaos
  Tsilivis.
\newblock The evolution of statistical induction heads: In-context learning
  markov chains.
\newblock \emph{arXiv preprint arXiv:2402.11004}, 2024.

\bibitem[Elhage et~al.(2021)Elhage, Nanda, Olsson, Henighan, Joseph, Mann,
  Askell, Bai, Chen, Conerly, DasSarma, Drain, Ganguli, Hatfield-Dodds,
  Hernandez, Jones, Kernion, Lovitt, Ndousse, Amodei, Brown, Clark, Kaplan,
  McCandlish, and Olah]{elhage2021mathematical}
Nelson Elhage, Neel Nanda, Catherine Olsson, Tom Henighan, Nicholas Joseph, Ben
  Mann, Amanda Askell, Yuntao Bai, Anna Chen, Tom Conerly, Nova DasSarma, Dawn
  Drain, Deep Ganguli, Zac Hatfield-Dodds, Danny Hernandez, Andy Jones, Jackson
  Kernion, Liane Lovitt, Kamal Ndousse, Dario Amodei, Tom Brown, Jack Clark,
  Jared Kaplan, Sam McCandlish, and Chris Olah.
\newblock A mathematical framework for transformer circuits.
\newblock \emph{Transformer Circuits Thread}, 2021.
\newblock https://transformer-circuits.pub/2021/framework/index.html.

\bibitem[Friedman et~al.(2023)Friedman, Wettig, and Chen]{friedman2023learning}
Dan Friedman, Alexander Wettig, and Danqi Chen.
\newblock Learning transformer programs.
\newblock \emph{arXiv preprint arXiv:2306.01128}, 2023.

\bibitem[Fu et~al.(2023)Fu, Chen, Jia, and Sharan]{fu2023transformers}
Deqing Fu, Tian-Qi Chen, Robin Jia, and Vatsal Sharan.
\newblock Transformers learn higher-order optimization methods for in-context
  learning: A study with linear models.
\newblock \emph{arXiv preprint arXiv:2310.17086}, 2023.

\bibitem[Garg et~al.(2022)Garg, Tsipras, Liang, and Valiant]{garg2022can}
Shivam Garg, Dimitris Tsipras, Percy~S Liang, and Gregory Valiant.
\newblock What can transformers learn in-context? a case study of simple
  function classes.
\newblock \emph{Advances in Neural Information Processing Systems},
  35:\penalty0 30583--30598, 2022.

\bibitem[Giannou et~al.(2023)Giannou, Rajput, Sohn, Lee, Lee, and
  Papailiopoulos]{giannou2023looped}
Angeliki Giannou, Shashank Rajput, Jy-yong Sohn, Kangwook Lee, Jason~D Lee, and
  Dimitris Papailiopoulos.
\newblock Looped transformers as programmable computers.
\newblock \emph{arXiv preprint arXiv:2301.13196}, 2023.

\bibitem[Huang et~al.(2023)Huang, Cheng, and Liang]{huang2023context}
Yu~Huang, Yuan Cheng, and Yingbin Liang.
\newblock In-context convergence of transformers.
\newblock \emph{arXiv preprint arXiv:2310.05249}, 2023.

\bibitem[Jelassi et~al.(2022)Jelassi, Sander, and Li]{jelassi2022vision}
Samy Jelassi, Michael Sander, and Yuanzhi Li.
\newblock Vision transformers provably learn spatial structure.
\newblock \emph{Advances in Neural Information Processing Systems},
  35:\penalty0 37822--37836, 2022.

\bibitem[Jumper et~al.(2021)Jumper, Evans, Pritzel, Green, Figurnov,
  Ronneberger, Tunyasuvunakool, Bates, {\v{Z}}{\'\i}dek, Potapenko,
  et~al.]{jumper2021highly}
John Jumper, Richard Evans, Alexander Pritzel, Tim Green, Michael Figurnov,
  Olaf Ronneberger, Kathryn Tunyasuvunakool, Russ Bates, Augustin
  {\v{Z}}{\'\i}dek, Anna Potapenko, et~al.
\newblock Highly accurate protein structure prediction with alphafold.
\newblock \emph{Nature}, 596\penalty0 (7873):\penalty0 583--589, 2021.

\bibitem[Kingma and Ba(2017)]{kingma2017adam}
Diederik~P. Kingma and Jimmy Ba.
\newblock Adam: A method for stochastic optimization, 2017.

\bibitem[Li et~al.(2023)Li, Li, and Risteski]{li2023transformers}
Yuchen Li, Yuanzhi Li, and Andrej Risteski.
\newblock How do transformers learn topic structure: Towards a mechanistic
  understanding.
\newblock In \emph{International Conference on Machine Learning}, pages
  19689--19729. PMLR, 2023.

\bibitem[Lu et~al.(2021)Lu, Mao, and Nayak]{lu2021on}
Haoye Lu, Yongyi Mao, and Amiya Nayak.
\newblock On the dynamics of training attention models.
\newblock In \emph{International Conference on Learning Representations}, 2021.
\newblock URL \url{https://openreview.net/forum?id=1OCTOShAmqB}.

\bibitem[Mahankali et~al.(2023)Mahankali, Hashimoto, and Ma]{mahankali2023one}
Arvind Mahankali, Tatsunori~B Hashimoto, and Tengyu Ma.
\newblock One step of gradient descent is provably the optimal in-context
  learner with one layer of linear self-attention.
\newblock \emph{arXiv preprint arXiv:2307.03576}, 2023.

\bibitem[Nguyen and Grover(2022)]{nguyen2022transformer}
Tung Nguyen and Aditya Grover.
\newblock Transformer neural processes: Uncertainty-aware meta learning via
  sequence modeling.
\newblock In \emph{International Conference on Machine Learning}, pages
  16569--16594. PMLR, 2022.

\bibitem[Olsson et~al.(2022)Olsson, Elhage, Nanda, Joseph, DasSarma, Henighan,
  Mann, Askell, Bai, Chen, Conerly, Drain, Ganguli, Hatfield-Dodds, Hernandez,
  Johnston, Jones, Kernion, Lovitt, Ndousse, Amodei, Brown, Clark, Kaplan,
  McCandlish, and Olah]{olsson2022context}
Catherine Olsson, Nelson Elhage, Neel Nanda, Nicholas Joseph, Nova DasSarma,
  Tom Henighan, Ben Mann, Amanda Askell, Yuntao Bai, Anna Chen, Tom Conerly,
  Dawn Drain, Deep Ganguli, Zac Hatfield-Dodds, Danny Hernandez, Scott
  Johnston, Andy Jones, Jackson Kernion, Liane Lovitt, Kamal Ndousse, Dario
  Amodei, Tom Brown, Jack Clark, Jared Kaplan, Sam McCandlish, and Chris Olah.
\newblock In-context learning and induction heads.
\newblock \emph{Transformer Circuits Thread}, 2022.
\newblock
  https://transformer-circuits.pub/2022/in-context-learning-and-induction-heads/index.html.

\bibitem[Reddy(2023)]{reddy2023mechanistic}
Gautam Reddy.
\newblock The mechanistic basis of data dependence and abrupt learning in an
  in-context classification task.
\newblock \emph{arXiv preprint arXiv:2312.03002}, 2023.

\bibitem[Snell et~al.(2021)Snell, Zhong, Klein, and
  Steinhardt]{snell2021approximating}
Charlie Snell, Ruiqi Zhong, Dan Klein, and Jacob Steinhardt.
\newblock Approximating how single head attention learns.
\newblock \emph{arXiv preprint arXiv:2103.07601}, 2021.

\bibitem[Tarzanagh et~al.(2023{\natexlab{a}})Tarzanagh, Li, Thrampoulidis, and
  Oymak]{tarzanagh2023transformers}
Davoud~Ataee Tarzanagh, Yingcong Li, Christos Thrampoulidis, and Samet Oymak.
\newblock Transformers as support vector machines.
\newblock \emph{arXiv preprint arXiv:2308.16898}, 2023{\natexlab{a}}.

\bibitem[Tarzanagh et~al.(2023{\natexlab{b}})Tarzanagh, Li, Zhang, and
  Oymak]{tarzanagh2023max}
Davoud~Ataee Tarzanagh, Yingcong Li, Xuechen Zhang, and Samet Oymak.
\newblock Max-margin token selection in attention mechanism.
\newblock In \emph{Thirty-seventh Conference on Neural Information Processing
  Systems}, 2023{\natexlab{b}}.

\bibitem[Tian et~al.(2023{\natexlab{a}})Tian, Wang, Chen, and Du]{tian2023scan}
Yuandong Tian, Yiping Wang, Beidi Chen, and Simon Du.
\newblock Scan and snap: Understanding training dynamics and token composition
  in 1-layer transformer.
\newblock \emph{arXiv preprint arXiv:2305.16380}, 2023{\natexlab{a}}.

\bibitem[Tian et~al.(2023{\natexlab{b}})Tian, Wang, Zhang, Chen, and
  Du]{tian2023joma}
Yuandong Tian, Yiping Wang, Zhenyu Zhang, Beidi Chen, and Simon Du.
\newblock Joma: Demystifying multilayer transformers via joint dynamics of mlp
  and attention.
\newblock \emph{arXiv preprint arXiv:2310.00535}, 2023{\natexlab{b}}.

\bibitem[Vaswani et~al.(2017)Vaswani, Shazeer, Parmar, Uszkoreit, Jones, Gomez,
  Kaiser, and Polosukhin]{vaswani2017attention}
Ashish Vaswani, Noam Shazeer, Niki Parmar, Jakob Uszkoreit, Llion Jones,
  Aidan~N Gomez, {\L}ukasz Kaiser, and Illia Polosukhin.
\newblock Attention is all you need.
\newblock \emph{Advances in neural information processing systems}, 30, 2017.

\bibitem[Von~Oswald et~al.(2023)Von~Oswald, Niklasson, Randazzo, Sacramento,
  Mordvintsev, Zhmoginov, and Vladymyrov]{von2023transformers}
Johannes Von~Oswald, Eyvind Niklasson, Ettore Randazzo, Jo{\~a}o Sacramento,
  Alexander Mordvintsev, Andrey Zhmoginov, and Max Vladymyrov.
\newblock Transformers learn in-context by gradient descent.
\newblock In \emph{International Conference on Machine Learning}, pages
  35151--35174. PMLR, 2023.

\bibitem[Xie et~al.(2021)Xie, Raghunathan, Liang, and Ma]{xie2021explanation}
Sang~Michael Xie, Aditi Raghunathan, Percy Liang, and Tengyu Ma.
\newblock An explanation of in-context learning as implicit bayesian inference.
\newblock \emph{arXiv preprint arXiv:2111.02080}, 2021.

\bibitem[Zhang et~al.(2023)Zhang, Frei, and Bartlett]{zhang2023trained}
Ruiqi Zhang, Spencer Frei, and Peter~L Bartlett.
\newblock Trained transformers learn linear models in-context.
\newblock \emph{arXiv preprint arXiv:2306.09927}, 2023.

\end{thebibliography}

\newpage
\appendix
\onecolumn

\section{Disentangled Transformer Equivalence}\label{sec:aux_proofs}

\begin{theorem}\label{thm:disentangled_TF_equivalence}
    For any transformer $\mathrm{TF}_\theta$ with any hidden dimension $d$, there exists a disentangled transformer $\widetilde{\mathrm{TF}}_{\tilde\theta}$ with the same depth and number of heads such that $\mathrm{TF}_\theta(s_{1:T}) = \widetilde{\mathrm{TF}}_{\tilde\theta}(s_{1:T})$ for any input sequence $s_{1:T} \in [S]^T $. Likewise, for any disentangled transformer $\widetilde{\mathrm{TF}}_{\tilde\theta}$, there exists a transformer $\mathrm{TF}_\theta$ with the same depth and number of heads and with hidden dimension $d^{(L)}$ such that $\mathrm{TF}_\theta(s_{1:T}) = \widetilde{\mathrm{TF}}_{\tilde\theta}(s_{1:T})$ for any $s_{1:T} \in [S]^T $.
\end{theorem}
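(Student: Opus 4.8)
The plan is to prove both directions by induction on the layer index $\ell$, maintaining an explicit linear dictionary between the residual stream of a decoder-based transformer and that of a disentangled one. Throughout I write $h^{(\ell)}$ for the decoder-based state and $\tilde h^{(\ell)}$ for the disentangled state, and recall $d_\ell = (1+m_\ell)d_{\ell-1}$ with $d_0 = S+T$.

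\textbf{Decoder-based $\Rightarrow$ disentangled.} Given $\mathrm{TF}_\theta$ with hidden dimension $d$, I claim there are fixed matrices $\Phi^{(\ell)} \in \R^{d\times d_\ell}$ (functions of the first $\ell$ layers' weights) such that $h^{(\ell)} = \tilde h^{(\ell)}(\Phi^{(\ell)})^\top$ once the disentangled attention matrices are chosen as below. Base case: the $t$-th row of $h^{(0)}$ is $E e_{s_t} + P e_t = [E\mid P]\tilde x_t$, so $\Phi^{(0)} := [E\mid P]$. Inductive step: set $\widetilde A^{(\ell)}_i := (\Phi^{(\ell-1)})^\top Q^{(\ell)}_i (K^{(\ell)}_i)^\top \Phi^{(\ell-1)}$. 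Since $h^{(\ell-1)} = \tilde h^{(\ell-1)}(\Phi^{(\ell-1)})^\top$, the two pre-softmax score matrices coincide, $\tilde h^{(\ell-1)}\widetilde A^{(\ell)}_i(\tilde h^{(\ell-1)})^\top = h^{(\ell-1)}Q^{(\ell)}_i(K^{(\ell)}_i)^\top(h^{(\ell-1)})^\top$, hence after the same masking and row-wise softmax the two heads share one attention matrix $P^{(\ell)}_i \in \R^{T\times T}$. The decoder head output is then $P^{(\ell)}_ih^{(\ell-1)}(V^{(\ell)}_i)^\top = \big(P^{(\ell)}_i\tilde h^{(\ell-1)}\big)\big(V^{(\ell)}_i\Phi^{(\ell-1)}\big)^\top$, a linear image of the disentangled head output $P^{(\ell)}_i\tilde h^{(\ell-1)}$. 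Combining with $h^{(\ell)} = h^{(\ell-1)}+\sum_i P^{(\ell)}_ih^{(\ell-1)}(V^{(\ell)}_i)^\top$ and $\tilde h^{(\ell)} = [\tilde h^{(\ell-1)},P^{(\ell)}_1\tilde h^{(\ell-1)},\dots,P^{(\ell)}_{m_\ell}\tilde h^{(\ell-1)}]$ gives $h^{(\ell)} = \tilde h^{(\ell)}(\Phi^{(\ell)})^\top$ with $\Phi^{(\ell)} := [\Phi^{(\ell-1)}\mid V^{(\ell)}_1\Phi^{(\ell-1)}\mid\cdots\mid V^{(\ell)}_{m_\ell}\Phi^{(\ell-1)}]$. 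Finally, taking $\widetilde W_O := W_O\Phi^{(L)}$ yields $\widetilde{\mathrm{TF}}_{\tilde\theta}(s_{1:T}) = \tilde h^{(L)}\widetilde W_O^\top = h^{(L)}W_O^\top = \mathrm{TF}_\theta(s_{1:T})$.

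\textbf{Disentangled $\Rightarrow$ decoder-based.} Given $\widetilde{\mathrm{TF}}_{\tilde\theta}$, I build a decoder transformer of hidden dimension $d = d_L$ and maintain the invariant that its layer-$\ell$ state equals $[\tilde h^{(\ell)},0]$, i.e. $\tilde h^{(\ell)}$ padded on the right with $d_L-d_\ell$ zero columns. Pick $E\in\R^{d_L\times S}$ and $P\in\R^{d_L\times T}$ placing $e_{s_t}$ in the first $S$ coordinates and $e_t$ in the next $T$, so $h^{(0)} = [\tilde h^{(0)},0]$. For the step, for head $i$ of layer $\ell$ let $Q^{(\ell)}_i$ be $\widetilde A^{(\ell)}_i$ zero-padded to $\R^{d_L\times d_L}$ and $K^{(\ell)}_i := I_{d_L}$; because the padded coordinates of $h^{(\ell-1)}$ vanish, the masked score matrix equals that of the disentangled head, so the softmax pattern is again $P^{(\ell)}_i$. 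Let $V^{(\ell)}_i$ be the $\{0,1\}$ partial-permutation matrix that copies the first $d_{\ell-1}$ coordinates into the block of $\tilde h^{(\ell)}$ reserved for head $i$ and zeros everything else; then head $i$ contributes $P^{(\ell)}_i h^{(\ell-1)} (V^{(\ell)}_i)^\top$, which is $P^{(\ell)}_i\tilde h^{(\ell-1)}$ sitting in head $i$'s block and zero elsewhere. Adding the residual $h^{(\ell-1)}$ and summing over $i\in[m_\ell]$ reproduces $[\tilde h^{(\ell)},0]$, as needed. Since $d_L-d_L = 0$, at $\ell=L$ we have $h^{(L)} = \tilde h^{(L)}$, and $W_O := \widetilde W_O$ completes the construction.

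\textbf{Main difficulty.} There is no genuine obstacle; the proof is bookkeeping. The one point that must be handled carefully is that the \emph{attention patterns} (the softmax matrices), not merely the value-side outputs, have to be reproduced — this is exactly what dictates the conjugated choice $\widetilde A^{(\ell)}_i = (\Phi^{(\ell-1)})^\top Q^{(\ell)}_i(K^{(\ell)}_i)^\top\Phi^{(\ell-1)}$ in the first direction and the zero-padding-preserving choices of $Q,K,V$ in the second. The remaining care is dimensional: verifying $d_\ell = (1+m_\ell)d_{\ell-1}$ is respected layer by layer and that $d_\ell \le d_L$ for all $\ell$, so the padding in the second direction is well defined and the value matrices there are genuine partial permutations, meaning no rank or realizability issues arise.
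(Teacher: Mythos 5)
Your proposal is correct and takes essentially the same approach as the paper: the forward direction builds the same linear dictionary (your $\Phi^{(\ell)}$ is the paper's $Z^{(\ell)}$, with the same recursive definition and the same conjugated choice of attention matrices), and the reverse direction uses the same zero-padding with identity key matrices and partial-permutation value matrices. No substantive differences.
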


\begin{proof}
    Let
    \begin{align*}
        \theta = \{(Q_i^{(\ell)}, K_i^{(\ell)}, V_i^{(\ell)})_{\ell \in [L], i \in [m_\ell]}\} \cup \{E,P,W_O\} \qand \tilde \theta = \{A_i^{(\ell)}\}_{\ell \in [L], i \in [m_\ell]} \cup \{\tilde W_O\}.
    \end{align*}
    Note that the reverse direction is clear as any disentangled transformer is also a transformer with hidden dimension $d_L$:
    \begin{align*}
        E &= \begin{bmatrix} I_S \\ 0_{(d_L - S) \times S} \end{bmatrix} \in \R^{d_L \times S} & P &= \begin{bmatrix} 0_{S \times T} \\ I_T \\ 0_{(d_L - d) \times T} \end{bmatrix} \in \R^{d_L \times T} & W_O &= \tilde W_O \\
        Q_i^{(\ell)} &= \begin{bmatrix} A_i^{(\ell)} & 0 \\ 0 & 0 \end{bmatrix} \in \R^{d_L,d_L} &  K_i^{(\ell)} &= \begin{bmatrix} I_{d_\ell} & 0 \\ 0 & 0 \end{bmatrix} \in \R^{d_L,d_L} & V_i^{(\ell)} &=
        \begin{bmatrix}
        0_{i \cdot d_\ell \times d_\ell} &  \\
        I_{d_\ell} & 0_{d_{L} \times (d_L - d_\ell)} \\
        0_{(d_L - (i+1) \cdot d_\ell) \times d_\ell} & 
        \end{bmatrix}.
    \end{align*}
    We will prove that every transformer $\theta$ can be represented by a disentangled transformer $\tilde \theta$. We will begin by defining a sequence of matrices $Z^{(\ell)} \in \R^{d \times d_\ell}$ for $\ell \in \{0,\ldots,L\}$. Let $Z^{(0)} := [E, P] \in \R^{d \times d_0}$, and for $\ell > 1$ let
    \begin{align*}
        Z^{(\ell)} :=
        \begin{bmatrix}
            Z^{(\ell-1)} & V_1^{(\ell)} Z^{(\ell-1)} & \cdots & V_{m_\ell}^{(\ell)} Z^{(\ell-1)}
        \end{bmatrix} \in \R^{d \times d_\ell}.
    \end{align*}
    Then we define
    \begin{align*}
        A_i^{(\ell)} := (Z^{(\ell-1)})^\top  Q_i^{(\ell)} (V_i^{(\ell)})^\top  Z^{(\ell-1)} \in \R^{d_{\ell-1} \times d_{\ell-1}} \qq{and} \tilde W_O = W_O Z^{(L)}.
    \end{align*}
    We will prove by induction that for any sequence $s_{1:T}$, $h^{(\ell)} = \tilde h^{(\ell)} (Z^{(\ell)})^\top $ for $\ell = 0,\ldots,L$ where $\{h^{(\ell)}\}$ is the residual stream of $\mathrm{TF}_\theta$ and $\{\tilde h^{(\ell)}\}$ is the residual stream of $\widetilde{\mathrm{TF}}_{\tilde \theta}$. First, when $\ell = 0$ we have that
    \begin{align*}
        h^{(0)}_i = E e_{s_i} + P e_i = \begin{bmatrix} E & P \end{bmatrix} \begin{bmatrix} e_{s_i} \\ e_i \end{bmatrix} = Z^{(0)} \tilde h^{(0)}_i.
    \end{align*}
    Next, assume the result for $\ell-1 \ge 0$. Then
    \begin{align*}
        h^{(\ell)}
        &= h^{(\ell - 1)} + \sum_{i=1}^{m_\ell} \attn\qty(h^{(\ell - 1)}; Q^{(\ell)}_{i}{K^{(\ell)}_i}^\top ){V^{(\ell)}_i}^\top  \\
        &= \tilde h^{(\ell-1)} (Z^{(\ell-1)})^\top  + \sum_{i=1}^{m_\ell} \mathrm{attn}\qty(\tilde h^{(\ell-1)} (Z^{(\ell-1)})^\top ; Q_i^{(\ell)} (K_i^{(\ell)})^\top ) (V_i^{(\ell)})^\top  \\
        &= \tilde h^{(\ell-1)} (Z^{(\ell-1)})^\top  + \sum_{i=1}^{m_\ell} \mathrm{attn}\qty(\tilde h^{(\ell-1)}; (Z^{(\ell-1)})^\top  Q_i^{(\ell)} (K_i^{(\ell)})^\top  Z^{(\ell-1)}) (Z^{(\ell-1)})^\top  (V_i^{(\ell)})^\top  \\
        &= \tilde h^{(\ell-1)} (Z^{(\ell-1)})^\top  + \sum_{i=1}^{m_\ell} \mathrm{attn}\qty(\tilde h^{(\ell-1)};A_i^{(\ell)}) (V_i^{(\ell)} Z^{(\ell-1)})^\top  \\
        &= \tilde h^{(\ell)} (Z^{(\ell)})^\top 
    \end{align*}
    which completes the induction. Therefore,
    \begin{align*}
        \mathrm{TF}_\theta(s_{1:T}) = h^{(L)}W_O^\top  = \tilde h^{(L)} (Z^{(\ell)})^\top  W_O^\top  = h^{(L)} (W_O Z^{(\ell)})^\top  = h^{(L)} \tilde W_O^\top  = \widetilde{\mathrm{TF}}_\theta(s_{1:T})
    \end{align*}
    which completes the proof.
\end{proof}

\textbf{Example: Single Head Transformer.} As an example, let us walk through the construction for a single-head transformer. A single layer attention-only transformer with only one head can be written as
\begin{align*}
    h^{(0)} &= \mathrm{embed}\qty(s_{1:T}; (E, P))\\
    \mathrm{TF}_\theta(s_{1:T}) &= \qty(h^{(0)} + \attn\qty(h^{(0)}; QK^\top)V^\top)W_O^\top.
\end{align*}
Recall that the input to the disentangled transformer is
\begin{align*}
    \tilde X = \begin{bmatrix} \tilde x_1, \dots, \tilde x_T \end{bmatrix}^\top,
\end{align*}
where $\tilde x_t =\begin{bmatrix}e_{s_t}, e_t\end{bmatrix} \in \mathbb{R}^{d_0} = \mathbb{R}^{S + T}$. The input to the regular transformer, $h^{(0)}$, can then be written as
\begin{align*}
    h^{(0)} = \tilde X \begin{bmatrix} E^\top \\ P^\top \end{bmatrix}.
\end{align*}
Define $Z = \begin{bmatrix} E & P \end{bmatrix}$, so that $h^{(0)} = \tilde X Z^\top$. Set the weights $\tilde A, \tilde W_O$ of the disentangled transformer as
\begin{align*}
    \tilde A = Z^\top QK^\top Z \qand \tilde W_O = W_O\begin{bmatrix} Z & VZ \end{bmatrix}.
\end{align*}
The output of the disentangled transformer is then
\begin{align*}
    \widetilde{\mathrm{TF}}_{\tilde\theta}(s_{1:T}) &= \begin{bmatrix} \tilde X, & \attn(\tilde X; \tilde A)\end{bmatrix} \tilde W_O^\top \\
    &= \begin{bmatrix} \tilde X, & \attn(\tilde X; Z^\top QK^\top Z)\end{bmatrix} \tilde W_O^\top \\
    &= \begin{bmatrix}\tilde X, & \S\qty(\mathrm{MASK}\qty(\tilde X Z^\top QK^\top Z \tilde X^\top))\tilde X \end{bmatrix} \begin{bmatrix} Z^\top \\ Z^\top V^\top \end{bmatrix} W_O^\top\\
    &= \begin{bmatrix}\tilde X Z^\top , & \S\qty(\mathrm{MASK}\qty(\tilde X Z^\top QK^\top Z \tilde X^\top))\tilde X Z^\top \end{bmatrix} \begin{bmatrix} I_d \\ V^\top  \end{bmatrix} W_O^\top \\
    &= \begin{bmatrix} h^{(0)}, & \attn(h^{(0)}; QK^\top) \end{bmatrix} \begin{bmatrix} I_d \\ V^\top  \end{bmatrix} W_O^\top \\
    &= \qty(h^{(0)} + \attn(h^{(0)}; QK^\top)V^\top) W_O^\top\\
    &= \mathrm{TF}_\theta(s_{1:T}),
\end{align*}
as desired.

\section{Multiple Parents Construction}\label{sec:multiparent}

We now present \Cref{thm:multi_parent_construction}.

\begin{proof}
Let $\tilde X \in \mathbb{R}^{T \times d}$ be the embedding of the sequence. Recall that the $\ell$th attention block is of the form
\begin{align*}
    \attn(\tilde X; \widetilde{A}^{(1)}_\ell) := \S(\tilde X\widetilde{A}^{(1)}_\ell \tilde X^\top )\tilde X \in \mathbb{R}^{T \times d},
\end{align*}
and
\begin{align*}
    h^{(1)}_i = \begin{bmatrix}
                       x_i             \\
                       \attn(\tilde X; \widetilde{A}^{(1)}_1)_i \\
                       \vdots          \\
                       \attn(\tilde X; \widetilde{A}^{(1)}_{k})_{i}.
                   \end{bmatrix} \in \R^{(k+1)d}.
\end{align*}
The $\ell$th attention head performs two roles. For $i < T$, it copies $p(i)_\ell$ to the residual stream of $i$. Additionally, it copies $p(T+1)_\ell$ to the residual stream of $T$.

Formally, let $\widetilde{A}^{(1)}_\ell$ follow the same sparsity pattern as the construction in \Cref{thm:single_parent_construct}, where only the position-position block $A_\ell^{(1)}$ is nonzero, and on this block let
\begin{align*}
    (A^{(1)}_\ell)_{ij} = \beta_1 \cdot \begin{cases}
        \mathbf{1}(j = p(i)_\ell) & i < T\\
        \mathbf{1}(j = p(T+1)_\ell) & i = T
    \end{cases}.
\end{align*}
Taking $\beta_1 \to \infty$, the output of this attention block is
\begin{align*}
    \attn(\tilde X; \widetilde{A}^{(1)}_{\ell})_i = \begin{cases}
        \tilde x_{p(i)_\ell} & i \in \overline{\cal R} \setminus \{T\}\\
        \tilde x_{p(T+1)_\ell} & i = T
    \end{cases}.
\end{align*}

We let $\widetilde{A}^{(2)} \in \R^{(k+1)d \times (k+1)d}$ be the block diagonal matrix which compares the $\attn(\tilde X; \widetilde{A}_\ell^{(1)})_i$ components of the residual streams of $h^{(1)}_i$ to each other via their token embeddings.

Formally, we let
\begin{align*}
    \widetilde{A}^{(2)} = \begin{bmatrix}
        0_{d \times d} & 0_{d \times d} & 0_{d \times d} & \cdots & 0_{d \times d}\\
        0_{d \times d} & A^{(2)}_1 & 0_{d \times d} & \cdots & 0_{d \times d} \\
        0_{d \times d} & 0_{d \times d} & A^{(2)}_2 & \cdots & 0_{d \times d} \\
        \vdots& \vdots&  \vdots & \ddots & \vdots \\
        0_{d \times d} & 0_{d \times d} & 0_{d \times d} & \cdots & A^{(2)}_k
    \end{bmatrix}
\end{align*}
where each $A_k^{(2)} \in R^{d \times d}$ is
\begin{align*}
    A_k^{(2)} = \beta_2\begin{bmatrix}
        I_{S \times S} & 0_{S \times T}\\
        0_{T \times S} & 0_{T \times T}
    \end{bmatrix}.
\end{align*}
We thus have, for $i \in \overline{\calR} \setminus \{T\}$.
\begin{align*}
    {h^{(1)}_i}^\top \widetilde{A}^{(2)}h^{(1)}_T & = \beta_2\sum_{\ell=1}^k \qty(\attn(\tilde X; \widetilde{A}^{(1)}_\ell)_i)^\top  A_k^{(2)}\attn(\tilde X; \widetilde{A}^{(1)}_{\ell})_T \\
    & = \beta_2\cdot\sum_{\ell=1}^k\mathbf{1}(s_{p(i)_\ell} = s_{p(T+1)_\ell})
\end{align*}
Taking $\beta_2 \rightarrow \infty$, the softmax converges to a uniform distribution over tokens where ${h^{(1)}_i}^\top A^{(2)}h^{(1)}_T$ is maximized. These are the tokens $i$ in which $s_{p(i)_\ell} = s_{p(T+1)_\ell}$ for all $\ell$, along with the token $T$\footnote{It is possible for certain root nodes at the beginning of the sequence to be included, but this will be a vanishing fraction of tokens for typical sequences}. Thus
\begin{align*}
    \S\qty(h^{(1)}\widetilde{A}^{(2)}h^{(1)}_T)_i &\approx \frac{\mathbf{1}_{i = T} + \mathbf{1}\qty(s_{p(i)_1} = s_{p(T+1)_1}, \cdots, s_{p(i)_k} = s_{p(T+1)_k})}{1 + \sum_{j < T} \mathbf{1}\qty(s_{p(j)_1} = s_{p(T+1)_1}, \cdots, s_{p(j)_k} = s_{p(T+1)_k})}.
\end{align*}
Finally, choose $W_O$ to output the token embedding of the $x_i$ block of $h^{(1)}(X)_i$, so that $h^{(1)}(X)W_O = e_{s_i}$. We then have that
\begin{align*}
    f_{\hat \theta}(s_{1:T})_{s'} &= \sum_i \mathbf{1}(s_i = s')\cdot \S\qty(h^{(1)}\widetilde{A}^{(2)}h^{(1)}_T)_i \\
    & \approx \frac{\mathbf{1}(s_T = s') + \sum_{i < T}\mathbf{1}\qty(s_i = s', s_{p(i)_1} = s_{p(T+1)_1}, \cdots, s_{p(i)_k} = s_{p(T+1)_k})}{1 + \sum_{j < T} \mathbf{1}\qty(s_{p(j)_1} = s_{p(T+1)_1}, \cdots, s_{p(j)_k} = s_{p(T+1)_k})} \\
    & \approx \frac{\sum_{i}\mathbf{1}\qty(s_i = s', s_{p(i)_1} = s_{p(T+1)_1}, \cdots, s_{p(i)_k} = s_{p(T+1)_k})}{\sum_{j} \mathbf{1}\qty(s_{p(j)_1} = s_{p(T+1)_1}, \cdots, s_{p(j)_k} = s_{p(T+1)_k})} \\
    & = \hat \pi_{s_{1:T}}\qty(s' \mid s_{p(T+1)_1}, \dots, s_{p(T+1)_k}),
\end{align*}
as desired. 
\end{proof}

\section{Additional Experiments and Details}\label{app:extra_experiments}

\paragraph{Single Parent Experiments:} All single-parent experiments were run with a vocabulary size of $S = 10$, a sequence length of $T=20$, a batch size of $1024$, $\alpha = 0.1$, and learning rate $\eta = 0.3$. We initialize $\widetilde{A}^{(1)} = 0$, $\widetilde{A}^{(2)} = 0$, and $W_O = 0$.

In \Cref{fig:appendix_single_parent}, we repeat \Cref{fig:induction_head_combined} for the in-context learning graph in \cref{subfig:appendix_icl}, in addition to versions when the graph $\mathcal{G}$ comes from a Markov chain (\cref{subfig:appendix_markov}) and when it is random graph (\cref{subfig:appendix_random}).

\paragraph{Multiple Parent Experiments:} For experiment with multiple parents (\Cref{fig:appendix_multi_parent}), we used $\alpha = 1$ and Adam \cite{kingma2017adam} with $\eta = 0.01$ but we initialized $\widetilde{A}^{(1)}_{ij}, \widetilde{A}^{(2)}_{ij} \sim N(0,\sigma^2)$ for $\sigma = 0.01$. This was necessary to break the symmetry between the heads.

\begin{figure}
    \centering
    \subfigure[Markov Chain]{\label{subfig:appendix_markov}\includegraphics[width=0.65\textwidth]{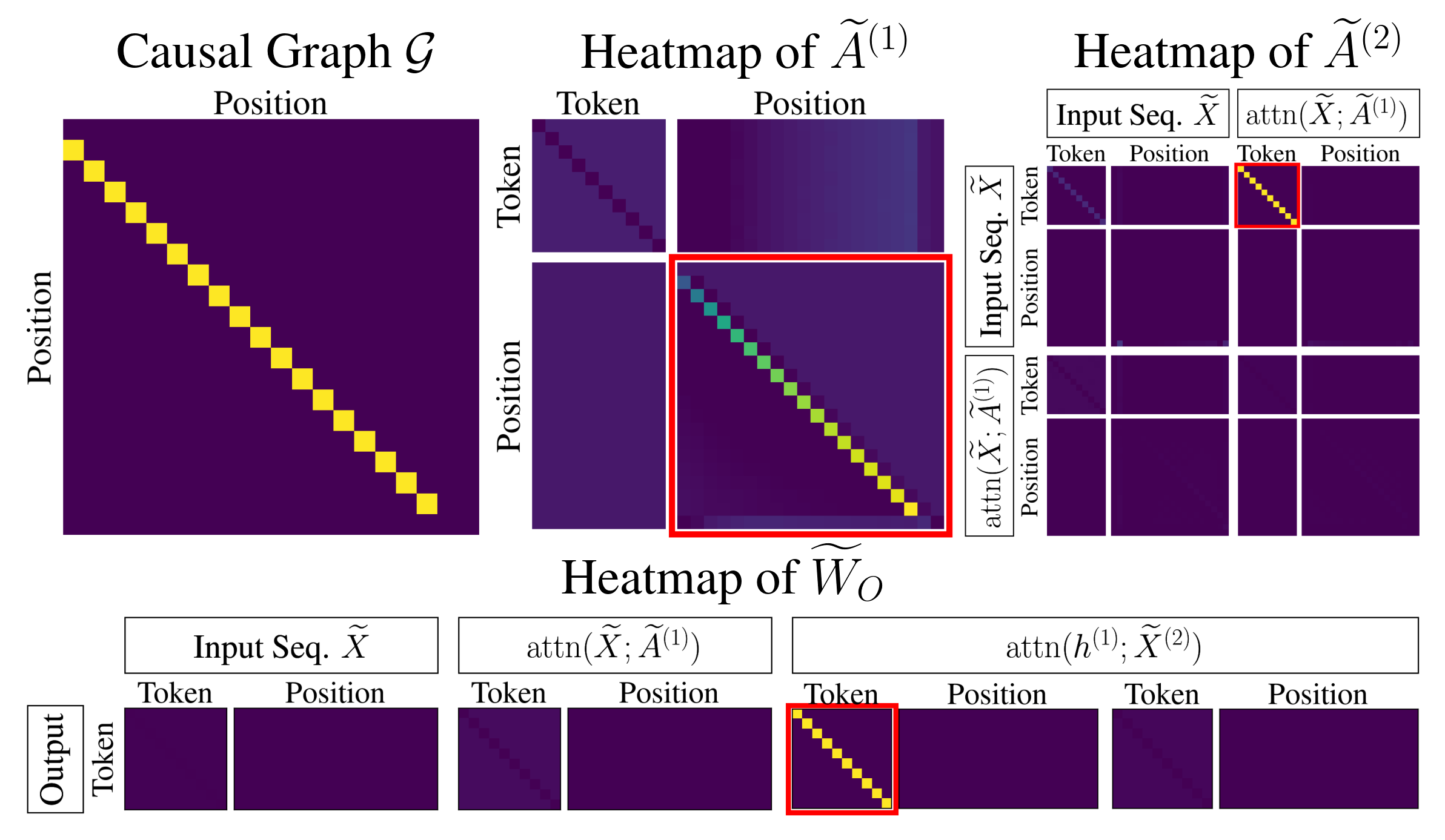}} \\
    \subfigure[In-Context Learning]{\label{subfig:appendix_icl}\includegraphics[width=0.65\textwidth]{figures/icl.png}} \\
    \subfigure[Random Causal Graph]{\label{subfig:appendix_random}\includegraphics[width=0.65\textwidth]{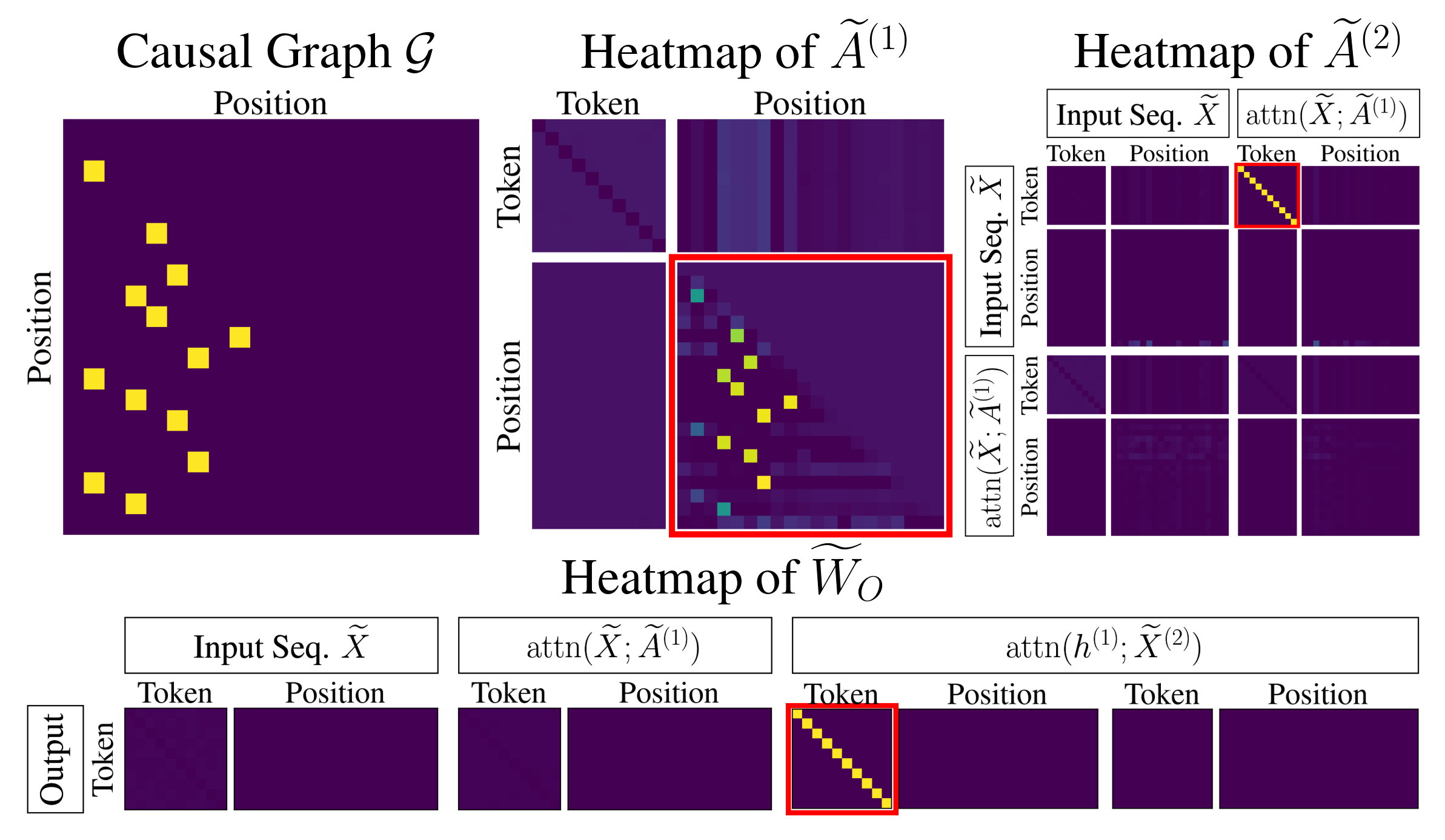}}
    \caption{$\widetilde{A}^{(1)}$ encodes the graph structure, for different latent graphs.}
    \label{fig:appendix_single_parent}
\end{figure}


\begin{figure}
    \centering
    \subfigure[3-gram where each position $i$ attends to $i-1,i-2$.]{\includegraphics[width=0.88\textwidth]{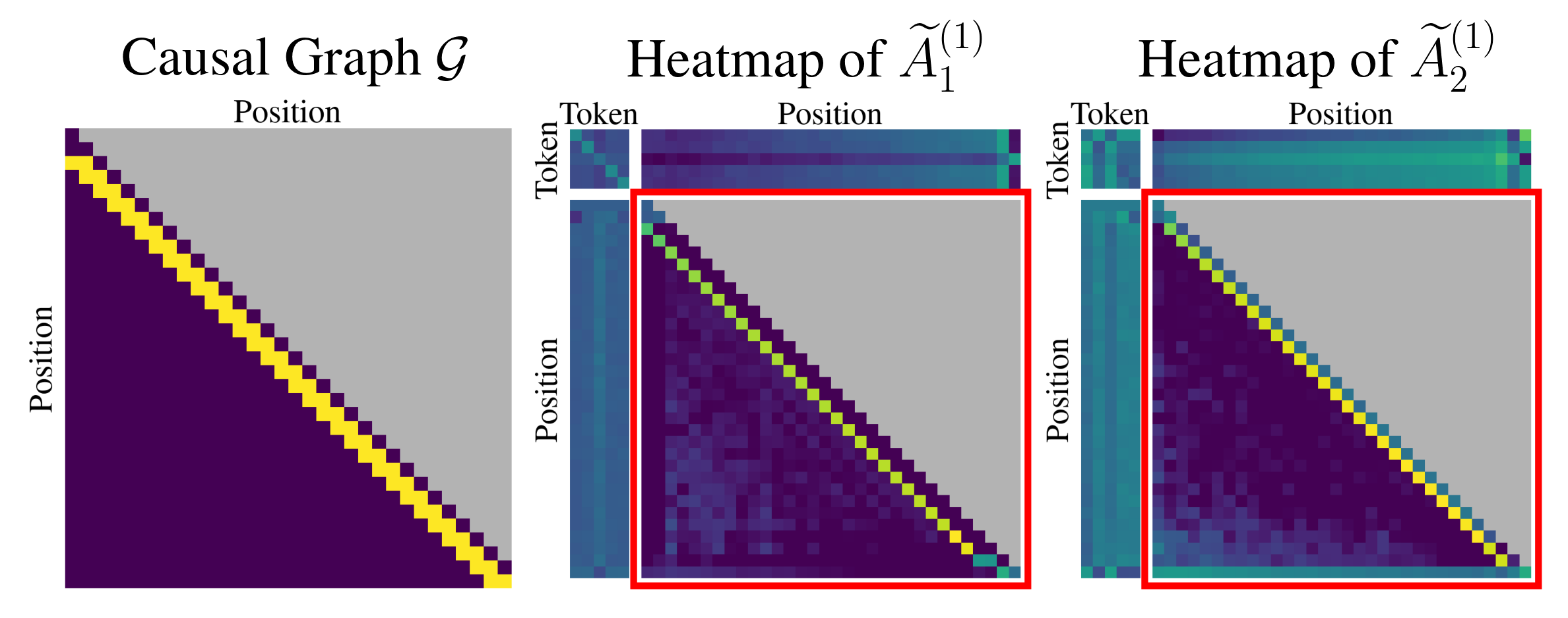}}
    \subfigure[Each position $i$ attends to $i-1$ and $\lfloor \frac{i-1}{2} \rfloor$]{\includegraphics[width=0.88\textwidth]{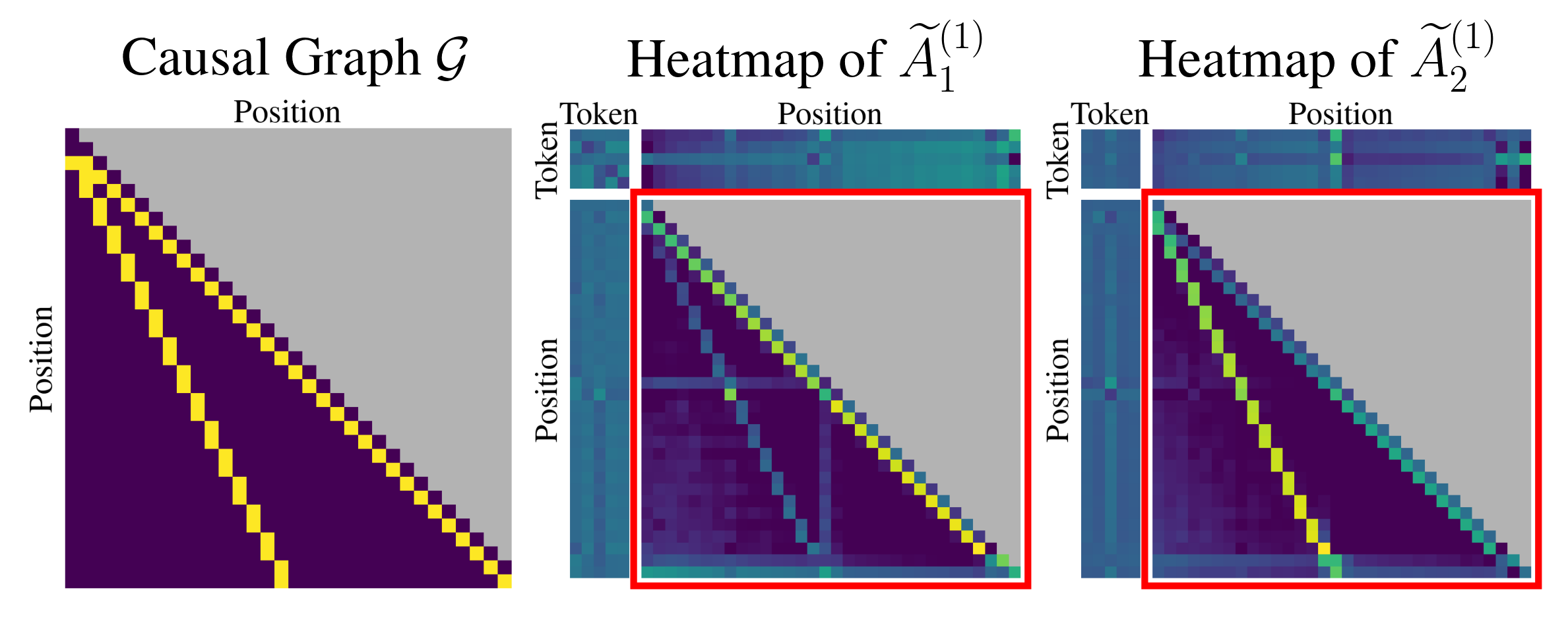}}
    \subfigure[4-gram where each position $i$ attends to $i-1,i-2,i-3$]{\includegraphics[width=0.88\textwidth]
    {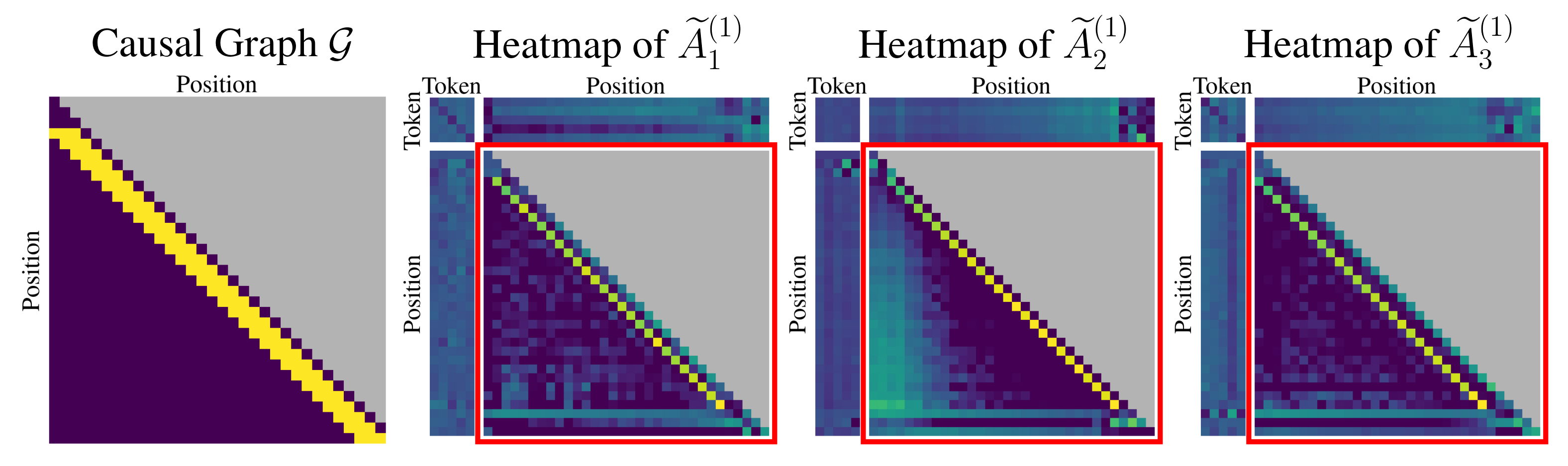}}
    \caption{\textbf{Multiple Parents:} On the left, we plot the causal graph in the setting of \Cref{sec:multiple_parents} with $k=2,2,3$ respectively. The first row corresponds to the 3-gram task in which each token depends on the previous 2. In the second row, each token at position $i$ depends on the previous token and the token at position $\lfloor \frac{i-1}{2} \rfloor$. The third row corresponds to 4-gram in which each token depends on the previous 3 tokens. We train two-layer disentangled transformers on these tasks with $k$ heads in each layer. On the right, we plot the first layer attention matrices, i.e. $\{\widetilde{A}^{(1)}_i\}_i$. We see that each attention head learns a single set of parents in the causal graph $\mathcal{G}$, which agrees with our \Cref{thm:multi_parent_construction}.}
    \label{fig:appendix_multi_parent}
\end{figure}

\paragraph{Experiments with standard transformer architecture:} We trained a two attention layer, decoder-based transformer of the form \eqref{eq:decoder-based-TF} on \Cref{task:single_parent}. We consider fixed position and token embeddings $P \in \mathbb{R}^{d \times T}$ and $E \in \mathbb{R}^{d \times S}$, with each column of $P, E$ drawn i.i.d from $\mathcal{N}(0, \frac{1}{d}I_d)$. Additionally, between each attention layer, we add a one-hidden layer ReLU MLP with hidden width $d$. The model has one head per layer.

In \Cref{fig:TF_with_MLP}, we plot the average attention pattern of the first layer, averaged over a batch of $1024$ sequences. We pick $S = 10, T = 20, d = 30$. The first layer attention pattern on a single sequence with embedding $h^{(0)}$ is given by
\begin{align*}
    \S(\mathrm{MASK}(h^{(0)}A^{(1)}{h^{(0)}}^\top)) \in \mathbb{R}^{T \times T}.
\end{align*}
We observe that this average attention pattern is also approximately equal to the adjacency matrix of the graph $\mathcal{G}$.

\begin{figure}
    \centering
    \subfigure[Markov Chain.]{\includegraphics[width=0.6\textwidth]{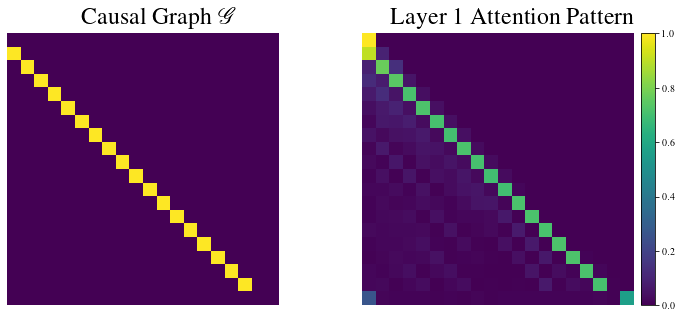}}
    \subfigure[In-Context Learning]{\includegraphics[width=0.6\textwidth]{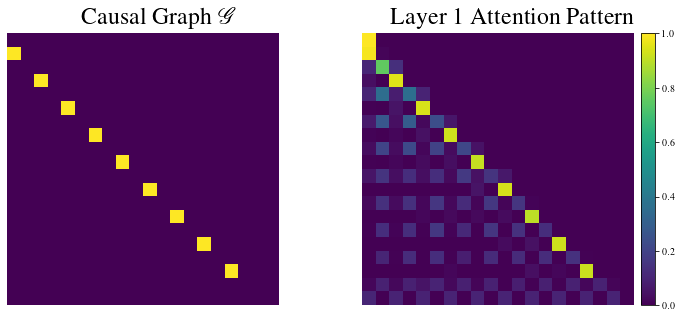}}
    \subfigure[Random Causal Graph]{\includegraphics[width=0.6\textwidth]{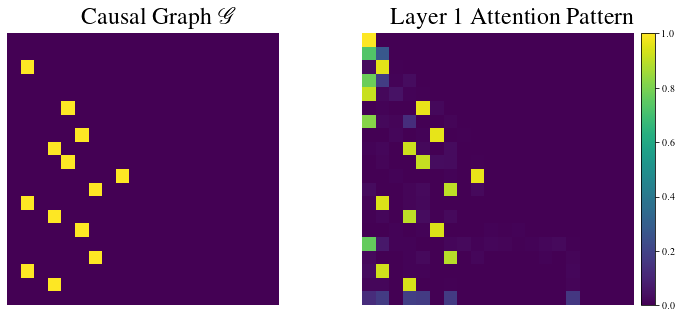}}
    \caption{\textbf{Decoder-Based Transformer with MLPs:} In a two attention-layer, decoder-based transformer with MLPs, we observe that the average attention pattern on a sequence is approximately equal to the adjacency matrix of the causal graph $\mathcal{G}$. We remark that the random causal graph has the peculiar behavior that nodes with no parent seem to attend to the first token in the sequence.}
    \label{fig:TF_with_MLP}
\end{figure}

\paragraph{Quantitative Comparison:} We repeat \Cref{fig:appendix_single_parent} for a set of 20 randomly generated causal graphs on $T = 20$ vertices and vocab size of $S = 3$. In each graph, each node $i$ is a root node with probability $1/2$, and otherwise has its parent $p(i)$ drawn uniformly at random from the set $\{1, \dots, i-1\}$. For each graph, one can compute the average first-layer attention weight from $i$ to its parent $p(i)$, given by
\begin{align*}
    \mathrm{avg\-attn} := \frac{1}{\abs{\overline{\mathcal{R}}}}\sum_{i \in \overline{\mathcal{R}} }\S\qty(\mathrm{MASK}\qty(A^{(1)}))_{i, p(i)}.
\end{align*}
Over all 20 random graphs, $\mathrm{avg\-attn}$ has a mean of \textbf{0.837} and a standard deviation of \textbf{0.054}. In \Cref{fig:many_graphs}, we plot the average value of $\S\qty(\mathrm{MASK}\qty(A^{(1)}))_{i, p(i)}$ for each position $i$ in the sequence. We observe that this attention weight is large across all positions in the sequence.

\begin{figure}[t!]
\centering
    \includegraphics[width=0.5\textwidth]{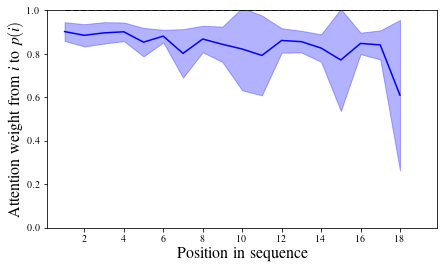}
    \caption{\textbf{Quantitative Comparison:} We plot the mean value of $\S(A^{(1)})_{i, p(i)}$ over all $20$ graphs, as a function of the position $i$ in the sequence. The shaded bars indicate one standard deviation. We observe that this average value is large (close to 1).}
    \label{fig:many_graphs}
\end{figure}

\paragraph{Experimental Details:} Code for all the experiments can be found at \url{https://github.com/eshnich/transformers-learn-causal-structure}. All code was written in JAX~\citep{jax2018github}, and run on a cluster of 10 NVIDIA RTX A6000 GPUs.


\section{Analyzing the Dynamics}
In this section we prove \Cref{thm:main_thm}.
\subsection{Proof of Lemma \ref{lem:rewrite_TF}}
\begin{proof}
    The output of the first attention layer is
    \begin{align*}
        \attn(\tilde{X}; \widetilde{A}^{(1)}) = \S(\mathrm{MASK}(\tilde X \widetilde{A}^{(1)} \tilde X ^\top)\tilde X = \S(\mathrm{MASK}(A^{(1)}))\tilde X.
    \end{align*}
    Next, we have that
    \begin{align*}
        {h_T^{(1)}}^\top \widetilde{A}^{(2)} {h^{(1)}}^\top &= \tilde x_T^\top\begin{bmatrix} A^{(2)} & 0_{S \times T} \\ 0_{T \times S} & 0_{T \times T}\end{bmatrix}\attn(\tilde X; \widetilde{A}^{(1)})^\top\\
        &= \tilde x_T^\top\begin{bmatrix} A^{(2)} & 0_{S \times T} \\ 0_{T \times S} & 0_{T \times T}\end{bmatrix}\tilde X^\top \S(\mathrm{MASK}(A^{(1)}))^\top\\
        &= \overline{x}_T^\top A^{(2)}\overline{X}^\top \S(\mathrm{MASK}(A^{(1)}))^\top.
    \end{align*}
    Thus the output of the second attention layer is
    \begin{align*}
        \attn(h^{(1)}; \widetilde{A}^{(2)})_T = {h^{(1)}}^\top\S\qty({h^{(1)}} \qty(\widetilde{A}^{(2)})^\top h^{(T)})\\
        = {h^{(1)}}^\top\S\qty(\S(\mathrm{MASK}(A^{(1)}))\overline{X}{A^{(2)}}^\top\overline{x}_T)
    \end{align*}
    Finally, the output is
    \begin{align*}
        \widetilde{\mathrm{TF}}_{\widetilde{\theta}}(s_{1:T}) &= \widetilde{W}_O^\top h_T^{(2)}\\
        &= \begin{bmatrix} I_S & 0_{S \times T} \mid 0_{S \times d}\end{bmatrix}\attn(h^{(1)}; \widetilde{A}^{(2)})_T\\
        &= \begin{bmatrix} I_S & 0_{S \times T} \mid 0_{S \times d}\end{bmatrix}{h^{(1)}}^\top\S\qty(\S(\mathrm{MASK}(A^{(1)}))\overline{X}{A^{(2)}}^\top\overline{x}_T)\\
        &= \overline{X}^\top\S\qty(\S(\mathrm{MASK}(A^{(1)}))\overline{X}{A^{(2)}}^\top\overline{x}_T),
    \end{align*}
    as desired.
\end{proof}

\subsection{Notation}
We briefly introduce notation which will be used throughout the rest of the appendix. We let $X \in \R^{T \times S}$ be the token embedding of the sequence $s_{1:T}$. Additionally, for a lower triangular matrix $A \in \R^{T \times T}$, let $A_i \in \R^i$ denote the first $i$ coordinates of the $i$th row of $A$. We overload notation so that $\S(A) \in \R^{T \times T}$ is the lower triangular matrix satisfying $\S(A)_i = \S(A_i)$; i.e, the softmax operation is applied row-wise to the first $i$ coordinates of row $i$. Finally, we reparameterize ${A^{(2)}}^\top$ with $A^{(2)}$.

We can thus rewrite the reduced model $f_\theta$ as
\begin{align*}
    f_\theta(s_{1:T}) = X^\top\S\qty(\S(A^{(1)})XA^{(2)}x_T).
\end{align*}

We let $f_\theta(X; s)$ denote prediction of a transformer with embedding $X \in \R^{T \times S}$ conditioned on $s_T = s$, i.e
\begin{align*}
    f_\theta(X; s) := X^\top\S\qty(\S(A^{(1)})XA^{(2)}e_s).
\end{align*}
It is easy to see that the perturbed loss \eqref{eq:CE_loss} can be written as
\begin{align*}
    L(\theta) = -\frac{1}{S}\E_{\pi, X}\qty[\sum_{s, s' \in [S]}\pi(s' \mid s)\log\qty(f_\theta(X; s)_{s'} + \epsilon)],
\end{align*}
where we use $\E_X$ and $\E_{s_{1:T}}$ interchangeably to represent expectation over the sequence $s_{1:T}$. We set the perturbation as $\epsilon = \TT^{-1/2}$.

For notational convenience, we define $$v_\theta(X; s) := \S(\S(A^{(1)})XA^{(2)}e_s),$$ so that $f_\theta(X; s) = X^\top v_\theta(X; s)$.

Let $\delta_{s}(X) \in \R^T $ be the vector where $\delta_{s}(X)_i = x_{i, s}$, and let $\hat \mu_X(s) := \frac{1}{T}\sum_{i=1}^T  x_{i, s}$ be the empirical estimate of the frequency of $s$ over the sequence $X$. We let $X_{\le i} \in \R^{i \times S}$ be the embedding of the first $i$ tokens in the sequence, and let $\delta_s(X_{\le i}) \in \R^i$ be the indicator of $s$ on these first $i$ tokens.

Given a vector $v \in \R^k$, the operator $J_k: \R^k \rightarrow \R^{k \times k}$ is given by $J_k(v) = diag(v) - vv^\top $. $J_k$ is the Jacobian of $\S$: $\nabla_u \S(u) = J_k(\S(u))$. We drop the subscript $k$ when it is clear from context.

\subsection{Heuristic Derivation of Lemma \ref{lem:gradient_informal}}\label{sec:lemma1_sketch}
During stage 1, the model can be rewritten as
\begin{align*}
    f_\theta(X; s)_{s'} = e_{s'}^\top X^\top \S\qty(\beta_0 \S(A^{(1)}) Xe_s) = \delta_{s'}(X)^\top \S\qty(\beta_0 \S(A^{(1)}) \delta_s(X)).
\end{align*}
When $\beta_0 \approx 0$, we can linearize the outer softmax as
\begin{align*}
    \S(\beta_0z) \approx \frac{1}{T}1_T + \beta_0 \cdot \qty(\frac{1}{T}I_T - \frac{1}{T^2}1_T1_T^\top )z,
\end{align*}
and get that
\begin{align}
    f_\theta(X; s)_{s'} &\approx \frac{1}{T} \delta_{s'}(X)^\top 1_T + \frac{\beta_0}{T}\delta_{s'}(X)^\top \S(A^{(1)})\delta_{s}(X) - \frac{\beta_0}{T^2}\delta_{s'}(X)^\top 1_T \cdot 1_T^\top \S(A^{(1)}) \delta_s(X)\nonumber\\
    &= \hat \mu_X(s') + \frac{\beta_0}{T}\qty(\delta_{s'}(X)^\top \S(A^{(1)})\delta_{s}(X) - \hat \mu_X(s') \cdot 1_T^\top \S(A^{(1)}) \delta_s(X)).\label{eq:approx_f_init}
\end{align}
First, observe that since $\beta_0 \approx 0$,
\begin{align*}
    f_\theta(X; s)_{s'} &\approx \hat \mu_X(s')
\end{align*}

Next, taking the gradient of the approximation \eqref{eq:approx_f_init} with respect to $A^{(1)}_i$ yields
\begin{align*}
    \nabla_{A^{(1)}_i}f_\theta(X; s)_{s'} \approx \frac{\beta_0}{T} J\qty(\S(A^{(1)}_i)) \delta_s(X_{\le i}) \cdot (x_{i, s'} - \hat \mu_X(s')).
\end{align*}
Therefore by the chain rule, the population gradient is given by
\begin{align*}
    \nabla_{A^{(1)}_i} L(\theta) &\approx - \frac{1}{S}\E_{\pi, X}\qty[\sum_{s, s'} \frac{\pi(s' \mid s)}{f_\theta(X; s)_{s'} + \epsilon}\nabla_{A^{(1)}_i}f_\theta(X; s)_{s'}].\\
    &\approx - \frac{\beta_0}{ST}J\qty(\S(A^{(1)}_i))\cdot \mathbb{E}_{\pi, X}\qty[\sum_{s, s'}\frac{\pi(s' \mid s)}{\hat \mu_X(s')}(x_{i, s'} - \hat \mu_X(s'))\delta_s(X_{\le i})].
\end{align*}
Letting $\hat g_i$ denote the term after the preconditioner, i.e $\hat g_i := \mathbb{E}_{\pi, X}\qty[\sum_{s, s'}\frac{\pi(s' \mid s)}{\hat \mu_X(s')}(x_{i, s'} - \hat \mu_X(s'))\delta_s(X_{\le i})]$, we get that the $j$th entry of $\hat g_i$, $\hat g_{i,j}$, is
\begin{align*}
    \hat g_{i, j} &= \mathbb{E}_{\pi, X}\qty[\sum_{s, s'}\frac{\pi(s' \mid s)}{\hat \mu_X(s')}(x_{i, s'} - \hat \mu_X(s'))x_{j, s}]\\
    &= \mathbb{E}_{\pi, X}\qty[\sum_{s, s'}\frac{\pi(s' \mid s)}{\hat \mu_X(s')}x_{i, s'}x_{j, s} - \sum_{s, s'}\pi(s' \mid s)x_{j, s}]\\
    &= \mathbb{E}_{\pi, X}\qty[\sum_{s, s'}\frac{\pi(s' \mid s)}{\hat \mu_X(s')}x_{i, s'}x_{j, s}] - 1.
\end{align*}
Conditioned on $\pi$, as the effective length of the sequence $X$ grows large, due to our assumptions on $P_\pi$ the sequence $x_1, \dots, x_T$ mixes, and thus $\hat \mu_X(s') \rightarrow \mu_\pi(s)$. As such,
\begin{align*}
    \hat g_{i, j} &\approx \mathbb{E}_\pi\qty[\sum_{s, s'}\frac{\pi(s' \mid s)}{\mu_\pi(s')}\mathbb{E}_X[x_{i, s'}x_{j, s}]] - 1\\
    &= \mathbb{E}_\pi\qty[\sum_{s, s'}\frac{\pi(s' \mid s)}{\mu_\pi(s')}\mathbb{P}_X[s_j = s, s_i = s']] - 1\\
    & =g_{i,j}.
\end{align*}

\subsection{Gradient Computations}

Recall that $A_i^{(1)} \in \R^i$ is the $i$th row of $A^{(1)}$. Define the population gradients as
\begin{align*}
    G^{(1)}(A^{(1)}, A^{(2)})_i & := \nabla_{A^{(1)}_i} L(\theta) \big|_{\theta = (A^{(1)}, A^{(2)})} \\
    G^{(2)}(A^{(1)}, A^{(2)})   & := \nabla_{A^{(2)}} L(\theta) \big|_{\theta = (A^{(1)}, A^{(2)})}.
\end{align*}

The following lemma computes the population gradients:
\begin{lemma}[Population gradients]\label{lem:pop_grads}
    \begin{align*}
        G^{(1)}(A^{(1)}, A^{(2)})_i & = -\frac{1}{S}J(\S(A_i^{(1)}))\sum_{s, s'}\E_{\pi, X}\qty[\frac{\pi(s' \mid s)}{f_\theta(X; s)_{s'} + \epsilon}\delta_{s'}(X)^\top J(v_\theta(X; s))e_i \cdot X_{\le i}A^{(2)}e_s] \\
        G^{(2)}(A^{(1)}, A^{(2)})   & = -\frac{1}{S}\sum_{s, s'}\mathbb{E}\qty[\frac{\pi(s' \mid s)}{f_\theta(X; s)_{s'} + \epsilon}\cdot X^\top \S(A^{(1)})^\top J(v_\theta(X; s))\delta_{s'}(X)e_s^\top ]
    \end{align*}
\end{lemma}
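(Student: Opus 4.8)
The plan is to differentiate the perturbed cross-entropy loss
\[
L(\theta) = -\frac{1}{S}\E_{\pi, X}\qty[\sum_{s, s'}\pi(s' \mid s)\log\qty(f_\theta(X; s)_{s'} + \epsilon)]
\]
by the chain rule, reducing everything to the Jacobians of the two nested softmaxes. First I would recall from the notation section that $f_\theta(X; s) = X^\top v_\theta(X; s)$ with $v_\theta(X; s) = \S(u)$ and $u := \S(A^{(1)})XA^{(2)}e_s$, so that $f_\theta(X;s)_{s'} = \delta_{s'}(X)^\top v_\theta(X;s)$. The chain rule gives, for either parameter block,
\[
\nabla L(\theta) = -\frac{1}{S}\E_{\pi,X}\qty[\sum_{s,s'}\frac{\pi(s'\mid s)}{f_\theta(X;s)_{s'}+\epsilon}\,\nabla\!\qty(\delta_{s'}(X)^\top v_\theta(X;s))],
\]
so the entire computation reduces to differentiating $v_\theta(X;s) = \S(u)$ with respect to $A^{(1)}_i$ and $A^{(2)}$, using $\nabla_u\S(u) = J(\S(u))$ where $J(v) = \diag(v) - vv^\top$.

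For the $A^{(2)}$ gradient: since $u = \S(A^{(1)})XA^{(2)}e_s$ is linear in $A^{(2)}$, we have $\partial u/\partial A^{(2)}_{ab}$ contracting against $\S(A^{(1)})X$ on the left and $e_s$ on the right; assembling the matrix-valued derivative of $\delta_{s'}(X)^\top \S(u)$ yields the outer-product form $X^\top\S(A^{(1)})^\top J(v_\theta(X;s))\delta_{s'}(X)e_s^\top$, which after the chain-rule sum gives exactly $G^{(2)}$. For the $A^{(1)}_i$ gradient: here the dependence on $A^{(1)}$ passes through the \emph{inner} softmax $\S(A^{(1)})$, whose $i$th row is $\S(A^{(1)}_i)$ and depends only on the $i$th row $A^{(1)}_i$. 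So $\partial u_i/\partial A^{(1)}_i = J(\S(A^{(1)}_i))\,(X_{\le i}A^{(2)}e_s)^\top$ using the Jacobian of the inner softmax, while $\partial u_k/\partial A^{(1)}_i = 0$ for $k \ne i$; combining this with the outer Jacobian $J(v_\theta(X;s))$ and the fact that only coordinate $i$ of $u$ moves (hence $J(v_\theta(X;s))e_i$ appears), then pulling $J(\S(A_i^{(1)}))$ out front, produces $G^{(1)}_i$. Taking expectations and dividing by $S$ finishes both.

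I expect the main obstacle to be purely bookkeeping: keeping straight which softmax each Jacobian refers to (the outer one acting on the length-$S$ vector $f_\theta$, the inner one acting row-wise on the length-$i$ restriction of row $i$ of $A^{(1)}$), correctly restricting to the lower-triangular support via the masking so that $X_{\le i}$ and $\delta_s(X_{\le i})$ appear rather than the full $X$, and tracking transposes so the final expressions are genuine matrices of the right shape ($\R^{i}$ for $G^{(1)}_i$ via $e_i$, and $\R^{S\times S}$ for $G^{(2)}$). None of these steps is deep; the only place to be careful is that differentiating the inner softmax $\S(A^{(1)})$ produces a term only in the $i$th row of $u$, which is why the vector $J(v_\theta(X;s))e_i$ — picking out the $i$th column of the outer Jacobian — shows up in $G^{(1)}$, and verifying that the interchange of $\nabla$ and $\E_{\pi,X}$ is justified (dominated convergence, using that $f_\theta(X;s)_{s'}+\epsilon \ge \epsilon > 0$ keeps the integrand bounded).
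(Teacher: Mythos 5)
Your proposal is correct and matches the paper's proof, which is the same direct chain-rule computation: the model gradient $\nabla_{A^{(1)}_i} f_\theta(X;s)_{s'} = \delta_{s'}(X)^\top J(v_\theta(X;s))e_i \cdot J(\S(A^{(1)}_i))X_{\le i}A^{(2)}e_s$ (using that only the $i$th coordinate of the inner pre-softmax depends on $A^{(1)}_i$) and $\nabla_{A^{(2)}} f_\theta(X;s)_{s'} = X^\top\S(A^{(1)})^\top J(v_\theta(X;s))\delta_{s'}(X)e_s^\top$, followed by the chain rule through the log loss. The only blemish is a transposition slip in your expression for $\partial u_i/\partial A^{(1)}_i$, which does not affect the argument.
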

\begin{proof}
    The model gradient with respect to $A_i^{(1)}$ is
    \begin{align*}
        \nabla_{A^{(1)}_i} f_\theta(X; s) & = X^\top J(v_\theta(X; s))e_i \otimes J(\S(A^{(1)}_i))X_{\le i}A^{(2)}e_s
    \end{align*}
    Therefore the loss gradient is given by
    \begin{align*}
        G^{(1)}(A^{(1)}, A^{(2)})_i & = -\frac{1}{S}\sum_{s, s'}\E\qty[\frac{\pi(s' \mid s)}{f_\theta(X; s)_{s'} + \epsilon}\nabla f_\theta(X; s)_{s'}]                                                     \\
                                    & = -\frac{1}{S}J(\S(A_i^{(1)}))\sum_{s, s'}\E\qty[\frac{\pi(s' \mid s)}{f_\theta(X; s)_{s'} + \epsilon}\delta_{s'}(X)^\top J(v_\theta(X; s))e_i \cdot X_{\le i}A^{(2)}e_s].
    \end{align*}
    Next, the model gradient of $A^{(2)}$ is
    \begin{align*}
        \nabla_{A^{(2)}}f_\theta(X; s)_{s'} = X^\top \S(A^{(1)})^\top J(v_\theta(X; s))\delta_{s'}(X) e_s^\top.
    \end{align*}
    Thus
    \begin{align*}
        G^{(2)}(A^{(1)}, A^{(2)}) = -\frac{1}{S}\sum_{s, s'}\mathbb{E}\qty[\frac{\pi(s' \mid s)}{f_\theta(X; s)_{s'} + \epsilon}\cdot X^\top \S(A^{(1)})^\top J(v_\theta(X; s))\delta_{s'}(X)e_s^\top ]
    \end{align*}
\end{proof}
\subsection{Gradient of \texorpdfstring{$A^{(1)}$}{A1} (Stage 1)}

We show that during the first stage of training, $A^{(1)}$ converges to the adjacency matrix of the graph $\mathcal{G}$.

The first step is to show that a quantity called the ``idealized gradient" approximately aligns with the adjacency matrix of $\mathcal{G}$. For a transition matrix $\pi$, define
\begin{align*}
    g_{i,j}(\pi) := \sum_{s, s'} \frac{\pi(s' \mid s)}{\mu_\pi(s')}\cdot \mathbb{P}_X[s_i = s', s_j = s] - 1,
\end{align*}
and let $g_{i,j} := \E_\pi[g_{i,j}(\pi)]$.

The following lemma shows that this idealized gradient is maximized at $j = p(i)$. The proof relies on the data processing inequality argument, and is deferred to \Cref{sec:proofs}.

\begin{lemma}[Idealized gradient is aligned with $\mathcal{G}$]\label{thm:DPI}  If $p(i) \neq \emptyset$, then
    \begin{align*}
        g_{i,p(i)} \ge g_{i,j} + \frac{\gamma^3}{2S}
    \end{align*}
    for all $j \in [i] \setminus p(i)$. Otherwise $g_{i,j} = 0$.
\end{lemma}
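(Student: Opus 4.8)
The plan is to reduce everything to statements about the $\chi^2$-mutual information $I_{\chi^2}(s_i; s_j \mid \pi)$ and then invoke the data processing inequality, quantified using the non-degeneracy and lower-bound assumptions in \Cref{assume:pi_prior}. First I would establish the identity
\begin{align*}
    g_{i,j} = \E_\pi\qty[\sum_{s,s'}\frac{\pi(s'\mid s)}{\mu_\pi(s')}\,\P_X[s_i = s', s_j = s]] - 1,
\end{align*}
and split into two cases. When $p(i) = \emptyset$, the coordinate $s_i$ is resampled so that $s_i \sim \mu_\pi$ independently of $s_j$ (conditioned on $\pi$); hence $\P_X[s_i = s', s_j = s] = \mu_\pi(s')\mu_\pi(s)$ and the inner sum telescopes to $\sum_{s,s'}\pi(s'\mid s)\mu_\pi(s) = \sum_{s'}\mu_\pi(s') = 1$, giving $g_{i,j} = 0$ exactly. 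This handles the ``otherwise'' clause.

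For the main case $p(i) \ne \emptyset$, I would first compute $g_{i,p(i)}$ exactly. Conditioned on $\pi$, the pair $(s_{p(i)}, s_i)$ is distributed as $(s, s') \sim \mu_\pi(s)\pi(s'\mid s)$ — this uses that $s_{p(i)}$, being (the image under the chain of) an earlier node, has marginal $\mu_\pi$, which should already be available as a mixing/stationarity fact in the appendix. Substituting gives $g_{i,p(i)} = \E_\pi\qty[\sum_{s,s'}\frac{\pi(s'\mid s)^2\mu_\pi(s)}{\mu_\pi(s')}] - 1 = \E_\pi[I_{\chi^2}(s_i; s_{p(i)}\mid \pi)]$, recognizing the $f$-divergence with $f(z) = (z-1)^2$ applied to $P_{(s_i,s_{p(i)})\mid\pi}$ versus $P_{s_i\mid\pi}\otimes P_{s_{p(i)}\mid\pi}$. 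Next, for a general $j \ne p(i)$ with $j < i$, I would bound $g_{i,j}$ above. Writing $g_{i,j} = \E_\pi\qty[\sum_{s,s'}\pi(s'\mid s)\cdot\frac{\P_X[s_i=s'\mid s_j = s]}{\mu_\pi(s')}\,\mu_\pi(s)\cdot\frac{\P_X[s_j=s]}{\mu_\pi(s)}] - 1$; since all earlier nodes have marginal $\mu_\pi$, $\P_X[s_j = s] = \mu_\pi(s)$, so $g_{i,j} = \E_\pi\qty[\sum_{s,s'}\frac{\pi(s'\mid s)\,\P_X[s_i = s'\mid s_j = s]}{\mu_\pi(s')}\mu_\pi(s)] - 1$. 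Applying Cauchy–Schwarz to the product $\pi(s'\mid s)\cdot\P_X[s_i=s'\mid s_j=s]$ (both divided by $\mu_\pi(s')$, weighted by $\mu_\pi(s)$) splits this into $\tfrac12(g_{i,p(i)}' + g_{i,j}')$ where $g_{i,p(i)}' = \E_\pi[I_{\chi^2}(s_i; s_{p(i)}\mid\pi)]$ (using $\P_X[s_i = s'\mid s_j = s] = \pi(s'\mid s)$ fails unless $j = p(i)$, but the $\chi^2$-information with $s_i$ fixed on the left always equals $g_{i,p(i)}$ by the same stationarity computation since $s_i$'s conditional-on-its-parent law is $\pi$) and $g_{i,j}' = \E_\pi[I_{\chi^2}(s_i; s_j \mid \pi)]$. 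Thus $g_{i,j} \le \tfrac12\E_\pi[I_{\chi^2}(s_i;s_{p(i)}\mid\pi)] + \tfrac12\E_\pi[I_{\chi^2}(s_i;s_j\mid\pi)]$.

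Now I would apply the data processing inequality: since $j \ne p(i)$ and $j < i$, the path from $s_j$ to $s_i$ in the tree passes through $s_{p(i)}$, so $s_j \to s_{p(i)} \to s_i$ is a Markov chain conditioned on $\pi$, whence $I_{\chi^2}(s_i; s_j\mid\pi) \le I_{\chi^2}(s_i; s_{p(i)}\mid\pi)$ and therefore $g_{i,j} \le g_{i,p(i)}$. To get the \emph{strict} gap of $\gamma^3/(2S)$, I would quantify the DPI loss: the contraction in the data processing inequality for $\chi^2$-information is controlled by the Dobrushin-type coefficient of the channel $s_{p(i)} \to s_j$ (or more precisely the deficiency between $I_{\chi^2}(s_i;s_{p(i)})$ and $I_{\chi^2}(s_i;s_j)$), and the transition lower bound $\pi(s'\mid s) > \gamma/S$ forces every such channel to be $(1-\Omega(\gamma))$-contractive while the non-degeneracy assumption $\sum_s\|\pi(\cdot\mid s) - \mu_\pi\|_2^2 \ge \gamma^2/S$ guarantees $I_{\chi^2}(s_i;s_{p(i)}\mid\pi)$ is itself bounded below by $\Omega(\gamma^2/S)$; combining gives a multiplicative-then-additive gap of order $\gamma^3/S$. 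The hard part will be making the quantified DPI gap clean — specifically, lower-bounding $g_{i,p(i)} - g_{i,j}$ by $\gamma^3/(2S)$ uniformly over all $j$ and all trees, which requires carefully tracking how the two-hop composition $s_j \to s_{p(i)} \to s_i$ strictly shrinks the $\chi^2$-information using both the uniform ergodicity from the $\gamma/S$ lower bound and the fact that the expectation over $P_\pi$ of the one-step information is bounded away from zero; the case $i \in \calR$ is trivial but one must also be careful that $s_T$ (drawn uniform) still has marginal $\mu_\pi$ only in distribution over $\pi$ — here $s_i$ for $i \ne T$, so $s_i$'s parent-conditional law genuinely is $\pi$ and the computation goes through.
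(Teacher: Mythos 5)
Your overall strategy matches the paper's: identify $g_{i,p(i)}$ with $\E_\pi[I_{\chi^2}(s_i;s_{p(i)}\mid\pi)] = \E_\pi[\norm{B_\pi}_F^2]$, use AM--GM/Cauchy--Schwarz to bound $g_{i,j} \le \tfrac12 g_{i,p(i)} + \tfrac12\E_\pi[I_{\chi^2}(s_i;s_j\mid\pi)]$, and then quantify the data processing inequality via the Dobrushin contraction coefficient $\alpha(\pi) \le 1-\gamma$ (from the transition lower bound) together with the non-degeneracy lower bound $\norm{B_\pi}_F^2 \ge \gamma^2/S$, yielding a gap $\tfrac{1-\alpha(\pi)}{2}\norm{B_\pi}_F^2 \ge \gamma^3/(2S)$. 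The paper executes the contraction step by routing through the least common ancestor of $i$ and $j$ and applying the one-step contraction $d(i,j)-1$ times, which keeps everything in terms of forward applications of $\pi$; your sketch of contracting the channel $s_{p(i)}\to s_j$ directly would need care since that channel involves time-reversed steps whose Dobrushin coefficient is not immediately $\le 1-\gamma$ from the assumption $\pi(s'\mid s) \ge \gamma/S$, but this is a fixable routing issue, not a conceptual one.

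The genuine gap is the case $j = i$. The lemma quantifies over all $j \in [i]\setminus p(i)$, which includes the diagonal entry, and your argument breaks there: the chain $s_j \to s_{p(i)} \to s_i$ is vacuous when $j=i$, and your Cauchy--Schwarz bound gives $g_{i,i} \le \tfrac12 g_{i,p(i)} + \tfrac12 I_{\chi^2}(s_i;s_i\mid\pi)$ with $I_{\chi^2}(s_i;s_i\mid\pi) = S-1$, which is useless. For a fixed $\pi$ one has $g_{i,i}(\pi) = \sum_s \pi(s\mid s) - 1$, which can be as large as $S-1$, so no pointwise DPI argument can work; the paper instead takes the expectation over the prior and invokes the constant-mean assumption $\E_\pi[\pi] = \tfrac1S 1_S 1_S^\top$ to conclude $g_{i,i} = \sum_s\E_\pi[\pi(s\mid s)] - 1 = 0$. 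You never use that assumption, and without it (or the symmetry assumption) the claimed inequality at $j=i$ is simply false. A second, minor omission: when $p(i)\neq\emptyset$ but $j$ lies in a different tree from $i$, there is no path through $p(i)$; there $s_i \perp s_j$ given $\pi$, so $g_{i,j}(\pi)=0$ and the gap follows directly from $g_{i,p(i)} \ge \gamma^2/S$, but this case should be stated.
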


Next, we show that the true gradient with respect to $A^{(1)}$ can indeed be approximated by this idealized gradient, and hence the adjacency matrix of $\mathcal{G}$.

\begin{lemma}[True gradient of $A^{(1)}$ is aligned with $\mathcal{G}$ (Stage 1)]\label{thm:stage1}
    Let $A^{(2)} = \beta_0 I$. There exist constants $c_{\gamma, S}, C_{\gamma, S}$ such that, if $\beta_0 \le  c_{\gamma, S}\TT^{-3/2}$,
    \begin{itemize}
        \item If $p(i) = \emptyset$,
              \begin{align*}
                  G^{(1)}(A^{(1)}, A^{(2)})_{i} = J(\S(A_i^{(1)}))v
              \end{align*}
              for $v$ with $\|v\|_\infty \le C_{\gamma,S}\frac{\beta_0}{T\sqrt{\TT}}$.
        \item If $p(i) \neq \emptyset$, then for any $j \neq p(i)$,
              \begin{align*}
                  G^{(1)}(A^{(1)}, A^{(2)})_{i, p(i)} \le G^{(1)}(A^{(1)}, A^{(2)})_{i, j} - \S(A_i^{(1)})_{p(i)}\qty(1 - \S(A_i^{(1)})_{p(i)})\cdot \frac{C_{\gamma, S}\beta_0}{T}.
              \end{align*}
    \end{itemize}
\end{lemma}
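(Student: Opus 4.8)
The plan is to start from the exact population gradient of \Cref{lem:pop_grads}, linearize the reduced model in the initialization scale $\beta_0$, and reduce the computation to the idealized gradient $g_i$ of \Cref{lem:gradient_informal} up to error terms controlled by $\TT$ and $\beta_0$. Substituting $A^{(2)} = \beta_0 I$ gives $X_{\le i}A^{(2)}e_s = \beta_0\,\delta_s(X_{\le i})$, and since $v_\theta(X;s) = \S\qty(\beta_0\,\S(A^{(1)})\delta_s(X))$ equals the uniform vector $\tfrac1T\1_T$ at $\beta_0 = 0$, a first-order Taylor expansion of the outer softmax gives $J(v_\theta(X;s)) = \tfrac1T I_T - \tfrac1{T^2}\1_T\1_T^\top + O(\beta_0)$ and $f_\theta(X;s)_{s'} = \hat\mu_X(s') + O(\beta_0)$. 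Feeding these into \Cref{lem:pop_grads} and collecting, the gradient takes the exact form
\begin{align*}
    G^{(1)}(A^{(1)}, A^{(2)})_i = -\frac{\beta_0}{ST}\,J\qty(\S(A_i^{(1)}))\,\qty(g_i + \zeta_i),
\end{align*}
and the first task is to prove $\|\zeta_i\|_\infty \lesssim_{\gamma,S}\TT^{-1/2}$ whenever $\beta_0 \le c_{\gamma,S}\TT^{-3/2}$.

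I would bound $\zeta_i$ in two pieces. The Taylor remainder is $O(\beta_0)$ relative to the leading term, except that the factor $\tfrac{\pi(s'\mid s)}{f_\theta(X;s)_{s'}+\epsilon}$ is only bounded by $1/\epsilon = \TT^{1/2}$ on the event that $s'$ is absent from $s_{1:T}$; bounding the probability of that event with the transition lower bound and a concentration estimate for $\hat\mu_X$ makes the remainder perturb $g_i$ by at most $O_{\gamma,S}(\beta_0\TT)$, which the choice $\beta_0 \le c_{\gamma,S}\TT^{-3/2}$ turns into $O(\TT^{-1/2})$. The second piece is the replacement of $\hat\mu_X(s')$ by $\mu_\pi(s')\ge\gamma/S$ inside the expectation: since the process on $\mathcal{G}$ mixes, $\hat\mu_X(s')$ concentrates around $\mu_\pi(s')$ with $O(\TT^{-1/2})$ fluctuations, and its correlation with $x_{i,s'}x_{j,s}$ enters at the same order, so this swap also costs $O_{\gamma,S}(\TT^{-1/2})$; after it, $\E_X[x_{i,s'}x_{j,s}] = \P_X[s_i = s', s_j = s]$ recovers exactly the quantity $g_{i,j}$ defined above \Cref{thm:DPI}.

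Given this reduction the two bullets are short. For $i$ with $p(i) = \emptyset$, the computation sketched for \Cref{lem:gradient_informal} gives $g_i = 0$ — using stationarity $\sum_s\mu_\pi(s)\pi(s'\mid s) = \mu_\pi(s')$, which also handles the special root $i = T$ where $s_i \sim \unif([S])$ — so $G^{(1)}(A^{(1)},A^{(2)})_i = J(\S(A_i^{(1)}))v$ with $v = -\tfrac{\beta_0}{ST}\zeta_i$ and $\|v\|_\infty \lesssim_{\gamma,S}\tfrac{\beta_0}{T\sqrt{\TT}}$. For $i$ with $p(i) \ne \emptyset$, write $q = \S(A_i^{(1)})$, $h = g_i + \zeta_i$, $\bar h = \sum_k q_k h_k$, so that $G^{(1)}(A^{(1)},A^{(2)})_{i,k} = -\tfrac{\beta_0}{ST}q_k(h_k - \bar h)$. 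By \Cref{thm:DPI} and $\|\zeta_i\|_\infty \le \gamma^3/(8S)$ (valid once $\TT \ge \poly(\gamma^{-1},S)$), we have $h_{p(i)} - h_k \ge \gamma^3/(4S) =: \delta$ for all $k \ne p(i)$. Using the inductive hypothesis $q_{p(i)} = \max_k q_k$ — which is maintained along the Stage-1 trajectory since $A^{(1)}$ starts at $0$ and $A^{(1)}_{i,p(i)}$ grows fastest, and which is genuinely needed (without it $G^{(1)}_{i,j}$ can be \emph{more} negative than $G^{(1)}_{i,p(i)}$) — together with $h_{p(i)} - \bar h = \sum_{k \ne p(i)}q_k(h_{p(i)} - h_k) \ge \delta(1 - q_{p(i)})$ and $h_j - \bar h \le (h_{p(i)} - \bar h) - \delta$, I would chain
\begin{align*}
    q_{p(i)}(h_{p(i)} - \bar h) - q_j(h_j - \bar h) \ge (q_{p(i)} - q_j)(h_{p(i)} - \bar h) + q_j\delta \ge \delta\,q_{p(i)}(1 - q_{p(i)}),
\end{align*}
which gives $G^{(1)}(A^{(1)},A^{(2)})_{i,p(i)} - G^{(1)}(A^{(1)},A^{(2)})_{i,j} \le -\delta\,\tfrac{\beta_0}{ST}\,q_{p(i)}(1 - q_{p(i)})$, i.e.\ the claim with $C_{\gamma,S} = \gamma^3/(4S^2)$.

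The step I expect to be the main obstacle is the error control in the second paragraph: simultaneously taming the $1/\hat\mu_X(s')$ singularity (so the crude $1/\epsilon$ factor is charged only on a suitably rare event) and establishing $\TT^{-1/2}$-rate concentration for $\hat\mu_X$ and for the pair correlations $\E_X[x_{i,s'}x_{j,s}/\hat\mu_X(s')]$. This is precisely where the effective sequence length enters — through a blocking/coupling argument for the process on the forest $\mathcal{G}$, whose number of effectively independent samples is $\Theta(\TT)$ — and is the place where \Cref{assume:pi_prior} is used in full. Everything downstream is elementary algebra given these estimates and \Cref{thm:DPI}.
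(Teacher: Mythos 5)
Your proposal matches the paper's proof essentially step for step: the paper likewise linearizes at $\beta_0=0$ (its \Cref{lem:nonzero_beta} bounds the perturbation by $O(S^2\epsilon^{-2}\beta_0)=O(\TT\beta_0)\le O(\TT^{-1/2})$), concentrates $\hat\mu_X$ to $\mu_\pi$ at rate $\TT^{-1/2}$ to reach the idealized $g_{i,j}$ (\Cref{lem:concentration_A1}), invokes \Cref{thm:DPI} for the $\gamma^3/(4S)$ gap, and closes with the identical softmax-Jacobian algebra yielding $\S(A_i^{(1)})_{p(i)}(1-\S(A_i^{(1)})_{p(i)})\cdot\frac{\beta_0\gamma^3}{4S^2T}$. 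Your observation that the hypothesis $\S(A_i^{(1)})_{p(i)}\ge\S(A_i^{(1)})_j$ is genuinely required (and only implicit in the lemma's statement, being supplied inductively along the trajectory) is correct and is exactly how the paper uses it in \Cref{lem:A1_dynamics}.
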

\begin{proof}
    First, see that
    \begin{align*}
        X_{\le i}A^{(2)}e_s = \beta_0 X_{\le i}e_s = \beta_0 \delta_s(X_{\le i}).
    \end{align*}
    Thus
    \begin{align*}
        G^{(1)}(A^{(1)}, A^{(2)})_i = -\beta_0 J(\S(A_i^{(1)}))\cdot \frac{1}{S}\sum_{s, s'}\E_{\pi, X}\qty[\frac{\pi(s' \mid s)}{f_\theta(X; s)_{s'} + \epsilon}\delta_{s'}(X)^\top J(v_\theta(X; s))e_i \cdot  \delta_s(X_{\le i})].
    \end{align*}
    Let $\hat \theta := (A^{(1)}, 0)$, and define the quantities $g_i^*, \hat g_i$ by
    \begin{align*}
        g^*_i := T\sum_{s, s'}\E_{\pi, X}\qty[\frac{\pi(s' \mid s)}{f_\theta(X; s)_{s'} + \epsilon}\delta_{s'}(X)^\top J(v_\theta(X; s))e_i \cdot  \delta_s(X_{\le i})], \\
        \hat g_i := T\sum_{s, s'}\E_{\pi, X}\qty[\frac{\pi(s' \mid s)}{f_{\htheta}(X; s)_{s'} + \epsilon}\delta_{s'}(X)^\top J(v_{\htheta}(X; s))e_i \cdot  \delta_s(X_{\le i})].
    \end{align*}
    We remark that 
    \begin{align}\label{eq:key_equation_stage1}
    G^{(1)}(A^{(1)}, A^{(2)})_i = -\frac{\beta_0}{ST} J(\S(A_i^{(1)}))g^*_i.
    \end{align}
    Since $\beta_0 \le c_{\gamma, S}\frac{1}{\TT^{3/2}} \le 1$, by \Cref{lem:nonzero_beta} we have
    \begin{align*}
        \norm{\hat g_i - g^*_i}_\infty \le 6S^2\epsilon^{-2}\beta_0 \le C_{\gamma, S}\frac{1}{\sqrt{\TT}}.
    \end{align*}

    It thus suffices to analyze $\hat g_i$. Note that $v_{\htheta}(X; s) = \frac{1}{T}1_T$. Therefore
    \begin{align*}
        f_{\htheta}(X; s)_{s'} = \frac{1}{T}1_T^\top \delta_{s'}(X) = \hat\mu_X(s').
    \end{align*}
    and
    \begin{align*}
        \delta_{s'}(X)^\top J(v_{\htheta}(X; s))e_i = \delta_{s'}(X)^\top \qty(\frac{1}{T}I_T - \frac{1}{T^2}1_T1_T^\top )e_i = \frac{1}{T}\qty(x_{i, s'} - \hat\mu_X(s')).
    \end{align*}

    The $j$th entry of $\hat g_i$, $\hat g_{i, j}$, is thus equal to
    \begin{align*}
        \hat g_{i,j} = \sum_{s, s'}\E_{\pi, X}\qty[\frac{\pi(s' \mid s)}{\hat \mu_X(s') + \epsilon}(x_{i, s'} - \hat\mu_X(s'))x_{j, s}].
    \end{align*}

    By \Cref{lem:concentration_A1}, this is approximately equal to the idealized gradient $g_{i,j}$:
    \begin{align*}
        \abs{\hat g_{i, j} - g_{i, j}} \le C_{\gamma, S}\frac{1}{\sqrt{\TT}}.
    \end{align*}

    We are now ready to prove the theorem. First, consider the case where $p(i) = \emptyset$. By \Cref{thm:DPI}, $g_{i,j} = 0$, and thus
    \begin{align*}
        \abs{g^*_{i,j}} \lesssim \frac{1}{\sqrt{\TT}}
    \end{align*}
    Since $G^{(1)}(A^{(1)}, A^{(2)})_i = - \frac{\beta_0}{ST} J(\S(A_i^{(1)})) g^*_i$, the claim follows.

    Otherwise if $p(i) \neq \emptyset$, \Cref{thm:DPI} tells us that, for all $j \neq p(i)$,
    \begin{align*}
        g^*_{i,j} - g^*_{i, p(i)}\le g_{i,j} - g_{i, p(i)} + \abs{g_{i,j} - g^*_{i,j}} + \abs{g_{i, p(i)} - g^*_{i, p(i)}} \le -\frac{\gamma^3}{2S} + C_{\gamma, S}\frac{1}{\sqrt{\TT}} \le -\frac{\gamma^3}{4S}.
    \end{align*}

    Next, see that
    \begin{align*}
        G^{(1)}(A^{(1)}, A^{(2)})_{i, j} = -\frac{\beta_0}{ST}\qty(\S(A_i^{(1)})_jg^*_{i,j} - \S(A_i^{(1)})^\top g^*_{i}\S(A_i^{(1)})_j).
    \end{align*}
    Therefore for any $j \neq p(i)$, we can bound
    \begin{align*}
         & G^{(1)}(A^{(1)}, A^{(2)})_{i, j} - G^{(1)}(A^{(1)}, A^{(2)})_{i, p(i)}                                                                                                 \\
         & =  \frac{\beta_0}{ST}\qty[\qty(\S(A_i^{(1)})_{p(i)} - \S(A_i^{(1)})_j)\qty(g^*_{i,p(i)} - \S(A_i^{(1)})^\top g^*_{i}) + \S(A_i^{(1)})_j(g^*_{i,p(i)} - g^*_{i,j})]   \\
         & \ge \frac{\beta_0}{ST}\qty[\qty(\S(A_i^{(1)})_{p(i)} - \S(A_i^{(1)})_j)\qty(1 - \S(A_i^{(1)})_{p(i)})\frac{\gamma^3}{4S} + \S(A_i^{(1)})_j\frac{\gamma^3}{4S}] \\
         & \ge \S(A_i^{(1)})_{p(i)}\qty(1 - \S(A_i^{(1)})_{p(i)})\cdot\frac{\beta_0\gamma^3}{4S^2T},
    \end{align*}
    as desired.
\end{proof}

We can now analyze the gradient descent dynamics over multiple timesteps. First, we show that for most root nodes $i \in \calR$, $A_i^{(1)}$ moves very little.

\begin{lemma}\label{lem:no_parents_dont_move}
    Let $i \in \calR$. Then
    \begin{align*}
        \abs{\S(A_i^{(1)}(\tau))_j - \frac{1}{i}} \lesssim \frac{\tau\eta_1\beta_0}{T\sqrt{\TT} \cdot i^2}
    \end{align*}
    for all $j \le i$.
\end{lemma}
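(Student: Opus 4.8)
The plan is to iterate the single-step gradient bound from \Cref{thm:stage1} and track how far $\S(A_i^{(1)})$ can drift from the uniform vector $\tfrac1i \mathbf 1_i$ over $\tau \le \tau_1$ steps. Fix a root node $i \in \calR$. At initialization $A^{(1)}(0) = 0$, so $\S(A_i^{(1)}(0))_j = 1/i$ for all $j \le i$. The key input is the first bullet of \Cref{thm:stage1}: since $p(i)=\emptyset$ and $\beta_0 \le c_{\gamma,S}\TT^{-3/2}$, at every timestep $t$ the gradient row satisfies $G^{(1)}(A^{(1)}(t),A^{(2)}(0))_i = J(\S(A_i^{(1)}(t)))\,v(t)$ with $\|v(t)\|_\infty \lesssim \beta_0/(T\sqrt{\TT})$. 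Because $J(p) = \diag(p) - pp^\top$ has rows/columns summing to zero and entries bounded by the entries of $p$, we get the crude bound $\|J(\S(A_i^{(1)}(t)))v(t)\|_\infty \lesssim \|v(t)\|_\infty \lesssim \beta_0/(T\sqrt{\TT})$ (here I only need that $\|J(p)w\|_\infty \le 2\|w\|_\infty$, which is immediate). Hence each gradient step moves every entry of $A_i^{(1)}$ by at most $\eta_1 \cdot O(\beta_0/(T\sqrt{\TT}))$ in $\ell_\infty$, so after $\tau$ steps $\|A_i^{(1)}(\tau) - A_i^{(1)}(0)\|_\infty \lesssim \tau \eta_1 \beta_0/(T\sqrt{\TT})$.

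The next step is to convert this $\ell_\infty$ bound on the logits into the claimed bound on the softmax probabilities, picking up the extra factor $1/i$. The softmax map is $1$-Lipschitz from $\ell_\infty$ on logits to... well, we need to be more careful to get the $i^{-2}$ rather than $i^{-1}$. Write $A_i^{(1)}(\tau) = 0 + \Delta$ with $\|\Delta\|_\infty =: \rho \lesssim \tau\eta_1\beta_0/(T\sqrt{\TT})$. Then for $j \le i$,
\begin{align*}
\S(A_i^{(1)}(\tau))_j = \frac{e^{\Delta_j}}{\sum_{k \le i} e^{\Delta_k}}.
\end{align*}
Since $\rho$ is tiny (it is $o(1)$ because $\beta_0 \le c_{\gamma,S}\TT^{-3/2}$ and $\tau\eta_1$ is controlled in \Cref{alg:training_alg}), we expand $e^{\Delta_k} = 1 + \Delta_k + O(\rho^2)$, so the numerator is $1 + \Delta_j + O(\rho^2)$ and the denominator is $i + \sum_{k\le i}\Delta_k + O(i\rho^2) = i(1 + \bar\Delta + O(\rho^2))$ where $\bar\Delta := \tfrac1i\sum_{k\le i}\Delta_k$ with $|\bar\Delta| \le \rho$. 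Dividing,
\begin{align*}
\S(A_i^{(1)}(\tau))_j = \frac{1}{i}\bigl(1 + \Delta_j - \bar\Delta + O(\rho^2)\bigr),
\end{align*}
so $\bigl|\S(A_i^{(1)}(\tau))_j - \tfrac1i\bigr| \le \tfrac1i\bigl(|\Delta_j - \bar\Delta| + O(\rho^2)\bigr) \lesssim \tfrac{\rho}{i} \lesssim \tfrac{\tau\eta_1\beta_0}{T\sqrt{\TT}\, i}$. This only gives one factor of $1/i$; to land the second, observe that the relevant coordinate-wise centering $\Delta_j - \bar\Delta$ is exactly the thing $J(\tfrac1i\mathbf 1)$ produces, and more importantly that the preconditioner $J(\S(A_i^{(1)}))$ applied across all $\tau$ steps keeps the perturbation within the mean-zero subspace while contracting by an extra $1/i$; so a cleaner route is to track $u(t) := \S(A_i^{(1)}(t)) - \tfrac1i\mathbf 1_i$ directly, show $u(t+1) = u(t) - \eta_1 J(\tfrac1i\mathbf 1_i + u(t))\cdot J(\tfrac1i\mathbf 1_i+u(t))v(t) + (\text{h.o.t.})$, and use $\|J(\tfrac1i\mathbf 1_i)w\|_\infty \le \tfrac2i\|w\|_1 \le 2\|w\|_\infty$ — hmm, that still does not obviously give $i^{-2}$. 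I would instead get the $i^{-2}$ by noting $\|J(p)w\|_2 \le \|w\|_2$ and $\|J(p)w\|_\infty \le \|p\|_\infty\|w\|_1^{1/2}\|w\|_2^{1/2}$-type bounds, or simply by bounding $\|v(t)\|_2 \le \sqrt{i}\|v(t)\|_\infty$ and then $\|J(p)v\|_\infty \le \|p\|_\infty \|v\|_1 \le \tfrac1i \cdot i \cdot \|v\|_\infty$... The honest statement is that the stated bound already follows at the $i^{-1}$ level from the above, and the $i^{-2}$ comes from the additional observation that $\delta_s(X_{\le i})$ appearing inside $\hat g_i$ contributes an $\ell_1$ mass of order $i\hat\mu_X(s)$ while the preconditioner redistributes it uniformly, an extra $1/i$; I would make this precise by re-examining $\hat g_i$ coordinatewise rather than using the black-box $\ell_\infty$ bound from \Cref{thm:stage1}.

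The main obstacle is precisely this last point: extracting the sharp $1/i^2$ dependence. The $\ell_\infty$ bound quoted from \Cref{thm:stage1} is stated with a $T$ (not $i$) in the denominator, so naively iterating gives $\tau\eta_1\beta_0/(T\sqrt{\TT})$ with no $i$-dependence at all, and one factor of $1/i$ is recovered for free from the softmax expansion above. The second factor of $1/i$ requires going back into the proof of \Cref{thm:stage1} and observing that for a root node the vector $g_i^*$ (equivalently $\hat g_{i,j} = \sum_{s,s'}\E[\tfrac{\pi(s'|s)}{\hat\mu_X(s')+\epsilon}(x_{i,s'}-\hat\mu_X(s'))x_{j,s}]$) has each entry of size $O(1/\sqrt{\TT})$ but, crucially, after applying the mean-zero preconditioner $J(\S(A_i^{(1)})) \approx J(\tfrac1i\mathbf 1_i)$ the resulting update to coordinate $j$ is $\tfrac1i(g^*_{i,j} - \tfrac1i\sum_k g^*_{i,k})$, and then a second application of $J$ across iterations keeps shrinking the uniform component; combined with $\|g^*_i\|_\infty = O(\sqrt{\TT}^{-1})$ and the fact that $\|v\|_\infty$ in \Cref{thm:stage1} already absorbed one such factor, the net per-step movement of $\S(A_i^{(1)})_j$ is $O(\eta_1\beta_0/(T\sqrt{\TT}\,i^2))$. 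Summing over $\tau$ steps and using that the inductive hypothesis $\S(A_i^{(1)}(t))_j \approx 1/i$ is maintained throughout (so the preconditioner stays close to $J(\tfrac1i\mathbf 1_i)$ with error that only contributes to higher-order terms), gives the claimed bound.
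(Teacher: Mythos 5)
Your skeleton is right — iterate the per-step gradient bound from \Cref{thm:stage1} and convert a logit perturbation into a softmax perturbation, which correctly yields one factor of $1/i$ — but there is a genuine gap exactly where you flag it: you never extract the second factor of $1/i$, and the explanation you finally offer (re-examining $\hat g_i$ coordinatewise, the $\ell_1$ mass of $\delta_s(X_{\le i})$, a "second application of $J$ across iterations") is not where it comes from. The missing observation is elementary: since $(J(p)v)_j = p_j\qty(v_j - p^\top v)$, one has $\norm{J(\S(A_i^{(1)}))v}_\infty \le 2\max_j \S(A_i^{(1)})_j \cdot \norm{v}_\infty$, and for a root node the maximal softmax entry is $\approx 1/i$ (not $\approx 1$), so the preconditioner itself contracts the per-step logit movement by $1/i$. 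You wrote down essentially this inequality for $p = \tfrac1i\mathbf 1_i$ but then weakened $\tfrac{2}{i}\norm{w}_1$ to $2\norm{w}_\infty$ via $\norm{w}_1 \le i\norm{w}_\infty$, throwing the gain away; keeping $2(\max_j p_j)\norm{w}_\infty = \tfrac{2}{i}\norm{w}_\infty$ is all that is needed.

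The paper's proof makes this precise with a bootstrap on the logit range $r(A_i^{(1)}) := \max_j A^{(1)}_{i,j} - \min_j A^{(1)}_{i,j}$: suppose the range first exceeds $\log(1+\omega)$ at time $t^*\le\tau$; before $t^*$ the maximal softmax entry is at most $(1+\omega)/i$, so the range at $t^*$ is at most $2\tau\eta_1\norm{v}_\infty(1+\omega)/i$, forcing $\omega \lesssim \tau\eta_1\beta_0/(T\sqrt{\TT}\,i)$. Then a logit range of $\log(1+\omega)$ pins every softmax entry to $[\tfrac{1-\omega}{i},\tfrac{1+\omega}{i}]$, giving the final deviation $\omega/i \lesssim \tau\eta_1\beta_0/(T\sqrt{\TT}\,i^2)$. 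Your second-order softmax expansion plays the role of this last step, but without the bootstrap you have no control guaranteeing the preconditioner stays near $J(\tfrac1i\mathbf 1_i)$ throughout all $\tau$ iterations, and without the $\max_j \S(A_i^{(1)})_j$ factor in the per-step bound you cannot reach $i^{-2}$. As written, your argument proves only the weaker bound with $i^{-1}$.
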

\begin{proof}
    Let $r(A_i^{(1)}) = \max_jA_{i, j}^{(1)} - \min_jA_{i, j}^{(1)}$. We have that (where $v$ is the vector from \Cref{thm:stage1}),
    \begin{align*}
        \norm{G^{(1)}(A^{(1)}, A^{(2)})_i}_\infty \le \max_j \S(A_i^{(1)})_j \cdot \|v\|_\infty,
    \end{align*}
    and thus
    \begin{align*}
        r(A_i^{(1)}(t+1)) \le r(A_i^{(1)}(t)) + 2\eta_1 \max_j \S(A_i^{(1)}(t))_j \cdot \|v\|_\infty.
    \end{align*}
    Fix $\omega \le 1$. Assume there exists some $t \le \tau$ such that $r(A_i^{(1)}(t)) > \log (1 + \omega)$, and let $t^*$ be the first such time $t$. We can always bound
    \begin{align*}
        \max_j \S(A_i^{(1)}(t))_j \le \frac{\exp(r(A_i^{(1)}(t)))}{(i-1) + \exp(r(A_i^{(1)}(t)))},
    \end{align*}
    and thus for $t < t^*$, $\max_j \S(A_i^{(1)}(t))_j \le \frac{1 + \omega}{i+\omega} \le \frac{1 + \omega}{i}$. Therefore
    \begin{align*}
        \log (1 + \omega) < r(A_i^{(1)}(t^*)) \le 2\tau\eta_1 \norm{v}_\infty i^{-1} \cdot (1 + \omega),
    \end{align*}
    Bounding $\log(1 + \omega) \ge \omega/2$ and $1 + \omega \le 2$, we get that
    \begin{align*}
        \omega \le 8\tau\eta_1 \norm{v}_{\infty}i^{-1} \lesssim \frac{\tau\eta_1\beta_0}{T\sqrt{\TT} \cdot i}.
    \end{align*}
    Additionally, when  $r(A_i^{(1)}(t)) \le \log (1 + \omega)$, we have the bound
    \begin{align*}
        \frac{1}{i}(1 - \omega) \le \frac{1}{1 + (1 + \omega)(i-1)} \le \S(A_i^{(1)}(t))_j \le \frac{1}{i}(1 + \omega).
    \end{align*}
    Therefore
    \begin{align*}
        \abs{\S(A_i^{(1)}(\tau))_j - \frac{1}{i}} \le \frac{\omega}{i} \lesssim \frac{\tau\eta_1\beta_0}{T\sqrt{\TT} \cdot i^2},
    \end{align*}
    as desired.
\end{proof}

Next, we bound the time it takes until $\S\qty(A^{(1)}(t))_{i, p(i)} \approx 1$.

\begin{lemma}\label{lem:A1_dynamics}
    Let $A^{(2)}(0) = \beta_0I_S$, where $\beta_0 \le  c_{\gamma, S}\frac{1}{\TT^{3/2}}.$ There exists $\tau_1 \lesssim \eta_1^{-1}\beta_0^{-1}(T^2 + T\alpha^{-1})\log(T/\alpha)$ such that, for any $t \ge \tau_1,$
    \begin{align*}
        \S\qty(A^{(1)}(t))_{i, p(i)} \ge 1 - \alpha.
    \end{align*}
    for all $i$ with $p(i) \neq \emptyset$.
\end{lemma}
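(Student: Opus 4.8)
The plan is to convert the per-step gradient gap furnished by \Cref{thm:stage1} into a one-dimensional convergence estimate for each row of $A^{(1)}$, and then take a union over rows. Fix $i$ with $p(i) \neq \emptyset$ and abbreviate $q(t) := \S(A_i^{(1)}(t))_{p(i)}$ and $\Delta_j(t) := A^{(1)}_{i,p(i)}(t) - A^{(1)}_{i,j}(t)$ for $j \le i$, $j \neq p(i)$. Throughout Stage 1 we have $A^{(2)}(t) = \beta_0 I$ and $\beta_0 \le c_{\gamma,S}\TT^{-3/2}$, so \Cref{thm:stage1} applies at every step; writing the update as $A^{(1)}_{i,k}(t+1) = A^{(1)}_{i,k}(t) - \eta_1 G^{(1)}(A^{(1)}(t), A^{(2)}(t))_{i,k}$ and subtracting the $k=j$ and $k=p(i)$ coordinates gives
\[
    \Delta_j(t+1) - \Delta_j(t) = \eta_1\bigl(G^{(1)}_{i,j} - G^{(1)}_{i,p(i)}\bigr) \ge \eta_1\, q(t)\bigl(1-q(t)\bigr)\frac{C_{\gamma,S}\beta_0}{T} =: b\, q(t)\bigl(1-q(t)\bigr),
\]
where $b \asymp_{\gamma,S} \beta_0\eta_1/T$. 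In particular each $\Delta_j$ is non-decreasing, and since $A^{(1)}(0) = 0$ we have $\Delta_j(t) \ge 0$ for all $t$, so $q$ is non-decreasing; it therefore suffices to bound the first time $\tau^\ast$ at which $q(\tau^\ast) \ge 1-\alpha$.

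Next I would control $q$ through $r(t) := \min_{j \neq p(i)} \Delta_j(t)$. From the identity $q = \bigl(1 + \sum_{j \neq p(i)} e^{-\Delta_j}\bigr)^{-1}$ one reads off the exact bounds $q(t) \ge (1 + T e^{-r(t)})^{-1}$ and $1 - q(t) \le T e^{-r(t)}$, while the display above yields $r(t+1) \ge r(t) + b\, q(t)(1-q(t))$. I would then split the trajectory at the first time $t_{1/2}$ with $q \ge 1/2$ (well-defined by monotonicity). For $t < t_{1/2}$ we have $q < 1/2$, which forces $r < \log T$ (otherwise $q \ge (1+1)^{-1} = 1/2$), and $q(1-q) \ge q/2 \ge \tfrac{1}{2(1+T)} \gtrsim 1/T$, so $r$ grows by $\gtrsim b/T$ per step and hence $t_{1/2} \lesssim T\log T/b$. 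For $t \ge t_{1/2}$ monotonicity keeps $q \ge 1/2$, and as long as $t < \tau^\ast$ we have $1-q > \alpha$, so $q(1-q) > \alpha/2$ and $r$ grows by $\gtrsim b\alpha$ per step; since $1-q \le Te^{-r} \le \alpha$ once $r \ge \log(T/\alpha)$, this phase lasts at most $O\bigl(b^{-1}\alpha^{-1}\log(T/\alpha)\bigr)$ further steps. Combining, $\tau^\ast \lesssim b^{-1}\bigl(T\log T + \alpha^{-1}\log(T/\alpha)\bigr) \lesssim \eta_1^{-1}\beta_0^{-1}\bigl(T^2 + T\alpha^{-1}\bigr)\log(T/\alpha)$.

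The bound on $\tau^\ast$ depends only on $\gamma, S, T, \alpha, \eta_1, \beta_0$ and not on $i$, so taking $\tau_1$ equal to this quantity and using monotonicity of each $q$ gives $\S(A^{(1)}(t))_{i,p(i)} \ge 1-\alpha$ for every $i$ with $p(i) \neq \emptyset$ and every $t \ge \tau_1$, which is the claim.

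The genuinely hard input — the data-processing/gradient-alignment estimate and the control of the various error terms — is already packaged in \Cref{thm:stage1}, so the remaining work here is the recursion bookkeeping. The one point that needs care, and which is responsible for the $T^2$ and $\alpha^{-1}$ factors rather than a naive $T\log(T/\alpha)$, is that $q$ behaves like a sigmoid of $r$, so effectively $\dot q \approx b\, q^2(1-q)^2$ and the drift degrades quadratically both near $q \approx 1/i$ and near $q \approx 1$; one must use the correct (crude but sufficient) lower bounds on $q(1-q)$ in each of the two phases, and verify that $r$ stays below $\log T$ throughout the first phase so that it indeed terminates.
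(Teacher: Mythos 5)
Your proposal is correct and follows essentially the same route as the paper: both arguments track the gap $r(t)=\min_{j\neq p(i)}\bigl(A^{(1)}_{i,p(i)}-A^{(1)}_{i,j}\bigr)$ (the paper's $\Delta(t)$), feed in the per-step increment $\gtrsim \eta_1\beta_0 T^{-1}q(1-q)$ from \Cref{thm:stage1}, and split the trajectory at the first time $q=\S(A_i^{(1)})_{p(i)}$ crosses $1/2$, using $q\gtrsim 1/T$ in the first phase and $1-q>\alpha$ in the second to obtain the $T^2\log T$ and $T\alpha^{-1}\log(T/\alpha)$ contributions respectively. The only cosmetic difference is that you run the recursion forward directly while the paper phrases each phase as a proof by contradiction; the monotonicity of $q$ that justifies the conclusion for all $t\ge\tau_1$ is handled identically.
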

\begin{proof}
    By induction, one has that $A^{(1)}(t)_{i,p(i)} \ge A^{(1)}(t)_{i, j}$ throughout training. Thus $\S\qty(A^{(1)}(t))_{i, p(i)} \ge \frac{1}{T}$. Additionally, by \Cref{thm:stage1}, one has that $\S\qty(A^{(1)}(t))_{i, p(i)}$ is increasing in $t$.

    Fix $i$. Define $\Delta(t) = A^{(1)}(t)_{i, p(i)} - \max_{j \neq p(i)}A^{(1)}(t)_{i, j}$. One sees that
    \begin{align*}
        \S\qty(A^{(1)}(t))_{i, p(i)} \ge \frac{\exp(\Delta(t))}{T + \exp(\Delta(t))}.
    \end{align*}

    Let $\tau^+(1/2)$ be the first time $t$ at which $\S\qty(A^{(1)}(t))_{i, p(i)} > \frac12$. For $t < \tau^+(1/2)$ we have $1 - \S\qty(A^{(1)}(t))_{i, p(i)} \ge \frac12$, and thus by \Cref{thm:stage1},
    \begin{align*}
        \Delta(t+1) & \ge \Delta(t) + \frac{C_{\gamma, S}\beta_0}{T^2} \eta_1.
    \end{align*}
    Therefore $\Delta(\tau^+(1/2)) \gtrsim \frac{\beta_0\eta_1}{T^2}\tau^+(1/2)$. Assume that $\Delta(\tau^+(1/2)) \ge \log(2T)$. Then
    \begin{align*}
        \S\qty(A^{(1)}(\tau^+(1/2)))_{i, p(i)} \ge \frac{\exp(\log (2T))}{T + \exp(\log (2T))} = \frac23,
    \end{align*}
    a contradiction. Thus $\Delta(\tau^+(1/2)) \le \log(2T)$, so $\tau^+(1/2) \lesssim T^2\eta_1^{-1}\beta_0^{-1}\log(2T)$.

    Let $\tau^+(\alpha)$ be the first time at which $\S(A^{(1)}(\tau^+(\alpha))_{i, p(i)} < 1 - \alpha$. For $\tau^+(1/2) \le t < \tau^+(\alpha)$, we then have
    \begin{align*}
        \Delta(t+1) & \ge \Delta(t) + \frac{C_{\gamma, S}\beta_0\alpha}{T} \eta_1,
    \end{align*}
    and thus if $\tau^+(\alpha) - \tau^+(1/2) \gtrsim T\alpha^{-1}\beta_0^{-1}\log(T/\alpha)$,
    \begin{align*}
        \Delta(\tau^+(\alpha)) \ge \frac{C_{\gamma, S}\beta_0\alpha}{T} \eta_2(\tau^+(\alpha) - \tau^+(1/2)) \ge \log(\frac{T}{\alpha})
    \end{align*}
    Then
    \begin{align*}
        \S\qty(A^{(1)}(\tau^+(\alpha))_{i, p(i)}) \ge \frac{\exp(\log (T/\alpha))}{T + \exp(\log (T/\alpha))} = \frac{\frac{1}{\alpha}}{1 + \frac{1}{\alpha}} \ge 1 - \alpha,
    \end{align*}
    a contradiction. Thus $\tau^+(\alpha) - \tau^+(1/2) \lesssim T\alpha^{-1}\beta_0^{-1}\log(T/\alpha)$, and so $\tau^+(\alpha) \lesssim T^2\eta_1^{-1}\beta_0^{-1}\log(2T) + T\alpha^{-1}\beta_0^{-1}\log(T/\alpha) \lesssim \eta_1^{-1}\beta_0^{-1}(T^2 + T\alpha^{-1})\log(T/\alpha)$, as desired.
\end{proof}

Combining the previous two lemmas, the following corollary tells us the value of $A^{(1)}$ after stage 1 of the algorithm.
\begin{corollary}[Ouptut of stage 1]\label{cor:end_of_stage_1}
    Let $\beta_0 \le  c_{\gamma, S}\frac{1}{\TT^{3/2}}$, and set $\tau_1 = C_{\gamma, S}\eta_1^{-1}\beta_0^{-1}T^2\log(T)$ for appropriately chosen constants $c_{\gamma, S}, C_{\gamma, S}$. Then: 
\begin{itemize}
    \item If $i \in \overline{\calR}$,
    \begin{align*}
        1 - \S\qty(A^{(1)}(\tau_1))_{i, p(i)} \lesssim T^{-1},
    \end{align*}
    \item If $i \in \calR$,
    \begin{align*}
        \sup_{j \in [i]}\abs{\S(A_i^{(1)}(\tau_1))_j - \frac{1}{i}} \lesssim \min\qty(1, \frac{T \log T}{\sqrt{\TT} \cdot i^2}).
    \end{align*}
\end{itemize}
\end{corollary}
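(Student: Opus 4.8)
The plan is to obtain \Cref{cor:end_of_stage_1} as a direct consequence of \Cref{lem:A1_dynamics} (for the non-root nodes) and \Cref{lem:no_parents_dont_move} (for the root nodes), after fixing the free parameters. Throughout Stage 1 the second-layer matrix is frozen at $A^{(2)} = \beta_0 I_S$, so both lemmas apply under the standing hypothesis $\beta_0 \le c_{\gamma,S}\TT^{-3/2}$.

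\textbf{Non-root nodes.} First I would apply \Cref{lem:A1_dynamics} with target accuracy $\alpha = 1/T$. Since then $T\alpha^{-1} = T^2$ and $\log(T/\alpha) = 2\log T$, the stopping time furnished by that lemma is at most $\eta_1^{-1}\beta_0^{-1}(T^2 + T^2)\log(T^2) \asymp \eta_1^{-1}\beta_0^{-1}T^2\log T$. Hence one can choose the constant $C_{\gamma,S}$ large enough that $\tau_1 := C_{\gamma,S}\eta_1^{-1}\beta_0^{-1}T^2\log T$ exceeds that stopping time, and then \Cref{lem:A1_dynamics} gives $\S(A^{(1)}(\tau_1))_{i,p(i)} \ge 1 - 1/T$ for every $i \in \overline{\calR}$, i.e. $1 - \S(A^{(1)}(\tau_1))_{i,p(i)} \lesssim T^{-1}$.

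\textbf{Root nodes.} For $i \in \calR$ I would substitute this same $\tau_1$ into \Cref{lem:no_parents_dont_move}, obtaining for every $j \le i$
\begin{align*}
    \abs{\S(A_i^{(1)}(\tau_1))_j - \tfrac{1}{i}} \lesssim \frac{\tau_1\eta_1\beta_0}{T\sqrt{\TT}\cdot i^2} = \frac{C_{\gamma,S}\,T^2\log T}{T\sqrt{\TT}\cdot i^2} \asymp \frac{T\log T}{\sqrt{\TT}\cdot i^2}.
\end{align*}
Since $\S(A_i^{(1)}(\tau_1))_j \in [0,1]$ and $1/i \in (0,1]$, the left-hand side is trivially at most $1$ as well, so combining the two bounds yields $\sup_{j\in[i]}\abs{\S(A_i^{(1)}(\tau_1))_j - 1/i} \lesssim \min(1,\, T\log T/(\sqrt{\TT}\cdot i^2))$, which is exactly the stated bound.

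I expect no real obstacle here: the corollary is purely a repackaging of the two preceding lemmas, and the only point needing care is the consistency of the constant $C_{\gamma,S}$ — it must be large enough to dominate the implicit constant in the stopping-time bound of \Cref{lem:A1_dynamics} (needed for the non-root conclusion), while enlarging it only degrades the root-node estimate by the same multiplicative factor, which is absorbed into $\lesssim$. Thus the two requirements are compatible and a single choice of $C_{\gamma,S}$ works for both cases.
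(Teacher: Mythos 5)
Your proof is correct and matches the paper's: the paper's own justification is precisely "plug $\tau=\tau_1$ into \Cref{lem:no_parents_dont_move} and take $\alpha=\Theta(T^{-1})$ in \Cref{lem:A1_dynamics}," which is exactly what you do, with the arithmetic for the stopping time and the trivial bound by $1$ both checking out. No issues.
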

\begin{proof}
    This follows directly from plugging in $\tau = \tau_1$ into \Cref{lem:no_parents_dont_move} and selecting $\alpha = \Theta(T^{-1})$ in \Cref{lem:A1_dynamics}.
\end{proof}

\subsection{Gradient of \texorpdfstring{$A^{(2)}$}{A2} (Stage 2)}

First, we observe that the population dynamics of $A^{(2)}$ possess a certain symmetry:
\begin{lemma}\label{lem:A2_grad_sym}
    For all time, $A^{(2)} = \beta_0 I_S + \beta (I_S - \frac{1}{S}1_S1_S^\top )$ for some scalar $\beta$.
\end{lemma}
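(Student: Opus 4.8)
The plan is to derive the claimed structure from two exact symmetries of the population loss, both valid with $A^{(1)}$ held fixed — as it is throughout stage 2 (during stage 1 the claim is trivial, since $A^{(2)}$ does not move) — and then to close by induction on the gradient step. The first symmetry is permutation equivariance. For a permutation $\sigma$ of $[S]$ with matrix $P_\sigma$, relabeling the alphabet sends $X \mapsto X P_\sigma^\top$ and $\pi \mapsto P_\sigma \pi P_\sigma^\top$, which by the symmetry clause of \Cref{assume:pi_prior} leaves the joint law of $(\pi, s_{1:T})$ in \Cref{task:single_parent} invariant; since $A^{(1)}$ acts only on position space it is untouched, and a direct check on the reduced model $f_\theta(X;s) = X^\top \S(\S(A^{(1)})X A^{(2)}e_s)$ gives $f_{(A^{(1)}, P_\sigma A^{(2)} P_\sigma^\top)}(X P_\sigma^\top; \sigma(s)) = P_\sigma\, f_{(A^{(1)}, A^{(2)})}(X;s)$. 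Hence $L(A^{(1)}, P_\sigma A^{(2)} P_\sigma^\top) = L(A^{(1)}, A^{(2)})$, and differentiating in $A^{(2)}$ gives $G^{(2)}(A^{(1)}, P_\sigma A^{(2)} P_\sigma^\top) = P_\sigma\, G^{(2)}(A^{(1)}, A^{(2)})\, P_\sigma^\top$. So whenever $A^{(2)}$ commutes with every $P_\sigma$ — equivalently $A^{(2)} \in \spn\{I_S, 1_S 1_S^\top\}$, the commutant of the permutation representation — the gradient does too, i.e. $G^{(2)} = a\,I_S + b\, 1_S 1_S^\top$ for scalars $a, b$.

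The second symmetry forces $b = -a/S$. Directly from the formula for $G^{(2)}$ in \Cref{lem:pop_grads}, each column of $G^{(2)}$ is a combination of vectors of the form $X^\top \S(A^{(1)})^\top J(v_\theta(X;s))\,\delta_{s'}(X)$. Since $X 1_S = 1_T$ and $\S(A^{(1)})$ is row-stochastic (each row is a probability vector supported on its first $i$ coordinates), $1_S^\top X^\top \S(A^{(1)})^\top = 1_T^\top$; and since $v_\theta(X;s)$ is a probability vector, $1_T^\top J(v_\theta(X;s)) = 1_T^\top(\diag(v_\theta) - v_\theta v_\theta^\top) = 0$. Hence $1_S^\top G^{(2)} = 0$ (equivalently, $L$ is invariant under $A^{(2)} \mapsto A^{(2)} + 1_S w^\top$, since that shift merely adds a multiple of $1_T$ inside the shift-invariant second softmax). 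Combined with the previous paragraph, $1_S^\top G^{(2)} = (a+bS)\,1_S^\top = 0$, so $G^{(2)} = a\,(I_S - \tfrac1S 1_S 1_S^\top)$ always points along the single direction $I_S - \tfrac1S 1_S 1_S^\top$.

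To conclude, induct on the gradient-descent step. At initialization $A^{(2)}(0) = \beta_0 I_S$ has the claimed form with $\beta = 0$ and is permutation-invariant; if $A^{(2)}(t) = \beta_0 I_S + \beta(t)\,(I_S - \tfrac1S 1_S 1_S^\top)$, then $A^{(2)}(t)$ is permutation-invariant, so by the two steps above $G^{(2)}(\theta^{(t)}) = a_t\,(I_S - \tfrac1S 1_S 1_S^\top)$ for some scalar $a_t$, and the update $A^{(2)}(t+1) = A^{(2)}(t) - \eta_2 G^{(2)}(\theta^{(t)})$ preserves the form with $\beta(t+1) = \beta(t) - \eta_2 a_t$. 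I expect the main obstacle to be the bookkeeping in the permutation step: one must verify carefully that alphabet relabeling is an exact symmetry of the joint law generated by \Cref{task:single_parent} — precisely where the symmetry clause of \Cref{assume:pi_prior} enters — and that it threads cleanly through both softmax layers of the reduced model. The vanishing-column-sum computation, by contrast, is a one-line algebraic identity.
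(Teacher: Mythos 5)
Your proof is correct and follows the same two-step structure as the paper's: permutation symmetry of the prior forces $G^{(2)} \in \spn\{I_S, 1_S1_S^\top\}$, the column-sum identity $1_S^\top G^{(2)} = 0$ (via $X1_S = 1_T$, row-stochasticity of $\S(A^{(1)})$, and $1_T^\top J(v) = 0$) pins the direction to $I_S - \frac{1}{S}1_S1_S^\top$, and induction on the update closes the argument. The only difference is that you make explicit the permutation-equivariance computation that the paper compresses into the phrase ``by symmetry the gradient is also of this form.''
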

\begin{proof}
    If $A^{(2)} = \beta_1 I_S + \beta_2 1_S1_S^\top $ (all diagonals are equal and all off-diagonals are equal), then by symmetry the gradient is also of this form. Additionally, see that
    \begin{align*}
        1_S^\top G^{(2)}(A^{(1)}, A^{(2)}) & = -\frac{1}{S}\sum_{s, s'}\mathbb{E}_{\pi, X}\qty[\frac{\pi(s' \mid s)}{f_\theta(X; s)_{s'} + \epsilon}\cdot 1_S^\top X^\top \S(A^{(1)})^\top J(v_\theta(X; s))\delta_{s'}(X)e_s^\top ] \\
                                       & = -\frac{1}{S}\sum_{s, s'}\mathbb{E}_{\pi, X}\qty[\frac{\pi(s' \mid s)}{f_\theta(X; s)_{s'} + \epsilon}\cdot 1_T^\top \S(A^{(1)})^\top J(v_\theta(X; s))\delta_{s'}(X)e_s^\top ]    \\
                                       & = -\frac{1}{S}\sum_{s, s'}\mathbb{E}_{\pi, X}\qty[\frac{\pi(s' \mid s)}{f_\theta(X; s)_{s'} + \epsilon}\cdot 1_T^\top J(v_\theta(X; s))\delta_{s'}(X)e_s^\top ]                \\
                                       & = 0,
    \end{align*}
    since $J(v_\theta(X; s))1_T = 0$. Therefore $G^{(2)}(A^{(1)}, A^{(2)}) = \beta \cdot \qty(I_S - \frac{1}{S}1_S1_S^\top )$ for some scalar $\beta$.
    Since we initialize $A^{(2)} = \beta_0 I$, throughout training $A^{(2)}$ is of the form $A^{(2)} = \beta_0 I_S + \beta (I_S - \frac{1}{S}1_S1_S^\top )$.
\end{proof}
Throughout the rest of the proof, we let $\beta(t)$ be the scalar such that $$A^{(2)}(t) = \beta(t) I_S - (\beta(t) - \beta_0)\frac{1}{S}1_S1_S^\top .$$

The goal of this section is to show that when $A^{(1)}$ approximates the adjacency matrix of $\mathcal{G}$, $\beta(t)$ will grow large. Since the gradient descent update for $A^{(2)}$ is
\begin{align*}
    A^{(2)}(t+1) = A^{(2)}(t) - \eta_2G^{(2)}(A^{(1)}(t), A^{(2)}(t)),
\end{align*}
the update for $\beta(t)$ is
\begin{align*}
    \beta(t+1) = \beta(t) - \eta_2 \cdot \frac{1}{S-1}\Tr(G^{(2)}(A^{(1)}(t), A^{(2)}(t))).
\end{align*}
As such, we define the quantity $\Delta_\beta(\theta)$ by
\begin{align*}
    \Delta_\beta(\theta) := \frac{1}{S-1}\Tr(G^{(2)}(A^{(1)}, A^{(2)})).
\end{align*}

Finally, for notational convenience, let $A_*^{(1)}$ be the $T \times T$ matrix such that
\begin{align*}
    \S\qty(A^{(1)}_*)_{ij} = \begin{cases}
                                \mathbf{1}(j = p(i)) & \text{if}~i \in \overline{\calR} \\
                                \S(A^{(1)}(\tau_1))_{i,j}        & \text{if}~i \in \calR
                            \end{cases}.
\end{align*}
$A_*^{(1)}$ encodes the adjacency matrix of $\mathcal{G}$ on nodes $i$ where $p(i) \neq \emptyset$.

\begin{lemma}[Stage 2]\label{thm:stage2}
    Let $\theta = (A^{(1)}, A^{(2)})$, where $A^{(1)} = A^{(1)}(\tau_1)$ is the output of stage 1, and $A^{(2)} = \beta I_S - (\beta - \beta_0)\frac{1}{S}1_S1_S^\top $ for $\beta \ge 0$. If $\beta$ satisfies $$\exp(\beta) \le \exp(\beta^*) := C_{\gamma, S} \TT^{1/12}\log^{-1/6}T,$$ then
    \begin{align*}
        1 \ge -\Delta_\beta(\theta) \ge  \frac14\gamma^8S^{-6}e^{-2\beta} > 0.
    \end{align*}
\end{lemma}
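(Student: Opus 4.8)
The overall strategy is to show that $-\Delta_\beta(\theta)$ equals, up to a controllable error, an $f$-divergence (a weighted $\chi^2$-type quantity) between $\pi(\cdot\mid s)$ and an auxiliary distribution, to bound this from below by $e^{-2\beta}$ times a polynomial in $\gamma,S^{-1}$, and to check that the error is dominated by the main term precisely when $e^{\beta}\le e^{\beta^*}$. First I would compute $-\Delta_\beta$ explicitly. Substituting $A^{(2)}=\beta I_S-(\beta-\beta_0)\tfrac1S 1_S1_S^\top$ into the reduced model and using that every row of $X$ sums to one ($X1_S=1_T$), that $\S(A^{(1)})$ is row-stochastic, and that the softmax is shift-invariant, the $1_S1_S^\top$ term drops out and $v_\theta(X;s)=\S\!\big(\beta\,\S(A^{(1)})\delta_s(X)\big)$, so the second layer enters only through the scalar $\beta$. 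Combining \Cref{lem:pop_grads} with $1_S^\top G^{(2)}=0$ (from the proof of \Cref{lem:A2_grad_sym}) gives $\Tr(G^{(2)})=\partial_\beta L$, hence
\[
-\Delta_\beta(\theta)=-\tfrac1{S-1}\partial_\beta L=\tfrac1{S(S-1)}\sum_{s\in[S]}\E_{\pi,X}\!\Big[\textstyle\sum_{s'}\tfrac{\pi(s'\mid s)}{f_\theta(X;s)_{s'}+\epsilon}\,\partial_\beta f_\theta(X;s)_{s'}\Big],
\]
where $\partial_\beta f_\theta(X;s)=X^\top J(v_\theta(X;s))\,\S(A^{(1)})\delta_s(X)$.

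Next I would idealize. Using \Cref{cor:end_of_stage_1} to replace $A^{(1)}(\tau_1)$ by the adjacency-encoding matrix $A^{(1)}_*$, and the mixing/concentration of the process on $\mathcal G$ (governed by $\TT$) to replace the empirical transition $\hat\pi_{s_{1:T}}$ and frequencies $\hat\mu_X$ by $\pi$ and $\mu_\pi$ — discarding the rare events on which some symbol is under-represented — one obtains the clean structural fact that $f_\theta(X;s)=\rho_\beta\,\pi(\cdot\mid s)+(1-\rho_\beta)\,u_s$ is a convex combination of $\pi(\cdot\mid s)$ and the ``off-parent'' distribution $u_s(s'):=\P(s_i=s'\mid s_{p(i)}\ne s)=\tfrac{\mu_\pi(s')-\mu_\pi(s)\pi(s'\mid s)}{1-\mu_\pi(s)}$; here $\rho_\beta=\rho_\beta(\mu_\pi(s),r)$ is an explicit rational function of $e^\beta$ (the three scales $e^\beta,\ e^{\beta\mu_\pi(s)},\ 1$ arising from non-root positions whose parent is $s$, root positions, and the remaining non-root positions), increasing from $\mu_\pi(s)$ at $\beta=0$ to $1$ as $\beta\to\infty$ (the two three-term sums collapse to a two-term combination because $\mu_\pi=\mu_\pi(s)\pi(\cdot\mid s)+(1-\mu_\pi(s))u_s$). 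One then records $\pi(\cdot\mid s)-u_s=\tfrac{\pi(\cdot\mid s)-\mu_\pi}{1-\mu_\pi(s)}$ and $\partial_\beta f_\theta(X;s)=\rho'_\beta\big(\pi(\cdot\mid s)-u_s\big)$.

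The positivity then follows from a telescoping identity: since $\sum_{s'}\big(\pi(s'\mid s)-f_\theta(X;s)_{s'}\big)=0$ and $\pi(\cdot\mid s)-f_\theta(X;s)=(1-\rho_\beta)\big(\pi(\cdot\mid s)-u_s\big)$,
\[
-\partial_\beta\,\mathrm{CE}\!\big(\pi(\cdot\mid s),\,f_\theta(X;s)\big)=\rho'_\beta(1-\rho_\beta)\sum_{s'}\frac{\big(\pi(s'\mid s)-u_s(s')\big)^2}{f_\theta(X;s)_{s'}}\ \ge\ 0,
\]
which is the $f$-divergence quantity isolated in \Cref{lem:bound_g_beta}. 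To make it quantitative, $f_\theta(X;s)_{s'}\le 1$ gives $\sum_{s'}\tfrac{(\pi(s'\mid s)-u_s(s'))^2}{f_\theta(X;s)_{s'}}\ge\|\pi(\cdot\mid s)-u_s\|_2^2=\|\pi(\cdot\mid s)-\mu_\pi\|_2^2/(1-\mu_\pi(s))^2\ge\|\pi(\cdot\mid s)-\mu_\pi\|_2^2$, while $\gamma/S\le\mu_\pi(s)\le 1-(S-1)\gamma/S$ and $r\le 1-\gamma$ (these follow from the transition lower bound) let one compute $\rho'_\beta(1-\rho_\beta)\gtrsim_{\gamma,S}e^{-2\beta}$; averaging over $\pi$ and invoking the non-degeneracy assumption $\E_\pi\sum_s\|\pi(\cdot\mid s)-\mu_\pi\|_2^2\ge\gamma^2/S$ yields the idealized bound $-\Delta_\beta\gtrsim_{\gamma,S}e^{-2\beta}$.

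Finally, the upper bound $-\Delta_\beta\le 1$ is the easy direction: $v_\theta(X;s)$ is a probability vector for every $\beta$, so $\|\partial_\beta v_\theta(X;s)\|_1\le 1$ and hence $\sum_{s'}|\partial_\beta f_\theta(X;s)_{s'}|\le 1$; together with a high-probability lower bound on $\min_{s'}f_\theta(X;s)_{s'}$ (the rare bad events being absorbed into the $\epsilon$-perturbation) this gives $|\Delta_\beta|\le 1$. For the lower bound I would bound the gap between the true $-\Delta_\beta$ and its idealized value by an additive error of order $\TT^{-1/6}\,\mathrm{polylog}(T)\cdot C_{\gamma,S}\,e^{O(\beta)}$, the exponential-in-$\beta$ factor coming from the $(f_\theta+\epsilon)^{-1}$ and $\partial_\beta v_\theta$ factors amplifying the $O(\TT^{-1/2})$ fluctuations of $f_\theta$ and the $O(1/T)$ imprecision of $A^{(1)}(\tau_1)$; when $e^\beta\le e^{\beta^*}=C_{\gamma,S}\TT^{1/12}\log^{-1/6}T$ this error is at most half the main term, whence $-\Delta_\beta\ge\tfrac14\gamma^8 S^{-6}e^{-2\beta}$. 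The conceptual core — the convex-combination structure of $f_\theta$ and the telescoping that produces the $\chi^2$ — is short; the genuine work, and the main obstacle, is this last error-tracking, i.e. controlling how the idealization errors are amplified by $(f_\theta+\epsilon)^{-1}$ as $\beta$ grows, since it is exactly that amplification which fixes the admissible window $\beta\le\beta^*$.
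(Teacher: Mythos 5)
Your proposal follows the same overall architecture as the paper's proof — reduce $-\Delta_\beta$ to $-\tfrac{1}{S-1}\partial_\beta L$ with $v_\theta(X;s)=\S(\beta\,\S(A^{(1)})\delta_s(X))$, idealize by substituting $A^{(1)}_*$ and concentrating the empirical statistics (at cost $e^{O(\beta)}\cdot\TT^{-\Theta(1)}$, which is what pins down $\beta^*$), and lower-bound the resulting idealized gradient by an $e^{-2\beta}\chi^2$-type quantity via the non-degeneracy assumption. Where you genuinely diverge is in how the idealized gradient is shown to be a strictly positive divergence: the paper writes it as $\frac{1}{S(S-1)}\sum_s\E_\pi[\mu_\pi(s)\sum_{s'}\mu_\pi(s')h_s(\pi(s'\mid s)/\mu_\pi(s'))]$ for an explicit convex rational $h_s$ and lower-bounds $h_s''$ on $[0,S\gamma^{-1}]$ (\Cref{lem:bound_g_beta}); you instead observe that the idealized prediction is the convex combination $\rho_\beta\,\pi(\cdot\mid s)+(1-\rho_\beta)u_s$ and telescope to get $\rho_\beta'(1-\rho_\beta)\sum_{s'}(\pi(s'\mid s)-u_s(s'))^2/f_\theta(X;s)_{s'}$. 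The two computations are equivalent (your $\rho_\beta'(1-\rho_\beta)\gtrsim_{\gamma,S}e^{-2\beta}$ plays the role of the paper's $h_s''\gtrsim_{\gamma,S}e^{-2\beta}$), but your convex-combination viewpoint is more structural and makes the monotone interpolation from $\mu_\pi$-like behavior at $\beta=0$ to $\pi(\cdot\mid s)$ at $\beta=\infty$ explicit; the paper's version is more directly wired into the concentration lemmas for $E^{(1)},E^{(2)},E^{(3)}$.

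Two caveats. First, your argument for the easy direction $-\Delta_\beta\le 1$ does not actually give the constant $1$: combining $\sum_{s'}|\partial_\beta f_\theta(X;s)_{s'}|\le 1$ with a lower bound $f_\theta(X;s)_{s'}\gtrsim\gamma^3/S^2$ only yields $|\Delta_\beta|\lesssim_{\gamma,S}1$. The paper instead keeps only the positive diagonal part of $J$, so that $\delta_{s'}(X)^\top J(v_\theta)z\le\delta_{s'}(X)^\top\S(\beta z)=f_\theta(X;s)_{s'}$ cancels against the denominator and $\sum_{s'}\pi(s'\mid s)=1$ closes the bound with no dependence on $\min_{s'}f_\theta$; a $C_{\gamma,S}$ constant would still suffice downstream after shrinking $\eta_2$, but as stated your bound misses the claim. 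Second, your quoted error rate $\TT^{-1/6}\cdot e^{O(\beta)}$ would force $e^{\beta^*}\lesssim\TT^{1/18}$ rather than $\TT^{1/12}$; the paper's concentration gives $e^\beta\log^{1/2}(T)\,\TT^{-1/4}$, and it is the resulting inequality $e^{3\beta}\lesssim\TT^{1/4}\log^{-1/2}T$ that produces the stated $\beta^*$. Both points are bookkeeping rather than conceptual, and you correctly identify the error-amplification by $(f_\theta+\epsilon)^{-1}$ and $\partial_\beta v_\theta$ as the real work you have deferred.
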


\begin{proof}
    Note that $XA^{(2)}e_s =  \beta Xe_s - (\beta - \beta_0)\frac{1}{S}1_T$. Since the row sums of $\S(A^{(1)})$ are 1,
    \begin{align*}
        \S(A^{(1)})XA^{(2)}e_s = \beta \S(A^{(1)})Xe_s - \frac{\beta - \beta_0}{S}1_T,
    \end{align*}
    and thus
    \begin{align*}
        v_\theta(X; s) = \S(\beta \S(A^{(1)})Xe_s).
    \end{align*}
    Define $z_\theta(X; s) = \S(A^{(1)})Xe_s$. We have that
    \begin{align*}
        -\Delta_\beta(\theta) &= - \frac{1}{S - 1}\Tr\qty[G^{(2)}(A^{(1)}, A^{(2)})]\\
        &= \frac{1}{S(S-1)}\sum_{s, s'}\E_{\pi, X}\qty[\frac{\pi(s' \mid s)}{f_\theta(X; s)_{s'} + \epsilon} \delta_{s'}(X)^\top J(v_\theta(X; s))\S(A^{(1)})Xe_s]\\
        & = \frac{1}{S(S-1)}\sum_{s, s'}\mathbb{E}_{\pi, X}\qty[\frac{\pi(s' \mid s)}{f_\theta(X; s)_{s'} + \epsilon}\delta_{s'}(X)^\top J(\S(\beta z_\theta(X; s)))z_\theta(X; s)]                     \\
                                & = \frac{1}{S(S-1)}\sum_{s, s'}\mathbb{E}_{\pi, X}\qty[\frac{\pi(s' \mid s)}{\delta_{s'}(X)^\top \S(\beta z_\theta(X; s)) + \epsilon}\delta_{s'}(X)^\top J(\S(\beta z_\theta(X; s)))z_\theta(X; s)]
    \end{align*}

    We first show the upper bound. We can write
    \begin{align*}
        \delta_{s'}(X)^\top J(\S(\beta z_\theta(X; s)))z_\theta(X; s) &\le \sum_i\delta_{s'}(X)_i \S(\beta z_\theta(X; s))_i z_\theta(X; s)_i\\
        &\le \sum_i\delta_{s'}(X)_i \S(\beta z_\theta(X; s))_i\\
        &= \delta_{s'}(X)^\top \S(\beta z_\theta(X; s)),
    \end{align*}
    since $0 \le z_\theta(X; s)_i \le 1$. Therefore
    \begin{align*}
        -\Delta_\beta(\theta) \le \frac{1}{S(S-1)}\sum_{s, s'}\E_{\pi, X}\qty[\pi(s' \mid s)] = \frac{1}{S-1} \le 1.
    \end{align*}

    We next move to the lower bound. Define $\tilde z(X; s) := \S(A_*^{(1)})Xe_s$. We have that
    \begin{align}\label{eq:tilde_z_formula}
        \tilde z(X; s)_i = \begin{cases}
                               x_{p(i), s}      & \text{if}~i \not\in \calR \\
                               z_\theta(X; s)_i & \text{if}~i \in \calR
                           \end{cases}.
    \end{align}

    First, we will aim to replace $z_\theta(X; s)$ with $\tilde z(X; s)$. Indeed, when $p(i) \neq \emptyset$,
    \begin{align*}
        \abs{\tilde z(X; s)_i - z_\theta(X; s)_i} = \abs{\qty(\S(A^{(1)})_i - \S(A_*^{(1)})_i)^\top \delta_s(X)} \le \norm{\S(A^{(1)})_i - \S(A_*^{(1)})_i}_1 \lesssim T^{-1},
    \end{align*}
    
    by \Cref{cor:end_of_stage_1}. Thus $\norm{\tilde z(X; s) - z_\theta(X; s)}_\infty \lesssim T^{-1}$.
    
    Define
    \begin{align*}
        q_{s'}(z) := \frac{\delta_{s'}(X)^\top J(\S(\beta z))z}{\delta_{s'}(X)^\top \S(\beta z) + \epsilon},
    \end{align*}
    so that
    \begin{align*}
        -\Delta_\beta(\theta) = \frac{1}{S(S-1)}\sum_{s, s'}\mathbb{E}_{\pi, X}\qty[\pi(s' \mid s)q_{s'}\qty(z_\theta(X; s))].
    \end{align*}

    By \Cref{lem:compare_to_shift}, we have that
    \begin{align*}
        \abs{q_{s'}(z_\theta(X; s)) - q_{s'}(\tilde z(X; s))} \lesssim (1 + \beta) \norm{z_\theta(X; s) - \tilde z(X; s)}_\infty \lesssim (1 + \beta)T^{-1},
    \end{align*}
    and thus $$\abs{-\Delta_\beta(\theta) - \frac{1}{S(S-1)}\sum_{s, s'}\mathbb{E}_{\pi, X}\qty[\pi(s' \mid s)q_{s'}\qty(\tilde z(X; s))]} \lesssim (1 + \beta)T^{-1}.$$

    Next, plugging in the definition of $q_{s'}$, we get
    \begin{align}\label{eq:term_to_concentrate}
         & \frac{1}{S(S-1)}\sum_{s, s'}\mathbb{E}_{\pi, X}\qty[\pi(s' \mid s)q_{s'}\qty(\tilde z(X; s))]\nonumber\\
         & = \frac{1}{S(S-1)}\sum_{s, s'}\mathbb{E}_{\pi, X}\qty[\pi(s' \mid s)\frac{\delta_{s'}(X)^\top \qty(\text{diag}(\S(\beta \tilde z(X; s))) - \S(\beta \tilde z(X; s))\S(\beta \tilde z(X; s))^\top )\tilde z(X; s)}{\delta_{s'}(X)^\top \S(\beta\tilde z(X; s))  + \epsilon}] \nonumber    \\
         & \ge \frac{1}{S(S-1)}\sum_{s, s'}\mathbb{E}_{\pi, X}\qty[\pi(s' \mid s)\cdot \qty(\frac{\sum_{i}x_{i, s'}\S(\beta \tilde z(X; s))_i\tilde z(X; s)_i}{\epsilon + \sum_i x_{i, s'}\S(\beta \tilde z(X; s))_i} -  \sum_{i}\S(\beta \tilde z(X; s))_i\tilde z(X; s)_i)].
    \end{align}

    Our next goal is to replace the term in the parentheses in \eqref{eq:term_to_concentrate} with something independent of $X$, where the concentration holds as $\TT$ grows large. Indeed, define the quantities $E^{(1)}_{s, s'}(X), E^{(2)}_{s, s'}(X), E^{(3)}_{s}(X)$ by
    \begin{align}\label{eq:E1_formula}
        E^{(1)}_{s, s'}(X) & := \sum_{i}x_{i, s'}\S(\beta \tilde z(X; s))_i\tilde z(X; s)_i \\
        E^{(2)}_{s, s'}(X) & := \sum_i x_{i, s'}\S(\beta \tilde z(X; s))_i                  \\
        E^{(3)}_{s}(X) & := \sum_i \S(\beta \tilde z(X; s))_i\tilde z(X; s)_i,
    \end{align}
    so that
    \begin{align*}
        \frac{1}{S(S-1)}\sum_{s, s'}\mathbb{E}_{\pi, X}\qty[\pi(s' \mid s)q_{s'}\qty(\tilde z(X; s))] \ge \frac{1}{S(S-1)}\sum_{s, s'}\E_{\pi, X}\qty[\pi(s' \mid s)\cdot \qty(\frac{E^{(1)}_{s, s'}(X)}{\epsilon + E^{(2)}_{s, s'}(X)} - E^{(3)}_{s}(X))].
    \end{align*}
    Let $r = \frac{\abs{\mathcal{R}}}{T}$. One can make the approximation
    \begin{align}
        \frac{E^{(1)}_{s, s'}(X)}{\epsilon + E^{(2)}_{s, s'}(X)} & \approx \frac{(1 - r)e^\beta \mu_\pi(s)\pi(s' \mid s) + re^{\beta \mu_\pi(s)}\mu_\pi(s)\mu_\pi(s')}{(1-r)(e^\beta - 1)\mu_\pi(s)\pi(s' \mid s) + (1-r)\mu_\pi(s') + re^{\beta \mu_\pi(s)}\mu_\pi(s')} \label{eq:E1_approx}\\
        E^{(3)}_{s}(X) & \approx \frac{(1-r)e^\beta \mu_\pi(s) + r e^{\beta \mu_\pi(s)}\mu_\pi(s)}{(1-r)(e^\beta - 1)\mu_\pi(s) + (1-r) + re^{\beta \mu_\pi(s)}}.\label{eq:E2_approx}
    \end{align}
    This motivates defining the following idealized gradient:
    \begin{align*}
        \hat g(\beta) & := \frac{1}{S(S-1)}\sum_s\mathbb{E}_\pi\Big[\mu_\pi(s)\cdot \Big(\sum_{s'}\frac{(1 - r)e^\beta\pi(s' \mid s)^2 + re^{\beta \mu_\pi(s)}\mu_\pi(s')\pi(s' \mid s)}{(1-r)(e^\beta - 1)\mu_\pi(s)\pi(s' \mid s) + (1-r)\mu_\pi(s') + re^{\beta \mu_\pi(s)}\mu_\pi(s')} \\
                      & \qquad - \frac{(1-r)e^\beta  + r e^{\beta \mu_\pi(s)}}{(1-r)(e^\beta - 1)\mu_\pi(s) + (1 - r) + re^{\beta \mu_\pi(s)}}\Big)\Big]
    \end{align*}
    Indeed, the approximations in \eqref{eq:E1_approx} and \eqref{eq:E2_approx} can be made rigorous: by \Cref{lem:concentrate_E3} and \Cref{lem:concentrate_E1_E2}, we have that
    \begin{align*}
        \abs{\frac{1}{S(S-1)}\sum_{s, s'}\E_{\pi, X}\qty[\pi(s' \mid s)\cdot \qty(\frac{E^{(1)}_{s, s'}(X)}{\epsilon + E^{(2)}_{s, s'}(X)} - E^{(3)}_{s}(X))] -\hat g(\beta)} &\lesssim  (1 + \beta) \cdot \frac{\log^{1/2} T}{\TT^{1/4}}\\
        &\lesssim e^\beta \frac{\log^{1/2} T}{\TT^{1/4}}.
    \end{align*}

    Finally, it suffices to show that $\hat g(\beta) \ge 0$. Define the function $h_s : \R \rightarrow \R$ by $$h_s(z) = \frac{(1 - r)e^\beta z^2 + re^{\beta \mu_\pi(s)}z}{(1 - r)(e^\beta - 1)\mu_\pi(s)z + (1 - r) + re^{\beta \mu_\pi(s)}} - \frac{(1-r)e^\beta  + r e^{\beta \mu_\pi(s)}}{(1-r)(e^\beta - 1)\mu_\pi(s) + (1 - r) + re^{\beta \mu_\pi(s)}}.$$
    Simplifying the formula for $\hat g(\beta)$, we see that it can be written in terms of this $h_s$:
    \begin{align*}
        \hat g(\beta) & = \frac{1}{S(S-1)}\sum_{s}\mathbb{E}_\pi\qty[\mu_\pi(s)\cdot \qty(\sum_{s'}\mu_\pi(s')h_s\qty(\frac{\pi(s' \mid s)}{\mu_\pi(s')}))].
    \end{align*}
    Furthermore, $h_s$ is convex, and so $\hat g(\beta)$ is actually a linear combination of $h_s$-divergences and is hence nonnegative. The following lemma relates the $h_s$-divergence to the $\chi^2$-divergence in order to get a quantitative lower bound on $\hat g(\beta)$ away from 0. The proof is deferred to \Cref{sec:proofs}.
    
    \begin{lemma}\label{lem:bound_g_beta}
        $ \hat g(\beta) \ge \frac12\gamma^8S^{-6}e^{-2\beta} > 0$.
    \end{lemma}

    To conclude, when $\beta \le \beta^*$,
    \begin{align*}
        \abs{-\Delta_\beta(\theta) - \frac{1}{S(S-1)}\sum_{s, s'}\mathbb{E}\qty[\pi(s' \mid s)q\qty(\tilde z(X; s))]} &\lesssim e^\beta T^{-1} \le \frac18\gamma^8S^{-6}e^{-2\beta}\\
        \abs{\frac{1}{S(S-1)}\sum_{s, s'}\E_{\pi, X}\qty[\pi(s' \mid s)\cdot \qty(\frac{E^{(1)}_{s, s'}(X)}{\epsilon + E^{(2)}_{s, s'}(X)} - E^{(3)}_{s}(X))] -\hat g(\beta)} &\lesssim e^\beta \frac{\log^{1/2} T}{\TT^{1/4}} \le \frac18\gamma^8S^{-6}e^{-2\beta},
    \end{align*}
    and thus
    \begin{align*}
        -\Delta_\beta(\theta) \ge \frac14\gamma^8S^{-6}e^{-2\beta},
    \end{align*}
    as desired.
\end{proof}

\begin{lemma}[Dynamics of $A^{(2)}$]\label{lem:A2_dynamics}
    Let $A^{(1)}(\tau_1)$ be the output of stage 1 of \Cref{alg:training_alg}, and let $\eta_2 \le 1$. There exists $\tau_2 \lesssim_{\gamma, S} e^{2\beta^*}\beta^*\eta_2^{-1}$ such that $$1 + \beta^* \ge \beta(\tau_1 + \tau_2) \ge \beta^*.$$
\end{lemma}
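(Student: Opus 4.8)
The plan is to track the scalar $\beta(t)$ through the gradient descent iteration of stage 2. Recall that by \Cref{lem:A2_grad_sym}, $A^{(2)}(t) = \beta(t) I_S - (\beta(t) - \beta_0)\tfrac1S 1_S1_S^\top$ throughout stage 2, and the update for $\beta$ is $\beta(t+1) = \beta(t) - \eta_2 \Delta_\beta(\theta^{(t)})$, where $\theta^{(t)} = (A^{(1)}(\tau_1), A^{(2)}(t))$. Since $\beta(\tau_1) = \beta_0 \le c_{\gamma,S}\TT^{-3/2}$ is tiny, we start essentially at $\beta = 0$. The key input is \Cref{thm:stage2}: as long as $0 \le \beta(t) \le \beta^*$ we have $-\Delta_\beta(\theta^{(t)}) \in [\tfrac14\gamma^8 S^{-6} e^{-2\beta(t)},\, 1]$, so $\beta$ is strictly increasing, each step increasing it by at least $\tfrac{\eta_2}{4}\gamma^8 S^{-6} e^{-2\beta(t)}$ and at most $\eta_2 \le 1$.

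First I would establish the upper crossing bound: the per-step increase is at most $\eta_2 \le 1$, so $\beta$ cannot overshoot $\beta^*$ by more than $1$; hence once $\beta$ first exceeds $\beta^*$ (say at time $\tau_1 + \tau_2$) we have $\beta^* \le \beta(\tau_1+\tau_2) \le \beta^* + 1$, giving the claimed two-sided bound. It remains to bound the hitting time $\tau_2$. For this I would use the lower bound on the increment: while $\beta(t) \le \beta^*$, $\beta(t+1) - \beta(t) \ge \tfrac{\eta_2}{4}\gamma^8 S^{-6} e^{-2\beta(t)} \ge \tfrac{\eta_2}{4}\gamma^8 S^{-6} e^{-2\beta^*}$. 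A crude bound then gives $\tau_2 \le \beta^* \big/ \big(\tfrac{\eta_2}{4}\gamma^8 S^{-6} e^{-2\beta^*}\big) \lesssim_{\gamma,S} e^{2\beta^*}\beta^* \eta_2^{-1}$, as desired. (One could sharpen the $e^{2\beta}$ factor by a summation/integral comparison, $\sum_t e^{2\beta(t)}(\beta(t+1)-\beta(t)) \approx \int_0^{\beta^*} e^{2\beta}\,d\beta \lesssim e^{2\beta^*}$, but this only changes constants and the $\beta^*$ prefactor, so the crude bound already suffices for the stated $\lesssim_{\gamma,S}$ conclusion.)

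The one point requiring a little care is that \Cref{thm:stage2} is stated for a fixed $\beta$ with $A^{(1)} = A^{(1)}(\tau_1)$, so to apply it along the trajectory I must verify by induction that the iterates stay in the regime $0 \le \beta(t) \le \beta^*$ up until the crossing time, and that $A^{(1)}$ is frozen during stage 2 (which it is, by \Cref{alg:training_alg}). The induction is immediate: $\beta(\tau_1) = \beta_0 \ge 0$, monotonicity keeps $\beta(t) \ge 0$, and $\beta(t) \le \beta^*$ holds by definition of $\tau_2$ as the first crossing time. I would also note the monotone increase of $\beta$ ensures such a first crossing time exists and is finite given the uniform-in-$\beta^*$ lower bound on the increment. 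The main (mild) obstacle is simply bookkeeping the "overshoot by at most $1$" argument together with the definition of $\tau_2$ so that the two-sided bound $\beta^* \le \beta(\tau_1+\tau_2) \le \beta^*+1$ is clean; there is no analytic difficulty beyond what \Cref{thm:stage2} already provides.
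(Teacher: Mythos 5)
Your proposal is correct and follows essentially the same argument as the paper: both use the two-sided bound $-\Delta_\beta(\theta)\in[\tfrac14\gamma^8S^{-6}e^{-2\beta},1]$ from \Cref{thm:stage2} to get a uniform lower bound $\tfrac{\eta_2}{4}\gamma^8S^{-6}e^{-2\beta^*}$ on each increment (yielding the hitting-time bound $\tau_2\lesssim_{\gamma,S}e^{2\beta^*}\beta^*\eta_2^{-1}$ by contradiction/counting) and the per-step cap $\eta_2\le 1$ to control the overshoot so that $\beta^*\le\beta(\tau_1+\tau_2)\le\beta^*+1$. No gaps.
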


\begin{proof}
    If $\beta(t) \le \beta^*$, then by \Cref{thm:stage2}
    \begin{align*}
        \beta(t+1) \ge \beta(t) + \eta_2 \cdot \frac14\gamma^8S^{-6}e^{-2\beta(t)} \ge \beta(t) + \eta_2 \cdot \frac14\gamma^8S^{-6}e^{-2\beta^*}.
    \end{align*}
    Assume that $\beta(\tau_1 + t) < \beta^*$ for all $t \le \mathcal{T} := 4S^6\gamma^{-8}e^{2\beta^*}\beta^*\eta_2^{-1}$. Then
    \begin{align*}
        \beta(\tau_1 + \mathcal{T}) \ge  \frac14\gamma^8S^{-6}e^{-2\beta^*}\mathcal{T}\eta_2 = \beta^* ,
    \end{align*}
    a contradiction. Therefore $\beta(\tau_1 + \tau_2) \ge \beta^*$ for some $\tau_2 \le \mathcal{T} \lesssim e^{2\beta^*}\beta^*\eta_2^{-1}$. Finally, by \Cref{thm:stage2}, $\beta(t+1) \le \beta(t) + 1$, and thus letting $\tau_2$ be the smallest such time we have $1 + \beta^* \ge \beta(\tau_1 + \tau_2) \ge \beta^*$.
\end{proof}

\subsection{Proof of Theorem \ref{thm:main_thm}}

\begin{proof}[Proof of \Cref{thm:main_thm}]

    Pick $\beta_0 \le  c_{\gamma, S}\frac{1}{\TT^{3/2}}$, and set $\tau_1 = C_{\gamma, S}\eta_1^{-1}\beta_0^{-1}T^2\log(T)$ for constants $c_{\gamma, S}, C_{\gamma, S}$ chosen appropriately. By \Cref{cor:end_of_stage_1}, the output of stage 1 satisfies
    \begin{align*}
        1 - \S\qty(A^{(1)}(\tau_1))_{i, p(i)} \lesssim T^{-1}.
    \end{align*}
    for $i \in \overline{\calR}$.
    
    Next, by \Cref{lem:A2_dynamics} there exists $\tau_2 = \Tilde O\qty( \TT^{1/6}\eta_2^{-1})$ such that $\beta(\tau_1 + \tau_2) \ge \beta^*.$

    It now suffices to bound the loss of the predictor $\hat \theta$. We have
    \begin{align*}
        \abs{L(\htheta) - L^*} & \le \mathbb{E}_{\pi, X}\qty[\frac{1}{S}\sum_{s, s'}\pi(s' \mid s)\cdot \abs{\log\qty(f_{\htheta}(X; s)_{s'} + \epsilon) - \log\pi(s' \mid s)}]              \\
                               & = \mathbb{E}_{\pi}\qty[\frac{1}{S}\sum_{s, s'}\pi(s' \mid s)\mathbb{E}_X\qty[\abs{\log\qty(f_{\htheta}(X; s)_{s'} + \epsilon) - \log\pi(s' \mid s)}]] \\
    \end{align*}
    For $A, B > 0$, one has the bound
    \begin{align*}
        \abs{\log A - \log B} \le \frac{\abs{A - B}}{\min(A, B)}.
    \end{align*}
    Therefore
    \begin{align*}
        &\abs{\log\qty(f_{\htheta}(X; s)_{s'} + \epsilon) - \log\pi(s' \mid s)}\\
        & \le \qty(\abs{f_{\htheta}(X; s)_{s'} - \pi(s' \mid s)} + \epsilon)\cdot \frac{1}{\min(f_{\htheta}(X; s)_{s'} + \epsilon, \pi(s' \mid s))}                                                                         \\
                                                                               & \lesssim \qty(\abs{f_{\htheta}(X; s)_{s'} - \pi(s' \mid s)} + \epsilon)\qty(\mathbf{1}_{f_{\htheta}(X; s)_{s'} \ge \frac{\gamma^3}{4S}} + \epsilon^{-1}\mathbf{1}_{f_{\htheta}(X; s)_{s'} < \frac{\gamma^3}{4S}}),
    \end{align*}
    and thus by Cauchy
    \begin{align*}
         & \E_X\abs{\log\qty(f_{\htheta}(X; s)_{s'} + \epsilon) - \log\pi(s' \mid s)}                                                                                                      \\
         & \lesssim \qty(\qty(\E_X\abs{f_{\htheta}(X; s)_{s'} - \pi(s' \mid s)}^2)^{1/2} + \epsilon)\qty(1 + \epsilon^{-1}\mathbb{P}_X\qty(f_{\htheta}(X; s)_{s'} < \frac{\gamma^3}{4S})) \\
         & \lesssim \qty(\qty(\E_X\abs{f_{\htheta}(X; s)_{s'} - \pi(s' \mid s)}^2)^{1/2} + \epsilon)\qty(1 + \epsilon^{-1}\TT^{-1})                                                   \\
         & \lesssim \qty(\E_X\abs{f_{\htheta}(X; s)_{s'} - \pi(s' \mid s)}^2)^{1/2} + \epsilon,
    \end{align*}
    where the bound $\mathbb{P}_X\qty(f_{\htheta}(X; s)_{s'} < \frac{\gamma^3}{4S}) \lesssim \TT^{-1}$ follows from \Cref{lem:f_to_pi}. Altogether, applying \Cref{lem:f_to_pi} again, we get
    \begin{align*}
        \abs{L(\htheta) - L^*} & \lesssim \qty(\E_X\abs{f_{\htheta}(X; s)_{s'} - \pi(s' \mid s)}^2)^{1/2} + \epsilon \\
                               & \le \frac{e^{\beta^*}\log^{1/2} T}{\TT^{1/4}} + e^{-{\beta^*}\gamma/2} + \epsilon    \\
                               &\lesssim \frac{\log^{1/3}T}{\TT^{1/6}} + \qty(\frac{\log^{1/6}T}{\TT^{1/12}})^{\gamma/2}\\
                               &\lesssim \qty(\frac{\log^2 T}{\TT})^{\gamma/24}.
    \end{align*}
\end{proof}

\section{Markov Chain Preliminaries}

Given a Markov chain $\pi$ with stationary measure $\mu_\pi$, we define the normalized and centered transition matrix $B_\pi \in \R^{S \times S}$ by:
\begin{align*}
    (B_{\pi})_{s,s'} := \sqrt{\frac{\mu_\pi(s)}{\mu_\pi(s')}} [\pi(s'|s) - \mu_\pi(s')].
\end{align*}
An immediate consequence is that
\begin{align*}
    (B_{\pi}^k)_{s,s'} := \sqrt{\frac{\mu_\pi(s)}{\mu_\pi(s')}} [\pi^k(s'|s) - \mu_\pi(s')]
\end{align*}
which allows for the decomposition
\begin{align*}
    \pi^k(s'|s) = \mu_\pi(s') + (B_{\pi}^k)_{s,s'} \sqrt{\frac{\mu_\pi(s')}{\mu_\pi(s)}}.
\end{align*}

We also observe that
\begin{align}\label{eq:B_pi_norm}
    \norm{B_\pi}_F^2 = \sum_{s, s'} \qty(\frac{\mu_\pi(s)\pi(s' \mid s)^2}{\mu_\pi(s')} - \mu_\pi(s')\mu_\pi(s)) = \sum_{s, s'}\frac{\mu_\pi(s)\pi(s' \mid s)^2}{\mu_\pi(s')} - 1.
\end{align}

\begin{definition}[Spectral Gap]\label{def:spectral_gap}
    We say that a Markov chain $\pi$ with stationary measure $\mu_\pi$ has a spectral gap of $1-\lambda(\pi)$ where $\lambda(\pi) := \|B_\pi\|_2$.
\end{definition}

\begin{lemma}\label{lem:spectral_gap_bound}
    Let $\min_{s, s'} \pi(s \mid s') \ge \gamma/S$. Then $\lambda(\pi) \le 1 - \gamma/S$.
\end{lemma}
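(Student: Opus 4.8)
The plan is to realize $B_\pi$ as a conjugate of the centered transition operator and then bound its operator norm by an elementary $L^2(\mu_\pi)$ estimate that uses the uniform lower bound on $\pi$. Let $D := \diag(\mu_\pi)$; note $\mu_\pi$ is coordinatewise positive, since $\mu_\pi(s') = \sum_s \mu_\pi(s)\pi(s'\mid s) \ge \gamma/S$, so $D^{\pm 1/2}$ are well defined. One checks entrywise from the definition preceding \Cref{def:spectral_gap} that $B_\pi = D^{1/2}(\pi - \mathbf{1}\mu_\pi^\top) D^{-1/2}$. Set $v := D^{1/2}\mathbf{1}$, a unit vector since $\norm{v}_2^2 = \sum_s \mu_\pi(s) = 1$. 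Using $\pi\mathbf{1} = \mathbf{1}$ and $\mu_\pi^\top\mathbf{1} = 1$ one gets $B_\pi v = 0$, and since $D^{1/2}\mathbf{1}\mu_\pi^\top D^{-1/2} = vv^\top$ one gets $B_\pi x = D^{1/2}\pi D^{-1/2}x$ for every $x \perp v$. Decomposing an arbitrary unit vector into its $v$-component and its orthogonal part, it follows that $\lambda(\pi) = \norm{B_\pi}_2 = \sup\{\, \norm{D^{1/2}\pi D^{-1/2}x}_2 : \norm{x}_2 = 1,\ x \perp v \,\}$.

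Next I would substitute $y := D^{-1/2}x$. Writing $\E_{\mu_\pi}[f] := \sum_s \mu_\pi(s) f_s$ for a vector $f \in \R^S$ (and $\E_{\mu_\pi}[f^2]$ for the coordinatewise square), the two constraints become $\E_{\mu_\pi}[y] = v^\top x = 0$ and $\E_{\mu_\pi}[y^2] = \norm{x}_2^2 = 1$, while $\norm{D^{1/2}\pi D^{-1/2}x}_2^2 = \E_{\mu_\pi}[(\pi y)^2]$ where $(\pi y)_s = \sum_{s'}\pi(s'\mid s)y_{s'}$. So it suffices to show $\E_{\mu_\pi}[(\pi y)^2] \le 1 - \gamma$ for all such $y$. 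Using the minorization hypothesis, write $\pi(s'\mid s) = \gamma/S + r(s'\mid s)$ with $r(s'\mid s) \ge 0$ and $\sum_{s'} r(s'\mid s) = 1-\gamma$, and set $\Sigma := \sum_{s'} y_{s'}$, so $(\pi y)_s = \gamma\Sigma/S + (ry)_s$. Using stationarity in the form $\sum_s \mu_\pi(s) r(s'\mid s) = \mu_\pi(s') - \gamma/S$ together with $\E_{\mu_\pi}[y] = 0$ gives $\E_{\mu_\pi}[ry] = -\gamma\Sigma/S$, hence $\E_{\mu_\pi}[(\pi y)^2] = \E_{\mu_\pi}[(ry)^2] - \gamma^2\Sigma^2/S^2$. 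Finally, Cauchy--Schwarz gives $(ry)_s^2 \le (1-\gamma)\sum_{s'} r(s'\mid s) y_{s'}^2$; averaging over $\mu_\pi$, using $\E_{\mu_\pi}[y^2] = 1$ and $\sum_{s'} y_{s'}^2 \ge \Sigma^2/S$, yields $\E_{\mu_\pi}[(ry)^2] \le (1-\gamma)(1 - \gamma\Sigma^2/S^2)$, and combining the last two bounds gives $\E_{\mu_\pi}[(\pi y)^2] \le (1-\gamma) - \gamma\Sigma^2/S^2 \le 1-\gamma$.

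Thus $\lambda(\pi) = \norm{B_\pi}_2 \le \sqrt{1-\gamma}$, and since $(1 - \gamma/2)^2 = 1 - \gamma + \gamma^2/4 \ge 1-\gamma$ we get $\sqrt{1-\gamma} \le 1-\gamma/2 \le 1 - \gamma/S$ whenever $S \ge 2$ (the case $S=1$ being trivial, as then $B_\pi = 0$), which is the claimed bound. I do not expect a genuine obstacle here; the one point to get right is to retain the orthogonality constraint $x \perp v$ throughout, since dropping it would only give $\norm{D^{1/2}\pi D^{-1/2}}_2 = 1$ and the spectral gap comes precisely from the $-\gamma^2\Sigma^2/S^2$ term and from tracking the unweighted sum $\Sigma$ through the Cauchy--Schwarz steps. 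A minor technical check is the positivity of $\mu_\pi$ so that conjugation by $D^{1/2}$ is legitimate, which is immediate from the hypothesis.
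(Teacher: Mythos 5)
Your proof is correct, but it takes a different route from the paper's. The paper uses a Doeblin-type decomposition with the stationary measure as the minorizing distribution, writing $\pi = \tfrac{\gamma}{S}\mathbf{1}\mu_\pi^\top + (1-\tfrac{\gamma}{S})Q$ for a residual stochastic matrix $Q$ that still has $\mu_\pi$ as a stationary measure, and then invokes the abstract fact that the centered operator $Q - \mathbf{1}\mu_\pi^\top$ is an $L^2(\mu_\pi)$-contraction; the factor $1-\gamma/S$ falls out immediately from the decomposition. (The paper's write-up has typos — the residual weight is written as $1-\gamma$ rather than $1-\gamma/S$, and the displayed identity should read $\pi - \mathbf{1}\mu_\pi^\top = (1-\gamma/S)(Q - \mathbf{1}\mu_\pi^\top)$ — but the intended argument is clear.) You instead split off the \emph{uniform} minorization $\pi(s'\mid s) = \gamma/S + r(s'\mid s)$ and expand the quadratic form $\E_{\mu_\pi}[(\pi y)^2]$ by hand on the mean-zero subspace, using stationarity to compute the cross term and Cauchy--Schwarz on the remainder kernel. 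The payoff of your computation is a quantitatively stronger conclusion, $\lambda(\pi) \le \sqrt{1-\gamma}$, which you then weaken to the stated $1-\gamma/S$ via $\sqrt{1-\gamma}\le 1-\gamma/2 \le 1-\gamma/S$ for $S\ge 2$ (with $S=1$ trivial); the cost is that the deduction of the stated bound genuinely needs the case split on $S$, whereas the paper's route gives $1-\gamma/S$ uniformly. All the individual steps check out: the conjugation identity $B_\pi = D^{1/2}(\pi-\mathbf{1}\mu_\pi^\top)D^{-1/2}$, the reduction to $x\perp v$, the identity $\E_{\mu_\pi}[ry]=-\gamma\Sigma/S$, and the final algebra are all correct.
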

\begin{proof}
    By \Cref{lem:mu_lowerbound}, we can write
    \begin{align*}
        \pi = \frac{\gamma}{S} 1\mu_\pi^\top  + (1 - \gamma)Q
    \end{align*}
    for another stochastic matrix $Q$. One then sees that $\pi^\top Q = \pi$. Therefore
    \begin{align*}
        \pi - 1\mu_\pi^\top  = (1 - \gamma/S)(\pi - 1\mu_\pi^\top ),
    \end{align*}
    so
    \begin{align*}
        \|\pi - 1\mu_\pi^\top \|_{\mu_\pi} = (1 - \gamma/S)\|Q - 1\mu_\pi^\top \|_{\mu_\pi} \le 1 - \gamma/S.
    \end{align*}
    Therefore $\lambda(\pi) \le 1 - \gamma/S$.
\end{proof}

\begin{lemma}\label{lem:mu_lowerbound}
    Let $\min_{s, s'} \pi(s \mid s') \ge \gamma/S$. Then $\min_s \mu_\pi(s) \ge \gamma/S$.
\end{lemma}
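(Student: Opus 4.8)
The statement is an immediate consequence of the stationarity equation together with the uniform lower bound on the transition probabilities. The plan is as follows. First, recall that $\mu_\pi$ is the (unique) stationary measure of $\pi$, so it satisfies the balance equation $\mu_\pi(s) = \sum_{s'} \mu_\pi(s')\,\pi(s \mid s')$ for every $s \in [S]$, and that $\mu_\pi$ is a probability distribution, i.e. $\sum_{s'} \mu_\pi(s') = 1$ and $\mu_\pi(s') \ge 0$.

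Next, I would simply bound each term in the balance equation from below using the hypothesis $\pi(s \mid s') \ge \gamma/S$ for all $s, s'$:
\begin{align*}
    \mu_\pi(s) = \sum_{s'} \mu_\pi(s')\,\pi(s \mid s') \ge \frac{\gamma}{S}\sum_{s'}\mu_\pi(s') = \frac{\gamma}{S}.
\end{align*}
Since this holds for every $s$, we conclude $\min_s \mu_\pi(s) \ge \gamma/S$, as desired.

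There is essentially no obstacle here; the only point requiring (minor) care is the bookkeeping of which index is conditioned on in the notation $\pi(s \mid s')$, so that the sum being collapsed is indeed $\sum_{s'} \mu_\pi(s') = 1$ rather than a row sum of $\pi$. This lemma is then used as an input to \Cref{lem:spectral_gap_bound} (to justify writing $\pi = \tfrac{\gamma}{S}\mathbf{1}\mu_\pi^\top + (1-\gamma)Q$ with $Q$ stochastic, which requires $\mu_\pi$ to be bounded away from $0$ on the relevant coordinates), so no further strengthening is needed.
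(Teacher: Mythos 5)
Your proof is correct and is essentially identical to the paper's: both invoke the stationarity equation $\mu_\pi(s) = \sum_{s'}\mu_\pi(s')\pi(s\mid s')$ and lower-bound each transition term by $\gamma/S$ before collapsing the sum $\sum_{s'}\mu_\pi(s')=1$. No differences worth noting.
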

\begin{proof}
    Since $\mu_\pi(s)$ is stationary,
    \begin{align*}
        \mu_\pi(s') & = \sum_s \pi(s' \mid s) \mu_\pi(s) \\
                    & \ge \sum_s \gamma/S \cdot \mu_\pi(s) \\
                    & = \gamma/S,
    \end{align*}
    as desired.
\end{proof}

\begin{lemma}\label{lem:TV_bound}
    Let $\min_{s, s'} \pi(s \mid s') \ge \gamma/S$. Then
    \begin{align*}
        \min_{j \neq k} \mathrm{TV}\qty(\pi(\cdot \mid j), \pi(\cdot \mid k)) \le 1 - \gamma.
    \end{align*}
\end{lemma}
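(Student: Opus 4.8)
The plan is to avoid the $\ell_1$ definition of total variation entirely and instead use the overlap (maximal coupling) identity: for any two distributions $P,Q$ on $[S]$,
\begin{align*}
\mathrm{TV}(P,Q) = 1 - \sum_{s \in [S]} \min\qty(P(s), Q(s)).
\end{align*}
First I would fix an arbitrary pair $j \neq k$ and apply this with $P = \pi(\cdot \mid j)$ and $Q = \pi(\cdot \mid k)$. By the hypothesis $\min_{s,s'}\pi(s \mid s') \ge \gamma/S$, every entry of the transition matrix is at least $\gamma/S$, so in particular $\min(\pi(s \mid j), \pi(s \mid k)) \ge \gamma/S$ for each $s \in [S]$.

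Summing this pointwise bound over the $S$ states gives $\sum_{s \in [S]} \min(\pi(s \mid j), \pi(s \mid k)) \ge S \cdot \gamma/S = \gamma$, and hence
\begin{align*}
\mathrm{TV}\qty(\pi(\cdot \mid j), \pi(\cdot \mid k)) \le 1 - \gamma.
\end{align*}
Since this holds for every pair $j \neq k$, it holds in particular for the minimizing pair, which is exactly the claimed inequality. (In fact the same argument yields the stronger statement that the bound holds uniformly over all pairs, not merely for the minimum, and this is really what is needed downstream.)

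The argument is entirely elementary, so there is no genuine obstacle; the only point worth flagging is that invoking the overlap formula for $\mathrm{TV}$ is much cleaner than working from $\mathrm{TV}(P,Q) = \frac12\|P-Q\|_1$, which would force one to partition $[S]$ into the sets where $P \ge Q$ and $P < Q$ and bound each piece separately. If a self-contained derivation of the overlap identity is desired, it follows in one line from $\min(a,b) = \tfrac12(a+b-|a-b|)$ summed over $s$ together with $\sum_s P(s) = \sum_s Q(s) = 1$.
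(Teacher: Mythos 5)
Your proof is correct and is essentially identical to the paper's: the paper also passes from $\frac12\sum_s\abs{\pi(s\mid j)-\pi(s\mid k)}$ to $1-\sum_s\min\{\pi(s\mid j),\pi(s\mid k)\}$ via the identity $\abs{a-b}=a+b-2\min(a,b)$ and then bounds the overlap term below by $S\cdot\gamma/S=\gamma$. The only difference is presentational — you take the overlap formula as the starting point rather than deriving it inline — and, as a small aside, no case-splitting on where $P\ge Q$ is ever needed even if one starts from the $\ell_1$ form.
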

\begin{proof}
    Write
    \begin{align*}
        \mathrm{TV}\qty(\pi(\cdot \mid j), \pi(\cdot \mid k)) & = \frac12 \sum_s \abs{\pi(s \mid j) - \pi(s \mid k)}                                       \\
                                                              & =\frac12\sum_s \qty(\pi(s \mid j) + \pi(s \mid k) - 2\min\{\pi(s \mid j), \pi(s \mid k)\}) \\
                                                              & \le 1 - \gamma.
    \end{align*}
\end{proof}



\begin{lemma}\label{lem:bound_g1}
$\|B_\pi\|_F^2 \ge  \gamma^2/S$
\end{lemma}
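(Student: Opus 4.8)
The plan is to recognize $\norm{B_\pi}_F^2$ as a stationary–measure-weighted average of $\chi^2$-divergences of the rows of $\pi$ against $\mu_\pi$, to bound each such divergence below by the squared Euclidean deviation of the corresponding row, and then to invoke the non-degeneracy hypothesis of \Cref{assume:pi_prior}.

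\textbf{Step 1 — rewrite as $\chi^2$-divergences.} Grouping the sum in \eqref{eq:B_pi_norm} by rows (and using $\sum_{s'}\pi(s'\mid s)=1$),
\begin{align*}
\norm{B_\pi}_F^2 \;=\; \sum_{s,s'}\frac{\mu_\pi(s)}{\mu_\pi(s')}\qty(\pi(s'\mid s)-\mu_\pi(s'))^2 \;=\; \sum_s \mu_\pi(s)\sum_{s'}\frac{\qty(\pi(s'\mid s)-\mu_\pi(s'))^2}{\mu_\pi(s')}.
\end{align*}
The inner sum is the $\chi^2$-divergence from $\pi(\cdot\mid s)$ to $\mu_\pi$, so $\norm{B_\pi}_F^2$ is exactly $I_{\chi^2}(s_i;s_{p(i)}\mid\pi)$ for a non-root $i$ (consistent with $g_{i,p(i)}=\E_\pi[\norm{B_\pi}_F^2]$).

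\textbf{Step 2 — each divergence dominates the $\ell_2$ deviation.} Because every $\mu_\pi(s')\le 1$,
\begin{align*}
\sum_{s'}\frac{\qty(\pi(s'\mid s)-\mu_\pi(s'))^2}{\mu_\pi(s')}\;\ge\;\sum_{s'}\qty(\pi(s'\mid s)-\mu_\pi(s'))^2\;=\;\norm{\pi(\cdot\mid s)-\mu_\pi}_2^2 .
\end{align*}
\textbf{Step 3 — feed in non-degeneracy.} Combining Steps 1–2 with $\sum_s\norm{\pi(\cdot\mid s)-\mu_\pi}_2^2\ge\gamma^2/S$ from \Cref{assume:pi_prior}, together with $\min_s\mu_\pi(s)\ge\gamma/S$ from \Cref{lem:mu_lowerbound} to account for the row weights, yields $\norm{B_\pi}_F^2\ge\gamma^2/S$.

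I expect Step 3 to be the delicate one: the identity in Step 1 carries the non-uniform weights $\mu_\pi(s)$, while the non-degeneracy hypothesis controls the \emph{unweighted} sum $\sum_s\norm{\pi(\cdot\mid s)-\mu_\pi}_2^2$. The resolution is that those weights are offset by the normalizations $\mu_\pi(s')^{-1}\ge 1$ appearing inside the $\chi^2$-divergences (which are summed against $\sum_{s'}\mu_\pi(s')=1$), so one must keep those factors rather than discard them: crudely pulling $\mu_\pi(s)$ out front and applying $\mu_\pi(s)\ge\gamma/S$ would lose an extra $\gamma/S$ and only give $\gamma^3/S^2$. Doing the bookkeeping so that precisely $\gamma^2/S$ survives — using $\sum_{s'}\mu_\pi(s')=1$ to see that coordinates with large deviation necessarily have small $\mu_\pi(s')$, hence large $\mu_\pi(s')^{-1}$ — is the main work of the proof.
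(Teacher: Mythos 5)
Your Steps 1--2 coincide with the paper's opening moves: the exact identity $\norm{B_\pi}_F^2=\sum_{s,s'}\frac{\mu_\pi(s)}{\mu_\pi(s')}\qty(\pi(s'\mid s)-\mu_\pi(s'))^2$ followed by discarding the factors $\mu_\pi(s')^{-1}\ge 1$. The problem is Step 3, the only place where the constant $\gamma^2/S$ has to be produced: you do not derive it, you assert it, and the mechanism you sketch does not exist. Once you are down to $\sum_s\mu_\pi(s)\norm{\pi(\cdot\mid s)-\mu_\pi}_2^2$ and you only invoke the \emph{aggregated} non-degeneracy bound $\sum_s\norm{\pi(\cdot\mid s)-\mu_\pi}_2^2\ge\gamma^2/S$ together with $\mu_\pi(s)\ge\gamma/S$, the best available conclusion is $\gamma^3/S^2$: the deviation mass may be carried by rows of small stationary weight, and the discarded $\mu_\pi(s')^{-1}$ factors cannot recoup the loss because nothing in the assumptions forces the large deviations to sit on coordinates $s'$ with small $\mu_\pi(s')$. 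In fact no bookkeeping can close this gap from the summed inequality alone: for $S=2$, $\gamma=1/2$, the chain with $\pi(2\mid 1)=0.251$, $\pi(1\mid 2)=0.4$ has $\min_{s,s'}\pi(s'\mid s)=0.251>\gamma/S$ and $\sum_s\norm{\pi(\cdot\mid s)-\mu_\pi}_2^2\approx 0.128\ge\gamma^2/S=0.125$, yet $\norm{B_\pi}_F^2=(1-0.251-0.4)^2\approx 0.122<\gamma^2/S$. So the step you defer as "the main work of the proof" is not bookkeeping; it is unobtainable from the ingredients you list.

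The paper gets the constant by a different use of the hypothesis: after the same two reductions it applies the non-degeneracy lower bound to \emph{each} conditional row, i.e. it uses $\norm{\pi(\cdot\mid s)-\mu_\pi}_2^2\ge\gamma^2/S$ for every $s$, and then concludes with $\sum_s\mu_\pi(s)=1$, so the stationary weights cost nothing and $\min_s\mu_\pi(s)\ge\gamma/S$ (your \Cref{lem:mu_lowerbound}) is never invoked. In other words, the $\gamma^2/S$ in \Cref{lem:bound_g1} comes from reading the non-degeneracy condition row-wise; working only with the summed form, as your Step 3 does, can at best deliver $\gamma^3/S^2$ (which would merely weaken the constant in \Cref{thm:DPI}, but is not the claimed bound). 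To fix your write-up, either justify the per-row bound you actually need, or carry out the weaker $\gamma^3/S^2$ estimate honestly; the heuristic "coordinates with large deviation have small $\mu_\pi(s')$" is not a substitute for either.
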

\begin{proof}
    By definition
    \begin{align*}
        \|B_\pi\|_F^2 = \sum_{s, s'} \frac{\pi(s' \mid s)^2\mu_\pi(s)}{\mu_\pi(s')} - 1,
    \end{align*}
    and thus
    \begin{align*}
        \|B_\pi\|_F^2 &= \sum_{s'}\frac{1}{\mu_\pi(s')}\qty(\sum_s \pi(s' \mid s)^2\mu_\pi(s) - \mu_\pi(s')^2)\\
        &\ge \sum_{s, s'} \mu_\pi(s) \qty(\pi(s' \mid s) - \mu_\pi(s'))^2\\
        &\ge \frac{\gamma^2}{S} \sum_s\mu_\pi(s)\\
        &= \frac{\gamma^2}{S}.
    \end{align*}
\end{proof}

\begin{lemma}[\cite{cohen1993}, Theorem 3.1]\label{lem:contraction}
    Let $\pi$ be a stochastic matrix such that $\max_s \pi(s' \mid s) > 0$ for all $s'$. Then, for any $f$-divergence $D_f$ and probability vectors $x, y$,
    \begin{align*}
        D_f(\pi \circ x || \pi \circ y) \le \alpha(\pi) D_f(x || y),
    \end{align*}
    where the contraction coefficient $\alpha(\pi)$ is defined as
    \begin{align*}
        \alpha(\pi) := \max_{j \neq k} \mathrm{TV}(\pi(\cdot \mid j), \pi(\cdot \mid k)) = \frac12 \max_{j \neq k} \|\pi(\cdot \mid j) - \pi(\cdot \mid k)\|_1.
    \end{align*}

\end{lemma}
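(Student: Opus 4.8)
The plan is to deduce the general statement from the classical Dobrushin total-variation contraction, exploiting that $D_f(x\|y)$ is \emph{linear} in the generator $f$, so that it suffices to handle a family of elementary generators. First I would normalize: replacing $f(t)$ by $f(t) - c(t-1)$ for any $c \in [f'_-(1), f'_+(1)]$ changes neither $D_f$ (the added term contributes $\sum_{s'} y(s')(x(s')/y(s') - 1) = 0$) nor the hypothesis $f(1) = 0$, so we may assume $f \ge 0$ with a global minimum at $t = 1$. Every such convex $f$ admits a \emph{wedge representation}
\[
  f(t) = \int_{(0,1]} (\gamma - t)_+ \, \mu_-(d\gamma) + \int_{[1,\infty)} (t - \gamma)_+ \, \mu_+(d\gamma)
\]
for non-negative measures $\mu_\pm$, obtained by viewing $f''$ as a measure, integrating twice, and matching one-sided derivatives at $1$ (any kink at $t=1$ is absorbed into atoms of $\mu_\pm$ at $\gamma = 1$). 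By Tonelli this passes through $D_f$, so it suffices to prove $D_{f_\gamma}(\pi \circ x \| \pi \circ y) \le \alpha(\pi)\, D_{f_\gamma}(x \| y)$ for each elementary generator $f_\gamma(t) = (\gamma - t)_+$ with $\gamma \in (0,1]$ and $f_\gamma(t) = (t - \gamma)_+$ with $\gamma \in [1,\infty)$.

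The heart of the argument is this elementary case. Fix $\gamma \in (0,1]$ and set $R := x - \gamma y$, extending $\pi \circ (\cdot)$ linearly to signed measures. Then $D_{f_\gamma}(x\|y) = \sum_{s'}(\gamma\, y(s') - x(s'))_+ = \norm{R_-}_1$, where $R = R_+ - R_-$ is the Jordan decomposition; write $a := \norm{R_+}_1 \ge b := \norm{R_-}_1$ (so $a - b = 1 - \gamma \ge 0$) and $R_+ = a u$, $R_- = b v$ with $u, v$ probability vectors. Combining the identity $\norm{(\pi R)_-}_1 = \tfrac12\big(\norm{\pi R_+ - \pi R_-}_1 - (a - b)\big)$, the splitting $\pi R_+ - \pi R_- = b\,(\pi \circ u - \pi \circ v) + (a-b)\,(\pi \circ u)$, and the Dobrushin estimate $\norm{\pi \circ u - \pi \circ v}_1 \le \sum_{j,k} u(j) v(k)\, \norm{\pi(\cdot\mid j) - \pi(\cdot \mid k)}_1 \le 2\alpha(\pi)$, we obtain $\norm{(\pi R)_-}_1 \le \alpha(\pi)\, b = \alpha(\pi)\, D_{f_\gamma}(x\|y)$. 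The case $\gamma \in [1,\infty)$ is identical after swapping the roles of $R_+$ and $R_-$, and $\gamma = 1$ recovers the classical Dobrushin inequality for total variation. Averaging these bounds against $\mu_\pm$ then yields $D_f(\pi \circ x\|\pi\circ y) \le \alpha(\pi)\, D_f(x\|y)$.

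The main obstacle is making the wedge representation of convex $f$ fully rigorous — in particular handling the kink at $t=1$ (absorbed into atoms at $\gamma = 1$) and the case $f'(\infty) = +\infty$, where both sides may be infinite and the claim then follows from the data-processing inequality once the finite-right-hand-side regime is verified. A tempting alternative — maximally coupling $x$ and $y$ so that $\pi\circ x$ and $\pi\circ y$ share a common mixture component of weight $1 - \norm{x - y}_{\mathrm{TV}}$ and invoking joint convexity of $D_f$ — only recovers the bound with the worst-case constant $f(0) + f'(\infty)$ in place of $\alpha(\pi)$, so the elementary-divergence reduction seems the right route. The hypothesis $\max_s \pi(s'\mid s) > 0$ only rules out vacuous output symbols and plays no further role.
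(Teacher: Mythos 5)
The paper does not actually prove this lemma: it is invoked purely as a citation to Cohen et al.\ (1993), Theorem 3.1, so there is no internal argument to compare against. Your proof is a correct, self-contained derivation and follows the standard route to this result: normalize the generator so that $f\ge 0$ with minimum at $1$, decompose it as a mixture of elementary ``hockey-stick'' generators via the second-derivative (spline) representation, pass the mixture through $D_f$ by Tonelli, and prove the contraction for each elementary divergence by combining conservation of total mass under the stochastic map with the Dobrushin estimate $\|\pi\circ u-\pi\circ v\|_1\le \max_{j\ne k}\|\pi(\cdot\mid j)-\pi(\cdot\mid k)\|_1=2\alpha(\pi)$. The key computation checks out: with $R=x-\gamma y$, $a=\|R_+\|_1$, $b=\|R_-\|_1$, one has $\|(\pi R)_-\|_1=\tfrac12\bigl(\|\pi R\|_1-(a-b)\bigr)\le \tfrac12\bigl(2b\,\alpha(\pi)+(a-b)-(a-b)\bigr)=\alpha(\pi)\,b=\alpha(\pi)\,D_{f_\gamma}(x\|y)$. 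In a full write-up you should dispatch two degenerate cases explicitly: (i) $b=0$, where $v$ is undefined but $\pi R\ge 0$ gives the conclusion directly; and (ii) outputs with $y(s')=0$, where the conventions $0\cdot f(0/0)=0$ and $0\cdot f(a/0)=a f'(\infty)$ are consistent with the wedge representation because $f'(\infty)=\mu_+([1,\infty))$ and each elementary generator $(t-\gamma)_+$ has slope $1$ at infinity. Your observation that the hypothesis $\max_s\pi(s'\mid s)>0$ plays no essential role is also correct.
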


\section{Concentration}

\begin{definition}[Graph Distance]
    Let $\mathcal{G}$ be the directed acyclic graph in \Cref{sec:dag_single_parent}. Let $\overline{G}$ denote the undirected version of $\mathcal{G}$. Then we define $d(i,j)$ to be length of the shortest path between $i,j$ in $\mathcal{G}$. If $i,j$ are not connected in $G$ then $d(i,j) := \infty$.
\end{definition}

\begin{definition}[Effective Sequence Length]\label{def:T_eff_lambda}
    For $\lambda \in (0,1)$, we define the effective sequence length $\TT(\lambda)$ by:
    \begin{align*}
        \TT(\lambda) := \frac{T^2}{\sum_{i,j = 1}^T  \lambda^{d(i,j)}}.
    \end{align*}
\end{definition}

This formula for $\TT(\lambda)$ is closely related to the definition of $\TT$ (\Cref{def:T_eff}):

\begin{lemma}\label{lem:bound_T_eff_leaves}
    Decompose $\mathcal{G} = \bigcup_{i=1}^k \mathcal{T}_i$ where $\mathcal{T}_i$ are disjoint trees. Let $L_i$ denote the number of leaves of tree $\mathcal{T}_i$ for $i=1,\ldots,k$. Then,
    \begin{align*}
        \TT(\lambda) \ge \frac{T(1-\lambda)}{\max_{i=1}^k L_i} =: (1 - \lambda) \TT
    \end{align*}
\end{lemma}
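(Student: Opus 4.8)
The plan is to bound the denominator of $\TT(\lambda)$, namely $\sum_{i,j=1}^T \lambda^{d(i,j)}$, from above by $\frac{T\max_m L_m}{1-\lambda}$, which immediately gives
$\TT(\lambda) = \frac{T^2}{\sum_{i,j}\lambda^{d(i,j)}} \ge \frac{T^2(1-\lambda)}{T\max_m L_m} = (1-\lambda)\TT$.

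\textbf{Step 1 (reduce to a single tree).} Since $d(i,j)=\infty$ (so $\lambda^{d(i,j)}=0$) whenever $i,j$ lie in different connected components, the double sum splits as $\sum_{i,j\in[T]}\lambda^{d(i,j)} = \sum_{m=1}^k \sum_{i,j\in\mathcal{T}_m}\lambda^{d(i,j)}$. So it suffices to show that for a tree $\mathcal{T}$ with $n:=|\mathcal{T}|$ nodes and $L$ leaves one has $\sum_{i,j\in\mathcal{T}}\lambda^{d(i,j)} \le \frac{nL}{1-\lambda}$; summing over $m$ then yields $\sum_m \frac{n_m L_m}{1-\lambda} \le \frac{\max_m L_m}{1-\lambda}\sum_m n_m = \frac{T\max_m L_m}{1-\lambda}$.

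\textbf{Step 2 (per-tree charging bound).} Fix a node $i\in\mathcal{T}$ and root $\mathcal{T}$ at $i$, so that $d(i,j)=\mathrm{depth}(j)$. First I would observe that every node $j$ lies on the (unique) path from $i$ to some leaf of $\mathcal{T}$: descend from $j$ away from $i$ until hitting a node with no children, which is a leaf. Hence, allowing overcounting,
$$\sum_{j\in\mathcal{T}}\lambda^{d(i,j)} \ \le\ \sum_{\ell\text{ leaf of }\mathcal{T}}\ \sum_{j\text{ on path }i\to\ell}\lambda^{d(i,j)}.$$
For a fixed leaf $\ell$, the nodes on the path $i\to\ell$ are exactly the ancestors of $\ell$ (including $i$ and $\ell$), which occupy the distinct depths $0,1,\dots,d(i,\ell)$; therefore $\sum_{j\text{ on path }i\to\ell}\lambda^{d(i,j)} = \sum_{t=0}^{d(i,\ell)}\lambda^t \le \frac{1}{1-\lambda}$. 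Summing over the $L$ leaves gives $\sum_{j\in\mathcal{T}}\lambda^{d(i,j)}\le\frac{L}{1-\lambda}$, and summing over the $n$ choices of $i$ gives $\sum_{i,j\in\mathcal{T}}\lambda^{d(i,j)}\le\frac{nL}{1-\lambda}$. (A degenerate single-node component contributes $1\le\frac{1}{1-\lambda}$, consistent with the convention that it has one leaf.)

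\textbf{Step 3 (conclude).} Combining Steps 1 and 2, $\sum_{i,j\in[T]}\lambda^{d(i,j)}\le\frac{T\max_m L_m}{1-\lambda}$, hence $\TT(\lambda)=\frac{T^2}{\sum_{i,j}\lambda^{d(i,j)}}\ge\frac{T(1-\lambda)}{\max_m L_m}=(1-\lambda)\TT$, as desired. The only mildly delicate point — the ``hard part,'' such as it is — is the charging argument in Step 2: one must check that every node sits on some root-to-leaf path, and that the depths along a single such path are distinct and contiguous, so that each leaf contributes a full geometric series $\frac{1}{1-\lambda}$ rather than something larger. Everything else is elementary bookkeeping.
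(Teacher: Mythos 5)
Your proof is correct and follows essentially the same route as the paper's: split the sum over the disjoint trees, then for each fixed $i$ bound $\sum_j \lambda^{d(i,j)}$ by $L/(1-\lambda)$ using the fact that every node lies on a path from $i$ to some leaf. The only difference is bookkeeping order — you sum over leaves and then over depths along each root-to-leaf path (a geometric series per leaf), whereas the paper sums over depths and bounds the number of nodes at each fixed distance by the number of leaves; these are the same double count read in the two possible orders.
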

\begin{proof}
    Note that $\TT(\lambda)^{-1}$ naturally decomposes to a sum within each tree as $d(i,j) := \infty$ when $i$ and $j$ are not connected:
    \begin{align*}
        \frac{1}{\TT(\lambda)}
         & = \frac{1}{T^2} \sum_{l=1}^k \sum_{i,j \in \mathcal{T}_l} \lambda^{d(i,j)}                                                 \\
         & = \frac{1}{T^2} \sum_{l=1}^k \sum_{i,j \in \mathcal{T}_l} \lambda^{d(i,j)}                                                 \\
         & = \frac{1}{T^2} \sum_{l=1}^k \sum_{i \in \mathcal{T}_l} \sum_{k \ge 0} \#\{j \in \mathcal{T}_l ~:~ d(i,j)=k\} \lambda^{k}.
    \end{align*}
    Now note that for a fixed node $i$, each path from $i$ to $j$ with $d(i,j) = k$ can be lengthened to a path that reaches a leaf. Furthermore, for each leaf there can be only one such $j$. Therefore, $\#\{j \in \mathcal{T}_l ~:~ d(i,j)=k\} \le L_l.$ Plugging this in gives:
    \begin{align*}
        \frac{1}{\TT(\lambda)}
         & \le \frac{1}{T^2} \sum_{l=1}^k |\mathcal{T}_l| L_l \sum_{k \ge 0} \lambda^{k} \\
         & = \frac{\sum_{l=1}^k |\mathcal{T}_l| L_l}{T^2 (1-\lambda)}                    \\
         & \le \frac{\max_l T_l}{T (1-\lambda)}
    \end{align*}
    which completes the proof.
\end{proof}

Throughout the remainder of this section, the only assumption we place on $\pi$ is that $\min\pi(s' \mid s) \ge \gamma/S$. Defining $\lambda := 1 - \gamma/S$, we have that $\lambda \ge \lambda(\pi)$ by \Cref{lem:spectral_gap_bound}, and thus $\TT(\lambda)^{-1} \lesssim \TT^{-1}$.

\begin{lemma}
    For any $\pi$ and any $i,j < T$,
    \begin{align*}
        \abs{\P_X[x_j=s,x_i=s'] - \mu_\pi(s)\mu_\pi(s')} \le \sqrt{\mu_\pi(s) \mu_\pi(s')} \lambda(\pi)^{d(i,j)}.
    \end{align*}
\end{lemma}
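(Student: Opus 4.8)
The plan is to exploit the forest structure of $\mathcal{G}$ together with stationarity of $\mu_\pi$. First I would record that every position $k<T$ has marginal law $\mu_\pi$: this holds by induction along each tree, since the root of a tree containing a node of index $<T$ has index $\le k<T$ and hence is drawn from $\mu_\pi$ (not from $\mathrm{Unif}([S])$, which is used only for $s_T$), and $\mu_\pi$ is preserved by one step of $\pi$. Consequently, if $i$ and $j$ lie in different connected components of $\mathcal{G}$ then $s_i\perp s_j$ given $\pi$, so $\P_X[x_j=s,x_i=s']=\mu_\pi(s)\mu_\pi(s')$; since $d(i,j)=\infty$ and $\lambda(\pi)\le 1-\gamma/S<1$ by \Cref{lem:spectral_gap_bound}, the right-hand side is $0$ and the left-hand side is $0$ as well. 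So we may assume $i,j$ lie in a common tree.

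Let $a$ be the lowest common ancestor of $i$ and $j$ in $\mathcal{G}$, and set $k_1:=d(a,i)$, $k_2:=d(a,j)$, so that $d(i,j)=k_1+k_2$ because the undirected path between $i$ and $j$ runs through $a$. Conditioning on $s_a$ decouples the subtree containing $i$ from the one containing $j$, so $s_i$ and $s_j$ are conditionally independent given $s_a$ with $\P[s_i=s'\mid s_a=t]=\pi^{k_1}(s'\mid t)$ and $\P[s_j=s\mid s_a=t]=\pi^{k_2}(s\mid t)$ (using $\pi^0:=I$, which also covers the cases $i=a$ or $j=a$), and $s_a\sim\mu_\pi$ since $a\le\min(i,j)<T$. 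Hence $\P_X[x_j=s,x_i=s']=\sum_t\mu_\pi(t)\,\pi^{k_2}(s\mid t)\,\pi^{k_1}(s'\mid t)$. Writing $\Delta^{(k)}_{t,s'}:=\pi^k(s'\mid t)-\mu_\pi(s')$ and expanding the product, the two cross-terms vanish because $\sum_t\mu_\pi(t)\Delta^{(k)}_{t,s'}=(\mu_\pi^\top\pi^k)(s')-\mu_\pi(s')=0$ for every $k\ge0$, which leaves
\begin{align*}
\P_X[x_j=s,x_i=s']-\mu_\pi(s)\mu_\pi(s')=\sum_t\mu_\pi(t)\,\Delta^{(k_2)}_{t,s}\,\Delta^{(k_1)}_{t,s'}.
\end{align*}

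To finish, let $M^{(k)}$ be the matrix determined by $\Delta^{(k)}_{t,s'}=M^{(k)}_{t,s'}\sqrt{\mu_\pi(s')/\mu_\pi(t)}$ for all $t,s'$ (this is well defined since $\mu_\pi(t)\ge\gamma/S>0$ by \Cref{lem:mu_lowerbound}). By the identity recorded in the Markov-chain preliminaries, $M^{(k)}=B_\pi^k$ for $k\ge1$, so $\|M^{(k)}\|_2\le\|B_\pi\|_2^k=\lambda(\pi)^k$; and a direct computation gives $M^{(0)}=I_S-uu^\top$ with $u_s:=\sqrt{\mu_\pi(s)}$ (so $\|u\|_2=1$), an orthogonal projection with $\|M^{(0)}\|_2\le1=\lambda(\pi)^0$. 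Substituting into the last display,
\begin{align*}
\P_X[x_j=s,x_i=s']-\mu_\pi(s)\mu_\pi(s')=\sqrt{\mu_\pi(s)\mu_\pi(s')}\;\big((M^{(k_2)})^\top M^{(k_1)}\big)_{s,s'},
\end{align*}
and bounding the $(s,s')$ entry of a matrix by its operator norm, $\big|\big((M^{(k_2)})^\top M^{(k_1)}\big)_{s,s'}\big|\le\|M^{(k_2)}\|_2\|M^{(k_1)}\|_2\le\lambda(\pi)^{k_1+k_2}=\lambda(\pi)^{d(i,j)}$, which is exactly the claimed inequality. The one genuinely delicate point is the boundary case $k_1=0$ or $k_2=0$ (one of $i,j$ an ancestor of the other, or $i=j$): there the clean identity relating $\Delta^{(k)}$ to $B_\pi^k$ fails and must be replaced by the projection $M^{(0)}$; everything else is routine bookkeeping with stationarity and submultiplicativity of the operator norm.
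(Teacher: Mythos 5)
Your proof is correct and follows essentially the same route as the paper's: condition on the least common ancestor, use conditional independence to write the covariance as $\sqrt{\mu_\pi(s)\mu_\pi(s')}\,[(B_\pi^{d(a,j)})^\top B_\pi^{d(a,i)}]_{s,s'}$, and bound the entry by the operator norm. Your explicit treatment of the boundary case $k=0$ (where the relevant matrix is the projection $I_S-uu^\top$ rather than $B_\pi^0=I_S$) is a point the paper's proof glosses over, but it does not change the argument.
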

\begin{proof}
    Let $k$ be the closest common parent of $i,j$ so that $d(k,i) + d(k,j) = d(i,j)$ and there exist directed paths from $k$ to $i$ and $k$ to $j$ in $\mathcal{G}$. Then,
    \begin{align*}
         & \P[s_j = s, s_i = s'] - \mu_\pi(s)\mu_\pi(s') \\
         &= \mathrm{Cov}[x_{j,s} x_{i,s'}] \\
         &= \E[(x_{j,s} - \mu_\pi(s))(x_{i,s'} - \mu_\pi(s'))] \\
         & = \sum_{s_k \in [S]} \mu_\pi(s_k) (\pi^{d(k,j)}(s'|s_k) - \mu_\pi(s')) (\pi^{d(k,i)}(s|s_k) - \mu_\pi(s)) \\
         &= \sum_{s_k \in [S]} \mu_\pi(s_k) \qty((B_\pi^{d(k,j)})_{s_k,s} \sqrt{\frac{\mu_\pi(s)}{\mu_\pi(s_k)}}) \qty((B_\pi^{d(k,i)})_{s_k,s'} \sqrt{\frac{\mu_\pi(s')}{\mu_\pi(s_k)}}) \\
         &= \sqrt{\mu_\pi(s)\mu_\pi(s')} \sum_{s_k \in [S]} (B_\pi^{d(k,j)})_{s_k,s}(B_\pi^{d(k,i)})_{s_k,s'} \\
         &= \sqrt{\mu_\pi(s)\mu_\pi(s')} [(B_\pi^{d(k,j)})^\top  (B_\pi^{d(k,i)})]_{s,s'}.
    \end{align*}
    Therefore taking absolute values gives:
    \begin{align*}
        \abs{\P[s_j = s, s_i = s'] - \mu_\pi(s)\mu_\pi(s')}
        &\le \sqrt{\mu_\pi(s)\mu_\pi(s')} \|B_\pi\|^{d(k,j)+d(k,i)} \\
        &\le \sqrt{\mu_\pi(s)\mu_\pi(s')} \lambda(\pi)^{d(i,j)}.
    \end{align*}
\end{proof}

\begin{lemma}\label{lem:muhat_variance}
    For any subset $I \subset [T-1]$, define
    \begin{align*}
        \hat \mu_{X_{I}}(s) := \frac{1}{|I|} \sum_{i \in I} x_{i,s}.
    \end{align*}
    Then,
    \begin{align*}
        \E_X[\hat \mu_{X_I}(s)] = \mu_\pi(s) \qq{and} \E_X[(\hat \mu_{X_{I}}(s) - \mu_\pi(s))^2] \le \frac{\mu_\pi(s) T^2}{\TT(\lambda) |I|^2}.
    \end{align*}
\end{lemma}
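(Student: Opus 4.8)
The plan is to reduce the variance bound to a double sum of covariances, control each covariance with the lemma proved immediately above, and then recognize the resulting sum as the definition of $\TT(\lambda)$.

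First I would handle the mean. Since $I \subseteq [T-1]$, every index $i \in I$ satisfies $i < T$, and an easy induction along a topological ordering of $\mathcal{G}$ shows that $s_i$ has marginal law $\mu_\pi$: root nodes among $\{1,\ldots,T-1\}$ are sampled from $\mu_\pi$ in Step 2 of \Cref{task:single_parent}, and for a non-root $i$ we have $\P[s_i = s] = \sum_{s'} \P[s_{p(i)} = s']\,\pi(s \mid s') = \sum_{s'}\mu_\pi(s')\pi(s\mid s') = \mu_\pi(s)$ by stationarity of $\mu_\pi$. This is the one place where it matters that $I$ avoids position $T$, where $s_T \sim \unif([S])$. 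Hence $\E_X[x_{i,s}] = \P[s_i = s] = \mu_\pi(s)$ for each $i \in I$, so $\E_X[\hat\mu_{X_I}(s)] = \mu_\pi(s)$.

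For the variance, I would expand
\[
    \E_X\big[(\hat\mu_{X_I}(s) - \mu_\pi(s))^2\big] = \frac{1}{|I|^2}\sum_{i,j \in I}\mathrm{Cov}(x_{i,s}, x_{j,s}),
\]
and then invoke the preceding lemma with $s' = s$ (legitimate since all indices in $I$ are $< T$), which gives $|\mathrm{Cov}(x_{i,s},x_{j,s})| = |\P_X[x_i = s, x_j = s] - \mu_\pi(s)^2| \le \mu_\pi(s)\,\lambda(\pi)^{d(i,j)}$; for $i = j$ this is just the trivial bound $\Var(x_{i,s}) = \mu_\pi(s)(1-\mu_\pi(s)) \le \mu_\pi(s)$, consistent with $d(i,i)=0$. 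Since $\lambda(\pi) \le \lambda = 1 - \gamma/S$ by \Cref{lem:spectral_gap_bound} and $d(i,j) \ge 0$, this yields $|\mathrm{Cov}(x_{i,s},x_{j,s})| \le \mu_\pi(s)\,\lambda^{d(i,j)}$. Summing over $i,j \in I$ and enlarging the index set to all of $[T]$,
\[
    \E_X\big[(\hat\mu_{X_I}(s) - \mu_\pi(s))^2\big] \le \frac{\mu_\pi(s)}{|I|^2}\sum_{i,j\in I}\lambda^{d(i,j)} \le \frac{\mu_\pi(s)}{|I|^2}\sum_{i,j=1}^{T}\lambda^{d(i,j)} = \frac{\mu_\pi(s)\,T^2}{\TT(\lambda)\,|I|^2},
\]
where the final equality is exactly \Cref{def:T_eff_lambda}.

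There is no substantive obstacle here: the argument is a one-line second-moment expansion together with the covariance bound already established. The only subtleties worth flagging are (i) that every index in $I$ is strictly below $T$, so both the marginal-stationarity fact and the covariance lemma genuinely apply — neither statement holds at position $T$ — and (ii) the harmless treatment of the diagonal $i = j$ terms via the trivial variance bound.
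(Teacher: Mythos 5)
Your proof is correct and follows essentially the same route as the paper's: expand the second moment into a double sum of covariances, bound each term by $\mu_\pi(s)\lambda^{d(i,j)}$ using the covariance lemma proved just beforehand, enlarge the index set, and invoke \Cref{def:T_eff_lambda}. Your explicit treatment of the mean (via stationarity along a topological order) and of the diagonal terms is slightly more careful than the paper's, but the argument is the same.
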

Note that \Cref{lem:muhat_variance} is excluding the token $x_T$ as it is resampled from $\unif([S])$.
\begin{proof}
    The first claim follows from the fact that $\E[x_{i,s}] = \mu_\pi(s)$ as the sequence $X$ is initialized from $\mu_\pi$. Then,
    \begin{align*}
        \E_X[(\hat \mu_{X_{I}}(s) - \mu_\pi(s))^2]
         & = \frac{1}{|I|^2} \sum_{i,j \in I} \E_X[x_{i,s} x_{j,s} - \mu_\pi(s)^2] \\
         & \le \frac{\mu_\pi(s)}{|I|^2} \sum_{i,j \in I} \lambda^{d(i,j)}          \\
         & \le \frac{\mu_\pi(s)}{|I|^2} \sum_{i,j=1}^{T-1} \lambda^{d(i,j)}          \\
         & = \frac{\mu_\pi(s) (T-1)^2}{\TT(\lambda) |I|^2}
    \end{align*}
    which completes the proof.
\end{proof}

\begin{corollary}\label{cor:muhat_X}
\begin{align*}
    \E_X[(\hat \mu_{X}(s) - \mu_\pi(s))^2] \lesssim \frac{1}{\TT(\lambda)}.
\end{align*}
\end{corollary}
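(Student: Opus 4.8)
The plan is to reduce to \Cref{lem:muhat_variance} by peeling off the resampled token $x_T$, which is the only token the preceding lemma does not control (it averages over subsets of $[T-1]$ precisely because $x_T \sim \unif([S])$ is not drawn from $\mu_\pi$). Write
\begin{align*}
    \hat\mu_X(s) - \mu_\pi(s) = \frac{T-1}{T}\qty(\hat\mu_{X_{[T-1]}}(s) - \mu_\pi(s)) + \frac{1}{T}\qty(x_{T,s} - \mu_\pi(s)),
\end{align*}
where $\hat\mu_{X_{[T-1]}}(s) = \frac{1}{T-1}\sum_{i=1}^{T-1} x_{i,s}$ denotes the empirical frequency of $s$ over the non-resampled tokens. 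By $(a+b)^2 \le 2a^2 + 2b^2$,
\begin{align*}
    \E_X\qty[(\hat\mu_X(s) - \mu_\pi(s))^2] \le 2\,\E_X\qty[(\hat\mu_{X_{[T-1]}}(s) - \mu_\pi(s))^2] + \frac{2}{T^2}\,\E_X\qty[(x_{T,s} - \mu_\pi(s))^2].
\end{align*}

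For the first term, I would apply \Cref{lem:muhat_variance} with $I = [T-1]$, giving $\E_X[(\hat\mu_{X_{[T-1]}}(s) - \mu_\pi(s))^2] \le \frac{\mu_\pi(s)\,T^2}{\TT(\lambda)(T-1)^2} \lesssim \frac{1}{\TT(\lambda)}$, using $\mu_\pi(s) \le 1$ and $T^2/(T-1)^2 \le 4$ for $T \ge 2$. For the second term, $x_{T,s} = \1(s_T = s) \in \{0,1\}$, so $(x_{T,s} - \mu_\pi(s))^2 \le 1$ and the term is at most $2/T^2$. Finally, the diagonal pairs $d(i,i) = 0$ each contribute $\lambda^0 = 1$ to the denominator in \Cref{def:T_eff_lambda}, so $\sum_{i,j=1}^{T}\lambda^{d(i,j)} \ge T$ and hence $\TT(\lambda) \le T$; therefore $\frac{1}{T^2} \le \frac{1}{T} \le \frac{1}{\TT(\lambda)}$. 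Combining the two bounds yields $\E_X[(\hat\mu_X(s) - \mu_\pi(s))^2] \lesssim \frac{1}{\TT(\lambda)}$, as claimed.

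There is no substantive obstacle here; the one point requiring care is simply that $\hat\mu_X$ averages over all $T$ tokens while \Cref{lem:muhat_variance} is stated for subsets of $[T-1]$, so the last token must be handled separately — but since it is $\{0,1\}$-valued and enters with weight $1/T$, and $\TT(\lambda) \le T$, its contribution is dominated by the main term.
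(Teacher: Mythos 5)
Your proof is correct and follows essentially the same route as the paper's: decompose $\hat\mu_X(s)$ into the contribution of the first $T-1$ tokens (handled by \Cref{lem:muhat_variance} with $I=[T-1]$) plus the resampled last token, whose contribution is $O(1/T^2) \lesssim 1/\TT(\lambda)$. Your version is marginally more careful (explicit $(a+b)^2 \le 2a^2+2b^2$ and the observation that $\TT(\lambda) \le T$), but there is no substantive difference.
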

\begin{proof}
    One can write
    \begin{align*}
        \hat \mu_{X}(s) = \frac{T-1}{T}\hat \mu_{X_{[T-1]}}(s) + \frac{1}{T}x_{T, s}.
    \end{align*}
    Thus
    \begin{align*}
        \E_X[(\hat \mu_{X}(s) - \mu_\pi(s))^2] \le \qty(\frac{T-1}{T})^2\E_X\qty[\qty(\hat \mu_{X_{[T-1]}}(s) - \mu_\pi(s))^2] + \frac{1}{T^2} \lesssim \frac{1}{\TT(\lambda)}.
    \end{align*}
\end{proof}

\begin{lemma}\label{lem:chat_variance}
    For any subset $I \subset [T-1]$ such that $p(i) \ne \emptyset$ for all $i \in I$, define
    \begin{align*}
        \hat c_{X_{I}}(s,s') := \frac{1}{|I|} \sum_{i \in I} x_{p(i),s} x_{i,s}.
    \end{align*}
    Then,
    \begin{align*}
        \E_X[\hat c_{X_I}(s,s')] = \mu_\pi(s)\pi(s'|s) \qq{and} \E_X[(\hat \mu_{X_{I}}(s) - \mu_\pi(s)\pi(s'|s))^2] \lesssim \frac{T^2}{\TT(\lambda) |I|^2}.
    \end{align*}
\end{lemma}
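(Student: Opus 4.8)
The plan is to mirror the proof of the preceding single‑token covariance bound, upgraded to handle a covariance of \emph{products} of indicators. First I would dispatch the mean. Conditioned on $\pi$, every node $i\in[T-1]$ has marginal $\mu_\pi$ (roots other than $T$ are drawn from $\mu_\pi$ and $\pi$ preserves $\mu_\pi$), and $s_i\mid s_{p(i)}\sim\pi(\cdot\mid s_{p(i)})$; since $I\subseteq[T-1]$ and $p(i)\neq\emptyset$ for $i\in I$, the resampled token $s_T$ is never involved, so $\E_X[x_{p(i),s}x_{i,s'}]=\P_X[s_{p(i)}=s,\,s_i=s']=\mu_\pi(s)\pi(s'\mid s)$ for each $i\in I$, and averaging over $i\in I$ gives the first identity.

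For the variance I would expand
\begin{align*}
    \E_X\qty[\qty(\hat c_{X_I}(s,s')-\mu_\pi(s)\pi(s'\mid s))^2]=\frac{1}{|I|^2}\sum_{i,j\in I}\mathrm{Cov}_X\qty[x_{p(i),s}x_{i,s'},\,x_{p(j),s}x_{j,s'}]
\end{align*}
and reduce everything to the pointwise estimate
\begin{align*}
    \abs{\mathrm{Cov}_X\qty[x_{p(i),s}x_{i,s'},\,x_{p(j),s}x_{j,s'}]}\lesssim\lambda^{d(i,j)},\qquad \lambda:=1-\gamma/S\ge\lambda(\pi),
\end{align*}
where the last inequality is \Cref{lem:spectral_gap_bound}. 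Granting this, summing over $i,j\in I\subseteq[T-1]$ and invoking $\sum_{i,j=1}^{T}\lambda^{d(i,j)}=T^2/\TT(\lambda)$ (\Cref{def:T_eff_lambda}) gives the stated bound.

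The pointwise covariance estimate is the heart of the argument. Since $\mathcal{G}$ is a forest, $i$ and $j$ in different trees are independent (conditioned on $\pi$) and the covariance vanishes, matching $\lambda^{d(i,j)}=\lambda^{\infty}=0$. Otherwise let $k$ be the closest common ancestor of $i$ and $j$, so that directed paths $k\rightsquigarrow i$ and $k\rightsquigarrow j$ exist with $d(k,i)+d(k,j)=d(i,j)$. Conditioning on $s_k$ makes $(s_{p(i)},s_i)$ and $(s_{p(j)},s_j)$ conditionally independent by the tree structure; using $\E_X[\1(s_{p(i)}=s)\1(s_i=s')\mid s_k]=\pi^{a}(s\mid s_k)\,\pi(s'\mid s)$ with $a:=d(k,p(i))$ (convention $\pi^0=I$), and similarly $b:=d(k,p(j))$ for $j$, one obtains
\begin{align*}
    \mathrm{Cov}_X\qty[x_{p(i),s}x_{i,s'},\,x_{p(j),s}x_{j,s'}]=\pi(s'\mid s)^2\qty(\sum_{s_k}\mu_\pi(s_k)\,\pi^{a}(s\mid s_k)\,\pi^{b}(s\mid s_k)-\mu_\pi(s)^2),
\end{align*}
with $a+b=d(i,j)-2$ in the generic case $k\notin\{i,j\}$. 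Substituting the decomposition $\pi^{m}(s\mid s_k)=\mu_\pi(s)+(B_\pi^{m})_{s_k,s}\sqrt{\mu_\pi(s)/\mu_\pi(s_k)}$ and using the left-null identity $\sqrt{\mu_\pi}^{\top}B_\pi^{m}=0$ for $m\ge1$ (a consequence of stationarity), the leading $\mu_\pi(s)^2$ term and the two cross terms cancel, leaving $\pi(s'\mid s)^2\mu_\pi(s)\,[(B_\pi^{a})^{\top}B_\pi^{b}]_{s,s}$, whose modulus is at most $\norm{B_\pi}_2^{a+b}=\lambda(\pi)^{a+b}\le\lambda^{d(i,j)-2}$; since $\pi(s'\mid s)^2\mu_\pi(s)\le1$ and $\lambda^{-2}\lesssim1$, this is $\lesssim\lambda^{d(i,j)}$.

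The main obstacle will be the degenerate geometric configurations in this last step — when $k$ coincides with one of $p(i),i,p(j),j$ (for instance $i$ an ancestor of $j$, or $p(i)=p(j)$, or one edge nested inside the path to the other), so that $a=0$ or $b=0$ and the $\sqrt{\mu_\pi}^{\top}B_\pi^{0}=0$ cancellation fails. For each such case I would instead condition on the appropriate separating vertex among $\{k,p(i),p(j)\}$, recompute the conditional expectations (applying Bayes' rule for the single "backward" edge when $k\in\{i,j\}$), and check that the covariance collapses to a single factor $[B_\pi^{m}]_{s,s}$ with $m\ge d(i,j)-2$, times a bounded ratio $\mu_\pi(s)/\mu_\pi(s')\le S/\gamma$ (from \Cref{lem:mu_lowerbound}), which is again $\lesssim\lambda^{d(i,j)}$. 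This matches the structure of \Cref{lem:muhat_variance}, with the token $T$ already excluded through the hypothesis $I\subseteq[T-1]$.
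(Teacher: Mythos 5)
Your proposal is correct and follows essentially the same route as the paper: expand the variance into pairwise covariances, show each covariance is $\lesssim\lambda^{d(i,j)}$ via the $B_\pi$ decomposition and operator-norm bound, and sum using the definition of $\TT(\lambda)$. The paper merely organizes the case analysis slightly differently (by whether $i,j$ are conditionally independent given $p(i),p(j)$, rather than by the location of the closest common ancestor), but the quantities computed and the final bound are the same.
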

\begin{proof}
    The first result follows from linearity of expectation and the fact that the Markov process is stationary. Then,
    \begin{align*}
        &\E_X[(\hat \mu_{X_{I}}(s) - \mu_\pi(s)\pi(s'|s))^2] \\
        &= \frac{1}{|I|^2}\sum_{i,j \in I} \E[x_{p(i),s}x_{i,s'} x_{p(j),s} x_{j,s'}] - \mu_\pi(s)^2 \pi(s'|s)^2.
    \end{align*}
    There are three possibilities for the dependency graph of $i,j$. First, if $i=j$ the expression in the sum is equal to $\mu_\pi(s) \pi(s'|s)(1-\mu_\pi(s) \pi(s'|s))$. Next, if $i,j$ are independent conditioned on $p(i),p(j)$, we get
    \begin{align*}
        &\E[x_{p(i),s}x_{i,s'} x_{p(j),s} x_{j,s'}] - \mu_\pi(s)^2 \pi(s'|s)^2 \\
        &= \pi(s'|s)^2 (\E[x_{p(i),s}x_{p(j),s}] - \mu_\pi(s)^2) \\
        &\le \mu_\pi(s) \pi(s'|s)^2 \lambda^{d(p(i),p(j))}.
     \end{align*}
     Finally, if $i,j$ are dependent conditioned on $p(i),p(j)$ it means that either there is a directed path from $i$ to $p(j)$ or a direct path from $j$ to $p(i)$ in the directed graph $\mathcal{G}$. Without loss of generality, we can assume that there is a directed path from $j$ to $p(i)$. Then we have:
     \begin{align*}
         &\mu_\pi(s)\pi(s'|s)\pi^{d(j,p(i))}(s|s')\pi(s'|s) - \mu_\pi(s)^2 \pi(s'|s)^2 \\
         &= \mu_\pi(s) \pi(s'|s)^2 \qty[\pi^{d(j,p(i))}(s|s') - \mu_\pi(s)] \\
         &\le \sqrt{\mu_\pi(s)\mu_\pi(s')} \pi(s'|s)^2 \lambda^{d(j,p(i))}.
     \end{align*}
     Therefore,
     \begin{align*}
        &\E_X[(\hat \mu_{X_{I}}(s) - \mu_\pi(s)\pi(s'|s))^2] \\
        &\lesssim \frac{1}{|I|^2}\sum_{i,j \in I} \lambda^{d(i,j)} \\
        &\lesssim \frac{T^2}{\TT(\lambda) |I|^2}.
    \end{align*}
\end{proof}

\begin{lemma}\label{lem:muhat_parent_variance}
    For any subset $I \subset [T-1]$ such that $p(i) \ne \emptyset$ for all $i \in I$,
    \begin{align*}
        \E_X\qty[\qty(\frac{1}{|I|} \sum_{i \in I} x_{p(i),s} - \mu_\pi(s))^2] \lesssim \frac{T^2}{\TT(\lambda) |I|^2}.
    \end{align*}
\end{lemma}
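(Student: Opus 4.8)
The plan is to reprise the variance computation of Lemma~\ref{lem:muhat_variance}, the only new ingredient being a comparison between the graph distance of two parents and the graph distance of the two nodes themselves. First I would expand the square: for $i \in I$ we have $p(i) < i \le T-1$, so $x_{p(i)}$ is one of the coordinates generated by the Markov process on $\mathcal{G}$ (it is not the resampled token $x_T$), and hence by stationarity $\E_X[x_{p(i),s}] = \mu_\pi(s)$. Therefore
\begin{align*}
\E_X\qty[\qty(\frac{1}{|I|}\sum_{i\in I}x_{p(i),s} - \mu_\pi(s))^2] = \frac{1}{|I|^2}\sum_{i,j\in I}\mathrm{Cov}_X\qty(x_{p(i),s},\,x_{p(j),s}).
\end{align*}
Since $p(i),p(j) < T$, the covariance bound stated just before Lemma~\ref{lem:muhat_variance} (applied with the label $s$ in both slots) gives $\mathrm{Cov}_X(x_{p(i),s}, x_{p(j),s}) \le \mu_\pi(s)\,\lambda(\pi)^{d(p(i),p(j))} \le \mu_\pi(s)\,\lambda^{d(p(i),p(j))}$, where $\lambda := 1-\gamma/S \ge \lambda(\pi)$ by Lemma~\ref{lem:spectral_gap_bound}.

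Next I would establish $d(p(i),p(j)) \ge d(i,j) - 2$ for all $i,j \in I$. Because $i,j$ have a parent, the undirected graph $\overline{\mathcal{G}}$ contains the edges $\{p(i),i\}$ and $\{p(j),j\}$; prepending and appending these to a shortest $i$–$j$ path yields a $p(i)$–$p(j)$ walk of length at most $d(i,j)+2$, so $d(i,j) \le d(p(i),p(j)) + 2$. (When $i,j$ lie in different components of $\overline{\mathcal{G}}$ the inequality is vacuous, and in that case $p(i),p(j)$ lie in different components too, so both distances are $\infty$ and the covariance bound reads $\le 0$.) Consequently $\lambda^{d(p(i),p(j))} \le \lambda^{-2}\,\lambda^{d(i,j)}$.

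Putting the pieces together, bounding $\mu_\pi(s)\le 1$, enlarging the index set from $I \subseteq [T-1]$ to $[T]$, and invoking Definition~\ref{def:T_eff_lambda},
\begin{align*}
\E_X\qty[\qty(\frac{1}{|I|}\sum_{i\in I}x_{p(i),s} - \mu_\pi(s))^2] \le \frac{\lambda^{-2}}{|I|^2}\sum_{i,j=1}^{T}\lambda^{d(i,j)} = \frac{\lambda^{-2}\,T^2}{\TT(\lambda)\,|I|^2} \lesssim \frac{T^2}{\TT(\lambda)\,|I|^2},
\end{align*}
the last step using $\lambda^{-2} = (1-\gamma/S)^{-2} = O_{\gamma,S}(1)$. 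The computation is essentially routine; the one place that needs care is the distance comparison $d(p(i),p(j)) \ge d(i,j)-2$ together with the disconnected-pair bookkeeping — indeed, without that additive slack (equivalently, the harmless $\lambda^{-2}$ factor) the bound would fail for graphs in which many nodes share a common parent.
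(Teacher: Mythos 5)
Your proof is correct and follows exactly the paper's argument: expand the second moment into covariances, bound each by $\mu_\pi(s)\lambda^{d(p(i),p(j))}$, use $d(p(i),p(j)) \ge d(i,j)-2$ to pick up only a harmless $\lambda^{-2}$ factor, and sum via the definition of $\TT(\lambda)$. One small wording slip: your path-extension argument as written establishes $d(p(i),p(j)) \le d(i,j)+2$, whereas the inequality you actually need (and correctly state) is $d(i,j) \le d(p(i),p(j))+2$, which follows from the mirror-image argument of extending a shortest $p(i)$--$p(j)$ path by the edges $\{i,p(i)\}$ and $\{j,p(j)\}$.
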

\begin{proof}
    As above, we will directly compute the second moment:
    \begin{align*}
        \frac{1}{|I|^2} \sum_{i,j \in I} x_{p(i),s} x_{p(j),s} - \mu_\pi(s)^2
        &\le \frac{\mu_\pi(s)}{|I|^2} \sum_{i,j \in I} \lambda^{d(p(i),p(j))} \\
        &\le \frac{\mu_\pi(s)}{|I|^2} \sum_{i,j \in I} \lambda^{d(i,j)-2} \\
        &\le \frac{\mu_\pi(s)}{\lambda^2 |I|^2} \sum_{i,j \in T} \lambda^{d(i,j)} \\
        &\le \frac{T^2 \mu_\pi(s)}{\TT(\lambda) \lambda^2 |I|^2}.
    \end{align*}
\end{proof}

\section{Lemmas for Stage 1}

\subsection{Strong Data Processing Inequality}\label{sec:proofs}

We briefly recall the definition of the $\chi^2$ divergence between two probability distributions on state space $\mathcal{X}$:
\begin{align*}
    \chi^2(P || Q) := \sum_{x \in \mathcal{X}}\frac{P(x)^2}{Q(x)} - 1,
\end{align*}
along with the $\chi^2$ mutual information between two random variables $Y, Z$
\begin{align*}
    I_{\chi^2}(Y; Z) = \sum_{y, z \in \mathcal{X}}\frac{P(Y=y, Z=z)^2}{P(Y=y)P(Z=z)} - 1
\end{align*}

\begin{proof}[Proof of \Cref{thm:DPI}]

    First we consider the case where $i$ and $j$ are in separate trees. If $i \neq T$, then $\mathbb{P}_X[s_i = s', s_j = s] = \mu_\pi(s)\mu_\pi(s')$, and thus
    \begin{align*}
        g_{i,j}(\pi) = \sum_{s, s'}\pi(s' \mid s)\mu_\pi(s) - 1 = 0.
    \end{align*}
    We note that this subsumes the case where $i$ is a root note, since that necessarily implies that $j$ is in a different tree. Otherwise when $i = T$,
    \begin{align*}
        g_{i,j}(\pi) = \frac{1}{S}\sum_{s, s'}\frac{\pi(s' \mid s)\mu_\pi(s)}{\mu_\pi(s')} - 1 = \frac{1}{S}\sum_{s'}\frac{\mu_\pi(s')}{\mu_\pi(s')} - 1 =  0.
    \end{align*}

    Next, assume that $i$ and $j$ are in the same tree. When $j = p(i)$, we have
    \begin{align*}
        g_{i, p(i)}(\pi) & = \sum_{s, s'} \frac{\pi(s' \mid s)}{\mu_\pi(s')}\cdot \mathbb{P}_X[s_i = s', s_j = s] - 1 \\
                         & = \sum_{s, s'} \frac{\pi(s' \mid s)^2\mu_\pi(s)}{\mu_\pi(s')} - 1                          \\
                         & = \|B_\pi\|_F^2,
    \end{align*}
    where the last equality is \eqref{eq:B_pi_norm}.
    
    If $j \neq p(i)$ and $j \neq i$, then by AM-GM:
    \begin{align*}
        g_{i,j}(\pi) & =  \sum_{s, s'} \frac{\pi(s' \mid s)}{\mu_\pi(s')}\cdot \mathbb{P}_X[s_i = s', s_j = s] - 1                                                      \\
                     & \le \frac12\sum_{s,s'} \frac{\mu_\pi(s)\pi(s' \mid s)^2}{\mu_\pi(s')} + \frac12\sum_{s,s'} \frac{\mathbb{P}_X[s_i = s', s_j = s]^2}{\mu_\pi(s)\mu_\pi(s')} - 1 \\
                     & = \frac12\|B_\pi\|_F^2 + \frac12 I_{\chi^2}(s_i; s_j \mid \pi).
    \end{align*}
    We see that the $\chi^2$-mutual information can be rewritten as
    \begin{align*}
        I_{\chi^2}(s_i; s_j \mid \pi) = \sum_{s'}\mu_\pi(s') \cdot \chi^2\qty(\mathbb{P}_X\qty[s_j = \cdot \mid s_i = s'] ||~\mu_\pi).
    \end{align*}
    Let $p(i,j)$ be the least common ancestor of $i$ and $j$. Let $x$ be the probability distribution defined by $x = \mathbb{P}_X\qty[s_{p(i,j)} = \cdot \mid s_i = s'] $. The distribution $\pi^{d(j, p(i,j))} \circ x$ is
    \begin{align*}
        (\pi^{d(j, p(i,j))} \circ x)(s) & = \sum_{s^*}\pi^{d(j, p(i,j))}(s \mid s^*)\cdot x(s^*)                                                          \\
                                        & = \sum_{s^*}\mathbb{P}_X[s_{j} = s \mid s_{p(i,j)} = s^*]\cdot \mathbb{P}_X\qty[s_{p(i,j)} = s^* \mid s_i = s'] \\
                                        & = \mathbb{P}_X\qty[s_j = s \mid s_i = s'],
    \end{align*}
    where the last line uses the fact that $s_i$ and $s_j$ are conditionally independent given $p(i,j)$.

    Applying \Cref{lem:contraction}, we thus have
    \begin{align*}
        \chi^2\qty(\mathbb{P}_X\qty[s_j = \cdot \mid s_i = s'] ||~\mu_\pi) \le \alpha(\pi)^{d(j, p(i,j))} \cdot \chi^2\qty(\mathbb{P}_X\qty[s_{p(i,j)} = \cdot \mid s_i = s'] ||~\mu_\pi).
    \end{align*}
    Therefore
    \begin{align*}
        I_{\chi^2}(s_i; s_j \mid \pi) & \le \alpha(\pi)^{d(j, p(i,j))}\sum_{s'}\mu_\pi(s') \cdot \chi^2\qty(\mathbb{P}_X\qty[s_{p(i,j)} = \cdot \mid s_i = s'] ||~\mu) \\
                             & = \alpha(\pi)^{d(j, p(i,j))} \cdot I_{\chi^2}(s_{p(i, j)}; s_i \mid \pi)\\
                             & = \alpha(\pi)^{d(j, p(i,j))} \sum_s \mu_\pi(s) \cdot \chi^2\qty(\mathbb{P}_X\qty[s_i = \cdot \mid s_{p(i,j)} = s] ||~\mu )     \\
                             & = \alpha(\pi)^{d(j, p(i,j))} \sum_s \mu_\pi(s) \cdot \chi^2\qty(\pi^{d(i, p(i,j))}(\cdot \mid s) ||~\mu ).
    \end{align*}
    Since $i > j$, $d(i, p(i,j)) \ge 1$, and thus we can apply \Cref{lem:contraction} to get
    \begin{align*}
        \chi^2\qty(\pi^{d(i, p(i,j))}(\cdot \mid s) ||~\mu ) \le \alpha(\pi)^{d(i,p(i,j))-1} \cdot \chi^2\qty(\pi(\cdot \mid s) ||~\mu ).
    \end{align*}
    Altogether,
    \begin{align*}
        I_{\chi^2}\qty(s_i; s_j \mid \pi) & \le \alpha(\pi)^{d(j, p(i,j)) + d(i, p(i,j)) - 1} \sum_s \mu_\pi(s) \cdot \chi^2\qty(\pi(\cdot \mid s) ||~\mu ) \\
                                 & = \alpha(\pi)^{d(i,j) - 1} \cdot \qty(\sum_{s, s'} \frac{\pi(s' \mid s)^2\mu_\pi(s)}{\mu_\pi(s')} - 1   )             \\
                                 & = \alpha(\pi)^{d(i,j) - 1} \|B_\pi\|_F^2.
    \end{align*}
    For $j \neq p(i), d(i,j) \ge 2$, so
    \begin{align*}
        g_{i,j}(\pi) & \le \frac12\qty(\alpha(\pi)^{d(i,j) - 1} + 1)\|B_\pi\|_F^2 \\
                     & \le \frac12\qty(\alpha(\pi) + 1)\|B_\pi\|_F^2.
    \end{align*}
    and thus
    \begin{align*}
        g_{i, p(i)}(\pi) - g_{i,j}(\pi) \ge \frac{1 - \alpha(\pi)}{2}\cdot \|B_\pi\|_F^2.
    \end{align*}
    By \Cref{assume:pi_prior} and \Cref{lem:TV_bound}, we have $1 - \alpha(\pi) \ge \gamma$ and $\|B_\pi\|_F^2 \ge \gamma^2/S$. Therefore
    \begin{align*}
        g_{i, p(i)}(\pi) - g_{i,j}(\pi) \ge \frac{\gamma^3}{2S}.
    \end{align*}
    Finally, when $j = i$, we have
    \begin{align*}
        g_{i,i}(\pi) = \sum_s \pi(s \mid s) - 1.
    \end{align*}
    Therefore
    \begin{align*}
        g_{i,i} = \sum_s\mathbb{E}[\pi(s \mid s)] - 1 = 0.
    \end{align*}
    Therefore $g_{i, p(i)} - g_{i, i} \ge \gamma^2/S \ge \frac{\gamma^3}{2S}$.
\end{proof}

\subsection{Auxiliary Dynamics Lemmas}

\begin{lemma}\label{lem:nonzero_beta}
    Let $\theta = (A^{(1)}, \beta_0I_S), \hat \theta = (A^{(1)}, 0),$ for $\beta_0 \le 1$. Define $g^*_i, \hat g_i \in \R^i$ by
    \begin{align*}
        g^*_i := T\sum_{s, s'}\E\qty[\frac{\pi(s' \mid s)}{f_\theta(X; s)_{s'} + \epsilon}\delta_{s'}(X)^\top J(v_\theta(X; s))e_i \cdot  \delta_s(X_{\le i})], \\
        \hat g_i := T\sum_{s, s'}\E\qty[\frac{\pi(s' \mid s)}{f_{\htheta}(X; s)_{s'} + \epsilon}\delta_{s'}(X)^\top J(v_{\htheta}(X; s))e_i \cdot  \delta_s(X_{\le i})].
    \end{align*}
    Then $\norm{g^*_{i} - \hat g_{i}}_\infty \le 3S^2\epsilon^{-2}(e^{\beta_0} - 1)$
\end{lemma}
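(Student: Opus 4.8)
\emph{Proof plan.} I would read the statement as a perturbation estimate around $\beta_0 = 0$: note that $\hat\theta$ is just $\theta$ with $\beta_0$ set to $0$, so the claim asserts a bound of order $(e^{\beta_0}-1)$ on how far the rescaled, preconditioner‑free first‑layer gradient direction moves. The engine of the argument is an algebraic simplification of the Jacobian factor: for any $v,w\in\R^T$ and index $i$,
\begin{align*}
w^\top J(v)\,e_i = w^\top\qty(\diag(v)-vv^\top)e_i = v_i\qty(w_i - w^\top v).
\end{align*}
Specializing $w=\delta_{s'}(X)$ and $v=v_{\theta'}(X;s)$, and using $\delta_{s'}(X)^\top v_{\theta'}(X;s) = (X^\top v_{\theta'}(X;s))_{s'} = f_{\theta'}(X;s)_{s'}$, the $j$th coordinate of $g^*_i$ becomes $T\sum_{s,s'}\E_{\pi,X}\qty[\tfrac{\pi(s'\mid s)}{f_\theta(X;s)_{s'}+\epsilon}\,v_\theta(X;s)_i\,\qty(x_{i,s'}-f_\theta(X;s)_{s'})\,x_{j,s}]$, and $(\hat g_i)_j$ is the same expression with $\theta$ replaced by $\hat\theta$. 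Since $\sum_{s,s'}\E[\pi(s'\mid s)\,x_{j,s}] = \sum_s\E[x_{j,s}] = 1$, it then suffices to bound, uniformly over $X,s,s'$, the difference of the triple products $P_{\theta'}a_{\theta'}b_{\theta'}$ where $P_{\theta'} := (f_{\theta'}(X;s)_{s'}+\epsilon)^{-1}$, $a_{\theta'} := v_{\theta'}(X;s)_i$, and $b_{\theta'} := x_{i,s'}-f_{\theta'}(X;s)_{s'}$.

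Next I would collect the elementary estimates. Writing $u := \S(A^{(1)})\delta_s(X)$, every entry $u_k$ is a convex combination of the $\{0,1\}$‑valued entries of $\delta_s(X)$, so $u\in[0,1]^T$; hence $v_\theta(X;s)=\S(\beta_0 u)$ satisfies $\norm{v_\theta(X;s)}_\infty\le e^{\beta_0}/T$, and comparing with $v_{\hat\theta}(X;s)=\S(0)=\tfrac1T 1_T$ gives $\norm{v_\theta(X;s)-v_{\hat\theta}(X;s)}_\infty\le (e^{\beta_0}-1)/T$, whence $\abs{f_\theta(X;s)_{s'}-f_{\hat\theta}(X;s)_{s'}}\le e^{\beta_0}-1$. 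Because $f_{\theta'}(X;s)_{s'}\in[0,1]$, the denominators lie in $[\epsilon,1+\epsilon]$, so $P_{\theta'}\le\epsilon^{-1}$ and $\abs{P_\theta-P_{\hat\theta}}\le\epsilon^{-2}(e^{\beta_0}-1)$; also $\abs{b_{\theta'}}\le 1$, $\abs{a_\theta-a_{\hat\theta}}\le (e^{\beta_0}-1)/T$, and $\abs{b_\theta-b_{\hat\theta}}\le e^{\beta_0}-1$.

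Finally I would telescope $P_\theta a_\theta b_\theta - P_{\hat\theta}a_{\hat\theta}b_{\hat\theta} = (P_\theta-P_{\hat\theta})a_\theta b_\theta + P_{\hat\theta}(a_\theta-a_{\hat\theta})b_\theta + P_{\hat\theta}a_{\hat\theta}(b_\theta-b_{\hat\theta})$ and insert the bounds above, giving a pointwise bound of order $\tfrac{e^{\beta_0}-1}{T}\qty(\tfrac{e^{\beta_0}}{\epsilon^2}+\tfrac2\epsilon)$. Multiplying by $T$ and using $\sum_{s,s'}\E[\pi(s'\mid s)x_{j,s}]=1$ removes the $T$ and leaves $\norm{g^*_i-\hat g_i}_\infty\lesssim \epsilon^{-2}(e^{\beta_0}-1)$; tracking constants with $\beta_0\le 1$ and $\epsilon\le 1$ (which holds since $\epsilon=\TT^{-1/2}$) gives $\tfrac{(e+2)(e^{\beta_0}-1)}{\epsilon^2}\le 3S^2\epsilon^{-2}(e^{\beta_0}-1)$ for $S\ge 2$, and the $S=1$ case is trivial since then $b_{\theta'}\equiv 0$. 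I do not expect a real obstacle: the single delicate point is that the cross‑entropy preconditioner $\pi(s'\mid s)/(f_{\theta'}+\epsilon)$ can be as large as $\epsilon^{-1}$ when $f_{\theta'}$ is near $0$, and it is precisely the two appearances of $(f_{\theta'}+\epsilon)^{-1}$ — one inside $P_\theta-P_{\hat\theta}$ and one in $P_{\hat\theta}$ itself — that produce the $\epsilon^{-2}$ in the final bound.
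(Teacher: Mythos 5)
Your proposal is correct and follows essentially the same route as the paper: the same identity $\delta_{s'}(X)^\top J(v)e_i = v_i(x_{i,s'}-f)$, the same entrywise bounds $\norm{v_\theta - v_{\hat\theta}}_\infty \le (e^{\beta_0}-1)/T$ and $\abs{f_\theta - f_{\hat\theta}} \le e^{\beta_0}-1$, and the same $\epsilon^{-2}$ from differencing the reciprocals, with only a cosmetic difference in how the telescoping is grouped (three terms versus the paper's two). Your use of $\sum_{s,s'}\E[\pi(s'\mid s)x_{j,s}]=1$ in place of the paper's cruder $\sum_{s,s'}\pi(s'\mid s)=S$ even yields a slightly sharper constant.
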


\begin{proof}
    We can bound
    \begin{align*}
         & \abs{\frac{1}{f_\theta(X; s)_{s'} + \epsilon}\delta_{s'}(X)^\top J(v_\theta(X; s))e_i - \frac{1}{f_{\htheta}(X; s)_{s'} + \epsilon}\delta_{s'}(X)^\top J(v_{\htheta}(X; s))e_i} \\
         & \quad\le \abs{\frac{1}{f_\theta(X; s)_{s'} + \epsilon} - \frac{1}{f_{\htheta}(X; s)_{s'} + \epsilon}}\abs{\delta_{s'}(X)^\top J(v_{\htheta}(X; s))e_i}                      \\
         & \quad+ \frac{1}{f_{\htheta}(X; s)_{s'} + \epsilon}\abs{\delta_{s'}(X)^\top \qty(J(v_\theta(X; s)) - J(v_{\htheta}(X; s)))e_i}.
    \end{align*}
    First, see that
    \begin{align*}
        \abs{f_\theta(X; s)_{s'} - f_{\htheta}(X; s)_{s'}} & = \abs{\delta_{s'}(X)^\top \qty(v_\theta(X; s) - v_{\htheta}(X; s))} \\
                                                              & \le \norm{v_\theta(X; s) - v_{\htheta}(X; s)}_1,
    \end{align*}
    since $\norm{\delta_{s'}(X)}_\infty \le 1$. Next, we have
    \begin{align*}
        v_\theta(X; s) & = \S\qty(\beta_0 \cdot \S(A^{(1)})X^\top I_S e_s)) = \S\qty(\beta_0 \cdot \S(A^{(1)})\delta_s(X))).
    \end{align*}
    Since $\S(A^{(1)})\delta_s(X)$ has entries in $[0, 1]$, we can bound each entry of $v_\theta(X; s)$ as
    \begin{align*}
        \frac{1}{(T-1)e^{\beta_0} + 1} \le v_\theta(X; s)_i & \le \frac{e^{\beta_0}}{e^{\beta_0} + (T-1)},
    \end{align*}
    and thus
    \begin{align*}
        \abs{v_\theta(X; s)_i - v_{\htheta}(X; s)_i} = \abs{v_\theta(X; s)_i - \frac{1}{T}} \le \frac{e^{\beta_0}}{e^{\beta_0} + (T-1)} - \frac{1}{T} \le \frac{e^{\beta_0} - 1}{T}.
    \end{align*}
    Thus
    \begin{align}\label{eq:bound_f}
        \abs{f_\theta(X; s)_{s'} - f_{\htheta}(X; s)_{s'}} \le e^{\beta_0} - 1.
    \end{align}
    Next, see that
    \begin{align*}
        \delta_{s'}(X)^\top J(v_\theta(X; s))e_i = v_\theta(X; s)_i\qty[x_{i, s'} - f_\theta(X; s)_{s'}],
    \end{align*}
    and thus
    \begin{align*}
         & \abs{\delta_{s'}(X)^\top \qty(J(v_\theta(X; s)) - J(v_{\htheta}(X; s)))e_i}                                                                                             \\
         &\quad\le \abs{v_\theta(X; s)_i - v_{\htheta}(X; s)_i}\abs{x_{i, s'} - f_\theta(X; s)_{s'}} + v_{\htheta}(X; s)_i\abs{f_\theta(X; s)_{s'} - f_{\htheta}(X; s)_{s'}} \\
         &\quad \le \frac{2(e^{\beta_0} - 1)}{T}.
    \end{align*}
    Altogether, we have the bound
    \begin{align*}
        \abs{\frac{1}{f_\theta(X; s)_{s'} + \epsilon}\delta_{s'}(X)^\top J(v_\theta(X; s))e_i - \frac{1}{f_{\htheta}(X; s)_{s'} + \epsilon}\delta_{s'}(X)^\top J(v_{\htheta}(X; s))e_i} \le \frac{3(e^{\beta_0} - 1)}{\epsilon^2T}.
    \end{align*}
    Therefore
    \begin{align*}
        &\norm{g^*_i - \hat g_i}\\
        &\le T\sum_{s, s'}\E_{\pi, X}\qty[\pi(s' \mid s)\abs{\frac{1}{f_\theta(X; s)_{s'} + \epsilon}\delta_{s'}(X)^\top J(v_\theta(X; s))e_i - \frac{1}{f_{\htheta}(X; s)_{s'} + \epsilon}\delta_{s'}(X)^\top J(v_{\htheta}(X; s))e_i}]\\
        &\le T\sum_{s, s'} E_{\pi, X}\qty[\pi(s' \mid s)\cdot \frac{3(e^{\beta_0} - 1)}{\epsilon^2T}]\\
        &\le 3S\epsilon^{-2}(e^{\beta_0} - 1)\\
        &\le 6S\epsilon^{-2}\beta_0,
    \end{align*}
    since $e^z - 1 \le 2z$ for $z \in [0, 1]$.
\end{proof}

\subsection{Concentration}

\begin{lemma}\label{lem:concentration_A1}
    For any $s,s' \in \mathcal{S}$ and any $\pi$ with spectral gap $1-\lambda(\pi) \ge 1-\lambda$ (see \Cref{def:spectral_gap}) and $\mu_\pi(s') \ge \gamma/S$, there exists a sufficiently large constant $C_{\gamma,S}$ such that if $\epsilon \ge C_{\gamma,S}\TT^{-1/2}$ and $i \ge j$,
    \begin{align*}
        \abs{\mathbb{E}_X\qty[\frac{(x_{i,s'} - \hat \mu_X(s'))x_{j,s}}{\hat \mu_X(s') + \epsilon}] - \qty(\frac{\mathbb{P}_X[s_i = s', s_j = s]}{\mu_\pi(s')} -\mathbb{P}_X[s_j = s])} \lesssim \frac{1}{\sqrt{\TT}}.
    \end{align*}
\end{lemma}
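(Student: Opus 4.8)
The plan is to swap, inside the expectation, the empirical frequency $\hat\mu_X(s')$ and the perturbation $\epsilon$ for the true stationary probability $\mu_\pi(s')$, paying for the swap with the mean-squared deviation bound on $\hat\mu_X(s')$ from \Cref{cor:muhat_X}. The starting point is that the ``idealized'' value in the statement is already an exact expectation: since $\E_X[(x_{i,s'}-\mu_\pi(s'))x_{j,s}] = \P_X[s_i=s',s_j=s] - \mu_\pi(s')\P_X[s_j=s]$, we have
\begin{align*}
\frac{\P_X[s_i=s',s_j=s]}{\mu_\pi(s')} - \P_X[s_j=s] = \E_X\!\left[\frac{(x_{i,s'}-\mu_\pi(s'))x_{j,s}}{\mu_\pi(s')}\right],
\end{align*}
so no approximation of the ``main term'' is needed, and the quantity to control is $\E_X[x_{j,s}\Delta]$ with $\Delta := \frac{x_{i,s'}-\hat\mu_X(s')}{\hat\mu_X(s')+\epsilon} - \frac{x_{i,s'}-\mu_\pi(s')}{\mu_\pi(s')}$. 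Writing each fraction as $\frac{x_{i,s'}+\epsilon}{\hat\mu_X(s')+\epsilon}-1$ and $\frac{x_{i,s'}}{\mu_\pi(s')}-1$ respectively and combining over a common denominator gives the exact identity
\begin{align*}
\Delta = x_{i,s'}\cdot\frac{\mu_\pi(s')-\hat\mu_X(s')}{(\hat\mu_X(s')+\epsilon)\,\mu_\pi(s')} \;+\; \frac{\epsilon\,(\mu_\pi(s')-x_{i,s'})}{(\hat\mu_X(s')+\epsilon)\,\mu_\pi(s')},
\end{align*}
which splits the error into a fluctuation term $T_1$ (the first summand, integrated against $x_{j,s}$) and a deterministic $\epsilon$-bias term $T_2$ (the second).

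I would bound $T_1$ and $T_2$ by conditioning on the event $A := \{\hat\mu_X(s') \ge \tfrac12\mu_\pi(s')\}$, using throughout $\mu_\pi(s') \ge \gamma/S$ (from the hypothesis and \Cref{lem:mu_lowerbound}) and $\E_X[(\hat\mu_X(s')-\mu_\pi(s'))^2] \lesssim \TT(\lambda)^{-1} \lesssim \TT^{-1}$ (\Cref{cor:muhat_X}, with $\TT(\lambda)\gtrsim(1-\lambda)\TT$ from \Cref{lem:bound_T_eff_leaves} and $\lambda = 1-\gamma/S$). On $A$ the denominator $\hat\mu_X(s')+\epsilon$ is at least $\mu_\pi(s')/2$, so the integrand of $T_1$ is $O_{\gamma,S}(|\mu_\pi(s')-\hat\mu_X(s')|)$ and Cauchy--Schwarz against \Cref{cor:muhat_X} gives an $A$-contribution $\lesssim \TT^{-1/2}$, while the $A$-contribution of $T_2$ is $O_{\gamma,S}(\epsilon)$. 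On $A^c$ the denominator is only $\ge \epsilon$, so both integrands are $O(\epsilon^{-1})$; but Markov's inequality applied to \Cref{cor:muhat_X} gives $\P_X[A^c] \le 4\,\E_X[(\hat\mu_X(s')-\mu_\pi(s'))^2]/\mu_\pi(s')^2 \lesssim \TT^{-1}$, so the $A^c$-contributions are $\lesssim (\epsilon\TT)^{-1}$, which the hypothesis $\epsilon \ge C_{\gamma,S}\TT^{-1/2}$ turns into $\lesssim \TT^{-1/2}$. Summing, the total error is of order $\TT^{-1/2} + \epsilon + (\epsilon\TT)^{-1}$; at the scale $\epsilon \asymp \TT^{-1/2}$ used throughout the analysis, all three are $O(\TT^{-1/2})$, with the $\poly(\gamma^{-1},S)$ prefactors absorbed into $\lesssim$.

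The main obstacle is the competition over the admissible range of $\epsilon$: the rare event $A^c$ drives the denominator down to size $\epsilon$, so $\epsilon$ must be large enough that $(\epsilon\TT)^{-1}\lesssim\TT^{-1/2}$, while the purely deterministic bias term $T_2$ forces $\epsilon$ not to be too large; getting the Markov tail estimate $\P_X[A^c]\lesssim\TT^{-1}$ and the second-moment estimate $\E_X[(\hat\mu_X(s')-\mu_\pi(s'))^2]\lesssim\TT^{-1}$ to meet exactly at the threshold $\TT^{-1/2}$ is the delicate point. A secondary technical nuisance is that $\hat\mu_X$ averages over all $T$ tokens, including the resampled $x_T\sim\unif([S])$, so $\E_X[\hat\mu_X(s')]$ differs from $\mu_\pi(s')$ by $O(1/T)=O(\TT^{-1/2})$; this is harmless because \Cref{cor:muhat_X} already isolates the stationary-average part in its proof. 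Beyond these points the argument is the triangle inequality and routine bookkeeping.
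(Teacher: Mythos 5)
Your proposal is correct and follows essentially the same route as the paper's proof: after recognizing the target as the exact expectation $\E_X[(x_{i,s'}-\mu_\pi(s'))x_{j,s}/\mu_\pi(s')]$, your algebraic identity for $\Delta$ reproduces (after multiplying by $x_{j,s}$) exactly the paper's numerator $x_{i,s'}x_{j,s}[\mu_\pi(s')-\hat\mu_X(s')-\epsilon]+\epsilon x_{j,s}\mu_\pi(s')$, and the subsequent split on $\{\hat\mu_X(s')\ge\mu_\pi(s')/2\}$ with Cauchy--Schwarz plus Chebyshev via \Cref{cor:muhat_X} yields the same final error $\TT^{-1/2}+\epsilon+(\epsilon\TT)^{-1}$. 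Your remark that the bound as stated really requires $\epsilon\asymp\TT^{-1/2}$ (not merely the stated lower bound on $\epsilon$) matches the paper's own proof, which likewise ends with $\lesssim\TT^{-1/2}+\epsilon$ and relies on the global choice $\epsilon=\TT^{-1/2}$.
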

\begin{proof}
    \begin{align*}
        E_\pi(s,s')
         & := \mathbb{E}_X\qty[\frac{(x_{i,s'} - \hat \mu_X(s'))x_{j,s}}{\hat \mu_X(s') + \epsilon}] - \frac{\mathbb{P}_X[s_i = s', s_j = s]}{\mu_\pi(s')} + \mathbb{P}_X[s_j = s]                                                    \\
         & = \mathbb{E}_X\qty[\frac{x_{i,s'} x_{j,s}}{\hat \mu_X(s') + \epsilon}] - \frac{\mathbb{P}_X[s_i = s', s_j = s]}{\mu_\pi(s')} - \mathbb{E}_X\qty[\frac{\hat \mu_X(s')}{\hat \mu_X(s') + \epsilon}  x_{j,s}] + \mathbb{P}_X[s_j = s].
    \end{align*}
    $E_\pi(s,s')$ can be rewritten as:
    \begin{align*}
        E_\pi(s,s')
         & = \mathbb{E}_X\qty[\frac{x_{i,s'} x_{j,s}}{\hat \mu_X(s') + \epsilon} - \frac{x_{i,s'} x_{j,s}}{\mu_\pi(s')} - \frac{\hat \mu_X(s')}{\hat \mu_X(s') + \epsilon}  x_{j,s} + x_{j,s}] \\
         & = \mathbb{E}_X\qty[\frac{x_{i,s'} x_{j,s}}{\hat \mu_X(s') + \epsilon} - \frac{x_{i,s'} x_{j,s}}{\mu_\pi(s')} + \frac{\epsilon x_{j,s}}{\hat \mu_X(s') + \epsilon}]              \\
         & = \mathbb{E}_X\qty[\frac{x_{i,s'} x_{j,s} [\mu_\pi(s') - \hat \mu_X(s') - \epsilon] + \epsilon x_{j,s} \mu_\pi(s')}{(\hat \mu_X(s') + \epsilon)\mu_\pi(s')}]
    \end{align*}
    Note that the inside of the expectation is upper bounded by $O(\epsilon^{-1})$. Therefore by the triangle inequality we have
    \begin{align*}
        \abs{E_\pi(s,s')} & \le \mathbb{E}_X\qty[\frac{x_{i,s'}x_{j,s}\abs{\hat \mu_X(s') - \mu_\pi(s')} + \epsilon [x_{i,s'}x_{j,s} + \mu_\pi(s') x_{j,s}]}{(\hat \mu_X(s') + \epsilon) \mu_\pi(s')}]                                                    \\
                          & = \mathbb{E}_X\qty[\frac{x_{i,s'}x_{j,s}\abs{\hat \mu_X(s') - \mu_\pi(s')} + \epsilon [x_{i,s'}x_{j,s} + \mu_\pi(s') x_{j,s}]}{(\hat \mu_X(s') + \epsilon) \mu_\pi(s')} \1_{\hat \mu_X(s') > \frac{\mu_\pi(s')}{2}}]          \\
                          & \qquad + \mathbb{E}_X\qty[\frac{x_{i,s'}x_{j,s}\abs{\hat \mu_X(s') - \mu_\pi(s')} + \epsilon [x_{i,s'}x_{j,s} + \mu_\pi(s') x_{j,s}]}{(\hat \mu_X(s') + \epsilon) \mu_\pi(s')} \1_{\hat \mu_X(s') \le \frac{\mu_\pi(s')}{2}}] \\
                          & \lesssim \mathbb{E}_X\qty[\frac{x_{i,s'}x_{j,s}\abs{\hat \mu_X(s') - \mu_\pi(s')} + \epsilon [x_{i,s'}x_{j,s} + \mu_\pi(s') x_{j,s}]}{\mu_\pi(s')^2}]                                                                         \\
                          & \qquad + \epsilon^{-1} \P_X\qty[\hat \mu_X(s') \le \frac{\mu_\pi(s')}{2}]                                                                                                                                                     \\
                          & \lesssim \sqrt{\mathbb{E}[(\hat \mu_X(s') - \mu_\pi(s'))^2]} + \epsilon + \frac{1}{\epsilon \TT}                                                                                                                              \\
                          & \lesssim \frac{1}{\sqrt{\TT}} + \epsilon,
    \end{align*}
    where the last inequality follows from \Cref{cor:muhat_X}.
\end{proof}

\section{Lemmas for Stage 2}

\subsection{Idealized Gradient}

\begin{proof}[Proof of \Cref{lem:bound_g_beta}]

    Recall $$h_s(z) = \frac{(1 - r)e^\beta z^2 + re^{\beta \mu_\pi(s)}z}{(1 - r)(e^\beta - 1)\mu_\pi(s)z + (1 - r) + re^{\beta \mu_\pi(s)}} - \frac{(1-r)e^\beta  + r e^{\beta \mu_\pi(s)}}{(1-r)(e^\beta - 1)\mu_\pi(s) + (1 - r) + re^{\beta \mu_\pi(s)}}.$$
    and
    \begin{align*}
        \hat g(\beta) & = \frac{1}{S(S-1)}\sum_{s}\mathbb{E}_\pi\qty[\mu_\pi(s)\cdot \qty(\sum_{s'}\mu_\pi(s')h_s\qty(\frac{\pi(s' \mid s)}{\mu_\pi(s')}))].
    \end{align*}

    For a function $h(z) = \frac{Az^2 + Bz}{Cz + D}$, one has
    \begin{align*}
        h''(z) = \frac{2D(AD - BC)}{(Cz + D)^3}.
    \end{align*}
    Thus for $z \in [0, S\gamma^{-1}]$,
    \begin{align*}
        h_s''(z) &= \frac{2\qty(1 - r + re^{\beta \mu_\pi(s)})\cdot \qty(1 - r) \cdot\qty(e^\beta(1 - r) + re^{\beta \mu_\pi(s) + \beta} - r(e^\beta - 1)\mu_\pi(s)e^{\beta\mu_\pi(s)})}{\qty((1 - r)(e^\beta - 1)\mu_\pi(s)z + (1 - r) + re^{\beta \mu_\pi(s)})^3}\\
        &\ge \frac{2(1 - r)^2 e^\beta}{\qty((1 - r)(e^\beta - 1)\mu_\pi(s)S\gamma^{-1} + (1 - r) + re^{\beta \mu_\pi(s)})^3}\\
        &\ge \frac{2(1 - r)^2 e^\beta}{\qty(S\gamma^{-1}e^\beta)^3}\\
        &\ge 2\gamma^5S^{-3}e^{-2\beta}.
    \end{align*}

    Therefore for $z \in [0, S\gamma^{-1}]$,
    \begin{align*}
        h_s(z) \ge h_s'(1)(z - 1) + \gamma^5S^{-3}e^{-2\beta} \cdot (z - 1)^2.
    \end{align*}
    Note that $\frac{\pi(s' \mid s)}{\mu_\pi(s')} \le \frac{S}{\gamma}$. Therefore
    \begin{align*}
         h_s\qty(\frac{\pi(s' \mid s)}{\mu_\pi(s')}) \ge h_s'(1)\qty(\frac{\pi(s' \mid s)}{\mu_\pi(s')} - 1) + \gamma^5S^{-3}e^{-2\beta} \cdot\qty(\frac{\pi(s' \mid s)}{\mu_\pi(s')} - 1)^2
    \end{align*}
    and thus
    \begin{align*}
        \qty(\sum_{s'}\mu_\pi(s')h_s\qty(\frac{\pi(s' \mid s)}{\mu_\pi(s')})) &\ge \gamma^5S^{-3}e^{-2\beta}\sum_{s'}\mu_\pi(s')\qty(\frac{\pi(s' \mid s)}{\mu_\pi(s')} - 1)^2\\
        &= \gamma^5S^{-3}e^{-2\beta}\chi^2\qty(\pi(\cdot \mid s) || \mu_\pi).
    \end{align*}
        Altogether,
    \begin{align*}
        \hat g(\beta) & = \frac{1}{S(S-1)}\sum_{s}\mathbb{E}_\pi\qty[\mu_\pi(s)\cdot \qty(\sum_{s'}\mu_\pi(s')h_s\qty(\frac{\pi(s' \mid s)}{\mu_\pi(s')}))] \\
                      & \ge \gamma^5S^{-5}e^{-2\beta}\mathbb{E}_\pi\qty[\sum_s \mu_\pi(s)\chi^2((\pi(\cdot \mid s) || \mu)] \\
                      & = \gamma^5S^{-5}e^{-2\beta}\mathbb{E}_\pi\qty[\|B_\pi\|_F^2]\\
                      & = \frac12\gamma^8S^{-6}e^{-2\beta}.
    \end{align*}
\end{proof}

\subsection{Auxiliary Dynamics Lemmas}
\begin{lemma}\label{lem:compare_to_shift}
    Define
    \begin{align*}
        q_{s'}(z) = \frac{\delta_{s'}(X)^\top J(\S(\beta z))z}{\delta_{s'}(X)^\top \S(\beta z) + \epsilon},
    \end{align*}
    Then $\sup_{z \in [0, 1]^T }\norm{\nabla q_{s'}(z)}_1 \le 10(1 + \beta)$.
\end{lemma}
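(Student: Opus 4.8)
The plan is to push all the $z$-dependence through the softmax $w := \S(\beta z)\in\R^T$ and reduce the claim to three elementary $\ell_1$-estimates, the point being that each of them carries an extra factor of $p := \delta_{s'}(X)^\top\S(\beta z)\in[0,1]$ which cancels the regularized denominator $D := p+\epsilon\ge p$. Write $a := \delta_{s'}(X)\in\{0,1\}^T$ and $N := a^\top J(w)z$, so that $q_{s'}(z) = N/D$. Since $J(w)$ is symmetric and $J(w)a = \diag(w)a - w(w^\top a) = a\odot w - p\,w$, we have $N = (J(w)a)^\top z = \sum_i w_i(a_i-p)z_i$ and $D = p+\epsilon$. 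The quotient rule gives $\nabla_z q_{s'} = D^{-1}\nabla_z N - N D^{-2}\nabla_z D$, hence
\begin{align*}
\norm{\nabla_z q_{s'}(z)}_1 \le \frac{\norm{\nabla_z N}_1}{D} + \abs{q_{s'}(z)}\cdot\frac{\norm{\nabla_z D}_1}{D},
\end{align*}
and it remains to bound the three quantities $\abs{q_{s'}}$, $\norm{\nabla_z N}_1$, and $\norm{\nabla_z D}_1$.

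Two facts will be used repeatedly: the softmax Jacobian identity $\nabla_z w = \beta J(w)$, and the formula $(J(w)u)_i = w_i(u_i - w^\top u)$, which gives $\norm{J(w)u}_1 = \sum_i w_i\abs{u_i - w^\top u} \le \sum_i w_i\abs{u_i} + \abs{w^\top u}$ for any $u\in\R^T$. Applying the latter with $u=a$ and splitting on $a_i\in\{0,1\}$ yields $\norm{J(w)a}_1 = (1-p)\sum_{a_i=1}w_i + p\sum_{a_i=0}w_i = 2p(1-p)$. Since $z\in[0,1]^T$, this immediately bounds $\abs{N}\le\sum_i w_i\abs{a_i-p}\abs{z_i}\le\norm{J(w)a}_1 = 2p(1-p)$, so $\abs{q_{s'}}\le 2p(1-p)/(p+\epsilon)\le 2$. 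Likewise $\nabla_z D = \nabla_z(a^\top w) = \beta J(w)a$, so $\norm{\nabla_z D}_1 = 2\beta p(1-p)$, and the second term above is at most $2\cdot 2\beta p(1-p)/(p+\epsilon)\le 4\beta$.

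The only genuinely computational step is $\nabla_z N$. Differentiating $N = \sum_i a_i w_i z_i - (a^\top w)(w^\top z)$ term by term with $\nabla_z w = \beta J(w)$ and collecting (using $a\odot w - p\,w = J(w)a$ for the terms in which the derivative does not hit $w$) gives the clean form
\begin{align*}
\nabla_z N = J(w)a + \beta\, J(w)\,v, \qquad v := a\odot z - (w^\top z)\,a - p\,z.
\end{align*}
The crux is to show $\norm{J(w)v}_1 = O(p)$ rather than merely $O(1)$: by the formula above, $\norm{J(w)v}_1 \le \sum_i w_i\abs{v_i} + \abs{w^\top v}$, and splitting on $a_i$, the coordinates with $a_i=0$ contribute $\sum_{a_i=0}w_i\,p\,z_i\le p(1-p)$, while those with $a_i=1$ contribute $\sum_{a_i=1}w_i\abs{(1-p)z_i - w^\top z}\le 2\sum_{a_i=1}w_i = 2p$; similarly $w^\top v = (a\odot w)^\top z - 2p\,(w^\top z)\in[-2p,p]$ gives $\abs{w^\top v}\le 2p$. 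Hence $\norm{J(w)v}_1\le 5p$ and $\norm{\nabla_z N}_1\le 2p(1-p) + 5\beta p$, so $\norm{\nabla_z N}_1/D\le 2 + 5\beta$.

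Combining the three bounds gives $\norm{\nabla_z q_{s'}(z)}_1\le (2+5\beta) + 4\beta = 2+9\beta\le 10(1+\beta)$, uniformly over $z\in[0,1]^T$, over $X$ and $s'$, and over $\epsilon>0$, which is the claim (the numerical constants are deliberately loose). The main obstacle is purely the bookkeeping in the third paragraph: one must keep every surviving term proportional to $p = \delta_{s'}(X)^\top\S(\beta z)$ so that the factor $D^{-1}\le p^{-1}$ introduces no blow-up as $\epsilon\to 0$; once the identity $(J(w)u)_i = w_i(u_i - w^\top u)$ is in hand, each individual estimate is a one-liner.
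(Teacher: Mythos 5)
Your proof is correct and follows essentially the same route as the paper's: apply the quotient rule, use $\nabla_z\S(\beta z)=\beta J(\S(\beta z))$, and observe that every resulting term carries a factor of $p=\delta_{s'}(X)^\top\S(\beta z)$ that cancels the regularized denominator. The only difference is organizational — you bound the terms via the coordinate formula $(J(w)u)_i=w_i(u_i-w^\top u)$ directly, where the paper packages the same estimates into the auxiliary Lemmas~\ref{lem:J_l1} and~\ref{lem:grad_J} — and your constants ($2+9\beta$ versus the paper's $2+10\beta$) are marginally tighter.
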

\begin{proof}
    We have that
    \begin{align*}
        \nabla_z q_{s'}(z) = \frac{J(\beta z)\delta_{s'}(X) + \beta\nabla J(\S(\beta z))(\delta_{s'}(X), z)}{\delta_{s'}(X)^\top \S(\beta z) + \epsilon} - \frac{\delta_{s'}(X)^\top J(\beta z)z\cdot \beta J(\beta z)\delta_{s'}(X)}{(\delta_{s'}(X)^\top \S(\beta z) + \epsilon)^2}.
    \end{align*}
    First, by \Cref{lem:J_l1},
    \begin{align*}
        \|J(\beta z)\delta_{s'}(X)\|_1 \le 2 \delta_{s'}(X)^\top \S(\beta z).
    \end{align*}
    Next, by \Cref{lem:grad_J},
    \begin{align*}
        \norm{\nabla J(\S(\beta z))(\delta_{s'}(X), z)}_1 \le 2 \S(\beta z)^\top (\delta_{s'}(X) \odot z) + 4\S(\beta z)^\top \delta_{s'}(X) \S(\beta z)^\top z \le 6\S(\beta z)^\top \delta_{s'}(X),
    \end{align*}
    where the last inequality uses the fact that $z$ has entries in $[0, 1]$. Finally,
    \begin{align*}
        \abs{\delta_{s'}(X)^\top J(\beta z)z\cdot J(\beta z)\delta_{s'}(X)} \le  \|J(\beta z)\delta_{s'}(X)\|_1 \cdot \|J(\beta z)\delta_{s'}(X)\|_1 \cdot \|z\|_\infty \le 4(\delta_{s'}(X)^\top \S(\beta z))^2.
    \end{align*}

    Altogether,
    \begin{align*}
        \|\nabla_zq_{s'}(z)\|_1 \le \frac{(2 + 6\beta)\delta_{s'}(X)^\top \S(\beta z)}{\delta_{s'}(X)^\top \S(\beta z) + \epsilon} + \frac{4\beta(\delta_{s'}(X)^\top \S(\beta z))^2}{(\delta_{s'}(X)^\top \S(\beta z) + \epsilon)^2} \le 2 + 10\beta.
    \end{align*}
\end{proof}

\begin{lemma}\label{lem:J_l1}
    Let $u$ be a vector with nonnegative entries. Then $\|J(\S(v))u\|_1 \le 2\S(v)^\top u$
\end{lemma}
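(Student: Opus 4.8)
The plan is to expand $J(\S(v))u$ entrywise, use the nonnegativity of $u$ and of the softmax weights to turn the $\ell_1$ norm into a weighted sum, and then bound each term by a triangle inequality. Write $w := \S(v)$, so that $w$ has nonnegative entries summing to $1$, and $J(w) = \diag(w) - ww^\top$. Then $J(w)u = w \odot u - (w^\top u)\, w$, i.e. its $i$th coordinate is $w_i\bigl(u_i - w^\top u\bigr)$.

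Next I would compute
\begin{align*}
    \norm{J(\S(v))u}_1 = \sum_i \abs{w_i\bigl(u_i - w^\top u\bigr)} = \sum_i w_i \abs{u_i - w^\top u},
\end{align*}
using $w_i \ge 0$. Since $u$ has nonnegative entries, both $u_i \ge 0$ and $w^\top u \ge 0$, so $\abs{u_i - w^\top u} \le u_i + w^\top u$. Substituting and splitting the sum gives
\begin{align*}
    \norm{J(\S(v))u}_1 \le \sum_i w_i u_i + (w^\top u)\sum_i w_i = w^\top u + w^\top u = 2\, w^\top u = 2\,\S(v)^\top u,
\end{align*}
where I used $\sum_i w_i = 1$. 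This is exactly the claimed bound.

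There is no real obstacle here: the only two facts needed are the explicit form of the softmax Jacobian $J$ (already recorded in the paper, $\nabla_u \S(u) = J(\S(u))$, with $J(v) = \diag(v) - vv^\top$) and the elementary inequality $\abs{a-b}\le a+b$ for $a,b\ge 0$. The one point to be careful about is that the hypothesis ``$u$ has nonnegative entries'' is used twice — once to drop the absolute value on $w_i u_i$ and once to ensure $w^\top u \ge 0$ so that the triangle inequality bound is tight in the right direction; without it the constant $2$ would not suffice.
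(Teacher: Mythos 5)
Your proof is correct and is essentially identical to the paper's: both expand the $i$th coordinate of $J(\S(v))u$ as $\S(v)_i(u_i - \S(v)^\top u)$, apply the triangle inequality using nonnegativity, and sum using $\sum_i \S(v)_i = 1$. No differences worth noting.
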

\begin{proof}
    \begin{align*}
        \|J(\S(v))u\|_1 = \sum_i \abs{\S(v)_i(u_i - \S(v)^\top u)} \le \sum_i \S(v)_i u_i + \S(v)^\top u\cdot \sum_i \S(v)_i = 2\S(v)^\top u.
    \end{align*}
\end{proof}

\begin{lemma}\label{lem:grad_J}
    Recall that $J(s) = \text{diag}(s) - ss^\top $. Then $\nabla_v J(\S(v)) \in \mathbb{R}^{d \times d \times d}$ satisfies
    \begin{align*}
        \|\nabla J(\S(v))(u, w)\|_1 \le 2 \S(v)^\top (u \odot w) + 4 \S(v)^\top u \S(v)^\top w.
    \end{align*}
    for nonnegative vectors $u, w$.
\end{lemma}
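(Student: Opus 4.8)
The plan is to compute $\nabla J(\S(v))(u,w)$ in closed form and then bound it term by term using Lemma~\ref{lem:J_l1}. Throughout, interpret $\nabla J(\S(v))(u,w) \in \R^d$ as the contraction of the $d\times d\times d$ gradient tensor against $u$ in the ``left'' matrix slot and $w$ in the ``right'' matrix slot (this is the convention used in the proof of Lemma~\ref{lem:compare_to_shift}). Write $s := \S(v)$ and recall the softmax identity $\partial s_i / \partial v_k = s_i(\1(i=k) - s_k) = J(s)_{ik}$, together with the fact that $J(s) = \diag(s) - ss^\top$ is symmetric. Differentiating $J(\S(v))_{ij} = s_i\1(i=j) - s_i s_j$ entrywise gives
\[
\frac{\partial J(\S(v))_{ij}}{\partial v_k} = J(s)_{ik}\1(i=j) - J(s)_{ik}s_j - s_i J(s)_{jk},
\]
and contracting against $u_i$ and $w_j$ and using symmetry of $J(s)$ yields the clean formula
\[
\nabla J(\S(v))(u,w) = J(s)(u\odot w) - \bigl(s^\top w\bigr)\, J(s)u - \bigl(s^\top u\bigr)\, J(s)w.
\]

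Next I would take the $\ell_1$ norm and apply the triangle inequality, so that $\|\nabla J(\S(v))(u,w)\|_1 \le \|J(s)(u\odot w)\|_1 + (s^\top w)\|J(s)u\|_1 + (s^\top u)\|J(s)w\|_1$, where I have used that $s^\top u, s^\top w \ge 0$ since $s=\S(v)$ has nonnegative entries and $u,w$ are nonnegative. Because $u,w \ge 0$ entrywise, the vectors $u\odot w$, $u$, and $w$ are all nonnegative, so Lemma~\ref{lem:J_l1} applies to each of the three terms, giving $\|J(s)(u\odot w)\|_1 \le 2 s^\top(u\odot w)$, $\|J(s)u\|_1 \le 2 s^\top u$, and $\|J(s)w\|_1 \le 2 s^\top w$. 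Substituting these bounds produces
\[
\|\nabla J(\S(v))(u,w)\|_1 \le 2\, s^\top(u\odot w) + 2\bigl(s^\top w\bigr)\bigl(s^\top u\bigr) + 2\bigl(s^\top u\bigr)\bigl(s^\top w\bigr) = 2\,\S(v)^\top(u\odot w) + 4\,\S(v)^\top u\;\S(v)^\top w,
\]
which is exactly the claimed inequality.

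The only point requiring care is the first step: setting up the index bookkeeping for the $d\times d\times d$ tensor and its contraction against $(u,w)$ consistently with the chain-rule expansion used in Lemma~\ref{lem:compare_to_shift}, and keeping track of the symmetry of $J(s)$ so that the three resulting pieces are genuinely $J(s)$ applied to nonnegative vectors. Once the closed-form expression is in hand, the rest is an immediate consequence of Lemma~\ref{lem:J_l1} and nonnegativity of the softmax, with no delicate estimates involved.
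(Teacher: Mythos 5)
Your proposal is correct and follows essentially the same route as the paper: both derive the closed form $\nabla J(\S(v))(u,w) = J(\S(v))(u\odot w) - (\S(v)^\top w)J(\S(v))u - (\S(v)^\top u)J(\S(v))w$ (the paper by differentiating the scalar bilinear form, you by entrywise differentiation of the tensor followed by contraction) and then conclude via the triangle inequality and Lemma~\ref{lem:J_l1} applied to the three nonnegative vectors $u\odot w$, $u$, $w$.
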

\begin{proof}
    See that
    \begin{align*}
        J(\S(v))(u, w) = u^\top \text{diag}(\S(v))w - \S(v)^\top u\S(v)^\top w = \S(v)^\top (u \odot w) - \S(v)^\top u\S(v)^\top w.
    \end{align*}
    Taking the gradient, and noting that $\nabla_v\S(v) = J(v)$, we get
    \begin{align*}
        \nabla J(\S(v))(u, w) = J(\S(v))(u \odot w)- \S(v)^\top w \cdot J(\S(v))u - \S(v)^\top u \cdot J(\S(v))w.
    \end{align*}
    Since $u \odot w$ is also a nonnegative vector, we get that
    \begin{align*}
        \|\nabla J(\S(v))(u, w)\|_1 \le 2 \S(v)^\top (u \odot w) + 4 \S(v)^\top u \S(v)^\top w.
    \end{align*}
\end{proof}

\subsection{Concentration}

\begin{lemma}\label{lem:compute_diffs}
For any nonzero scalars $A_1, A_2, B_1, B_2$, 
\begin{align*}
    \abs{\frac{A_1}{B_1} - \frac{A_2}{B_2}} \le \frac{1}{\abs{B_2}}\qty(\abs{\frac{A_1}{B_1}}\cdot \abs{B_1 - B_2} + \abs{A_1 - A_2}).
\end{align*}
\end{lemma}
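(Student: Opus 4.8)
The plan is to prove the inequality by the standard add-and-subtract trick, inserting the intermediate term $A_1/B_2$ so as to separate the denominator mismatch from the numerator mismatch. Concretely, I would write
\begin{align*}
\frac{A_1}{B_1} - \frac{A_2}{B_2}
= \qty(\frac{A_1}{B_1} - \frac{A_1}{B_2}) + \qty(\frac{A_1}{B_2} - \frac{A_2}{B_2})
= \frac{A_1}{B_1}\cdot\frac{B_2 - B_1}{B_2} + \frac{A_1 - A_2}{B_2},
\end{align*}
where the first summand absorbs the error caused by replacing $B_1$ with $B_2$ in the denominator (pulled out as $A_1/B_1$ times the relative change in the denominator) and the second summand is the error from changing the numerator with the denominator held fixed at $B_2$. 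Applying the triangle inequality, using $\abs{B_2 - B_1} = \abs{B_1 - B_2}$, and factoring out $1/\abs{B_2}$ then gives
\begin{align*}
\abs{\frac{A_1}{B_1} - \frac{A_2}{B_2}}
\le \abs{\frac{A_1}{B_1}}\cdot\frac{\abs{B_1 - B_2}}{\abs{B_2}} + \frac{\abs{A_1 - A_2}}{\abs{B_2}}
= \frac{1}{\abs{B_2}}\qty(\abs{\frac{A_1}{B_1}}\cdot\abs{B_1 - B_2} + \abs{A_1 - A_2}),
\end{align*}
which is exactly the claim.

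The only genuine choice in the argument is which intermediate quotient to insert; inserting $A_1/B_2$ rather than $A_2/B_1$ is what produces the asymmetric form with $\abs{B_2}$ in the denominator and $\abs{A_1/B_1}$ as the prefactor, which is the shape needed downstream (e.g.\ in the proof of \Cref{thm:stage2}, where the subscript-$2$ quantity plays the role of the ``idealized'' object one compares against). The hypothesis that all four scalars are nonzero is used only to guarantee that every quotient appearing in the chain is well defined; no sign or ordering information is needed, and no concentration or probabilistic input enters. I do not anticipate any obstacle here: the statement is a one-line algebraic identity followed by the triangle inequality, so the displays above essentially constitute the full proof.
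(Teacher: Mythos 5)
Your proposal is correct and matches the paper's proof exactly: both insert the intermediate term $A_1/B_2$, apply the triangle inequality, and factor out $1/\abs{B_2}$. Nothing further is needed.
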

\begin{proof}
    \begin{align*}
        \abs{\frac{A_1}{B_1} - \frac{A_2}{B_2}} &\le \abs{\frac{A_1}{B_1} - \frac{A_1}{B_2}} + \abs{\frac{A_1}{B_2} - \frac{A_2}{B_2}}\\
        &= \abs{A_1}\abs{\frac{1}{B_1} - \frac{1}{B_2}} + \frac{1}{\abs{B_2}}\abs{A_1 - A_2}\\
        &= \frac{\abs{A_1}\abs{B_1 - B_2}}{\abs{B_1B_2}} + \frac{\abs{A_1 - A_2}}{\abs{B_2}}\\
        &= \frac{1}{\abs{B_2}}\qty(\abs{\frac{A_1}{B_1}}\cdot \abs{B_1 - B_2} + \abs{A_1 - A_2}).
    \end{align*}
\end{proof}

For the following lemmas, let $\hat \theta = (\hat A^{(1)}, \hat A^{(2)})$ be the output of \Cref{alg:training_alg}. Define \begin{align*}
        \S\qty(A^{(1)}_*)_{ij} = \begin{cases}
                                    \mathbf{1}(j = p(i)) & p(i) \neq \emptyset \\
                                    \hat A^{(1)}_{i,j}        & p(i) = \emptyset
                                \end{cases}.
    \end{align*}
    and let $\tilde z(X; s) := \S(A_*^{(1)})Xe_s$.

\begin{lemma}\label{lem:compare_z_tilde}
    For $i \in \calR$,
    \begin{align*}
        \E_X\qty[\abs{\tilde z(X; s)_i - \hat \mu_{X_{\le i}}(s)}^2] \lesssim \min\qty(1, \frac{T^2\log^2T}{\TT \cdot i^2}).
    \end{align*}
\end{lemma}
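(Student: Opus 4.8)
The plan is to recognize $\tilde z(X;s)_i - \hat\mu_{X_{\le i}}(s)$ as a linear functional of the first $i$ token indicators whose coefficient vector measures how far the stage-1 attention row $\S(A^{(1)}(\tau_1))_i$ is from the uniform vector $\tfrac1i\mathbf 1_i$, and then to control these coefficients via \Cref{cor:end_of_stage_1} together with the trivial fact that averages of $\{0,1\}$-valued quantities lie in $[0,1]$.

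Concretely, for a root node $i \in \calR$, the formula \eqref{eq:tilde_z_formula} (equivalently the definition of $A_*^{(1)}$) gives $\tilde z(X;s)_i = \sum_{j\le i}\S(A^{(1)}(\tau_1))_{i,j}\,x_{j,s}$, while $\hat\mu_{X_{\le i}}(s) = \tfrac1i\sum_{j\le i}x_{j,s}$. Writing $w_j := \S(A^{(1)}(\tau_1))_{i,j}-\tfrac1i$, we obtain
\[
\tilde z(X;s)_i - \hat\mu_{X_{\le i}}(s) = \sum_{j\le i} w_j\, x_{j,s}.
\]
By \Cref{cor:end_of_stage_1} (the root-node case), $|w_j| \le \omega := C_{\gamma,S}\min\big(1,\tfrac{T\log T}{\sqrt{\TT}\,i^2}\big)$ for every $j\le i$, and since each $x_{j,s}\in\{0,1\}$ this yields the deterministic bound $\big|\tilde z(X;s)_i - \hat\mu_{X_{\le i}}(s)\big| \le \sum_{j\le i}|w_j| \le i\,\omega$. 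On the other hand, $\S(A^{(1)}(\tau_1))_i$ and $\tfrac1i\mathbf 1_i$ are probability vectors supported on $[i]$, so $\tilde z(X;s)_i$ and $\hat\mu_{X_{\le i}}(s)$ both lie in $[0,1]$ and their difference is at most $1$ in magnitude.

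Combining the two estimates, $\big|\tilde z(X;s)_i - \hat\mu_{X_{\le i}}(s)\big| \le \min(1, i\omega)$ pointwise in $X$, hence $\E_X\big|\tilde z(X;s)_i - \hat\mu_{X_{\le i}}(s)\big|^2 \le \min(1, i^2\omega^2) = \min\big(1,\, C_{\gamma,S}^2\min(i^2,\tfrac{T^2\log^2 T}{\TT\, i^2})\big)$. Since $i\ge 1$, the truncation at $1$ eliminates the $i^2$ branch, leaving a bound $\lesssim \min(1,\tfrac{T^2\log^2 T}{\TT\, i^2})$, which is the claim. I do not anticipate a genuine obstacle here: the statement is essentially a rescaled copy of \Cref{cor:end_of_stage_1}. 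The two points that need care are (i) that \Cref{cor:end_of_stage_1} controls each coordinate $w_j$ individually, so summing over the at most $i$ coordinates produces a factor $i$ (not $\sqrt i$), and (ii) keeping the crude $[0,1]$ bound in play throughout, so the final $\min(1,\cdot)$ remains valid for small $i$, where $i\omega$ can exceed $1$.
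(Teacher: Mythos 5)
Your proof is correct and follows essentially the same route as the paper: both identify $\tilde z(X;s)_i-\hat\mu_{X_{\le i}}(s)$ as the inner product of $\S(A^{(1)}(\tau_1))_i-\tfrac1i\mathbf 1_i$ with the $\{0,1\}$-valued indicator vector, bound it deterministically by the $\ell_1$ norm of the coefficient vector (i.e.\ $i$ times the per-coordinate bound from \Cref{cor:end_of_stage_1}), square, and truncate at $1$ using that both quantities lie in $[0,1]$. Your cautionary points (the factor $i$ rather than $\sqrt i$, and retaining the $\min(1,\cdot)$) are exactly the two places where care is needed, and you handle them correctly.
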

\begin{proof}
    By \Cref{cor:end_of_stage_1}
    \begin{align*}
        \abs{\S(A_*^{(1)})_{i,j} - \frac{1}{i}} \lesssim \frac{T\log T}{\TT^{1/2} i^2}.
    \end{align*}
    Therefore
    \begin{align*}
        \abs{\tilde z(X; s)_i - \hat \mu_{X_{\le i}}(s)} & = \abs{\qty(\S(A_*^{(1)})_i - \frac{1}{i}\mathbf{1}_i)\cdot \delta_s(X_{\le i})} \\
                                                         & \le \norm{\S(A_*^{(1)})_i - \frac{1}{i}\mathbf{1}_i}_1                           \\
                                                         & \lesssim \frac{T \log T}{\TT^{1/2} i}.
    \end{align*}
    Finally,
    \begin{align*}
        \E_X\qty[\abs{\hat \mu_{X_{\le i}}(s) - \mu_\pi(s)}^2] \lesssim \frac{\mu_\pi(s) T^2}{\TT(\lambda) i^2}.
    \end{align*}
    Altogether,
    \begin{align*}
        \E_X\qty[\abs{\tilde z(X; s)_i - \hat \mu_{X_{\le i}}(s)}^2] \lesssim \frac{T^2\log^2T}{\TT \cdot i^2},
    \end{align*}
    and the conclusion follows as $\tilde z(X; s)_i, \hat \mu_{X_{\le i}}(s) \in [0, 1]$.
\end{proof}

\begin{lemma}\label{lem:concentrate_E3}
    Define \begin{align*}
        E^{(3)}_s(X) & := \sum_i \S(\beta \tilde z(X; s))_i\tilde z(X; s)_i
    \end{align*}
    Then
    \begin{align*}
         \mathbb{E}_X\qty[\abs{E^{(3)}_s(X) - \frac{(1-r)e^\beta\mu_\pi(s)  + r e^{\beta \mu_\pi(s)}\mu_\pi(s)}{(1-r)(e^\beta - 1)\mu_\pi(s) + (1 - r) + re^{\beta \mu_\pi(s)}}}^2] &\lesssim (1 + \beta^2) \cdot \frac{\log T}{\TT^{1/2}}.
    \end{align*}
\end{lemma}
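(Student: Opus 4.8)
The plan is to view $E^{(3)}_s(X) = \sum_i \S(\beta\tilde z(X;s))_i\,\tilde z(X;s)_i$ as a fixed function of the vector $\tilde z(X;s)\in[0,1]^T$ and to compare it, in $L^2(X)$, with the deterministic quantity in the statement, which I first recognize as $g_\beta\big((1-r)\mu_\pi(s)\big)$ for the explicit rational map $g_\beta(x) := \frac{xe^\beta + r\mu_\pi(s)e^{\beta\mu_\pi(s)}}{x(e^\beta-1) + (1-r) + re^{\beta\mu_\pi(s)}}$. The crux — and the step I expect to be the main obstacle — is a temperature-robust stability estimate for this softmax average, since the naive Lipschitz bound for $\S(\beta\,\cdot\,)$ costs a factor $\beta e^{\beta}$, which is far too lossy. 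Instead, because $\min_s\mu_\pi(s)\ge\gamma/S$ and $\abs{\overline{\calR}}\ge\gamma T$, on the event $\mathcal{A}$ that the count $a:=\#\{i\in\overline{\calR} : x_{p(i),s}=1\}$ exceeds $\tfrac{\gamma^2}{2S}T$ — which holds with probability $\ge 1-O_{\gamma,S}(\TT^{-1})$ by Chebyshev, using $\E a = (1-r)\mu_\pi(s)T$ and $\Var(a)\lesssim_{\gamma,S} T^2/\TT$ from the covariance bound of the Concentration section — the normalizer obeys $\sum_j e^{\beta\tilde z(X;s)_j}\ge a\,e^{\beta}\gtrsim_{\gamma,S}T e^{\beta}$, so every softmax weight satisfies $\S(\beta\tilde z(X;s))_i\le e^{\beta(\tilde z_i-1)}/a\lesssim_{\gamma,S}1/T$. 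Combined with the identity $\partial_{\tilde z_k}E^{(3)}_s = \S(\beta\tilde z)_k\big(1+\beta(\tilde z_k - E^{(3)}_s)\big)$, this shows that on $\mathcal{A}$ the map $\tilde z\mapsto E^{(3)}_s$ is Lipschitz in $\ell^1$ with constant $O_{\gamma,S}\big((1+\beta)/T\big)$, with \emph{no} exponential-in-$\beta$ factor.

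Given this stability estimate, the remaining work is two substitutions. First, on $\mathcal{A}$ I replace $\tilde z(X;s)$ by $\tilde z'$, where $\tilde z'_i = x_{p(i),s}$ for $i\in\overline{\calR}$ (which already agrees with $\tilde z(X;s)$ by the sparsity pattern of $\S(A_*^{(1)})$) and $\tilde z'_i=\mu_\pi(s)$ for $i\in\calR$; the error is at most $O_{\gamma,S}\big((1+\beta)/T\big)\sum_{i\in\calR}\abs{\tilde z(X;s)_i-\mu_\pi(s)}$, and by Cauchy--Schwarz its second moment is $\lesssim_{\gamma,S}(1+\beta)^2\cdot\tfrac1T\sum_{i\in\calR}\E_X\abs{\tilde z(X;s)_i-\mu_\pi(s)}^2$. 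Bounding each summand by $2\E_X\abs{\tilde z(X;s)_i - \hat\mu_{X_{\le i}}(s)}^2 + 2\E_X\abs{\hat\mu_{X_{\le i}}(s)-\mu_\pi(s)}^2 \lesssim_{\gamma,S}\min\!\big(1,\, T^2\log^2 T/(\TT i^2)\big)$ via \Cref{lem:compare_z_tilde} and \Cref{lem:muhat_variance}, and using the elementary estimate $\tfrac1T\sum_{i\le T}\min(1,\, T^2\log^2 T/(\TT i^2))\lesssim \log T/\sqrt{\TT}$, this contributes $\lesssim_{\gamma,S}(1+\beta^2)\,\log T/\sqrt{\TT}$. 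Second, $E^{(3)}_s$ evaluated at $\tilde z'$ depends on $X$ only through $a$ and equals exactly $g_\beta(a/T)$; since $g_\beta$ inherits the same $O_{\gamma,S}(1+\beta)$ Lipschitz constant on the range $x\gtrsim_{\gamma,S}1$ (equivalently, the $e^{2\beta}$ in $g_\beta'$ is cancelled by the $e^{2\beta}$ in its denominator once $x$ is bounded below), and $\E_X[(a/T-(1-r)\mu_\pi(s))^2]\lesssim_{\gamma,S}\TT^{-1}$ by \Cref{lem:muhat_parent_variance}, this contributes $\lesssim_{\gamma,S}(1+\beta)^2\TT^{-1}$.

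Finally I assemble: on $\mathcal{A}$ the triangle inequality over the two substitutions together with $g_\beta(a/T)=E^{(3)}_s(\tilde z')$ gives, in expectation, $\E_X[\abs{E^{(3)}_s(X)-g_\beta((1-r)\mu_\pi(s))}^2\mathbf{1}_\mathcal{A}]\lesssim_{\gamma,S}(1+\beta^2)\log T/\sqrt{\TT} + (1+\beta)^2\TT^{-1}$, while on $\mathcal{A}^c$ I use the crude bound $\abs{E^{(3)}_s(X)},\abs{g_\beta(\cdot)}\le 1$ with $\P(\mathcal{A}^c)\lesssim_{\gamma,S}\TT^{-1}$. Adding the two contributions and noting $\TT^{-1}\le\log T/\sqrt{\TT}$ yields the claimed $\E_X\abs{E^{(3)}_s(X)-\text{target}}^2\lesssim (1+\beta^2)\log T/\sqrt{\TT}$. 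The only genuinely delicate point is the softmax-weight bound in the first paragraph; everything downstream is a mechanical combination of the variance estimates already proved in the Concentration section.
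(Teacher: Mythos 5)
Your proof is correct, but it takes a genuinely different route from the paper's. The paper writes $E^{(3)}_s(X)$ explicitly as the ratio
$\big(e^\beta\sum_{i\in\overline{\calR}}x_{p(i),s}+\sum_{i\in\calR}e^{\beta\tilde z_i}\tilde z_i\big)\big/\big((e^\beta-1)\sum_{i\in\overline{\calR}}x_{p(i),s}+|\overline{\calR}|+\sum_{i\in\calR}e^{\beta\tilde z_i}\big)$,
introduces error terms $\calE_1,\calE_2,\calE_3$ for the deviations of numerator and denominator from their idealized values, and applies \Cref{lem:compute_diffs}, $|A_1/B_1-A_2/B_2|\le\tfrac{1}{|B_2|}\big(|A_1/B_1|\,|B_1-B_2|+|A_1-A_2|\big)$. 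The crucial feature of that inequality is that only the \emph{deterministic} denominator $B_2\ge(1-r)e^\beta\gamma$ needs a lower bound, while the random ratio $A_1/B_1=E^{(3)}_s(X)$ lies in $[0,1]$ trivially; the $e^{2\beta}$ factors in $\E[\calE_2^2],\E[\calE_3^2]$ are then cancelled by the $e^{-\beta}$ prefactors inherited from $B_2$, so no good event is ever needed. Your temperature-robust Lipschitz estimate on the event $\mathcal{A}$ where the normalizer is $\gtrsim_{\gamma,S}Te^\beta$ is the analogue of that cancellation in a different language, and it is valid: since your interpolation only moves the root-node coordinates, the $a$ coordinates with $\tilde z_i=1$ persist along the segment, so the $O_{\gamma,S}(1/T)$ bound on every softmax weight (hence the $O((1+\beta)/T)$ gradient bound) holds at every intermediate point, which is the one place your argument could have silently failed. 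Your route costs an extra Chebyshev tail bound for $\P(\mathcal{A}^c)\lesssim\TT^{-1}$ and the verification that the target equals $g_\beta((1-r)\mu_\pi(s))$ with $E^{(3)}_s(\tilde z')=g_\beta(a/T)$, but it buys a modular, reusable stability statement for softmax averages and makes transparent why no $e^{\Theta(\beta)}$ loss occurs. Both routes ultimately reduce to the same concentration inputs (\Cref{lem:muhat_parent_variance}, \Cref{lem:compare_z_tilde}, \Cref{lem:muhat_variance}) and the same elementary sum $\tfrac1T\sum_{i}\min\big(1,\,T^2\log^2T/(\TT\,i^2)\big)\lesssim\log T/\sqrt{\TT}$, and both land on the claimed $(1+\beta^2)\log T/\TT^{1/2}$ rate.
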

\begin{proof}
    Plugging in the formula for $\tilde z(X; s)$ \eqref{eq:tilde_z_formula}, we get that
    \begin{align*}
        E^{(3)}_s(X) &= \frac{\sum_i \exp\qty(\beta \tilde z(X; s)_i)\tilde z(X; s)_i}{\sum_i \exp\qty(\beta \tilde z(X; s)_i)}\\
        &= \frac{e^\beta \sum_{i \in \overline{\mathcal{R}}} x_{p(i), s} + \sum_{i \in \calR}e^{\beta \tilde z(X; s)_i}\tilde z(X; s)_i}{(e^\beta - 1)\sum_{i \in \overline{\mathcal{R}}} x_{p(i), s} + \abs{ \overline{\mathcal{R}}} + \sum_{i \in \mathcal{R}}e^{\beta  \tilde z(X; s)_i}}
    \end{align*}
    We define the error terms
    \begin{align*}
        \mathcal{E}_1(X) & := \frac{1}{T}\sum_{i \in \overline{\mathcal{R}}} x_{p(i), s} - (1 - r)\mu_\pi(s)                               \\
        \mathcal{E}_2(X) & := \frac{1}{T}\sum_{i \in \mathcal{R}}e^{\beta \tilde z(X; s)_i}\tilde z(X; s)_i - r e^{\beta \mu_\pi(s)}\mu_\pi(s) \\
        \mathcal{E}_3(X) & := \frac{1}{T}\sum_{i \in \mathcal{R}}e^{\beta \tilde z(X; s)_i} - r e^{\beta \mu_\pi(s)}.
    \end{align*}
    Then
    \begin{align*}
        E^{(3)}_s(X) = \frac{(1 - r)e^\beta\mu_\pi(s) + r e^{\beta\mu_\pi(s)}\mu_\pi(s) + e^\beta\calE_1(X) + \calE_2(X)}{(1 - r)(e^\beta - 1)\mu_\pi(s) + (1 - r) + r e^{\beta \mu_\pi(s)} + (e^\beta - 1)\calE_1(X) + \calE_3(X)}
    \end{align*}
    Thus applying \Cref{lem:compute_diffs}, we get that
    \begin{align*}
         & \abs{E^{(3)}_s(X) - \frac{(1-r)e^\beta\mu_\pi(s)  + r e^{\beta \mu_\pi(s)}\mu_\pi(s)}{(1-r)(e^\beta - 1)\mu_\pi(s) + (1 - r) + re^{\beta \mu_\pi(s)}}} \\
         &\quad \le \abs{E^{(3)}_s(X)}\cdot\frac{\qty(\abs{e^\beta \calE_1(X)} + \calE_3(X)) + \qty(\abs{e^\beta \calE_1(X)} + \calE_2(X))}{(1-r)(e^\beta - 1)\mu_\pi(s) + (1 - r) + re^{\beta \mu_\pi(s)}}\\
         &\quad \lesssim (1 - r)^{-1}\cdot \qty(\abs{\mathcal{E}_1(X)} + e^{-\beta}\abs{\mathcal{E}_2(X)} + e^{-\beta}\abs{\mathcal{E}_3(X)}),
    \end{align*}
    since $\abs{E^{(3)}_s(X)} \le 1$ and $(1-r)(e^\beta - 1)\mu_\pi(s) + (1 - r) + re^{\beta \mu_\pi(s)} \ge (1 - r)e^\beta \gamma$.

    First, by \Cref{lem:muhat_parent_variance}, we have
    \begin{align*}
        \mathbb{E}\qty[\calE_1(X)^2] \lesssim \frac{1}{\TT}.
    \end{align*}

    Next, we bound $\calE_2$:
    \begin{align*}
        \abs{\calE_2(X)} \le \frac{1}{T}\sum_{i \in \calR} \abs{e^{\beta \tilde z(X; s)}\tilde z(X; s) - e^{\beta \mu_\pi(s)}\mu_\pi(s)} \le \frac{1}{T}\sum_{i \in \calR}(1 + \beta) e^{\beta}\abs{\tilde z(X; s)_i - \mu_\pi(s)},
    \end{align*}
    and thus by \Cref{lem:compare_z_tilde}
    \begin{align*}
        \mathbb{E}_X\qty[\calE_2(X)^2] & \le \frac{(1 + \beta)^2e^{2\beta}}{T}\sum_{i \in \calR}\mathbb{E}\abs{\tilde z(X; s)_i - \mu_\pi(s)}^2                                         \\
                                       & \lesssim \frac{(1 + \beta)^2e^{2\beta}}{T}\sum_i\min\qty(1, \frac{T^2\log^2T}{\TT \cdot i^2})                                              \\
                                       & =  \frac{(1 + \beta)^2e^{2\beta}}{T}\qty(\frac{T\log T}{\TT^{1/2}} + \sum_{i > \frac{T\log T}{\TT^{1/2}}}\frac{T^2\log^2T}{\TT \cdot i^2}) \\
                                       & \lesssim \frac{(1 + \beta)^2e^{2\beta} \log T}{\TT^{1/2}}.
    \end{align*}

    Next, we bound $\calE_3$.
    \begin{align*}
        \abs{\calE_3(X)} \le \frac{1}{T}\sum_{i \in \calR}\abs{e^{\beta\tilde z(X; s)_i} - e^{\beta \mu_\pi(s)}} \le \frac{1}{T}\sum_{i \in \calR}\beta e^{\beta}\abs{\tilde z(X; s)_i - \mu_\pi(s)},
    \end{align*}
    so by an identical calculation to as for $\calE_2$,
    \begin{align*}
        \mathbb{E}_X\qty[\calE_3(X)^2] & \lesssim \frac{\beta^2 e^{2\beta} \log T}{\TT^{1/2}}.
    \end{align*}
    Altogether,
    \begin{align*}
         & \mathbb{E}\qty[\abs{E^{(3)}_s(X) - \frac{(1-r)e^\beta\mu_\pi(s)  + r e^{\beta \mu_\pi(s)}\mu_\pi(s)}{(1-r)(e^\beta - 1)\mu_\pi(s) + (1 - r) + re^{\beta \mu_\pi(s)}}}^2] \\
         &\lesssim (1 - r)^{-2}\qty(\mathbb{E}\qty[\calE_1(X)^2] + e^{-2\beta}\mathbb{E}\qty[\calE_2(X)^2] + e^{-2\beta}\mathbb{E}\qty[\calE_3(X)^2])\\
         &\lesssim (1 + \beta^2) \cdot \frac{\log T}{\TT^{1/2}},
    \end{align*}
    where the last inequality also relies on \Cref{assume:r}.
\end{proof}

\begin{lemma}\label{lem:concentrate_E1_E2}
Define
    \begin{align*}
        E^{(1)}_{s, s'}(X) & := \sum_{i}x_{i, s'}\S(\beta \tilde z(X; s))_i\tilde z(X; s)_i \\
        E^{(2)}_{s, s'}(X) & := \sum_i x_{i, s'}\S(\beta \tilde z(X; s))_i,
    \end{align*}
Then
\begin{align*}
&\E\qty[\abs{\frac{E^{(1)}_{s, s'}(X)}{E^{(2)}_{s, s'}(X) + \epsilon} - \frac{(1 - r)e^\beta\mu_\pi(s)\pi(s' \mid s) + re^{\beta \mu_\pi(s)}\mu_\pi(s')\mu_\pi(s)}{(1-r)(e^\beta - 1)\mu_\pi(s)\pi(s' \mid s) + (1-r)\mu_\pi(s') + re^{\beta \mu_\pi(s)}\mu_\pi(s')}}^2]\\
  &\quad \lesssim  (1 + \beta^2) \cdot \frac{\log T}{\TT^{1/2}}.
\end{align*}
\end{lemma}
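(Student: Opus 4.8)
The plan is to follow the proof of \Cref{lem:concentrate_E3} essentially verbatim, the only new ingredient being the $\epsilon$-perturbation of the denominator. Using the formula \eqref{eq:tilde_z_formula} for $\tilde z(X;s)$, I would split every sum over $i$ into the non-root part $i\in\overline{\calR}$ and the root part $i\in\calR$. On non-roots $\tilde z(X;s)_i = x_{p(i),s}\in\{0,1\}$, so $e^{\beta\tilde z(X;s)_i} = 1+(e^\beta-1)x_{p(i),s}$ and $e^{\beta\tilde z(X;s)_i}\tilde z(X;s)_i = e^\beta x_{p(i),s}$; on roots one replaces $\tilde z(X;s)_i$ by $\mu_\pi(s)$. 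Writing $Z(X):=\sum_j e^{\beta\tilde z(X;s)_j}$ for the softmax normalizer,
\begin{align*}
    \frac{E^{(1)}_{s,s'}(X)}{E^{(2)}_{s,s'}(X)+\epsilon} \;=\; \frac{\frac1T\sum_i x_{i,s'}e^{\beta\tilde z(X;s)_i}\tilde z(X;s)_i}{\frac1T\sum_i x_{i,s'}e^{\beta\tilde z(X;s)_i} + \epsilon\cdot\frac1T Z(X)} \;=:\; \frac{A_1(X)}{B_1(X)},
\end{align*}
where $0\le A_1(X)\le B_1(X)$ (since $\tilde z(X;s)_i\in[0,1]$) and $B_1(X)\ge\epsilon>0$. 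The target is $N^*/D^*$ with $N^*,D^*$ the numerator/denominator in the statement; I set $Z^*:=(1-r)(e^\beta-1)\mu_\pi(s)+(1-r)+re^{\beta\mu_\pi(s)}$ and $B_2:=D^*+\epsilon Z^*$, which are the $T\to\infty$ limits of $A_1(X)$ and $B_1(X)$: namely $A_1(X)\to N^*$ and $B_1(X)\to B_2$.

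The first step is to establish three $L^2(\P_X)$ bounds: $\E_X[(A_1(X)-N^*)^2]$, $\E_X[(\frac1T\sum_i x_{i,s'}e^{\beta\tilde z(X;s)_i}-D^*)^2]$, and $\E_X[(\frac1T Z(X)-Z^*)^2]$ are each $\lesssim (1+\beta^2)e^{2\beta}\log T/\TT^{1/2}$. After the linearization above, the non-root contributions reduce to the concentration of $\frac1T\sum_{i\in\overline{\calR}}x_{p(i),s}x_{i,s'}$ about $(1-r)\mu_\pi(s)\pi(s'\mid s)$, which is \Cref{lem:chat_variance}, and of $\frac1T\sum_{i\in\overline{\calR}}x_{i,s'}$ about $(1-r)\mu_\pi(s')$, which is \Cref{lem:muhat_variance} (applicable since $T\in\calR$, hence $\overline{\calR}\subset[T-1]$). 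The root contributions are handled by replacing $\tilde z(X;s)_i$ with $\mu_\pi(s)$: the maps $z\mapsto e^{\beta z}$ and $z\mapsto z e^{\beta z}$ are respectively $\beta e^\beta$- and $(1+\beta)e^\beta$-Lipschitz on $[0,1]$, so this replacement costs at most $(1+\beta)e^\beta\cdot\frac1T\sum_{i\in\calR}\abs{\tilde z(X;s)_i-\mu_\pi(s)}$, whose square has expectation $\lesssim (1+\beta)^2 e^{2\beta}\log T/\TT^{1/2}$ by \Cref{lem:compare_z_tilde} (which in turn invokes \Cref{cor:end_of_stage_1}) together with \Cref{lem:muhat_variance}; the residual $\frac1T\sum_{i\in\calR}x_{i,s'}$ concentrates about $r\mu_\pi(s')$ by \Cref{lem:muhat_variance} applied to $\calR\setminus\{T\}$, up to an $O(1/T)$ correction for the uniformly resampled token $s_T$. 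The bound on $\E_X[(\frac1T Z(X)-Z^*)^2]$ is literally the $\calE_1,\calE_3$ computation from the proof of \Cref{lem:concentrate_E3}.

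The second step combines these via \Cref{lem:compute_diffs} with $A_1=A_1(X)$, $B_1=B_1(X)$, $A_2=N^*$, $B_2=D^*+\epsilon Z^*$: since $\abs{A_1(X)/B_1(X)}\le1$ and $\abs{B_2}\ge D^*$,
\begin{align*}
    \abs{\frac{E^{(1)}_{s,s'}(X)}{E^{(2)}_{s,s'}(X)+\epsilon} - \frac{N^*}{B_2}} \;\le\; \frac{1}{D^*}\qty(\abs{B_1(X)-B_2} + \abs{A_1(X)-N^*}).
\end{align*}
The crucial point is the lower bound $D^* \ge (1-r)(e^\beta-1)\mu_\pi(s)\pi(s'\mid s) + (1-r)\mu_\pi(s') \gtrsim (1-r)e^\beta\cdot\mathrm{poly}(\gamma,S^{-1})$, valid for every $\beta\ge0$; dividing the step-one bounds (which carry the factor $e^{2\beta}$) by $(D^*)^2$ then leaves $(1-r)^{-2}(1+\beta^2)\log T/\TT^{1/2}$, so the $e^\beta$ growth is exactly cancelled, and $(1-r)^{-1}\le\gamma^{-1}$ by the assumption $r\le 1-\gamma$. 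Using $\abs{B_1(X)-B_2}\le\abs{\frac1T\sum_i x_{i,s'}e^{\beta\tilde z(X;s)_i}-D^*} + \epsilon\abs{\frac1T Z(X)-Z^*}$ with $\epsilon=\TT^{-1/2}\le1$, and separately $\abs{N^*/B_2 - N^*/D^*} = N^*\epsilon Z^*/(D^*B_2) \le \epsilon Z^*/D^* \lesssim \epsilon$ (here $N^*\le D^*$ because $\mu_\pi(s)\pi(s'\mid s)\le\mu_\pi(s')$, and $Z^*/D^*\lesssim (1-r)^{-1}$), one obtains the claimed bound after squaring and absorbing $\epsilon^2=\TT^{-1}\lesssim\log T/\TT^{1/2}$.

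I expect the main obstacle to be precisely this bookkeeping of the $\beta$-dependence in step two: one must use the sharp lower bound $D^*\gtrsim(1-r)e^\beta\,\mathrm{poly}(\gamma,S^{-1})$ (not the crude $D^*\gtrsim(1-r)\,\mathrm{poly}(\gamma,S^{-1})$) so that the $e^{2\beta}$ inflation inherent in the root-node estimates is annihilated by $1/(D^*)^2$, and one must simultaneously verify that the $\epsilon$-perturbation of the denominator — which is absent from \Cref{lem:concentrate_E3} — only contributes at order $\epsilon$, which is below the target rate. Everything else is a routine repetition of the variance computations already carried out for $E^{(3)}$.
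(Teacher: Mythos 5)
Your proposal is correct and follows essentially the same route as the paper's proof: the same root/non-root decomposition of the sums via \eqref{eq:tilde_z_formula}, the same error terms controlled by \Cref{lem:chat_variance}, \Cref{lem:muhat_variance}, and the Lipschitz-plus-\Cref{lem:compare_z_tilde} argument for the root contributions, and the same application of \Cref{lem:compute_diffs} with the lower bound $D^*\gtrsim(1-r)e^\beta\gamma^2$ cancelling the $e^{2\beta}$ inflation. The only (cosmetic) difference is that you compare against $D^*+\epsilon Z^*$ and then peel off the $\epsilon$-shift in a second step, whereas the paper absorbs the $\frac{\epsilon}{T}\sum_i e^{\beta\tilde z(X;s)_i}$ term directly as an $e^\beta\epsilon$ error inside the single application of \Cref{lem:compute_diffs}; both yield the same $O(\epsilon)$ contribution.
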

\begin{proof}
    Plugging in the formula for $\tilde z(X; s)$ \eqref{eq:tilde_z_formula}, we have that
    \begin{align*}
        \frac{E^{(1)}_{s, s'}(X)}{E^{(2)}_{s, s'}(X) + \epsilon} = \frac{e^\beta \sum_{i \in \overline{\mathcal{R}}} x_{i, s'}x_{p(i), s} + \sum_{i \in \calR}e^{\beta \tilde z(X; s)_i}\tilde z(X; s)_ix_{i, s'}}{(e^\beta - 1)\sum_{i \in \overline{\mathcal{R}}} x_{i, s'}x_{p(i), s} + \sum_{i \in \overline{\mathcal{R}}} x_{i, s'} + \sum_{i \in \mathcal{R}}e^{\beta  \tilde z(X; s)_i}x_{i, s'} + \epsilon\sum_{i}e^{\beta \tilde z(X; s)_i}}
    \end{align*}    
    We define the error terms
    \begin{align*}
        \mathcal{E}_4(X) & := \frac{1}{T}\sum_{i \in \overline{\mathcal{R}}} x_{p(i), s}x_{i, s'} - (1 - r)\mu_\pi(s)\pi(s' \mid s)                            \\
        \mathcal{E}_5(X) & := \frac{1}{T}\sum_{i \in \overline{\mathcal{R}}} x_{i, s'} - (1 - r)\mu_\pi(s')                                                    \\
        \mathcal{E}_6(X) & := \frac{1}{T}\sum_{i \in \mathcal{R}}\qty(e^{\beta \tilde z(X; s)_i}\tilde z(X; s)_ix_{i, s'} - e^{\beta \mu_\pi(s)}\mu_\pi(s)\mu_\pi(s')) \\
        \mathcal{E}_7(X) & = \frac{1}{T}\sum_{i \in \mathcal{R}}\qty(e^{\beta \tilde z(X; s)_i}x_{i, s'} - e^{\beta \mu_\pi(s)}\mu_\pi(s')).                        \\
    \end{align*}
    Then
    \begin{align*}
        &\frac{E^{(1)}_{s, s'}(X)}{E^{(2)}_{s, s'}(X) + \epsilon} = \\
        &\frac{(1 - r)e^\beta \mu_\pi(s)\pi(s' \mid s) + r e^{\beta \mu_\pi(s)}\mu_\pi(s)\mu_\pi(s') + e^\beta \calE_4(X) + \calE_6(X)}{(1 - r)\qty[(e^\beta - 1)\mu_\pi(s)\pi(s' \mid s) +\mu_\pi(s')] + re^{\beta \mu_\pi(s)}\mu_\pi(s') + (e^\beta - 1)\calE_4(X) + \calE_5(X) + \calE_7(X) + \frac{\epsilon}{T}\sum_{i}e^{\beta \tilde z(X; s)_i}}.
    \end{align*}
    Therefore by \Cref{lem:compute_diffs},
    \begin{align*}
        &\abs{\frac{E^{(1)}_{s, s'}(X)}{E^{(2)}_{s, s'}(X) + \epsilon} - \frac{(1 - r)e^\beta\mu_\pi(s)\pi(s' \mid s) + re^{\beta \mu_\pi(s)}\mu_\pi(s')\mu_\pi(s)}{(1-r)(e^\beta - 1)\mu_\pi(s)\pi(s' \mid s) + (1-r)\mu_\pi(s') + re^{\beta \mu_\pi(s)}\mu_\pi(s')}}\\
        &\le \abs{\frac{E^{(1)}_{s, s'}(X)}{E^{(2)}_{s, s'}(X) + \epsilon}}\cdot\frac{ \qty(e^\beta\abs{\calE_4(X)} + \abs{\calE_5(X)} + \abs{\calE_7(X)} + e^\beta\epsilon) + e^\beta\abs{\calE_4(x)} + \abs{\calE_6(X)}}{(1-r)(e^\beta - 1)\mu_\pi(s)\pi(s' \mid s) + (1-r)\mu_\pi(s') + re^{\beta \mu_\pi(s)}\mu_\pi(s')}\\
        &\lesssim \abs{\calE_4(X)} + e^{-\beta}\abs{\calE_5(X)} + e^{-\beta}\abs{\calE_6(X)} + e^{-\beta}\abs{\calE_7(X)} + \epsilon,
    \end{align*}
    where the last step uses $(1-r)(e^\beta - 1)\mu_\pi(s)\pi(s' \mid s) + (1-r)\mu_\pi(s') + re^{\beta \mu_\pi(s)}\mu_\pi(s') \ge (1 - r)e^\beta \gamma^2$ along with \Cref{assume:r} and the convention that $\lesssim$ subsumes $\gamma^{-1}$ terms, and also that $\abs{\frac{E^{(1)}_{s, s'}(X)}{E^{(2)}_{s, s'}(X) + \epsilon}} \le 1$.

We can use \Cref{lem:chat_variance} to bound $\calE_4$:
\begin{align*}
    \E[\calE_4(X)^2] &= \frac{\abs{\overline \calR}^2}{T^2}\E[(\hat c_{X_{\overline \calR}}(s, s') - \mu_\pi(s)\pi(s' \mid s)^2]\\
    & \lesssim \frac{\abs{\overline \calR}^2}{T^2} \cdot \frac{\mu_\pi(s')T^2}{\TT(\lambda) \abs{\overline \calR}^2}\\
    &\lesssim 
    \frac{1}{\TT(\lambda)}. 
\end{align*}
Next, we use \Cref{lem:muhat_variance} to bound $\calE_5$:
\begin{align*}
    \E[\calE_5(X)^2] &= \frac{\abs{\overline \calR}^2}{T^2}\E[(\hat \mu_{X_{\overline \calR}}(s') - \mu_\pi(s'))^2]\\
    &\lesssim \frac{\abs{\overline \calR}^2}{T^2} \cdot \frac{\mu_\pi(s')T^2}{\TT(\lambda) \abs{\overline \calR}^2}\\
    &\lesssim 
    \frac{1}{\TT(\lambda)}.
\end{align*}

Next, we bound $\calE_6$:
\begin{align*}
    \abs{\calE_6(X)} &\le \frac{1}{T}\sum_{i \in \calR}\abs{x_{i, s'}\qty(e^{\beta \tilde z(X; s)}\tilde z(X; s) - e^{\beta \mu_\pi(s)}\mu_\pi(s))} + \frac{1}{T}\abs{e^{\beta\mu_\pi(s)}\mu_\pi(s)\sum_{i \in \calR}\qty(x_{i, s'} - \mu_\pi(s'))}\\
    &\le \frac{1}{T}\sum_{i \in \calR}(1 + \beta)e^\beta \abs{\tilde z(X; s) - \mu_\pi(s)} + \frac{1}{T}e^\beta \abs{\sum_{i \in \calR}\qty(x_{i, s'} - \mu_\pi(s'))}.
\end{align*}

The first term can be bounded equivalently as to was done for $\calE_2$, and thus
\begin{align*}
    \E\qty[\qty(\frac{1}{T}\sum_{i \in \calR}(1 + \beta)e^\beta \abs{\tilde z(X; s) - \mu_\pi(s)})^2] \lesssim \frac{(1 + \beta)^2e^{2\beta} \log T}{\TT^{1/2}}.
\end{align*}
In the second term, since $x_{i,s'} - \mu_\pi(s')$ are independent and mean 0 for all $i \neq T$,
\begin{align*}
    \E\qty[\qty(\frac{1}{T}e^\beta \abs{\sum_{i \in \calR}\qty(x_{i, s'} - \mu_\pi(s'))})^2] &= \frac{e^{2\beta}}{T^2} \sum_{i \in \calR}\E\qty[\qty(x_{i, s'} - \mu_\pi(s'))^2]\\
    &\lesssim \frac{e^{2\beta}}{T}.
\end{align*}
Altogether
\begin{align*}
    \E[\abs{\calE_6(X)}^2] \lesssim \frac{(1 + \beta)^2e^{2\beta} \log T}{\TT^{1/2}}.
\end{align*}

Finally, we bound $\calE_7$:
\begin{align*}
    \abs{\calE_7(X)} &\le \frac{1}{T}\sum_{i \in \calR}\abs{x_{i, s'}\qty(e^{\beta \tilde z(X; s)} - e^{\beta \mu_\pi(s)})} + \frac{1}{T}\abs{e^{\beta\mu_\pi(s)}\sum_{i \in \calR}\qty(x_{i, s'} - \mu_\pi(s'))}\\
    &\le \frac{1}{T}\sum_{i \in \calR}\beta e^\beta \abs{\tilde z(X; s) - \mu_\pi(s)} + \frac{1}{T}e^\beta \abs{\sum_{i \in \calR}\qty(x_{i, s'} - \mu_\pi(s'))}.
\end{align*}
Thus via an identical calculation as $\calE_6$,
\begin{align*}
    \E[\abs{\calE_7(X)}^2] \lesssim \frac{(1 + \beta)^2e^{2\beta} \log T}{\TT^{1/2}}.
\end{align*}
Altogether,
\begin{align*}
&\E\qty[\abs{\frac{E^{(1)}_{s, s'}(X)}{E^{(2)}_{s, s'}(X) + \epsilon} - \frac{(1 - r)e^\beta\mu_\pi(s)\pi(s' \mid s) + re^{\beta \mu_\pi(s)}\mu_\pi(s')\mu_\pi(s)}{(1-r)(e^\beta - 1)\mu_\pi(s)\pi(s' \mid s) + (1-r)\mu_\pi(s') + re^{\beta \mu_\pi(s)}\mu_\pi(s')}}^2]\\
  &\lesssim  (1 + \beta^2) \cdot \frac{\log T}{\TT^{1/2}}.
\end{align*}
\end{proof}

\begin{lemma}\label{lem:f_to_pi}
    Let $\hat \theta = \qty(\hat A^{(1)}, \hat A^{(2)})$ be the output of \Cref{alg:training_alg}, where $\hat A^{(2)} = (\beta_0 + \beta(\tau_1 + \tau_2))I_S - \frac{\beta(\tau_1 + \tau_2)}{S}1_S1_S^\top $. Then
    \begin{align*}
        \E_X\qty[\abs{f_{\htheta}(X; s)_{s'} - \pi(s' \mid s)}^2] &\lesssim (1 + {\beta^*}^2)\cdot \frac{\log T}{\TT^{1/2}} + e^{-\beta^*\gamma}.
    \end{align*}
    and
    \begin{align*}
        \mathbb{P}\qty[f_{\htheta}(X; s)_{s'} \le \frac{\gamma^3}{4S^2}] \lesssim \frac{1}{\TT}.
    \end{align*}
\end{lemma}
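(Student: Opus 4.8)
The plan is to prove the two bounds separately, in both cases first passing from $f_{\htheta}(X;s)_{s'}$ to the idealized quantity $E^{(2)}_{s,s'}(X)=\sum_i x_{i,s'}\S(\beta\tilde z(X;s))_i$ and then invoking the concentration estimates already set up for $E^{(1)},E^{(2)},E^{(3)}$. Write $\beta:=\beta(\tau_1+\tau_2)\in[\beta^*,\beta^*+1]$, so $\beta\lesssim\log T$ since $e^{\beta^*}\lesssim\TT^{1/12}\le T^{1/12}$. Recall from the reparametrization in the proof of \Cref{thm:stage2} that $f_{\htheta}(X;s)_{s'}=\delta_{s'}(X)^\top\S(\beta z_\theta(X;s))$ with $z_\theta(X;s)=\S(\hat A^{(1)})Xe_s$. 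Since $z\mapsto\delta_{s'}(X)^\top\S(\beta z)$ has gradient $\beta J(\S(\beta z))\delta_{s'}(X)$ whose $\ell_1$-norm is at most $2\beta$ by \Cref{lem:J_l1}, and since $z_\theta$ and $\tilde z$ agree on the root coordinates while differing by at most $\|\S(\hat A^{(1)})_i-\S(A_*^{(1)})_i\|_1\lesssim T^{-1}$ on $\overline\calR$ by \Cref{cor:end_of_stage_1}, I obtain the deterministic bound $|f_{\htheta}(X;s)_{s'}-E^{(2)}_{s,s'}(X)|\lesssim\beta T^{-1}$.

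Next I would show that $E^{(2)}_{s,s'}(X)$ concentrates, with mean squared error $\lesssim(1+\beta^2)\tfrac{\log T}{\TT^{1/2}}$, around
\[
\Phi:=\frac{(1-r)(e^\beta-1)\mu_\pi(s)\pi(s'\mid s)+(1-r)\mu_\pi(s')+re^{\beta\mu_\pi(s)}\mu_\pi(s')}{(1-r)(e^\beta-1)\mu_\pi(s)+(1-r)+re^{\beta\mu_\pi(s)}}.
\]
This follows exactly as in the proof of \Cref{lem:concentrate_E1_E2}: writing $E^{(2)}_{s,s'}(X)=N(X)/D(X)$ and splitting both sums over $\overline\calR$ (where $e^{\beta\tilde z_i}=1+(e^\beta-1)x_{p(i),s}$) and over $\calR$, the numerator and denominator equal their $\Phi$-limits plus precisely the error terms $\calE_1,\calE_3,\calE_4,\calE_5,\calE_7$ whose second moments are controlled in the proofs of \Cref{lem:concentrate_E3,lem:concentrate_E1_E2} (using \Cref{lem:chat_variance,lem:muhat_variance,lem:muhat_parent_variance} and \Cref{lem:compare_z_tilde}); then \Cref{lem:compute_diffs} together with $D(X)\ge(1-r)(e^\beta-1)\mu_\pi(s)\gtrsim e^\beta$ (which uses $\mu_\pi(s)\ge\gamma/S$ and $1-r\ge\gamma$) converts the $e^{2\beta}$-sized errors into a polynomial-in-$\beta$ bound. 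Finally I would estimate the bias: a direct computation gives
\[
\Phi-\pi(s'\mid s)=\frac{[(1-r)+re^{\beta\mu_\pi(s)}]\,(\mu_\pi(s')-\pi(s'\mid s))}{(1-r)(e^\beta-1)\mu_\pi(s)+(1-r)+re^{\beta\mu_\pi(s)}},
\]
and since $\min_{s'}\mu_\pi(s')\ge\gamma/S$ forces $\mu_\pi(s)\le1-(S-1)\gamma/S\le1-\gamma/2$, the factor $e^{\beta\mu_\pi(s)}/e^\beta$ is at most $e^{-\beta\gamma/2}$ while the denominator is $\gtrsim e^\beta$, so $|\Phi-\pi(s'\mid s)|\lesssim e^{-\beta\gamma/2}$. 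Combining the three estimates by the triangle inequality, squaring, taking expectation, and using $\beta\le\beta^*+1$ and $T\ge\TT$ (so the $\beta^2T^{-2}$ term is dominated), gives $\E_X|f_{\htheta}(X;s)_{s'}-\pi(s'\mid s)|^2\lesssim(1+{\beta^*}^2)\tfrac{\log T}{\TT^{1/2}}+e^{-\beta^*\gamma}$.

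For the probability bound, a naive Chebyshev applied to the above MSE only yields a $\TT^{-1/2}$ rate, so I would instead use a cruder but noise-free lower bound on $f_{\htheta}$. For $i\in\overline\calR$ with $s_{p(i)}=s$, \Cref{cor:end_of_stage_1} gives $z_\theta(X;s)_i\ge\S(\hat A^{(1)})_{i,p(i)}\ge1-O(T^{-1})$, while every coordinate of $z_\theta$ lies in $[0,1]$, so $\S(\beta z_\theta(X;s))_i\ge e^{\beta(1-O(1/T))}/(Te^\beta)\ge\tfrac1{2T}$ (using $\beta\lesssim\log T$). Dropping the remaining nonnegative terms, $f_{\htheta}(X;s)_{s'}\ge\tfrac1{2T}\,|\{i\in\overline\calR:s_i=s',\,s_{p(i)}=s\}|$, hence $\{f_{\htheta}(X;s)_{s'}\le\tfrac{\gamma^3}{4S^2}\}\subseteq\{\tfrac1T|\{i\in\overline\calR:s_i=s',\,s_{p(i)}=s\}|\le\tfrac{\gamma^3}{2S^2}\}$. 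The variable $\tfrac1T|\{\cdots\}|=\tfrac{|\overline\calR|}{T}\hat c_{X_{\overline\calR}}(s,s')$ has mean $(1-r)\mu_\pi(s)\pi(s'\mid s)\ge\gamma\cdot\tfrac\gamma S\cdot\tfrac\gamma S=\tfrac{\gamma^3}{S^2}$ and, by \Cref{lem:chat_variance}, variance $\lesssim\TT(\lambda)^{-1}\lesssim\TT^{-1}$, so the event forces a deviation of at least half the mean, and Chebyshev gives probability $\lesssim\TT^{-1}$.

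The main obstacle is the concentration of $E^{(2)}_{s,s'}(X)$ around $\Phi$ with a \emph{polynomial}-in-$\beta$ mean squared error rather than an $e^{2\beta}$ one: the $e^{2\beta}$ growth in the variances of the nonlinear error terms $\calE_3,\calE_7$ must be cancelled against the $e^{-2\beta}$ coming from $D(X)^{-2}$ via \Cref{lem:compute_diffs}, which is exactly the mechanism of \Cref{lem:concentrate_E1_E2}; in practice this amounts to carefully transcribing that proof with one fewer nonlinear term. For the probability bound, the only subtlety is recognizing that one must sidestep the $\TT^{-1/2}$-rate quantities entirely by lower-bounding $f_{\htheta}$ directly by an empirical count of matching edges and applying Chebyshev to that count.
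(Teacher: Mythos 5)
Your proposal is correct and follows essentially the same route as the paper's proof: the same Lipschitz replacement of $z_{\htheta}$ by $\tilde z$ via \Cref{cor:end_of_stage_1} and \Cref{lem:J_l1}, the same decomposition of $\delta_{s'}(X)^\top\S(\beta\tilde z(X;s))$ into the population ratio plus the error terms $\calE_1,\calE_3,\calE_4,\calE_5,\calE_7$ handled by \Cref{lem:compute_diffs}, the same $e^{-\beta\gamma/2}$ bias estimate, and the same mechanism for the probability bound (lower-bounding $f_{\htheta}$ by $\tfrac{1}{2T}$ times the count of matching edges and applying Chebyshev via \Cref{lem:chat_variance}). The only cosmetic difference is that you lower-bound the softmax coordinates of $z_{\htheta}$ directly rather than passing through $\tilde z$ and converting back with the $O(\beta T^{-1})$ error, which is equivalent.
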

\begin{proof}
    First, by \Cref{lem:A2_dynamics}, $1 + \beta^* \ge \beta(\tau_1 + \tau_2) \ge \beta^*$. For notational convenience, let $\beta = \beta(\tau_1 + \tau_2)$ Recall the definitions     
    \begin{align*}
        \S\qty(A^{(1)}_*)_{ij} = \begin{cases}
                                    \mathbf{1}(j = p(i)) & p(i) \neq \emptyset \\
                                    \hat A^{(1)}_{i,j}        & p(i) = \emptyset
                                \end{cases}.
    \end{align*}
    and
    \begin{align*}
        \tilde z(X; s) = \S(A_*^{(1)})\delta_s(X) = \begin{cases}
                               x_{p(i), s}      & \text{if}~i \not\in \calR \\
                               z_{\htheta}(X; s)_i & \text{if}~i \in \calR
                           \end{cases}.
    \end{align*}
    By \Cref{cor:end_of_stage_1} $\|z_{\htheta}(X; s) - \tilde z(X; s)\|_\infty \lesssim T^{-1}$. Letting $f(z) = \delta^\top \S(\beta z)$, we see that $\|\nabla_zf(z)\|_1 = \beta\|J(\S(\beta z))\delta\|_1 \le 2\beta$, and thus $$\abs{f_{\htheta}(X; s)_{s'} - \delta_{s'}(X)^\top \S(\beta \tilde z(X; s))} \lesssim \beta T^{-1}.$$
    Next, we have that
    \begin{align*}
        \delta_{s'}(X)^\top \S(\beta \tilde z(X; s)) = \frac{\sum_i x_{i, s'}\exp(\beta \tilde z(X; s)_i)}{\sum_i \exp(\beta \tilde z(X; s)_i)},
    \end{align*}
    and thus
    \begin{align*}
        &\delta_{s'}(X)^\top \S(\beta \tilde z(X; s))\\
        &= \frac{(e^\beta - 1)\sum_{i \in \overline{\calR}}x_{p(i), s}x_{i, s'} + \sum_{i \in \overline{\calR}}x_{i, s'} + \sum_{i \in \calR} x_{i,s'}e^{\beta \tilde z(X; s)_i}}{(e^\beta - 1)\sum_{i \in \overline{\calR}}x_{p(i), s} + \abs{\overline{\cal R}} + \sum_{i \in \calR} e^{\beta \tilde z(X; s)_i}}\\
        &= \frac{(1 - r)(e^\beta - 1)\mu_\pi(s)\pi(s' \mid s) + (1 - r)\mu_\pi(s') + r\mu_\pi(s')e^{\beta \mu_\pi(s)} + (e^\beta - 1)\calE_4(X) + \calE_5(X) + \calE_7(X)}{(1 - r)(e^{\beta} - 1)\mu_\pi(s) + (1 - r) + re^{\beta \mu_\pi(s)} + (e^\beta - 1)\calE_1(X) + \calE_3(X)}
    \end{align*}

    Therefore by \Cref{lem:compute_diffs},
    \begin{align*}
        &\abs{\delta_{s'}(X)^\top \S(\beta \tilde z(X; s)) - \frac{(1 - r)(e^\beta - 1)\mu_\pi(s)\pi(s' \mid s) + (1 - r)\mu_\pi(s') + r\mu_\pi(s')e^{\beta \mu_\pi(s)}}{(1 - r)(e^{\beta} - 1)\mu_\pi(s) + (1 - r) + re^{\beta \mu_\pi(s)}}}\\
        &\quad\le \frac{\abs{\delta_{s'}(X)^\top \S(\beta \tilde z(X; s))}\cdot \qty(e^\beta\abs{\calE_4(X)} + \abs{\calE_5(X)} + \abs{\calE_7(X)}) + e^\beta\abs{\calE_1(x)} + \abs{\calE_3(X)}}{(1 - r)(e^{\beta} - 1)\mu_\pi(s) + (1 - r) + re^{\beta \mu_\pi(s)}}\\
        &\quad\lesssim  \abs{\calE_1(x)} + e^{-\beta}\abs{\calE_3(X)} + \abs{\calE_4(X)} + e^{-\beta}\abs{\calE_5(X)} + e^{-\beta}\abs{\calE_7(X)},
    \end{align*}
    where the last inequality uses \Cref{assume:r}. Next, see that
    \begin{align*}
        &\abs{\frac{(1 - r)(e^\beta - 1)\mu_\pi(s)\pi(s' \mid s) + (1 - r)\mu_\pi(s') + r\mu_\pi(s')e^{\beta \mu_\pi(s)}}{(1 - r)(e^{\beta} - 1)\mu_\pi(s) + (1 - r) + re^{\beta \mu_\pi(s)}} - \pi(s' \mid s)}\\
        &\quad\le \frac{\abs{(1 - r)\mu_\pi(s') + r\mu_\pi(s')e^{\beta \mu_\pi(s)} + (1 - r)\pi(s' \mid s) + r e^{\beta \mu_\pi(s)}\pi(s' \mid s)}}{(1 - r)(e^{\beta} - 1)\mu_\pi(s) + (1 - r) + re^{\beta \mu_\pi(s)}}\\
        &\quad\lesssim \frac{e^{\beta \mu_\pi(s)}}{(1 - r)e^\beta \gamma}\\
        &\quad\lesssim e^{\beta(\mu_\pi(s) - 1)}\\
        &\quad\lesssim e^{-\beta \frac{\gamma(S - 1)}{S}}\\
        &\quad\lesssim e^{-\beta \gamma/2}
    \end{align*}

    Altogether, we get
    \begin{align*}
        \abs{f_{\htheta}(X; s)_{s'} - \pi(s' \mid s)} \lesssim \frac{\beta}{T} + \abs{\calE_4(X)} + \abs{\calE_1(x)} + e^{-\beta}\abs{\calE_3(X)} + e^{-\beta}\abs{\calE_5(X)} + e^{-\beta}\abs{\calE_7(X)} + e^{-\beta \gamma /2},
    \end{align*}
    and thus
    \begin{align*}
        \E_X\qty[\abs{f_{\htheta}(X; s)_{s'} - \pi(s' \mid s)}^2] &\lesssim (1 + \beta^2)\cdot \frac{\log T}{\TT^{1/2}} + e^{-\beta\gamma} \lesssim (1 + {\beta^*}^2)\cdot \frac{\log T}{\TT^{1/2}} + e^{-\beta^*\gamma}
    \end{align*}

    Next, we need to bound $\mathbb{P}_X\qty[f_{\htheta}(X; s)_{s'} \le \frac{\gamma^3}{4S^2}]$. We start by bounding the probability $\delta_{s'}(X)^\top \S(\beta \tilde z(X; s))$ is small. We have the naive bound
    \begin{align*}
        \delta_{s'}(X)^\top \S(\beta \tilde z(X; s)) &= \frac{\sum_i x_{i, s'}\exp(\beta \tilde z(X; s)_i)}{\sum_i \exp(\beta \tilde z(X; s)_i)}\\
        &\ge \frac{e^\beta \sum_{i \in \overline{\cal R}}x_{p(i), s}x_{i, s'}}{e^\beta \cdot T}\\
        &= \frac{1}{T}\sum_{i \in \overline{\cal R}}x_{p(i), s}x_{i, s'}\\
        & = (1 - r) \hat c_{X_{\overline{\cal R}}}(s, s').
    \end{align*}
    By Markov's inequality and \Cref{lem:chat_variance},
    \begin{align*}
        \mathbb{P}_X\qty[\hat c_{X_{\overline{\cal R}}}(s, s') \le \frac{\gamma^2}{2S^2}] &\le \mathbb{P}_X\qty[\abs{\hat c_{X_{\overline{\cal R}}}(s, s') - \mu_\pi(s)\pi(s' \mid s)} \ge  \frac{\gamma^2}{2S^2}]\\
        &\le \frac{2S^2}{\gamma^2} \E_X\qty[\abs{c_{X_{\overline{\cal R}}}(s, s') - \mu_\pi(s)\pi(s' \mid s)}^2]\\
        &\lesssim \frac{T^2}{\abs{\overline{\cal R}}^2 \TT}\\
        &\lesssim \frac{1}{\TT}.
    \end{align*}
    Therefore
    \begin{align*}
        \mathbb{P}_X\qty[\delta_{s'}(X)^\top \S(\beta \tilde z(X; s)) \le  \frac{\gamma^3}{2S^2}] \le \mathbb{P}_X\qty[\delta_{s'}(X)^\top \S(\beta \tilde z(X; s)) \le  (1 - r)\frac{\gamma^2}{2S^2}] \lesssim \frac{1}{\TT}.
    \end{align*}
    To conclude, on the event that $\delta_{s'}(X)^\top \S(\beta \tilde z(X; s)) >  \frac{\gamma^3}{2S^2}$, we have
    \begin{align*}
        f_{\htheta}(X; s)_{s'} &>  \frac{\gamma^3}{2S^2} - \abs{f_{\htheta}(X; s)_{s'} - \delta_{s'}(X)^\top \S(\beta \tilde z(X; s))}\\
        &\ge \frac{\gamma^3}{2S^2} - O\qty(\beta T^{-1})\\
        &\ge \frac{\gamma^3}{4S^2},
    \end{align*}
    since $\beta \le 1 + \beta^* \lesssim T$. Altogether,
    \begin{align*}
        \mathbb{P}_X\qty[f_{\htheta}(X; s)_{s'} \le \frac{\gamma^3}{4S^2}]  \lesssim \frac{1}{\TT}.
    \end{align*}
\end{proof}

\subsection{Proof of Theorem \ref{thm:OOD}}

\begin{proof}
    By \Cref{lem:f_to_pi}, we get that
    \begin{align*}
        \mathbb{E}_X\qty[\qty(f(X; s)_{s'} - \pi(s' \mid s))^2] \lesssim_{\gamma, S} \frac{\log T}{\TT^{\Theta_\gamma(1)}}
\end{align*}
    Therefore by Markov's inequality,
    \begin{align*}
        \mathbb{P}_X\qty[\qty(f(X; s)_{s'} - \pi(s' \mid s))^2 \ge 100S^2\cdot \mathbb{E}_X\qty[\qty(f(X; s)_{s'} - \pi(s' \mid s))^2]] \le \frac{1}{100S^2}.
    \end{align*}
    Union bounding, with probability $0.99$ we have 
    \begin{align*}
        \sup_{s, s'}\abs{f(X; s)_{s'} - \pi(s' \mid s)} \le 100S^2\cdot \mathbb{E}_X\qty[\qty(f(X; s)_{s'} - \pi(s' \mid s))^2] \lesssim_{\gamma, S} \frac{\log T}{\TT^{\Theta_\gamma(1)}},
    \end{align*}
    as desired.
\end{proof}

\section{Finite Sample Analysis}
Our theory focuses on the case of gradient descent on the population loss \eqref{eq:CE_loss}. It is relatively straightforward to extend our analysis to the finite sample setting. In this case, we are given a dataset of $N$ prompts of length $T$:
\begin{align*}
    \mathcal{D} = \{s^{(n)}_{1:T}\}_{n \in [N]}.
\end{align*}
Each sequence $s^{(n)}_{1:T}$ is generated via the procedure in \Cref{task:single_parent}, with transition matrix $\pi^{(i)} \sim P_\pi$. Let $X^{(n)} \in \mathbb{R}^{T \times S}$ be the embedding of $s^{(n)}_{1:T}$. 

We now consider running gradient descent on the finite sample loss $\hat L$:
\begin{align}
    \hat L(\theta) = -\frac{1}{N}\sum_{n=1}^N \sum_{s' \in [S]} \pi^{(n)}(s' \mid s_T^{(n)})\log\qty(f_\theta(s_{1:T}^{(n)}) + \epsilon).
\end{align}
Below, we present a sketch of the extension of the analysis of our main theorem to this finite sample setting.

\subsection{Stage 1}
The crux of Stage 1 is \Cref{thm:stage1}, where in \eqref{eq:key_equation_stage1} we show that 
\begin{align*}
    \nabla_{A^{(1)}_i}L(\theta) = - \frac{\beta_0}{ST}J\qty(\S\qty(A_i^{(1)}))g_i^*,
\end{align*}
where the vector $g_i^* \in \mathbb{R}$ is defined by
\begin{align*}
    g^*_i := T\sum_{s, s'}\E_{\pi, X}\qty[\frac{\pi(s' \mid s)}{f_\theta(X; s)_{s'} + \epsilon}\delta_{s'}(X)^\top J(v_\theta(X; s))e_i \cdot  \delta_s(X_{\le i})].
\end{align*}
In particular, we show that $g^*_i$ satisfies the property that
\begin{align*}
    g^*_{i,j} - g^*_{i, p(i)} \le -\frac{\gamma^3}{4S}
\end{align*}
for $i \in \overline{\mathcal{R}}$ and $\abs{g^*_{i,j}} \lesssim \TT^{-1/2}$ for $i \in \mathcal{R}$. As a step towards proving this, we let $\hat\theta = (A^{(1)}, 0)$, define the quantity $\hat g_i$ by
\begin{align*}
    \hat g_i := T\sum_{s, s'}\E_{\pi, X}\qty[\frac{\pi(s' \mid s)}{f_\theta(X; s)_{s'} + \epsilon}\delta_{s'}(X)^\top J(v_\theta(X; s))e_i \cdot  \delta_s(X_{\le i})],
\end{align*}
and show that
\begin{align*}
    \hat g_{i,j} - \hat g_{i, p(i)} &\le -\frac{\gamma^3}{4S}~\text{for}~i \in \overline{\mathcal{R}}\\
    \abs{\hat g_{i,j}} &\lesssim \TT^{-1/2}~\text{for}~i \in \mathcal{R}.
\end{align*}

The empirical gradient can be written as
\begin{align*}
    \nabla_{A^{(1)}_i}\hat L(\theta) = - \frac{\beta_0}{ST}J\qty(\S\qty(A_i^{(1)}))g^{\mathrm{emp}}_i,
\end{align*}
where
\begin{align*}
    g^{\mathrm{emp}}_i = \frac{ST}{N}\sum_{n=1}^N\sum_{s'}\frac{\pi^{(n)}(s' \mid s_T^{(n)})}{f_\theta(X^{(n)}; s_T^{(n)})_{s'} + \epsilon}\delta_{s'}(X^{(n)})^\top J(v_\theta(X^{(n)}; s_T^{(n)}))e_i \cdot  \delta_{s_T^{(n)}}(X^{(n)}_{\le i}).
\end{align*}
As in the population setting, we define $\hat g^{\mathrm{emp}}_i$ to be
\begin{align*}
    \hat g^{\mathrm{emp}}_i = \frac{ST}{N}\sum_{n=1}^N\sum_{s'}\frac{\pi^{(n)}(s' \mid s_T^{(n)})}{f_{\htheta}(X^{(n)}; s_T^{(n)})_{s'} + \epsilon}\delta_{s'}(X^{(n)})^\top J(v_{\htheta}(X^{(n)}; s_T^{(n)}))e_i \cdot  \delta_{s_T^{(n)}}(X^{(n)}_{\le i}).
\end{align*}
By \Cref{lem:nonzero_beta}, we get that $\|g^{\mathrm{emp}}_i - \hat g^{\mathrm{emp}}_i\|_\infty \le \frac{C_{\gamma, S}}{\sqrt{\TT}}.$ It thus suffices to show that $\|\hat g_i^{\mathrm{emp}} - \hat g_i\|_{\infty}$ is small, which is given by the following lemma:
\begin{lemma}\label{lem:concentrate_stage1}
    For any $\delta > 0$
    \begin{align*}
        \|\hat g_i^{\mathrm{emp}} - \hat g_i\|_{\infty} \le \frac{C_{\gamma, S}\log\qty(\frac{T}{\delta})}{\sqrt{N}}
    \end{align*}
    with probability $1 - \delta$.
\end{lemma}
\begin{proof}
First, see that $\hat g^{\mathrm{emp}}_{i,j}$ can be written as
\begin{align*}
    \hat g^{\mathrm{emp}}_{i, j} = \frac{1}{N}\sum_{n=1}^N \underbrace{S\sum_{s'}\frac{\pi^{(n)}(s' \mid s_T^{(n)})}{\hat\mu_{X^{(n)}}(s') + \epsilon}(x^{(n)}_{i, s'} - \hat\mu_{X^{(n)}}(s'))x^{(n)}_{j, s_T^{(n)}}}_{=:Z^{(n)}}.
\end{align*}
Define the event $\mathcal{A}^{(n)}$ as 
\begin{align*}
    \mathcal{A}^{(n)} = \bigcup_{s' \in [S]}\{ \hat \mu_{X^{(n)}}(s') \le \frac12\mu_{\pi^{(n)}}(s')\}.
\end{align*}
By a union bound, $\mathbb{P}(\mathcal{A}^{(n)}) \le S/\TT$. We can naively bound $\abs{Z^{(n)}} \le \epsilon^{-1}S$, and on the complement of $\mathcal{A}^{(n)}$ (denoted by $\overline{\mathcal{A}^{(n)}}$) we can bound $\abs{Z^{(n)}} \lesssim S\gamma^{-1}$. Therefore we can concentrate $\frac{1}{N}\sum_n Z^{(n)}$ as:
\begin{align*}
    \abs{\frac{1}{N}\sum_n Z^{(n)} - \mathbb{E}[Z]} &\le \abs{\frac{1}{N}\sum_n Z^{(n)}\mathbf{1}(\overline{\mathcal{A}^{(n)}}) - \mathbb{E}[Z^{(n)}\mathbf{1}(\overline{\mathcal{A}^{(n)}})]} + \abs{\frac{1}{N}\sum_n Z^{(n)}\mathbf{1}(\mathcal{A}^{(n)}) - \mathbb{E}[Z^{(n)}\mathbf{1}(\mathcal{A}^{(n)})]}\\
    &\lesssim \frac{S\gamma^{-1}\log(T/\delta)}{\sqrt{N}} + \epsilon^{-1}\cdot \frac{1}{N}\sum_n\mathbf{1}(\mathcal{A}^{(n)}) + \epsilon^{-1}\mathbb{P}(\mathcal{A}^{(n)}),
\end{align*}
with probability $1 - \frac{\delta}{2T^2}$ by Hoeffding's inequality on the $Z^{(n)}\mathbf{1}(\mathcal{A}^{(n)})$. Next, we see that the $\mathbf{1}(\mathcal{A}^{(n)})$ are Bernoulli random variables with mean $\mathbb{P}(\mathcal{A}^{(n)}) \le S/\TT$ and standard deviation at most $\sqrt{\mathbb{P}(\mathcal{A}^{(n)})/N}$. Therefore with probability $1 - \frac{\delta}{T^2}$, one has
\begin{align*}
    \abs{\frac{1}{N}\sum_n Z^{(n)} - \mathbb{E}[Z]} &\lesssim \frac{S\gamma^{-1} + \epsilon^{-1}\sqrt{S/\TT}}{\sqrt{N}}\log(T/\delta) \\
    &= \frac{C_{\gamma, S}\log(T/\delta)}{\sqrt{N}},
\end{align*}
since $\epsilon = \Theta(\TT^{-1/2})$. Union bounding over $i,j \in [T]$ yields the desired result
\end{proof}
Combining everything together, we get that for $N \gtrsim C_{\gamma, S}\TT \log T$, the quantities $g_i^{\mathrm{emp}}$ satisfy
\begin{align*}
    g_{i,j}^{\mathrm{emp}} - g_{i, p(i)}^{\mathrm{emp}} &\le -\frac{\gamma^3}{4S}~\text{for}~i \in \overline{\mathcal{R}}\\
    \abs{g_{i,j}^{\mathrm{emp}}} &\lesssim \TT^{-1/2}~\text{for}~i \in \mathcal{R}.
\end{align*}
with high probability, and thus Stage 1 succeeds on the empirical loss with high probability.

\subsection{Stage 2}
One challenge in directly using the population analysis for stage 2 is that the finite-sample update no longer preserves symmetry, and hence we do not have that $A^{(2)} = \beta_0I_S + \beta (I_S - \frac{1}{S}1_S1_S^\top)$ throughout the entirety of stage 2. Instead, we will consider taking only a single large gradient step with learning rate $\eta_2$, on an independent dataset of $N$ prompts.

During the first step of stage 2, the population gradient is
\begin{align*}
    \nabla_{A^{(2)}}L(\theta) = -\beta^{\mathrm{pop}} \cdot \qty(I_S - \frac{1}{S}1_S1_S^\top),
\end{align*}
where, by \Cref{thm:stage2},
\begin{align*}
    1 \ge \beta^{\mathrm{pop}} \ge C_{\gamma, S}^{-1} > 0.
\end{align*}
The following lemma concentrates the population gradient to the empirical gradient at $\theta(\tau_1)$:
\begin{lemma}
    \begin{align*}
        \norm{\nabla_{A^{(2)}}\hat L(\theta) - \nabla_{A^{(2)}}L(\theta)}_\infty \lesssim \frac{C_{\gamma, S}}{\sqrt{N}}.
    \end{align*}
\end{lemma}
\begin{proof}
    The finite sample gradient can be written as
    \begin{align*}
        \nabla_{A^{(2)}}\hat L(\theta) = -\frac{1}{N}\sum_{n=1}^N\underbrace{\sum_{s'}\frac{\pi^{(n)}(s' \mid s_T^{(n)})}{f_\theta(X^{(n)}; s_T^{(n)})_{s'} + \epsilon}\cdot {X^{(n)}}^\top \S(A^{(1)})^\top J(v_\theta(X^{(n)}; s_T^{(n)}))\delta_{s'}(X^{(n)})e_{s_T^{(n)}}^\top}_{M^{(n)}}.
    \end{align*}
    Let $\hat \theta = (A^{(1)}(\tau_1), 0)$. By \eqref{eq:bound_f}, we have
    \begin{align*}
        \abs{f_\theta(X^{(n)}; s_T^{(n)})_{s'} - f_{\htheta}(X^{(n)}; s_T^{(n)})_{s'}} \lesssim e^{\beta_0 - 1} \lesssim \TT^{-1/2}.
    \end{align*}
    Additionally, $f_{\htheta}(X^{(n)}; s_T^{(n)})_{s'} = \hat\mu_{X^{(n)}}(s')$. We next see that we can bound
    \begin{align*}
        &\abs{e_s^\top{X^{(n)}}^\top \S(A^{(1)})^\top J(v_\theta(X^{(n)}; s_T^{(n)}))\delta_{s'}(X^{(n)})}\\
        &\quad \le 2 \norm{\S(A^{(1)}){X^{(n)}}e_s}_\infty \norm{\delta_{s'}(X^{(n)})}_\infty\\
        &\quad \le 2.
    \end{align*}
    Therefore on $\mathcal{A}^{(n)}$, each entry of $M^{(n)}$ can be bounded in absolute value by $2\epsilon^{-1}$, and on $\overline{\mathcal{A}^{(n)}}$ each entry can be bounded by some $C_{\gamma, S}$.
    Therefore by an identical concentration argument as in \Cref{lem:concentrate_stage1}, with high probability we get that
    \begin{align*}
        \norm{\nabla_{A^{(2)}}\hat L(\theta) - \nabla_{A^{(2)}}L(\theta)}_\infty \lesssim \frac{C_{\gamma, S}}{\sqrt{N}}.
    \end{align*}
\end{proof}
After one gradient step with learning rate $\eta_2$, $A^{(2)}(\tau_1 + 1)$ is equal to 
\begin{align*}
    A^{(2)}(\tau_1 + 1) = \beta_0 I_S + \eta_2\beta^{\mathrm{pop}}\cdot\qty(I_S - \frac{1}{S}1_S1_S^\top) + R,
\end{align*}
where the error matrix $R$ satisfies $\|R\|_\infty \lesssim \frac{\eta_2}{\sqrt{N}}$. 

Next, define the parameter vector $\theta^{\mathrm{pop}}$ as $\theta^{\mathrm{pop}} = (A^{(1)}(\tau_1), A^{(2)}_{\mathrm{pop}})$, where $A^{(2)}_{\mathrm{pop}} = \beta_0 I_S + \eta_2\beta^{\mathrm{pop}}\cdot\qty(I_S - \frac{1}{S}1_S1_S^\top)$ is the result of the population update. We can bound the error between the finite-sample predictor and population predictor as
\begin{align*}
    \abs{f_{\theta(\tau_1 + 1)}(X; s)_{s'} - f_{\theta^{\mathrm{pop}}}(X; s)_{s'}} &= \abs{\delta_{s'}^\top\qty(\S(\S(A^{(1)})XA^{(2)}(\tau_1 + 1)e_s) - \S(\S(A^{(1)})XA^{(2)}_{\mathrm{pop}}e_s))}\\
    &\le \|\S(\S(A^{(1)})XA^{(2)}(\tau_1 + 1)e_s) - \S(\S(A^{(1)})XA^{(2)}_{\mathrm{pop}}e_s)\|_\infty\\
    &\le \|\S(A^{(1)})XA^{(2)}(\tau_1 + 1)e_s - \S(A^{(1)})XA^{(2)}_{\mathrm{pop}}e_s\|_\infty \\
    &\le \|A^{(2)}(\tau_1 + 1) - A^{(2)}_{\mathrm{pop}}\|_\infty \\
    &\lesssim \frac{\eta_2}{\sqrt{N}}.
\end{align*}
Now if we choose $\eta_2$ so that $\eta_2\beta^{\mathrm{pop}} = \beta^* = \Theta(\log \TT)$, then by \Cref{lem:f_to_pi}, we get that $\theta^{\mathrm{pop}}$ satisfies
\begin{align*}
    \mathbb{E}_X\qty[\abs{f_{\theta^{\mathrm{pop}}}(X; s)_{s'} - \pi(s' \mid s)}^2] \lesssim \TT^{-c\gamma}.
\end{align*}
Therefore 
\begin{align*}
    \mathbb{E}_X\qty[\abs{f_{\theta(\tau_1 + 1)}(X; s)_{s'} - \pi(s' \mid s)}^2] &\lesssim \TT^{c\gamma} + \frac{\eta_2^2}{N}\\
    &\lesssim \TT^{-c\gamma} + \frac{\log^2 \TT}{N}\\
    &\lesssim \TT^{-c\gamma},
\end{align*}
as long as $N \gtrsim \TT^{c\gamma}$. Therefore the output of running gradient descent on the finite sample loss, $f_{\theta(\tau_1 + 1)}$, achieves small population loss.

Altogether, both Stage 1 and Stage 2 succeed on the finite-sample loss as long as $N \gtrsim \TT \log T$.
\end{document}